\documentclass[12pt]{article}

\usepackage[title]{appendix}
\usepackage[english]{babel}
\usepackage[utf8x]{inputenc}
\usepackage[T1]{fontenc}
\usepackage{pdfpages}
\usepackage[authoryear]{natbib}
\usepackage{caption}
\usepackage{breqn}
\usepackage{subcaption}
\usepackage{amsmath, amssymb}
\usepackage{amsfonts, bm}
\usepackage{amsthm}
\usepackage{bbold}
\usepackage{graphicx}
\usepackage{booktabs}
 \usepackage{multirow}
\usepackage{enumitem}
\usepackage{soul}
\usepackage[linesnumbered,ruled,vlined]{algorithm2e}
\usepackage[colorinlistoftodos]{todonotes}
\usepackage[colorlinks=true, allcolors=blue]{hyperref}
\usepackage{comment}
\usepackage{url}

\newtheorem{theorem}{Theorem}[section]
\newtheorem{corollary}{Corollary}[theorem]
\newtheorem{proposition}{Proposition}[section]
\newtheorem{lemma}[theorem]{Lemma}
\newtheorem{remark}{Remark}
\newtheorem{assumption}{Assumption}
 
\newcommand{\diag}{\operatorname{diag}}
\newcommand{\tr}{\operatorname{tr}}
\newcommand{\mb}{\mathbf}
 
\newcommand{\bb}[1]{\mathbb{#1}}

\newcommand{\nrm}[2]{\|#1\|_{#2}}
\newcommand{\lrnrm}[2]{\left\|#1\right\|_{#2}}
\newcommand{\lrp}[1]{\left(#1 \right)}
\newcommand{\brcs}[1]{\left\{#1 \right\}}
\newcommand{\bxs}[1]{\left[#1 \right]}
\newcommand{\vrt}[1]{\left|#1 \right|}
\newcommand{\argmin}{{\arg\min}}
\newcommand{\argmax}{{\arg\max}}

\newcommand{\oraet}{\b\tht^o}
\newcommand{\Oraet}{\Tht^v}

\def\b{\bm}
\def\r{\rm}
\def\h{\hat}
\def\t{\tilde}
\def\c{\mathcal}
\def\cO{\c O}
\def\P{\bb P}
\def\E{\bb E}
\def\R{\bb R}
 
\def\eps{\varepsilon}
\def\vrp{\varphi}
\def\dlt{\delta}
\def\Dlt{\Delta}
\def\tht{\theta}
\def\Tht{\Theta}
 
\def\blt{\bullet}
\def\Vol{\operatorname{Vol}}
\def\Var{\operatorname{Var}}

\def\diag{\operatorname{diag}}
\def\tr{\operatorname{tr}}

\def\un{\underline{n}}
\def\bn{\bar n}
\def\us{\underline{s}}
\def\bs{\bar s}
 
\def\GSk{{k,G}}
\def\HSk{{k,H}}

\def\GoSk{{k,G_0}}
 
\newcommand{\bx}{\h{\b\tht}}
\newcommand{\tx}{\underline{\bx}}

\newcommand{\anon}{1}
 
\begin{document}
 
\def\spacingset#1{\renewcommand{\baselinestretch}%
{#1}\small\normalsize} \spacingset{1}

\if1\anon
{
  \title{\bf Personalized Federated Learning via Variance-Aware Nonparametric Empirical Bayes}
  \author{\\Jae Ho Chang,  Arnab Auddy, and Subhadeep Paul \\ \\
    \textit{Department of Statistics, The Ohio State University}
}
    \date{}
  \maketitle
} \fi
 
\if0\anon
{
  \bigskip
  \bigskip
  \bigskip
  \begin{center}
    {\LARGE\bf Personalized Federated Learning via Variance-Aware Nonparametric Empirical Bayes}
  \end{center}
   \date{}
  \medskip
} \fi
 
\medskip
\begin{abstract}
We develop a new approach to Personalized Federated Learning across heterogeneous clients using Nonparametric Empirical Bayes (NPEB). Leveraging the asymptotic normality of local parameter estimates obtained from Empirical Risk Minimization or M-estimation, our method formulates these estimates as noisy observations to estimate an unknown shared prior via Nonparametric Maximum Likelihood. A key challenge in applying NPEB in this setting is that existing approaches assume known fixed variances, which is not true in practice. To address this, we introduce a Variance-Aware Nonparametric Empirical Bayes (VANEB) framework that leverages the parameter-dependent asymptotic variance of local M-estimators. A key technical contribution is a generalized Tweedie's formula for this heteroskedastic setting. We then establish non-asymptotic error rates for density estimation in the average squared Hellinger distance and derive an oracle denoising inequality that provides error bounds for our estimator. While our theoretical guarantees are rooted in the asymptotic regime of M-estimators, we empirically explore heuristic extensions of VANEB to modern federated learning settings involving Deep Neural Networks (DNNs). For DNNs, we propose VANEB-head and VANEB-FT, which personalize the last fully connected layer via an NPEB step using an approximate diagonal variance estimator. We show that our method has strong performance on popular vision datasets MNIST and CIFAR-10, using a convolutional neural network architecture.
\end{abstract}
\noindent%
{\it Keywords:} Federated Learning, Empirical Bayes, Nonparametric Maximum Likelihood, Tweedie's Formula, Deep Neural Networks, Personalization
 
\spacingset{1.4}

\section{Introduction}

Modern machine learning and artificial intelligence applications increasingly rely on data distributed across large numbers of devices or institutions, where ownership, privacy, or communication constraints may preclude centralized data collection. Federated Learning (FL) provides a framework for training models collaboratively in such settings by keeping raw data local to participating ``edge'' databases or devices and exchanging only information required for model learning \citep{mcmahan2017learning,mcmahan2017communication,wei2020federated,karimireddy2020scaffold}. The seminal work of \cite{mcmahan2017communication} introduced the Federated Averaging (FedAvg) algorithm, which periodically aggregates locally trained model updates at a central server. Since then, extensive research has sought to improve optimization efficiency and robustness under communication and resource constraints, as well as data and system heterogeneity \citep{kairouz2021advances, li2020federated, karimireddy2020scaffold, auddy2026statistical}. More recently, federated learning methods have been developed for Large Language Models (LLMs) \citep{yao2024federated, zhuang2023foundation}, including parameter-efficient fine-tuning approaches based on the Low-Rank Adaptation (LoRA) framework \citep{hu2022lora,babakniya2023slora}.

In real-world FL deployments, clients often differ substantially in their underlying data distributions, and this statistical heterogeneity can significantly impair the performance of a single globally shared model \citep{li2020federated,karimireddy2020scaffold}. To address this, Personalized Federated Learning (PFL) seeks to train distinct, customized models for each client while still leveraging the shared knowledge of the federation \citep{karimireddy2020scaffold,tan2022towards,fallah2020personalized,jiang2019improving,deng2020adaptive,mansour2020three,ozkara2023statistical,kotelevskii2022fedpop,dinh2020personalized,collins2021exploiting,oh2021fedbabu}. These methods seek to improve predictive performance for each client under its local data distribution, while exploiting shared structure across clients to facilitate knowledge transfer among clients.
Existing personalization strategies include meta-learning approaches \citep{jiang2019improving,fallah2020personalized}, multi-task learning formulations \citep{smith2017federated,dinh2020personalized}, representation learning based methods \citep{collins2021exploiting,oh2021fedbabu,liang2020think,arivazhagan2019federated}, and Bayesian methods \citep{ozkara2023statistical,kotelevskii2022fedpop,makhija2024bayesian}, among others.

In this paper, we cast personalized federated learning as a compound decision problem governed by empirical Bayes (EB) principles \citep{robbins1956empirical}. Instead of transmitting raw data, each client transmits a local estimate to the central server. The server models these local summary statistics as noisy realizations of latent client-specific parameters, and the shared prior is estimated from the collection of local estimators. Then, our goal is to estimate the prior and denoise the local client estimates by EB \citep{efron2014two}. We propose to employ Nonparametric empirical Bayes (NPEB) to avoid misspecification of the prior by learning a prior directly from the data via nonparametric maximum likelihood (NPMLE) \citep{saha2020nonparametric, soloff2025multivariate}.

A fundamental limitation of existing NPEB theory and algorithms \citep{jiang2009general, saha2020nonparametric, soloff2025multivariate} is their reliance on the assumption of \textit{known} and parameter-independent covariance structures. In the context of FL, this premise is frequently violated even in the asymptotic regime. For example, for M-estimators and generalized linear models (GLMs), the covariance of a client’s local estimator is unknown and is often a function of the client parameter that is being estimated (e.g., the inverse of the Fisher information or a ``sandwich'' formula). Standard NPEB methods in \cite{saha2020nonparametric} and \cite{soloff2025multivariate}, which require known covariance matrices, are therefore not applicable in Federated Learning. 
Parameter-dependent covariance structures also change the structure of the NPMLE. The problem remains convex in the mixing distribution, but the stationarity condition for the support points requires additional terms involving the gradients of the variance functions (see Section \ref{sec:vaneb}). As a consequence, the support no longer lies on the ridgeline manifold of \cite{ray2005topography}, and the closed-form updates available for fixed covariances do not apply.

We address this gap with the \emph{Variance-Aware Nonparametric EB} (VANEB) method. We incorporate the parameter-dependent covariance matrix into the likelihood model and derive the corresponding \textit{generalized Tweedie's formula}. We derive oracle denoising inequalities that bound the excess risk of the VANEB estimator relative to the oracle posterior mean, which requires knowledge of the true unknown prior. For computation, we propose a pseudo-EM algorithm that alternates between solving fixed-covariance NPMLE subproblems and updating the covariances at the newly estimated support points.

Our framework considers two distinct sources of heterogeneity. The first is cross-client heterogeneity, which is the variation in the latent parameters $\{\b\tht_k\}_{k=1}^K$ across clients, encoded in the unknown prior $G_0$. This is the usual FL heterogeneity that motivates personalization: different clients have different true parameters, and a single global model is suboptimal. Yet, information sharing is possible if one can identify other similar clients, which requires knowing $G_0$.
The second is heteroskedasticity within each client's model: the local covariance $\Sigma_k(\b\tht)$ depends on the parameter $\b\tht$ itself.  

VANEB addresses both sources of heterogeneity simultaneously by learning $G_0$ via the NPMLE and using it to estimate client-specific posterior means. Our theory takes the Gaussian approximation of the M-estimators as the working model, which is appropriate when the local sample sizes $n_k$ are moderate to large. Within this regime, the main results are proven under a diagonal heteroskedastic Gaussian likelihood, which keeps both the variance-weighted Tweedie representation and the Hellinger convergence analysis tractable. Algorithmically, however, the same pseudo-EM template extends to a full covariance structure (not necessarily diagonal), and we study that broader setting empirically through simulation and data applications.

We have two main contributions, one each to the literature on empirical Bayes methods and personalized federated learning.
\begin{enumerate}[leftmargin=*]
    \item \textbf{Unknown Heteroskedastic Variance Empirical Bayes formulation.} We derive a generalized Tweedie formula for parameter-dependent covariance and identify the corresponding oracle Bayes rule through variance-weighted marginal densities. We establish non-asymptotic convergence rates for the NPMLE in the average squared Hellinger distance. Furthermore, we derive an oracle denoising inequality, which provides a bound on the excess mean squared error of our estimator relative to the infeasible oracle Bayes estimator that has access to the true latent prior $G_0$. We develop a pseudo-EM algorithm that iteratively solves fixed-covariance NPMLE subproblems while updating covariances at the new support locations. We give conditions for the convergence of its iterates and bound the distance between its fixed points and the critical points of the full heteroskedastic score (Lemma \ref{lem-vaneb-convergence}). Our results therefore relax assumptions and make NPEB methods applicable to a broad range of new problems.
    
\item \textbf{Personalized federated learning for modern AI models.}
We translate the VANEB compound-decision framework into practical personalization methods for deep neural networks. For vision models, we introduce \emph{VANEB-Head} and \emph{VANEB-FT}, which iteratively combines a FedAvg-trained body representation with variance-aware NPEB shrinkage of client-specific classification heads. These constructions preserve the body--head decompositions used by several modern personalized FL methods \citep{collins2021exploiting,arivazhagan2019federated}. However, they replace either independent local adaptation (fine-tuning) or complete parameter averaging across clients with uncertainty-aware, data-adaptive sharing across clients.

We evaluate VANEB against a wide range of state of the art methods including, (1) FedAvg \citep{mcmahan2017communication}, (2) FedAvg followed by finetuning, (3) FedRep \citep{collins2021exploiting}, (4) FedBABU \citep{oh2021fedbabu}, (5) FedPer \citep{arivazhagan2019federated}, (6) LG-FedAvg \citep{liang2020think}, (7) Per-FedAvg \citep{fallah2020personalized}, (8) SCAFFOLD \citep{karimireddy2020scaffold}, (9) Ditto \citep{li2021ditto}, and (10) local-only training, under label and covariate heterogeneity on MNIST and CIFAR-10. For evaluation, we consider both client-matched tests, which draw the test set to match the client's observed label distribution, and balanced tests, which evaluate whether a personalized model retains predictive ability across the complete label space. VANEB-FT is particularly strong under balanced evaluation: it achieves the highest accuracy across all 13 reported MNIST and CIFAR-10 settings. Furthermore, it is the only personalized method that outperforms FedAvg across all balanced and matched MNIST and CIFAR-10 experiments. These results indicate that empirical-Bayes sharing can provide personalization without sacrificing the broad predictive coverage learned from the federation.
\end{enumerate}

\subsection{Comparisons to Related Work}
Beyond the NPEB literature discussed above, VANEB is related to
parametric Bayesian approaches to PFL,
distributed statistical inference, and representation-based PFL methods.

\paragraph{Bayesian Personalized Learning.}
Recent Bayesian PFL methods also formulate personalization as a hierarchical inference problem. \cite{ozkara2023statistical} studies a homoskedastic Gaussian observation model with a single Gaussian prior, while FedPop \citep{kotelevskii2022fedpop} and FedGMM \citep{wu2023personalized} use more flexible parametric priors, including Gaussian mixtures and energy-based models. These approaches generally assume parameter-independent noise and require a parametric specification of the client population, such as the number and form of mixture components. VANEB instead estimates the latent prior nonparametrically and explicitly permits the covariance of each local estimator to depend on the unknown client parameter. It therefore accommodates multimodal or irregular client populations without prespecifying their structure. Our results in Theorem \ref{thm-denoising} show that this flexibility retains the benefit of pooling across clients: the excess risk relative to the oracle Bayes rule is of order $(nK)^{-1}$ up to polylogarithmic factors in $K$.

\paragraph{Distributed Statistical Inference.} Related work on distributed statistical inference studies $M$-estimation under
heterogeneous client distributions while targeting a common parameter \citep{gu2023distributed,gu2024statistical}. VANEB instead targets \textit{personalized} inference under heterogeneity. It treats the client-specific parameters as heterogeneous draws from an unknown prior $G_0$, uses the collection of noisy local M-estimators $\bx_k$ to learn $G_0$ and
estimate each client-specific parameter. Thus, our target is
personalization instead of efficient distributed estimation of a single consensus parameter.

\paragraph{Personalized federated learning for deep learning.}
VANEB is closely related to representation-based PFL methods that divide a neural network into shared and client-specific components. FedPer \citep{arivazhagan2019federated}, FedRep \citep{collins2021exploiting}, LG-FedAvg \citep{liang2020think}, and FedBABU \citep{oh2021fedbabu} differ in which body or head parameters are shared and whether the personalized head is trained during federation or adapted locally afterward. VANEB uses the same body-head decomposition but introduces a different sharing mechanism: it treats locally estimated heads as noisy observations from an unknown population distribution and replaces either independent local fitting or direct averaging with variance-aware NPEB shrinkage.
We empirically compare against these representation-based methods, as well as several other PFL methods, and show strong performance of VANEB in our experiments. 

\section{Framework and Model}\label{sec:model}
In many federated systems, the server cannot access raw client data and instead receives only low-dimensional summaries or estimators computed locally. This makes personalization a natural distributed inference problem. The server should combine ``noisy'' client estimates in a way that borrows information across clients without forcing them onto a single global estimate. Empirical Bayes provides exactly this viewpoint. It treats latent client parameters as draws from a common prior and interprets the server's task as estimating that prior from noisy observations, i.e., a compound decision problem in the sense of \cite{robbins1956empirical, efron2014two}.

In EB, the amount of shrinkage is automatically matched to each client's uncertainty. If the prior is nearly degenerate, the personalized estimator collapses toward a common global signal; if the prior is diffuse, shrinkage disappears, and the method behaves like local training. In contrast to parametric hierarchical models, NPEB allows the data to determine the shape of latent heterogeneity, which is important in federated settings where client populations may be multi-modal, skewed, or concentrated near lower-dimensional structures.

To connect this perspective to FL, we consider the following setup. 
Suppose client $k$ observes i.i.d. local data $\{\mb z_{k,i}\}_{i=1}^{n_k}$ from a marginal law $F_{\mb z_k}(\mb z)$ and and define a local parameter as $\b\tht_k:=\arg\min_{\mb u\in\R^d}\E_{\mb z_{k,1}}\rho(\mb z_{k,1};\mb u)$ for some function $\rho:\R^d\times\R^d\to\R$.
Then, $k$ computes an $M$-estimator
\[
\bx_k \in \argmin_{\b\tht\in\R^d} \frac{1}{n_k}\sum_{i=1}^{n_k} \rho(\mb z_{k,i}; \b\tht).
\]
Under standard regularity conditions, local empirical processes yield asymptotic normality \citep{van2000asymptotic,shao2022berry}:
\[
\sqrt{n_k}(\bx_k-\b\tht_k)\overset{n_k\to\infty}{\Rightarrow}{\c N}(\b0_d,\Sigma_k(\b\tht_k))
\]
with asymptotic covariance
\[
\Sigma_k(\b\tht) := \mb H_k(\b\tht)^{-1} \mb J_k(\b\tht) \mb H_k(\b\tht)^{-1},
\]
with Hessian $\mb H_k(\b\tht) = \E[\nabla^2\rho(\mb z_{k,1};\b\tht)]$ and $\mb J_k(\b\tht) = \E[\nabla\rho(\mb z_{k,1};\b\tht)\nabla\rho(\mb z_{k,1};\b\tht)^\top]$.
Berry-Esseen-type bounds for M-estimators show that the Gaussian approximation error decays at the near-$n_k^{-1/2}$ rate under moment conditions \citep{shao2022berry}. We therefore treat $\bx_k$ as approximately ${\c N}(\b\tht_k,\frac{\Sigma_k(\b\tht_k)}{n_k})$, a noisy observation of the client-specific parameter $\b\tht_k$. This characterization connects FL with the EB framework.

However, standard NPEB methods assume $\Sigma_k$ is known and independent of the parameter \citep{soloff2025multivariate}. Neither condition holds here: the covariance matrices are unknown, and in generalized linear models, $M$-estimation, and other curved models, the asymptotic covariance varies with $\b\tht_k$. A fixed-variance NPEB procedure then faces two problems. First, it is entirely unclear how to provide or obtain the $\Sigma_k$ values, and second, the NPEB procedure then solves a misspecified denoising problem. It misweights the likelihood contributions of clients whose plug-in variance differs from $\Sigma_k(\b\tht_k)$, and the estimated support of the prior is distorted accordingly. On the other hand, as the formulation above shows, in many common models the covariance is a known function of the unknown parameter (e.g., the inverse Fisher information), so each client can report the function $\Sigma_k(\cdot)$ rather than a single matrix. What is needed is a method that handles covariances that are functions of the parameter.

This formulation encompasses a broad class of federated learning problems: least squares regression, generalized linear models, and robust M-estimators with smooth loss (e.g., Huber). In each case, the asymptotic covariance matrix $\Sigma_k(\b\tht)$ depends on $\b\tht$ through the model structure and data distribution, inducing heteroskedasticity that VANEB exploits. For instance, in Poisson regression with 
\[\rho(y,\mb z;\b\tht) = -y\langle\mb z,\b\tht\rangle + \exp(\langle\mb z,\b\tht\rangle),\]
the Fisher information depends on $\b\tht$ exponentially through the moment generating function of the covariates, so the variance varies by orders of magnitude across the parameter space (for a correctly specified model): 
\[\Sigma_k(\b\tht) = {\c I}_k(\b\tht)^{-1} = \E\lrp{e^{\langle\mb z_{k,1},\b\tht\rangle}\,\mb z_{k,1}{\mb z_{k,1}}^\top}^{-1},
\]
where the expectation is over the marginal covariate distribution. Similarly, in logistic regression with 
\[\rho(y,\mb z;\b\tht)=-y\langle\mb z,\b\tht\rangle + \log(1+\exp(\langle\mb z,\b\tht\rangle)),
\]
the Fisher information yields
\[\Sigma_k(\b\tht) = \E\lrp{\pi(\langle\mb z_{k,1},\b\tht\rangle)(1-\pi(\langle\mb z_{k,1},\b\tht\rangle))\,\mb z_{k,1}{\mb z_{k,1}}^\top}^{-1},\]
where $\pi(u)=e^u/(1+e^u)$. Here, the variance inflates when predicted probabilities approach $0$ or $1$, i.e., when $|\langle\mb z,\b\tht\rangle|$ is large. We do note that, in some cases, such as GLM, the covariance function may depend on unknown covariate distributions. In such cases, the clients will report estimated covariance functions based on their own observed data. We ignore the added uncertainty due to this here.

\subsection{Model}
We now state the statistical model studied in this paper. Let $\b\tht_1,...,\b\tht_K\in\R^d$ denote latent client parameters drawn i.i.d. from an unknown measure $G_0$ on $\R^d$, and suppose client $k$ reports a local estimator $\bx_k$. Motivated by the asymptotic normality of M-estimators, we adopt the following heteroskedastic Gaussian working model for our EB framework:
\begin{align}\label{eq:like}
    \sqrt{n_k}(\bx_k-\b\tht_k)\;|\;\b\tht_k~\sim~\c N(\b0_d,\Sigma_k(\b\tht_k)),\quad k=1,...,K
\end{align}
for a known covariance function $\Sigma_k:\R^{d}\to\R^{d\times d}$. Here, $n_k$ denotes the local sample size, and $n_k^{-1/2}$ represents the local consistency rate. The model thus captures differences across clients in both sample size and covariance structure.

To simplify notation, we introduce the rescaled local estimates $\tx_k:=\sqrt{n_k}\bx_k$. The likelihood of $\tx_k$ given an arbitrary parameter $\b\tht\in\R^d$ is then
\[\vrp^{(k)}(\tx_k;\b\tht):=|2\pi\Sigma_k(\b\tht)|^{-1/2}\exp\brcs{-\frac{1}{2}\nrm{\Sigma_k(\b\tht)^{-1/2}(\tx_k-\sqrt{n_k}\b\tht)}{}^2}.\]
Let $\un := \min_{k\le K}n_k$ and $\bn := \max_{k\le K}n_k$ denote the minimum and maximum local sample sizes over $K$ clients, respectively.
\begin{assumption}[Regularity]\label{ass:reg}
    For each client $k$, the covariance function $\Sigma_k:\R^d\to\R^{d\times d}$ is known and diagonal:
    \[
        \Sigma_k(\b\tht)=\diag\bigl(\sigma_{k,1}(\b\tht)^2,\ldots,\sigma_{k,d}(\b\tht)^2\bigr).
    \]
    There exist constants $0<\us<\bs<\infty$ and $L_\Sigma<\infty$ such that for all $\b\tht,\b\tht'\in\R^d$ and $k\le K$,
    \begin{align*}
        \us\mb I_d \preceq \Sigma_k(\b\tht) \preceq \bs\mb I_d,
        \qquad
        \nrm{\Sigma_k(\b\tht)-\Sigma_k(\b\tht')}{} \le L_\Sigma\nrm{\b\tht-\b\tht'}{},
    \end{align*}
    and each variance coordinate $\sigma_{k,i}(\cdot)^2$ is continuously differentiable with
    \[
        \sup_{\b\tht\in\R^d}\max_{i\le d,\,k\le K}\nrm{\nabla\log \sigma_{k,i}(\b\tht)^2}{} \le L_\Sigma.
    \]
    In addition, the client sample sizes are of the same order of magnitude, with $\bn/\un\to c_n$ for some constant $c_n\ge1$ as $K\to\infty$.
\end{assumption}
Here, $\mb A\preceq\mb B$ means $\mb B-\mb A$ is positive semi-definite. Assumption \ref{ass:reg} collects all covariance conditions used later in the paper: uniform ellipticity condition controls degeneracy and Gaussian tails, diagonality identifies the theory regime, and the Lipschitz/$C^1$ bounds provide the smoothness needed in the pseudo-EM and van Trees arguments. The bounded ratio $\bn/\un$ prevents any single client from dominating the aggregation.

Our main analysis is carried out under a diagonal heteroskedastic covariance model $\Sigma_k(\b\tht)$. This approximation also reduces the per-client communication from $d(d+1)/2$ covariance entries to $d$ variance coordinates \citep{jhunjhunwala2024fedfisher}. This regime is natural whenever the loss $\rho$ is approximately coordinate-separable or when a diagonal covariance approximation is already used for communication and computation. Then, Assumption \ref{ass:reg} implies
\[
    \us\le\inf_{\b\tht\in\R^d}\min_{i\le d,\,k\le K}\sigma_{k,i}(\b\tht)^2
    \le\sup_{\b\tht\in\R^d}\max_{i\le d,\,k\le K}\sigma_{k,i}(\b\tht)^2
    \le\bs.
\]

\subsection{Generalized Tweedie Formula and VANEB Estimator}
Our aim is to obtain personalized estimators $\bx_1,...,\bx_K$ that minimize the Bayes risk under the unknown prior,
\[\frac{1}{K}\sum_{k=1}^K\E\nrm{\bx_k-\b\tht_k}{}^2\]
and the Bayes rule for this risk is the posterior mean $\E(\b\tht_k\,|\,\tx_k)$ for each client. Under our Gaussian likelihood setup, the posterior mean is given by Tweedie's formula, provided the covariance matrix $\Sigma_k$ is known and not a function of the parameter \citep{efron2014two}. This is because $\Sigma_k$ can be pulled outside the integral when taking the gradient of the density function $f_{k,G}$, and one recovers the formula of \cite{soloff2025multivariate},
$$\E(\b\tht_k\,|\,\tx_k) = \frac{\tx_k}{\sqrt{n_k}} + \frac{\Sigma_k}{\sqrt{n_k}}\nabla\log f_{k,G_0}(\tx_k).$$

However, when $\Sigma_k = \Sigma_k(\b\tht)$ depends on the parameter, $\Sigma_k(\b\tht)$ cannot be pulled outside the integral, and this identity fails. We therefore derive a version of Tweedie's formula for parameter-dependent covariances. The key is to express the marginal density $f_{k,G}(\tx_k)$ and the variance-weighted score $\mb q_{k,G}(\tx_k)$ as functionals of the likelihood integrated against the prior distribution $G_0$ as follows,
\[
f_\GoSk(\tx_k)=\int\vrp^{(k)}(\tx_k;\b\tht)\,G_0({\r d}\b\tht), \quad \mb q_{k,G_0}(\tx_k) := \int\Sigma_k(\b\tht)\nabla\vrp^{(k)}(\tx_k;\b\tht)\,G_0({\r d}\b\tht).
\]
By the diagonal structure, we can write
\[\mb q_{k,G}(\tx_k)=\int\begin{bmatrix}\sigma_{k,1}(\b\tht)^2\frac{\partial}{\partial \underline{\h\tht}_1}\vrp^{(k)}(\tx_k;\b\tht) \\ \vdots \\ \sigma_{k,d}(\b\tht)^2\frac{\partial}{\partial \underline{\h\tht}_d}\vrp^{(k)}(\tx_k;\b\tht)\end{bmatrix}G({\r d}\b\tht)=:\begin{bmatrix}
    \frac{\partial}{\partial \underline{\h\tht}_1}\t f_{k,G,1}(\tx_k)\\\vdots\\\frac{\partial}{\partial \underline{\h\tht}_d}\t f_{k,G,d}(\tx_k)
\end{bmatrix}\]
where $\t f_{k,G,i}(\tx_k):=\int\sigma_{k,i}(\b\tht)^2\vrp^{(k)}(\tx_k;\b\tht)G({\r d}\b\tht)$ represents the \emph{variance-weighted marginal density} for coordinate $i$. The factor $\sigma_{k,i}(\cdot)^2$ accounts for the parameter-dependent variance in coordinate $i$. In homoskedastic settings, this factor reduces to a constant, and the formula collapses to the classical Tweedie formula.  This decomposition allows the scaled score to be expressed as a gradient of component-specific functions.

By direct computation, we expand the score vector as:
\begin{align*}
    \mb q_{k,G_0}(\tx_k)=\int \vrp^{(k)}(\tx_k;\b\tht)\cdot(\sqrt{n_k}\b\tht-\tx_k)\,G_0({\r d}\b\tht).
\end{align*}
From this decomposition, we obtain a generalization of Tweedie's formula that accommodates heteroskedastic, parameter-dependent noise:
\[
\sqrt{n_k}\E(\b\tht_k|\tx_k)=\tx_k+\frac{\mb q_{k,G_0}(\tx_k)}{f_\GoSk(\tx_k)},
\]
which in the original scale becomes
\[
\oraet_k:=\E(\b\tht_k|\tx_k)=\frac{\tx_k}{\sqrt{n_k}}+\frac{\mb q_{k,G_0}(\tx_k)}{\sqrt{n_k}f_\GoSk(\tx_k)}.
\]
The quantity $\oraet_k$ is the \textit{oracle Bayes estimator} for client $k$ under the true prior $G_0$. Compared with the classical Tweedie formula, the correction term now involves variance-weighted densities $\t f_{k,G,i}$, whose coordinate-wise weights $\sigma_{k,i}(\b\tht)^2$ make the shrinkage adaptive to the local noise level of each client.

The oracle Bayes estimator above, $\oraet_k$, depends on the unknown prior $G_0$, so the server must estimate $G_0$ from the estimates $\{\tx_k\}_{k=1}^K$ and covariance functions $\{\Sigma_k(\cdot)\}_{k=1}^K$. We do this with an NPMLE. Given an estimated prior $\h G$, the resulting \textit{VANEB personalized estimator} becomes
\begin{equation}
    {\bx_k}^v:=\frac{\tx_k}{\sqrt{n_k}}+\frac{\mb q_{k,\h G}(\tx_k)}{\sqrt{n_k}f_{k,\h G}(\tx_k)},
    \label{vanebest}
\end{equation}
where the NPMLE $\h G$ serves as a data-driven estimator for the unknown prior distribution $G_0$. We develop the NPMLE for our heteroskedastic setting and establish its convergence properties.

The correction term in \eqref{vanebest} weights coordinate $i$ by $\sigma_{k,i}(\cdot)^2$ through $\t f_{k,\h G,i}$. In a coordinate where the local variance is large, the likelihood is flat and the posterior mean moves substantially toward the regions where $\h G$ places its mass; where the local variance is small, the likelihood dominates, and the correction is small. The estimator therefore borrows more from the other clients exactly where the local estimate is least reliable, in the spirit of empirical Bayes shrinkage \citep{james1961estimation,stein1956inadmissibility,efron2012large}.

Compared with common FL optimizers, this update has a distinct role. FedAvg \citep{mcmahan2017communication} uses isotropic averaging of local steps, and FedProx/SCAFFOLD \citep{li2020federated,karimireddy2020scaffold} stabilize optimization drift through proximal regularization or control variates; all are primarily designed around iterative gradient optimization. In contrast, VANEB applies a one-shot empirical Bayes correction to the local estimates $\{\bx_k\}$ (which may be MLEs, M-estimators, or any asymptotically normal estimator; \cite{van2000asymptotic}), with the direction and magnitude of the correction determined by the client-specific covariance function $\Sigma_k(\cdot)$. Relative to personalized stochastic gradient descent methods (e.g., pFedMe/Per-FedAvg \citep{dinh2020personalized,fallah2020personalized}), VANEB shifts the onus from learning-rate tuning and repeated local iterations to estimating the shared prior $G_0$. It requires a single round of communication for the local estimates and a single server-side update for personalization, which is attractive when communication is expensive or local computation is constrained.

\section{NPMLE and VANEB Algorithm}\label{sec:npmle}
To estimate the unknown prior distribution $G_0$, we employ a nonparametric maximum likelihood approach \citep{koenker2014convex,saha2020nonparametric,soloff2025multivariate}. The NPMLE is defined as a solution $\h G$ to the following optimization problem:
\begin{align}
    \h G\in \argmax_{G\in{\c P}(\R^d)} \ell_K(G),~~\ell_K(G) := \frac{1}{K}\sum_{k=1}^K\log f_\GSk(\tx_k)
    \label{eq-NPMLE}
\end{align}
where ${\c P}(\R^d)$ is a set of all probability measures on $\R^d$. 
Here, $\ell_K(G)$ is the average log-marginal likelihood across all $K$ clients under the candidate prior distribution $G$.

\begin{lemma}\label{lem-characterization}
    Define the optimality gap as $D(\h G, \b\tht) := \frac{1}{K}\sum_{k=1}^K\frac{\vrp^{(k)}(\tx_k;\b\tht)}{f_{k,\h G}(\tx_k)} -1$. Then, any NPMLE, $\h G$ in \eqref{eq-NPMLE} satisfies $D(\h G, \b\tht) \le 0\text{ for all }\b\tht\in \R^d$ and is supported on $\c{Z} := \{\b\tht\,;\,D(\h G, \b\tht) = 0\}$, which are the global maximizers of the reweighted mixture density
    \begin{align*}
        \h\psi_K(\b\tht) := \sum_{k=1}^K \left(\frac{\h{L}_k^{-1}}{\sum_{i=1}^K\h{L}_i^{-1}}\right)\vrp^{(k)}(\tx_k;\b\tht),
    \end{align*}
    where $\h{L}_k=f_{k,\h G}(\tx_k)$ are the estimated marginal likelihoods.
\end{lemma}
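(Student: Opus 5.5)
The argument is the standard first-order (KKT) characterization of the NPMLE for a concave functional over a convex set, in the spirit of \cite{koenker2014convex} and \cite{soloff2025multivariate}. The parameter-dependent covariance plays no special role here, since $G$ enters $f_{k,G}$ linearly. Two facts drive everything. First, the map $G\mapsto f_{k,G}(\tx_k)=\int\vrp^{(k)}(\tx_k;\b\tht)\,G({\r d}\b\tht)$ is linear in $G$. Second, $\ell_K$ is concave on the convex set $\c P(\R^d)$. We also need $f_{k,\h G}(\tx_k)>0$ for every $k$, so that $D$ is well defined. This holds because $\vrp^{(k)}>0$ everywhere, and because the ellipticity in Assumption \ref{ass:reg} gives $\vrp^{(k)}\le(2\pi\us)^{-d/2}$, so each $f_{k,G}$ is bounded and $\ell_K$ is finite.

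\emph{Step 1: $D(\h G,\b\tht)\le0$ everywhere.} Fix $\b\tht$ and perturb toward a point mass, $G_\eps=(1-\eps)\h G+\eps\delta_{\b\tht}$ with $\eps\in[0,1]$. Linearity gives
\[
f_{k,G_\eps}(\tx_k)=f_{k,\h G}(\tx_k)+\eps\bigl(\vrp^{(k)}(\tx_k;\b\tht)-f_{k,\h G}(\tx_k)\bigr).
\]
Hence the one-sided derivative is
\[
\frac{d}{d\eps}\ell_K(G_\eps)\Big|_{\eps=0^+}=\frac1K\sum_{k=1}^K\Bigl(\frac{\vrp^{(k)}(\tx_k;\b\tht)}{f_{k,\h G}(\tx_k)}-1\Bigr)=D(\h G,\b\tht).
\]
Since $\h G$ maximizes $\ell_K$, this derivative must be nonpositive, which gives $D(\h G,\b\tht)\le0$. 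Differentiating $\log$ at $\eps=0$ is legitimate because each $f_{k,\h G}(\tx_k)>0$.

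\emph{Step 2: $\h G$ is supported on $\c Z$.} Integrate $D(\h G,\b\tht)$ against $\h G$ itself and use Fubini:
\[
\int D(\h G,\b\tht)\,\h G({\r d}\b\tht)=\frac1K\sum_{k=1}^K\frac{f_{k,\h G}(\tx_k)}{f_{k,\h G}(\tx_k)}-1=0.
\]
The integrand is nonpositive by Step 1 and integrates to zero, so $D(\h G,\cdot)=0$ holds $\h G$-a.s. The function $\b\tht\mapsto D(\h G,\b\tht)$ is continuous, because $\vrp^{(k)}(\tx_k;\cdot)$ is continuous under the continuity of $\Sigma_k$ and its eigenvalue bounds. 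Therefore $\c Z$ is closed, and any point of the support of $\h G$ lies in $\c Z$: otherwise some open neighborhood of it would have positive $\h G$-mass and $D<0$ there. This continuity-and-support point is the only step needing care.

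\emph{Step 3: identification with the maximizers of $\h\psi_K$.} Write $S=\sum_i\h L_i^{-1}$, so that $\h\psi_K(\b\tht)=S^{-1}\sum_k\h L_k^{-1}\vrp^{(k)}(\tx_k;\b\tht)$. Then
\[
D(\h G,\b\tht)=\frac{S}{K}\,\h\psi_K(\b\tht)-1.
\]
Step 1 says $\h\psi_K(\b\tht)\le K/S$ for all $\b\tht$. The points of $\c Z$ are exactly those where $\h\psi_K=K/S$. Since $\h G$ is a probability measure, $\c Z$ is nonempty, so this bound is attained and equals $\sup\h\psi_K$. Hence $\c Z$ is precisely the set of global maximizers of $\h\psi_K$, which completes the plan.
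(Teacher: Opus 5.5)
Your proof is correct, and it takes a more self-contained route than the paper. The paper gets $D(\h G,\b\tht)\le0$ by citing \citet[Lemma 1]{soloff2025multivariate}, namely the inequality $\frac1K\sum_k f_{k,G}(\tx_k)/f_{k,\h G}(\tx_k)\le 1$ for all $G$, specialized to $G=\dlt_{\b\tht}$. It also defers the support characterization to that same reference.

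Its own effort goes into a convex-geometry argument. The set of likelihood vectors $(f_{k,G}(\tx_k))_{k\le K}$ is the convex hull of the curve $\{(\vrp^{(k)}(\tx_k;\b\tht))_k\}$, together with the origin. On this set, $\sum_k\log$ is strictly concave and coordinatewise increasing, so the fitted vector is a unique boundary point. Carath\'eodory's theorem then gives a representation with at most $K$ atoms.

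You instead derive the first-order inequality directly from the one-sided derivative toward $\dlt_{\b\tht}$. You get the support claim from $\int D(\h G,\cdot)\,{\r d}\h G=0$ combined with continuity of $\b\tht\mapsto\vrp^{(k)}(\tx_k;\b\tht)$. You also make the link to $\h\psi_K$ explicit through $D=(S/K)\h\psi_K-1$ with $S=\sum_i\h L_i^{-1}$, which the paper never writes down.

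Each route buys something different. Yours shows exactly where the parameter-dependent covariance enters: only through continuity and the eigenvalue bounds of $\Sigma_k$. It also needs no compactness of the likelihood-vector set. The paper's detour yields information not in the statement. First, the fitted values $\h L_k$, and hence $\h\psi_K$ and $\c Z$, are the same for every NPMLE. Second, some NPMLE is discrete with at most $K$ atoms.
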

In the fixed-covariance case, the reweighted mixture density $\h\psi_K$ admits a ridgeline manifold that reduces support estimation to a tractable low-dimensional problem \citep[Lemma 3]{soloff2025multivariate}. When $\Sigma_k$ depends on $\b\tht$, no analogous characterization is available: the stationarity equation acquires additional terms involving the gradients of the variance functions (see Section \ref{sec:vaneb}), and its solutions admit no closed form. Nonetheless, this suggests an alternating update scheme. We first freeze the covariances at given $\b\tht$ and use the resulting ridgeline problem as the computational primitive inside VANEB. The covariance matrices are then updated at the new atoms using the known covariance functions $\Sigma_k(\cdot)$. We describe the detailed algorithm next.

\subsection{Computation of the VANEB Estimator}\label{sec:vaneb}
We begin with the intuition in the classical setting where $\Sigma_k$ is independent of $\b\tht$, and the ridgeline manifold $\c M$ (see Lemma \ref{lem-ridgeline}) characterizes the NPMLE atoms exactly. In this case, the score equation is linear in the atom, and the M-step reduces to a closed-form weighted average. While we allow for covariance matrices to depend on the parameter $\b\tht$, the following lemma will be useful to update the atoms of the mixture density at a particular list of fixed covariance matrices.

\begin{lemma}[Fixed-covariance ridgeline, \citealp{ray2005topography}]\label{lem-ridgeline} 
    Let us fix the plug-in covariances $\mb S_k:=\Sigma_k(\bx_k)$ for all $\bx_k$, $k=1,\dots,K$; and define the fixed-covariance mixture density
    \[\bar\psi(\b\tht;\mb w):=\sum_{k=1}^Kw_k|2\pi\mb S_k|^{-1/2}\exp\brcs{\frac{-1}{2}\nrm{(\mb S_k)^{-1/2}(\tx_k-\sqrt{n_k}\b\tht)}{}^2}.\]
    Every critical point of $\bar\psi(\cdot;\mb w)$ lies in the compact set
    \begin{equation}
        \label{eq-ridgeline}
        \begin{aligned}
            \c M &:= \left\{\lrp{\sum_{k=1}^Kw_kn_k\mb S_k^{-1}}^{-1}\sum_{k=1}^Kw_kn_k\mb S_k^{-1}\bx_k\,;\,w_k\ge0,\;\sum_{k=1}^Kw_k=1\right\}
        \end{aligned}
    \end{equation}
    whose element can be represented using at most $d+1$ nonzero weights.
\end{lemma}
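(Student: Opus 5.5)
The plan is to compute the gradient of $\bar\psi(\cdot;\mb w)$ directly and read off the stationarity equation. Write $c_k:=w_k|2\pi\mb S_k|^{-1/2}$ and
\[
e_k(\b\tht):=\exp\brcs{-\tfrac12\nrm{\mb S_k^{-1/2}(\tx_k-\sqrt{n_k}\b\tht)}{}^2}.
\]
Since $\mb S_k$ does not depend on $\b\tht$, differentiation gives
\[
\nabla\bar\psi(\b\tht;\mb w)=\sum_{k=1}^K c_k e_k(\b\tht)\sqrt{n_k}\,\mb S_k^{-1}\lrp{\tx_k-\sqrt{n_k}\b\tht}.
\]
Because $\tx_k=\sqrt{n_k}\bx_k$, each summand equals $c_ke_k(\b\tht)\,n_k\mb S_k^{-1}(\bx_k-\b\tht)$. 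Setting the gradient to zero and defining
\[
\alpha_k(\b\tht):=\frac{c_ke_k(\b\tht)}{\sum_j c_je_j(\b\tht)}\ge0,\qquad \sum_k\alpha_k=1,
\]
the stationarity equation becomes
\[
\lrp{\sum_k\alpha_kn_k\mb S_k^{-1}}\b\tht=\sum_k\alpha_kn_k\mb S_k^{-1}\bx_k .
\]
The normalization is legitimate because $\sum_jc_je_j>0$ as long as some $w_j>0$.

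The matrix on the left is positive definite, since Assumption \ref{ass:reg} gives $\mb S_k\preceq\bs\mb I_d$, hence $\mb S_k^{-1}\succeq\bs^{-1}\mb I_d$, and the $\alpha_k$ are convex weights. Inverting it shows $\b\tht$ has exactly the form in \eqref{eq-ridgeline} with weights $\alpha_k(\b\tht)$, so every critical point lies in $\c M$. This is the argument of \citet{ray2005topography}, and here it only needs translating into the rescaled parametrization.

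Compactness follows because $\c M$ is the image of the simplex $\Delta_{K-1}$ under the map
\[
\mb w\mapsto\lrp{\sum_kw_kn_k\mb S_k^{-1}}^{-1}\sum_kw_kn_k\mb S_k^{-1}\bx_k .
\]
This map is continuous on the whole simplex: the inverse exists everywhere since $\sum_kw_kn_k\mb S_k^{-1}\succeq\un\,\bs^{-1}\mb I_d$. The continuous image of a compact set is compact.

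The main obstacle is the claim that every element of $\c M$ can be written with at most $d+1$ nonzero weights. The map above is not affine in $\mb w$, so Carathéodory's theorem does not apply directly. My approach is to linearize through the diagonal structure. With $\mb S_k=\diag(s_{k,1},\dots,s_{k,d})$, a point $\b\tht\in\c M$ with weights $\mb w$ satisfies, coordinatewise,
\[
\sum_kw_kn_ks_{k,i}^{-1}\lrp{\bx_{k,i}-\tht_i}=0,\qquad i=1,\dots,d .
\]
For fixed $\b\tht$ these are $d$ linear equations in $\mb w$. Together with $\sum_kw_k=1$ and $\mb w\ge0$, they define a nonempty polytope in $\mb w$. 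A vertex of this polytope has at most $d+1$ nonzero coordinates, because the number of nonzero coordinates of a basic feasible solution is at most the number of equality constraints. Any such vertex reproduces the same $\b\tht$, since the stationarity relation is equivalent to $\b\tht$ being that weighted average. The same argument works for non-diagonal $\mb S_k$, writing the $d$ vector equations $\sum_kw_kn_k\mb S_k^{-1}(\bx_k-\b\tht)=\b0_d$. This vertex argument is the step I would check most carefully, including that the polytope is nonempty precisely because $\b\tht\in\c M$.
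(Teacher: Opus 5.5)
Your proof is correct. It is more self-contained than the paper's proof, which is a single sentence deferring the whole statement to \citet[Theorem~1]{ray2005topography} with the covariances frozen. Your gradient computation reproduces that theorem's content in the rescaled parametrization. The $\sqrt{n_k}$ from the chain rule and the one from $\tx_k=\sqrt{n_k}\bx_k$ combine into the precision weights $n_k\mb S_k^{-1}$. The ridgeline weights at a critical point come out as the responsibilities $\alpha_k(\b\tht)\propto w_ke_k(\b\tht)$, which is what the M-step of Algorithm~\ref{alg:vaneb} exploits. You also correctly require $\mb w$ to be a genuine probability vector, since otherwise $\bar\psi\equiv0$ and every point is critical.

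What your route adds is an actual argument for the last clause. The paper gives no separate justification for representing each element of $\c M$ with at most $d+1$ nonzero weights, and your linear-programming vertex argument supplies one. You can state that step more directly, and the obstacle you flag (the map is not affine in $\mb w$) disappears once $\b\tht$ is fixed. Indeed, $\b\tht\in\c M$ exactly when $\b 0_d$ lies in the convex hull of the vectors $\mb v_k:=n_k\mb S_k^{-1}(\bx_k-\b\tht)\in\R^d$. Carath\'eodory's theorem in $\R^d$ then gives a convex combination using at most $d+1$ of them. Invertibility of $\sum_k w_kn_k\mb S_k^{-1}$ shows that these sparse weights reproduce the same point. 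As you observed, diagonality of $\mb S_k$ plays no role, so the statement holds for arbitrary positive definite frozen covariances.
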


However, when $\Sigma_k(\b\tht)$ depends on $\b\tht$, the score of the mixture density under our setup $\psi(\b\tht)=\sum_{k=1}^Kw_k\vrp^{(k)}(\tx_k;\b\tht)$ becomes
\[\nabla\psi(\b\tht)=\sum_{k=1}^Kw_k\vrp^{(k)}(\tx_k;\b\tht)\,\mb s_k(\b\tht),\]
where
\[\mb s_k(\b\tht):=\nabla_{\b\tht}\log\vrp^{(k)}(\tx_k;\b\tht)=\sqrt{n_k}\Sigma_k(\b\tht)^{-1}(\tx_k-\sqrt{n_k}\b\tht)+\sum_{i=1}^d\brcs{\frac{(\underline{\h\tht}_{k,i}-\sqrt{n_k}\tht_i)^2}{\sigma_{k,i}(\b\tht)^2}-1}\frac{\nabla\sigma_{k,i}(\b\tht)}{\sigma_{k,i}(\b\tht)}\]
is the log-likelihood score of client $k$. So unlike \cite{ray2005topography,saha2020nonparametric,soloff2025multivariate}, the stationarity equation $\nabla\psi(\b\tht)=\b0_d$ involves the gradients of the variance functions $\sigma_{k,i}$, and no closed-form solution exists for the critical point(s).

Algorithm \ref{alg:vaneb} addresses this intractability by alternating between (i) solving a fixed-covariance NPMLE subproblem, where substituting $\mb S_k=\Sigma_k(\mb a_j)$ recovers the tractable ridgeline property, and (ii) updating the covariances evaluated at the newly estimated atom locations. This iterative scheme produces, at convergence, a fixed point of the coupled atom-covariance system and can be viewed as an approximation by a sequence of computable ridgeline manifolds.

\subsubsection{Algorithm}\label{sec:algorithm}

Our implementation uses a fixed-covariance conic solver only as an initialization, followed by a custom pseudo-EM routine in which
\begin{enumerate}[leftmargin=*]
    \item The E-step computes responsibilities using the current atom-dependent likelihoods,
    \item The M-step updates each atom by precision-weighted averaging, and
    \item The covariance is re-evaluated at the updated atoms before the next loop begins.
\end{enumerate}
Algorithm \ref{alg:vaneb} summarizes this procedure. Let us denote a point mass at $\b a\in\R^d$ by $\dlt_{\b a}$.
\begin{algorithm}[t]
    \DontPrintSemicolon
    \KwIn{Observations $\{\bx_k\}_{k=1}^K$; covariance function $\Sigma_k$; Pseudo-EM iterations $T$}
    \KwOut{Personalized estimates $\{{\bx_k}^v\}_{k=1}^K$}
    \BlankLine
    Initialize atoms $\{\mb a_j\}_{j=1}^m \gets \{\bx_k\}_{k=1}^K$ and covariance $\Sigma_{k,j} \gets \Sigma_k(\mb a_j)$\;
    Fit NPMLE weights: $\h{\mb w} \gets \argmax_{\mb w} \sum_{k=1}^K \log \sum_{j=1}^m w_j\,\vrp^{(k)}(\sqrt{n_k}\bx_k; \mb a_j)$
    \For{$j = 1, \ldots, m$}{
        \For{$t = 1, \ldots, T$}{
        E-step: $r_{kj} \propto \h w_j\,\vrp^{(k)}(\sqrt{n_k}\bx_k; \mb a_j)$\;
        M-step: $\mb a_j \gets \dfrac{\sum_{k=1}^K r_{kj}\,[\Sigma_{k,j}/n_k]^{-1}\bx_k}{\sum_{k=1}^K r_{kj}\,[\Sigma_{k,j}/n_k]^{-1}}$\;
        Recompute covariances: $\Sigma_{k,j} \gets \Sigma_k(\mb a_j)$ \quad \tcp{Atom-covariance coupling}
        Update weights: $\h w_j \gets \dfrac{1}{K}\sum_{k=1}^K r_{kj}$\;
        }
    }
    \Return ${\bx_k}^v \gets \sum_{j=1}^m r_{kj}\,\mb a_j$ for all $k$
    \caption{VANEB Estimator}
    \label{alg:vaneb}
\end{algorithm}
The final estimate produced by Algorithm \ref{alg:vaneb},
\[
{\bx_k}^v = \sum_{j=1}^m r_{kj}\,\mb a_j
= \frac{\sum_{j=1}^m \h w_j\,\mb a_j\,\vrp^{(k)}(\tx_k; \mb a_j)}{\sum_{j=1}^m \h w_j\,\vrp^{(k)}(\tx_k; \mb a_j)},
\]
is the posterior mean under the fitted discrete mixture prior $\h{G} = \sum_{j=1}^m \h w_j\,\dlt_{\mb a_j}$ and the heteroskedastic likelihood $\vrp^{(k)}(\tx_k; \b\tht)$. This is equivalent to applying the modified heteroskedastic Tweedie formula with the NPMLE $\h{G}$ in place of $G_0$:
\[
    {\bx_k}^v
    = \frac{\int \b\tht\,\vrp^{(k)}(\tx_k; \b\tht)\,{\r d}\h{G}(\b\tht)}{\int \vrp^{(k)}(\tx_k; \b\tht)\,{\r d}\h{G}(\b\tht)}
    = \frac{\tx_k}{\sqrt{n_k}}+\frac{\mb q_{k,\h G}(\tx_k)}{\sqrt{n_k}f_{k,\h G}(\tx_k)}.
\]
The pseudo-EM iterations refine the atom locations and weights, while the Tweedie representation provides a closed-form expression for the resulting posterior mean; this is the representation analyzed in Section \ref{sec:theory}.
\subsubsection{Computation}
Following \cite{koenker2014convex}, we restrict the NPMLE to a support-constrained version. For any non-empty, closed set $\c A \subseteq \R^d$, define the support-constrained NPMLE as
\begin{align}
    \h G^{\c A} &\in \argmax_{G \in {\c P}(\c A)} \ell_K(G), \label{eq-discretized-npmle-defn}
\end{align}
where $\c P(\c A)$ denotes the collection of probability measures supported on $\c A$. By construction, $\h G = \h G^{\R^d}$.

For computation, the key tuning parameter is the grid resolution $\dlt$. As in \cite{soloff2025multivariate}, we discretize the ridgeline region $\c M$ by covering it with axis-aligned hypercubes:
\[{\c H} = \left\{\mb H_j + [-\dlt/2, \dlt/2]^d\,;\,j\in\{1,\dots, J\}\right\},\]
where the centers $\{\mb H_1, \dots, \mb H_J\}$ are spaced $\dlt$ apart. The candidate support $\c A$ is then the set of all $2^d$ corner points of each hypercube, i.e., $\mb H_j + \frac{\dlt}{2}\mb v$ for $\mb v \in \{-1, 1\}^d$. Since $\c M$ is compact, $\c A = \{\mb a_1, \ldots, \mb a_m\}$ is finite, and the support-constrained NPMLE reduces to a finite-dimensional optimization over mixing weights: $\h G^{\c A} = \sum_{j=1}^m \h w_j\dlt_{\mb a_j}$ where
\begin{align}
    \h{\mb w} \in \argmax_{\mb w\in \Dlt_{m-1}} \frac{1}{K}\sum_{k=1}^K\log \left(\sum_{j=1}^mw_j\vrp^{(k)}(\tx_k; \mb a_j)\right).
    \label{eq-discretized-npmle}
\end{align}
In \cite{soloff2025multivariate}, the precision matrices stay fixed throughout the optimization. Here they must move with the atoms. The role of the discretized solve is therefore only to provide a stable initialization for the support and weights; the subsequent pseudo-EM iterations in Algorithm \ref{alg:vaneb} perform the heteroskedastic recoupling.

The following proposition quantifies the approximation error introduced by this discretization. As $\dlt\to 0$, the discretized NPMLE's log-likelihood converges to that of the unrestricted NPMLE, with an explicit bound that can guide practical grid selection:

\begin{proposition}[\citealp{soloff2025multivariate}]\label{prop-approximation} 
    Let ${\c M}\subset\R^d$ be any compact set containing all solutions to the true NPMLE problem \eqref{eq-NPMLE}. Assume the diameter of ${\c M}$ is at most $D$ and fix grid spacing $\dlt \in \left(0, \sqrt{\frac{3}{4d}}\frac{\us}{D}\right)$. Construct a grid covering ${\c H}$ of ${\c M}$ with hypercubes of width $\dlt$, and let ${\c A}$ be the set of corner points. Then any discretized NPMLE solution $\h G^{\c A}$ from \eqref{eq-discretized-npmle-defn} satisfies the approximation error bound
    \begin{align}
        \sup_{G\in{\c P}(\R^d)}\ell_K(G)
        - \ell_K(\h G^{{\c A}}) \le \frac{d}{\us}\lrp{\frac{1}{2}+\frac{2D^2}{\us}}\dlt^2.
        \label{eq-log-likelihood-approximation}
    \end{align}
\end{proposition}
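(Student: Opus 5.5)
The plan is to compare $\h G$ with a prior supported on the grid corners. I will push an NPMLE onto the corners $\c A$ by a randomized rounding that preserves its mean, and then show that each marginal likelihood drops by at most a factor $e^{-c\dlt^2}$. Fix an NPMLE $\h G$; by hypothesis it is supported in $\c M$, hence in $\bigcup\c H$. For $\b\tht$ in a cube $\mb H_j+[-\dlt/2,\dlt/2]^d$, I define a probability $Q_{\b\tht}$ on the $2^d$ corners of that cube. It is the product over coordinates of two-point laws on $\{H_{j,i}-\dlt/2,\,H_{j,i}+\dlt/2\}$, each chosen with mean $\tht_i$. So $\E_{Q_{\b\tht}}[\mb a]=\b\tht$, the coordinates are independent, and $\Var_{Q_{\b\tht}}(a_i)\le \dlt^2/4$. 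Set $G^\sharp:=\int Q_{\b\tht}\,\h G({\r d}\b\tht)\in\c P(\c A)$. Since $\ell_K(\h G^{\c A})\ge\ell_K(G^\sharp)$ and $\sup_G\ell_K(G)=\ell_K(\h G)$, it suffices to show that for every $k$
\[
f_{k,G^\sharp}(\tx_k)\ \ge\ e^{-c\dlt^2} f_{k,\h G}(\tx_k),\qquad c=\frac{d}{\us}\Bigl(\frac12+\frac{2D^2}{\us}\Bigr).
\]
Averaging $\log$ of this inequality over $k$ then gives \eqref{eq-log-likelihood-approximation}.

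The pointwise inequality comes from Jensen's inequality applied to the exponential:
\[
\int\vrp^{(k)}(\tx_k;\mb a)\,Q_{\b\tht}({\r d}\mb a)\ \ge\ \vrp^{(k)}(\tx_k;\b\tht)\exp\Bigl\{\E_{Q_{\b\tht}}\bigl[\log\vrp^{(k)}(\tx_k;\mb a)-\log\vrp^{(k)}(\tx_k;\b\tht)\bigr]\Bigr\}.
\]
I will Taylor-expand $g(\mb a):=\log\vrp^{(k)}(\tx_k;\mb a)$ to second order around $\b\tht$. The linear term $\langle\nabla g(\b\tht),\mb a-\b\tht\rangle$, with $\nabla g=\mb s_k$ as computed in Section \ref{sec:vaneb}, has $Q_{\b\tht}$-mean zero because $Q_{\b\tht}$ has mean $\b\tht$. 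The quadratic term is controlled by $\frac12\sup\lambda_{\max}(-\nabla^2 g)$ times $\E\nrm{\mb a-\b\tht}{}^2\le d\dlt^2/4$, where the supremum is taken over the segment, which stays inside the cube. Integrating the resulting bound against $\h G$ gives the displayed inequality, with $c$ equal to $\tfrac{d}{8}\sup\lambda_{\max}(-\nabla^2 g)$.

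The main obstacle is bounding $-\nabla^2 g$ uniformly over the relevant region so as to recover the stated constant. I would proceed in the parametrization in which the quadratic form is $\frac12(\mb y-\b\tht)^\top\Sigma_k^{-1}(\mb y-\b\tht)$, with observations and atoms both in $\c M$ of diameter $D$; this is the scaling implicit in the proposition, which carries no $n_k$. In the fixed-covariance part, the Hessian is $\Sigma_k^{-1}\preceq \us^{-1}\mb I$, which yields the $\frac{1}{2\us}$ contribution. The remaining part of the constant, of the form $2D^2/\us^2$, should come from the terms in which derivatives hit $\Sigma_k(\cdot)^{-1}$ and the log-determinant. In these terms the residual $\nrm{\mb y-\b\tht}{}$ is at most $D$ (after enlarging slightly by $\sqrt d\dlt$, which the restriction on $\dlt$ keeps harmless). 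The eigenvalue bounds of Assumption \ref{ass:reg}, namely $\us\le\sigma_{k,i}^2\le\bs$ and bounded $\nabla\log\sigma_{k,i}^2$, then control these pieces by $D^2/\us^2$ up to absolute factors.

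The condition $\dlt<\sqrt{3/(4d)}\,\us/D$ is exactly what I expect to be needed here. It keeps $d\dlt^2 D^2/\us^2\le 3/4$, so the exponential remainder can be absorbed, or equivalently a cruder inequality $e^{-x}\ge 1-x$ can be applied, without losing the constant. If the heteroskedastic curvature terms do not close with exactly this constant, my fallback is to bound them through the Lipschitz constant $L_\Sigma$ and state the constant accordingly. I would also check that Soloff et al.'s fixed-covariance argument applies verbatim once the plug-in covariances $\mb S_k$ are frozen, which is the setting of Lemma \ref{lem-ridgeline}.
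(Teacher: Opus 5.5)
There is a genuine gap, and it is exactly the step you single out as the main obstacle. In your parametrization take $d=1$ and $v:=\log\sigma^2$, so that $g(a)=-\tfrac12\log(2\pi)-\tfrac12 v(a)-\tfrac12(y-a)^2e^{-v(a)}$. Then
\[
-g''(a)=e^{-v}+2(y-a)e^{-v}v'+\tfrac12(y-a)^2e^{-v}(v')^2+\tfrac12 v''\bigl(1-(y-a)^2e^{-v}\bigr).
\]
Only the first term is the fixed-covariance curvature. The next two are bounded by $2DL_\Sigma/\underline{s}$ and $D^2L_\Sigma^2/(2\underline{s})$, so they bring in $L_\Sigma$, which the target constant does not contain. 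The last term involves $v''$, and Assumption~\ref{ass:reg} says nothing about it, since it only requires $\sigma_{k,i}^2\in C^1$. So the claim that the eigenvalue bounds and the bound on $\nabla\log\sigma_{k,i}^2$ control these terms by $D^2/\underline{s}^2$ is false. Your fallback would prove a different statement, and it would still need an extra $C^{1,1}$ hypothesis on the variance functions.

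In fact no argument can give an $L_\Sigma$-free constant once $\Sigma_k$ genuinely depends on $\theta$. Take $K=d=1$, observation $0$, $\mathcal M=[-D/2,D/2]$ with $D\le\delta$, and the single cell $[-\delta/2,\delta/2]$. Let $\sigma^2$ be smooth with $\sigma^2(0)=\underline{s}$ and $\sigma^2(\pm\delta/2)=\bar s$. The NPMLE is then the point mass at $0$, with log-likelihood $-\tfrac12\log(2\pi\underline{s})$. Every prior on the corners $\pm\delta/2$ has log-likelihood at most $-\tfrac12\log(2\pi\bar s)$. The loss is therefore at least $\tfrac12\log(\bar s/\underline{s})$, which exceeds the stated bound for small $\delta$. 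Yet Assumption~\ref{ass:reg} holds here, with $L_\Sigma\asymp 1/\delta$.

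So the proposition is provable only with frozen covariances $\mathbf S_k$, as in Soloff et al. The paper's proof does not escape this either. Its corner measure $H_j$ must satisfy the matching identity for every $k$ at once while $\Sigma_k(\mathbf u)$ varies over the corners. That is guaranteed only when the covariances are frozen, where the identity reduces to $\int\mathbf u\,\mathrm dH_j(\mathbf u)=\hat w_j\hat{\mathbf a}_j$.

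In the frozen case your route is correct and genuinely different from the paper's. The paper matches first moments on the corners and applies the Jiang--Zhang inequality with $\|\mathbf y_j\|\le\sqrt{d/\underline{s}}\,\delta$ and $\|\mathbf z_j\|\le D/\sqrt{\underline{s}}$. Its $2D^2/\underline{s}$ term is the $\langle\mathbf z_j,\mathbf y_j\rangle^2$ remainder, and the restriction on $\delta$ serves only to apply $1-x\ge e^{-2x}$ for $x\le 3/4$. That term is an artifact of the first-order expansion, not a heteroskedastic correction as you guessed. Your rounding plus Jensen argument is sharper, because with frozen covariances $g$ is exactly quadratic:
\[
\mathbb E_{Q_\theta}g(\mathbf a)-g(\theta)=-\tfrac12\,\mathbb E_{Q_\theta}(\mathbf a-\theta)^\top\mathbf S_k^{-1}(\mathbf a-\theta)\ge-\frac{d\delta^2}{8\underline{s}}.
\]
This gives the loss bound $d\delta^2/(8\underline{s})$, with no $D$ and no restriction on $\delta$. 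It also avoids needing the observations to lie within $D$ of the atoms, which the paper's bound on $\|\mathbf z_j\|$ tacitly uses. Finally, make the scaling explicit: if atoms enter as $\sqrt{n_k}\theta$ and the grid lives in $\theta$-coordinates, the same computation gives $d\bar n\delta^2/(8\underline{s})$.
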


\subsubsection{Convergence of VANEB Iterates}
We next state a fixed-point guarantee for the iterative scheme in Algorithm \ref{alg:vaneb}. The key is that contraction is controlled by Brouwer's fixed-point theorem.
\begin{lemma}[Convergence of VANEB iterates]\label{lem-vaneb-convergence}
    Under Assumption \ref{ass:reg}, at iterate $(t)$, fix E-step weights $w_{kj}^{(t)}\propto w_k\vrp^{(k)}(\tx_k;\mb a_j^{(t)})$ with $\sum_{k=1}^Kw_{kj}^{(t)}=1$, and define
    \[\bar{\b\psi}_j(\mb a):=\lrp{\sum_{k} w_{kj}^{(t)}n_k\Sigma_k(\mb a)^{-1}}^{-1}\sum_{k}w_{kj}^{(t)}n_k\Sigma_k(\mb a)^{-1}\bx_k.\]
    Write $n_j^w:=\sum_{k=1}^Kw_{kj}^{(t)}n_k$ and the local weighted radius
    \[
        R_j^w:=\inf_{\mb u\in\R^d}\brcs{\sum_{k=1}^K\frac{w_{kj}^{(t)}n_k}{n_j^w}\nrm{\bx_k-\mb u}{}},
    \]
    and set $L_{\bar\psi,j}:=\tau(1+\tau)R_j^w L_\Sigma/\us$ and $R:=\max_{1\le k,\ell\le K}\nrm{\bx_k-\bx_\ell}{}$. Assume that $\mb a_j^{(t)}\in\bigcup_k{\bb B}_{r_K}(\bx_k)$ for $r_K:=\sqrt{\frac{2\bs}{\un}\log\lrp{e\tau^{d/2}K}}$. Then the following holds.
    \begin{enumerate}
        \item A fixed point $\mb a_j^*$ exists and satisfies
        \[\sum_{k=1}^Kw_{kj}^{(t)}n_k\Sigma_k(\mb a_j^*)^{-1}(\bx_k-\mb a_j^*)=\b0_d.\]
        If $L_{\bar\psi,j}<1$, then $\mb a_j^*$ is unique and the iterates of $\bar{\b\psi}_j$ converge to it at rate $L_{\bar\psi,j}$.
        Furthermore, under \eqref{eq:like} and $K=o(e^\un)$, 
        \[
            R_j^w={\c O}_P\bxs{\sqrt{\frac{\bs\tau}{\un}\log\bigl(e\tau^{d/2}K^2\un(1+R)^2\bigr)}}.
        \]
        Also write $R_\tht:=\max_{k,\ell}\nrm{\b\tht_k-\b\tht_\ell}{}$. Then
        \begin{align}\label{eq-R-decomp}
            \bigl|R-R_\tht\bigr|=\cO_P\lrp{\sqrt{\frac{d+\log K}{\un}}},
        \end{align}
        and $L_{\bar\psi,j}=o_P(1)$ under the following:
        \begin{enumerate}[label=(\alph*)]
            \item If $R_\tht\le D_0$ almost surely for some $D_0>0$.
            \item If $G_0$ has a sub-Gaussian tail with finite mean.
        \end{enumerate}
        \item Under the conditions in 1 and $K=o(e^{\sqrt{\un}})$, for each atom $j$ there exists a true critical point $\mb a_{j0}^*$ of the score, $\sum_{k=1}^Kw_{kj}^{(t)}\mb s_k(\mb a_{j0}^*)=\b0_d$, with
        \begin{align}\label{eq-full-score-distance}
            \nrm{\mb a_{j0}^*-\mb a_j^*}{}=o_P\lrp{1}.
        \end{align}
    \end{enumerate}
\end{lemma}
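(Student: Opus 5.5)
Our plan is to treat the atom update as a fixed-point problem for the map $\bar{\b\psi}_j$ and split the argument into four parts: existence via Brouwer, contraction via a Lipschitz bound, probabilistic control of $R_j^w$ and $R$, and a perturbation argument comparing the pseudo-score with the full score. The parameter $\tau$ is read as the condition ratio $\bs/\us$, which bounds the condition number of every precision matrix $\Sigma_k(\mb a)^{-1}$.

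\textbf{Existence.} For any $\mb a$, $\bar{\b\psi}_j(\mb a)$ is a matrix-weighted average of the $\bx_k$. The weights $\mb M_k(\mb a)=\lrp{\sum_\ell w_{\ell j}n_\ell\Sigma_\ell(\mb a)^{-1}}^{-1}w_{kj}n_k\Sigma_k(\mb a)^{-1}$ are positive semidefinite and sum to $\mb I_d$. Since the $\Sigma_k$ are diagonal, each coordinate of $\bar{\b\psi}_j(\mb a)$ is a convex combination of the corresponding coordinates of the $\bx_k$. Hence $\bar{\b\psi}_j$ maps the compact convex box $\prod_i[\min_k\h\tht_{k,i},\max_k\h\tht_{k,i}]$ into itself. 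The map is continuous by the Lipschitz property of $\Sigma_k$ in Assumption \ref{ass:reg}, so Brouwer gives a fixed point $\mb a_j^*$. Multiplying the fixed-point identity through by $\sum_k w_{kj}n_k\Sigma_k(\mb a_j^*)^{-1}$ gives the stated equation.

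\textbf{Contraction.} Write $\bar{\b\psi}_j(\mb a)-\mb u=\sum_k\mb M_k(\mb a)(\bx_k-\mb u)$, which holds for any centre $\mb u$. Differentiating in $\mb a$ and using $\nrm{\partial\Sigma^{-1}}{}\le\nrm{\Sigma^{-1}}{}^2\nrm{\partial\Sigma}{}\le L_\Sigma/\us^2$, each $\mb M_k$ has derivative of order $\tau(1+\tau)L_\Sigma/\us$ times its weight $w_{kj}n_k/n_j^w$. Summing against $\nrm{\bx_k-\mb u}{}$ and taking the infimum over $\mb u$ yields $\mathrm{Lip}(\bar{\b\psi}_j)\le L_{\bar\psi,j}$. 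Banach's fixed-point theorem on the invariant box then gives uniqueness and geometric convergence at rate $L_{\bar\psi,j}$.

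\textbf{Bounds on $R_j^w$ and $R$.} Under \eqref{eq:like}, a union bound over $K$ Gaussian vectors with covariance $\Sigma_k/n_k\preceq(\bs/\un)\mb I_d$ gives $\max_k\nrm{\bx_k-\b\tht_k}{}=\cO_P\lrp{\sqrt{(d+\log K)/\un}}$. The triangle inequality then gives \eqref{eq-R-decomp}.

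For $R_j^w$, the E-step weights $w_{kj}\propto\vrp^{(k)}(\tx_k;\mb a_j)$ decay like $\exp(-n_k\nrm{\bx_k-\mb a_j}{}^2/(2\bs))$ relative to the clients nearest $\mb a_j$. The ratio of determinant prefactors is at most $\tau^{d/2}$. Take $\mb u=\mb a_j$ and split clients into those within radius $r$ of $\mb a_j$ and those outside. The outside mass is at most $K\tau^{d/2}e^{-\un r^2/(2\bs)}$ times $(1+R)$. Choosing $r^2\asymp(\bs\tau/\un)\log(e\tau^{d/2}K^2\un(1+R)^2)$ makes the outside contribution negligible, which gives the stated rate. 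The assumption $\mb a_j^{(t)}\in\bigcup_k\bb B_{r_K}(\bx_k)$ guarantees that some client carries non-negligible weight, so the normalization is controlled. Under (a) or (b), $R$ is $\cO_P(1)$ or $\cO_P(\sqrt{\log K})$ respectively. Hence $L_{\bar\psi,j}\lesssim R_j^w=\cO_P(\sqrt{\log K/\un})$ up to logs, and this is $o_P(1)$ because $K=o(e^{\un})$.

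\textbf{Full score.} Decompose $\sum_k w_{kj}\mb s_k(\mb a)=\b F(\mb a)+\b E(\mb a)$. Here $\b F(\mb a)=\sqrt{n_k}$-weighted precision form that vanishes at $\mb a_j^*$; rescaled, it is $\sum_k w_{kj}n_k\Sigma_k(\mb a)^{-1}(\bx_k-\mb a)$. The term $\b E$ collects the variance-gradient terms, with $\nrm{\b E}{}\le L_\Sigma\sum_k w_{kj}\bigl(n_k\nrm{\bx_k-\mb a}{}^2/\us+1\bigr)$. Near $\mb a_j^*$ the weighted squared distances are $\cO_P(R_j^{w2})$, so $n_k R_j^{w2}=\cO_P(\log K)$.

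The Jacobian of $\b F$ is approximately $-\sum_k w_{kj}n_k\Sigma_k^{-1}$, which has magnitude at least $n_j^w/\bs$, with a correction of order $n_j^wL_{\bar\psi,j}$. Apply a degree or Brouwer argument to $\mb a\mapsto\mb a+\mb J^{-1}(\b F+\b E)(\mb a)$ on a ball of radius $\eta$ around $\mb a_j^*$. This produces a zero $\mb a_{j0}^*$ of the full score with $\nrm{\mb a_{j0}^*-\mb a_j^*}{}\lesssim\bs L_\Sigma(\log K+1)/(\us\un)$. This is $o_P(1)$ under $K=o(e^{\sqrt{\un}})$, since then $\log K=o(\sqrt{\un})$.

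\textbf{Main obstacle.} The key difficulty is this last step. We must show that the Jacobian of the full score stays invertible on the ball despite the $\b E$ term, whose own derivative involves $n_k\nrm{\bx_k-\mb a}{}$ terms. The approach is to bound those derivatives by $L_\Sigma n_j^wR_j^w/\us$, which is small relative to $n_j^w/\bs$ because $R_j^w=o_P(1)$.
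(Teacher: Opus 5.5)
Your overall architecture is the paper's: Brouwer for existence, a global Lipschitz bound on $\bar{\b\psi}_j$ plus Banach, Gaussian concentration for $R$, a near/far split of the E-step weights for $R_j^w$, and Brouwer on a perturbed fixed-point map for the full score. Your variations in the first two steps are fine. The coordinatewise box uses diagonality and is tighter than the paper's ball $\bb B_{\tau R}(\bx_1)$. That ball comes from the matrix-average bound $\nrm{\bar{\b\psi}_j(\mb a)-\mb u}{}\le\tau\sum_k\alpha_k\nrm{\bx_k-\mb u}{}$ with $\alpha_k=w_{kj}^{(t)}n_k/n_j^w$, which needs no diagonality. Differentiating $\mb M_k(\mb a)$ gives exactly the constant $\tau(1+\tau)R_j^wL_\Sigma/\us$ that the paper obtains by resolvent differencing. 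Two steps, however, do not go through as written.

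\textbf{The bound on $R_j^w$ drops the client factor $w_k$.} You write $w_{kj}\propto\vrp^{(k)}(\tx_k;\mb a_j)$. The statement has $w_{kj}^{(t)}\propto w_k\vrp^{(k)}(\tx_k;\mb a_j^{(t)})$, and in the algorithm, normalizing the responsibilities $r_{kj}$ over $k$ gives $w_k\propto\h L_k^{-1}=f_{k,\h G}(\tx_k)^{-1}$. If these factors were uncontrolled, a single far client could carry most of the mass and $R_j^w$ would be of order $R$.

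The paper closes this with the NPMLE density floor \eqref{eq-npmle-obs-bound}, $f_{k,\h G}(\tx_k)\ge\rho_0=(2\pi\bs)^{-d/2}/(eK)$, which follows from local maximality \eqref{eq-apprx-npmle-2}. Combined with $f_{k,\h G}\le(2\pi\us)^{-d/2}$, this gives $w_k/w_{k'}\le eK\tau^{d/2}$. That factor is the second $K$ in $\log(e\tau^{d/2}K^2\un(1+R)^2)$. Your outside-mass estimate has only one factor of $K$, from summing over clients, so your argument does not support the $K^2$ in your choice of $r^2$. You need this extra step.

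\textbf{The last step targets the wrong obstacle.} You do not need invertibility of the full-score Jacobian, and your proposed control of $D\b E$ is not available. $\b E$ contains $\nabla\sigma_{k,\ell}/\sigma_{k,\ell}$, so its derivative involves second derivatives of $\sigma_{k,\ell}$. Assumption \ref{ass:reg} only gives $C^1$ variances with bounded $\nabla\log\sigma_{k,\ell}^2$, and your bound $L_\Sigma n_j^wR_j^w/\us$ silently omits those terms.

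The paper's route is derivative-free. Take $\mb J(\mb a)=\mb A(\mb a)=\sum_kw_{kj}^{(t)}n_k\Sigma_k(\mb a)^{-1}$ in your map. Since $\b F=\mb A(\bar{\b\psi}_j-\mb a)$, the map becomes $\b\Phi_j=\bar{\b\psi}_j+\mb A^{-1}\b E$, and its fixed points are exactly the zeros of the score. Let $\bar\eps_j$ bound $\nrm{\mb A(\mb a)^{-1}\b E(\mb a)}{}$ uniformly on $\bb B_{\varrho_j}(\mb a_j^*)$, with $\varrho_j=\bar\eps_j/(1-L_{\bar\psi,j})$. The Lipschitz bound you already proved then gives $\nrm{\b\Phi_j(\mb a)-\mb a_j^*}{}\le L_{\bar\psi,j}\varrho_j+\bar\eps_j=\varrho_j$, and Brouwer needs only continuity.

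The uniform-in-ball bound uses the Mahalanobis residuals $T_{kj}(\mb a)=\nrm{\Sigma_k(\mb a)^{-1/2}(\tx_k-\sqrt{n_k}\mb a)}{}$. It comes from $T_{kj}(\mb a)\le\sqrt{\tau}\,T_{kj}(\mb a_j^*)+1$, which holds when $\varrho_j\sqrt{\bn/\us}\le1$. This is where the paper uses $K=o(e^{\sqrt{\un}})$. In your sketch that hypothesis only serves to make $(\log K)/\un\to0$, which you had already used to get $L_{\bar\psi,j}=o_P(1)$. You also only control $\b E$ near $\mb a_j^*$, so you still need to extend that bound to the whole ball.
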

The first claim states that under the condition $K=o(e^\un)$, the fixed point uniquely exists and the VANEB iterates converge to it at the linear rate $L_{\bar\psi,j}$, which is vanishing under a bounded client population or sub-Gaussian tail $G_0$.

The second claim quantifies the cost of freezing: a fixed point $\mb a_j^*$ is not an exact stationary point of the target score, but its distance to the true critical point is $o_P(1)$.
In summary, each VANEB iteration solves a standard ridgeline problem, and the resulting fixed points approach the target critical points as the local sample sizes grow.

\section{Theoretical Properties of VANEB}\label{sec:theory}
\subsection{Hellinger Accuracy of the NPMLE}
Under the asymptotic normality of M-estimators, we now analyze the statistical performance of the NPMLE and the resulting personalized estimator. Since different clients have different likelihoods, the natural loss is the average squared Hellinger distance.
For any two priors $G, H \in {\c P}(\R^d)$, define
\begin{align*}
    \bar{h}^2(f_{\blt,G}, f_{\blt,H})& := \frac{1}{K}\sum_{k=1}^K h^2\lrp{f_\GSk, f_\HSk},\\
    \bar{h}^2(\t f_{\blt,G,i}, \t f_{\blt,H,i}) & := \frac{1}{K}\sum_{k=1}^K h^2\lrp{\t f_{k,G,i}, \t f_{k,H,i}}, \quad \quad 
    i=1,...,d,
\end{align*}
where $h^2(f,g)= \frac{1}{2}\int_{\R^d}\brcs{\sqrt{f(\mb x)}-\sqrt{g(\mb x)}}^2{\rm d}\mb x$ is the standard squared Hellinger distance. Our first main result establishes convergence rates for the NPMLE estimator $\h G$ measured by the average squared Hellinger distance to the true prior $G_0$. 

To state our main convergence result, we introduce a rate function that captures the estimation difficulty. Fix a positive scalar $M$ and a compact set $S\subset \R^d$. The convergence rate is controlled by the function
\begin{align}\label{eq-eps}
    \eps_M^2(S, G_0)
    := \frac{M^d}{K}\Vol(S^1)\un^{d/2}\log^{d/2+1}K + \log K\inf_{q\ge\frac{d+1}{2\log K}}\frac{1}{K}\sum_{k=1}^K\left(\frac{2\mu_{k,q}(S,G_0)}{M}\right)^q,
\end{align}
where $\mu_{k,q}(S,G_0):=\E_{\b\tht\sim G_0}[\mathfrak{d}_S(\sqrt{n_k}\b\tht)^q]^{1/q}$ measures how close a random parameter is to the set $S$, with $\mathfrak{d}_S(t):= \inf_{s\in S}\|t - s\|$. $S^1:=\brcs{\mb x\in\R^d;\mathfrak{d}_S(\mb x)\le1}$ denotes the 1-enlargement of $S$.

Let $\tau:=\bs/\us$ denote the condition number of the covariance bounds. The following theorem provides a non-asymptotic guarantee linking the optimization quality of an approximate NPMLE to its statistical accuracy in estimating the true data distribution.
We use $\le_{d,\us,\bs}$ henceforth to denote the inequality up to a constant multiplication where the constant depends on $d$, $\us,$ and $\bs$ only.
\begin{theorem}\label{thm-hllngr}
    Any NPMLE $\h G\in{\c P}(\R^d)$ such that
    \begin{align}\label{eq-hllngr-npmle}
        \ell_K(G_0) - \ell_K(\h G) \le_{d,\us,\bs} \eps_M^2(S, G_0)
    \end{align}
    satisfies
    \begin{align}\label{eq-hllngr-whp}
        \P\bxs{\bar{h}^2(f_{\blt,\h G}, f_{\blt,G_0}) \gtrsim_dt^2\eps_M^2(S, G_0)}&\le 2\exp\lrp{-t^2\un^{d/2}\log^dK} \\
        \P\bxs{\max_{i=1,...,d}\bar{h}^2(\t f_{\blt,\h G,i}, \t f_{\blt,G_0,i}) \gtrsim_dt^2\eps_M^2(S, G_0)}&\le 2\exp\brcs{-t^2(\un^{d/2}\log^dK-\log\tau)}
    \end{align}
    for all $t>0$.
\end{theorem}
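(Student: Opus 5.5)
The plan follows the NPMLE Hellinger analysis of \cite{saha2020nonparametric} and \cite{soloff2025multivariate}, adapted to the heteroskedastic likelihood $\vrp^{(k)}(\cdot;\b\tht)$. The proof has three ingredients: a metric-entropy bound for the class of mixture densities, a chaining/large-deviation argument, and a transfer step to the variance-weighted densities.

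\textbf{Step 1: entropy.} I would bound the metric entropy of $\{(f_{k,G})_{k\le K}: G\in\c P(\R^d)\}$ in the sup-norm, uniformly in $k$, restricted to points $\mb x$ near $\sqrt{n_k}S$ enlarged by $M$ (in the rescaled coordinates $\tx_k$).

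The standard moment-matching route goes as follows. Cover $S^1$ by cubes of side of order $\sqrt{\log K}$ in the $\sqrt{n_k}$-scale; this produces the factor $\Vol(S^1)\un^{d/2}$. On each cube, replace $G$ by a discrete $G'$ matching moments up to order of order $\log K$. Then Taylor-expand the Gaussian kernel.

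The heteroskedastic kernel is not a function of $\mb x-\sqrt{n_k}\b\tht$ alone. I would handle this using diagonality and the bounds $\us\le\sigma_{k,i}^2\le\bs$. Write
\[
\vrp^{(k)}=\prod_{i=1}^d \sigma_{k,i}^{-1}\phi\bigl((x_i-\sqrt{n_k}\tht_i)/\sigma_{k,i}\bigr).
\]
Matching moments jointly in $(\b\tht,\sigma_{k,1}(\b\tht),\dots,\sigma_{k,d}(\b\tht))$ is then reduced to matching moments of $\b\tht$ after coarsening $\sigma_{k,i}$ on a $\dlt$-grid, which is allowed by the Lipschitz bound $L_\Sigma$. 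This yields
\[
\log N(\eta)\lesssim_{d,\us,\bs} \Vol(S^1)\un^{d/2}M^d\log^{d/2+1}(1/\eta).
\]
The tail contribution, from mass of $G$ far from $S$, is controlled by Markov's inequality with $\mu_{k,q}$. This gives the second term of \eqref{eq-eps} after optimizing over $q$.

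\textbf{Step 2: chaining.} With the entropy bound in hand, I would follow the likelihood-ratio argument of \citet[Thm.~2.1]{saha2020nonparametric}. For each element $G_j$ of an $\eta$-net with $\bar h^2(f_{\blt,G_j},f_{\blt,G_0})\ge t^2\eps^2$, apply Markov's inequality to $\prod_k\sqrt{f_{k,G_j}/f_{k,G_0}}(\tx_k)$. Independence across clients gives
\[
\E\prod_k\sqrt{f_{k,G_j}/f_{k,G_0}}(\tx_k)=\prod_k\bigl(1-h^2(f_{k,G_j},f_{k,G_0})\bigr)\le\exp\bigl(-K\bar h^2\bigr).
\]
Now combine the approximate optimality \eqref{eq-hllngr-npmle} with the union bound over the net, whose log-cardinality is $\lesssim K\eps^2$. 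This yields the first inequality, with exponent $t^2\un^{d/2}\log^dK$; this exponent comes from $K\eps^2\ge\un^{d/2}\log^{d/2+1}K$. The $\eta$-discretization error is absorbed by setting $\eta=K^{-2}$. One point needs care: the heavy-tail part of the sup-norm approximation must use the Gaussian tail with variance at most $\bs$, and this is where $\tau$ enters.

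\textbf{Step 3: transfer to $\t f$.} This is the main obstacle. For the weighted densities, note that $\t f_{k,G,i}/\int\t f_{k,G,i}$ is itself a mixture of the same kernel under the tilted prior $\sigma_{k,i}^2\,{\r d}G/\int\sigma_{k,i}^2\,{\r d}G$. However, the tilt depends on $k$, and $\h G$ is not the NPMLE for the tilted problem, so one cannot simply rerun Step 2.

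I would instead bound $h^2(\t f_{k,\h G,i},\t f_{k,G_0,i})$ directly by $h^2(f_{k,\h G},f_{k,G_0})$ up to constants. Write $\t f=f\cdot\E_G[\sigma_{k,i}^2\mid\mb x]$; the posterior expectation lies in $[\us,\bs]$. The Hellinger distance between $\t f$'s is then at most $\bs$ times the $f$-Hellinger distance, plus a term comparing posterior means of $\sigma^2$ under $\h G$ and $G_0$. That second term I would control through the same net argument: the entropy of the tilted class is bounded by that of the untilted class at a cost of $\log(\bs/\us)=\log\tau$ extra in the exponent. This accounts for the $-\log\tau$ in the tail bound. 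A final union bound over $i\le d$ gives the maximum, with the factor $d$ absorbed into $\gtrsim_d$.
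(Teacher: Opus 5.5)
\textbf{The main gap is in Step 3,} which you correctly flag as the crux. Write $m_{k,G,i}(\mb x):=\E_G[\sigma_{k,i}(\b\tht)^2\mid\tx_k=\mb x]\in[\us,\bs]$, so that $\t f_{k,G,i}=f_{k,G}\,m_{k,G,i}$. Your decomposition then reads
\[
h^2(\t f_{k,\h G,i},\t f_{k,G_0,i})\le 2\bs\,h^2(f_{k,\h G},f_{k,G_0})+\int f_{k,G_0}\bigl(\sqrt{m_{k,\h G,i}}-\sqrt{m_{k,G_0,i}}\bigr)^2 .
\]
The second term is a posterior-functional error, and no net/likelihood-ratio argument controls it. First, $\h G$ approximately maximizes no $\t f$-likelihood. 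Second, for $\tx_k\sim f_{k,G_0}$ the quantity $\E\sqrt{\t f_{k,G,i}/\t f_{k,G_0,i}}(\tx_k)$ is not $1-h^2(\t f_{k,G,i},\t f_{k,G_0,i})$; the $\t f$'s are not even probability densities.

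Your last sentence gestures at exactly the paper's transfer step. From $\us f_{k,G}\le\t f_{k,G,i}\le\bs f_{k,G}$ the paper derives $\frac1K\sum_k\log(\t f_{k,\h G,i}/\t f_{k,G_0,i})\ge\ell_K(\h G)-\ell_K(G_0)-\log\tau$ and then appeals to an event inclusion. But this slack is $\log\tau$ \emph{per client}, i.e.\ $K\log\tau$ in the total log-likelihood, which swamps $K\eps_M^2$. Even if a net argument could be run for the $\t f$-class, it would give only $\bar h^2\lesssim\eps_M^2+\log\tau$, not a $-\log\tau$ shift in the exponent. So ``a cost of $\log\tau$ in the exponent'' does not follow, and the paper's sketch does not supply it either.

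Your decomposition is nonetheless a good starting point, because the second term can be bounded deterministically. The Jensen step in the proof of Lemma \ref{p1} gives $\E_G[\nrm{\sqrt{n_k}\b\tht-\mb x}{}^2\mid\mb x]\le\bs\log\{(2\pi\us)^{-d}/f_{k,G}(\mb x)^2\}$. By the Lipschitz bound on $\Sigma_k$, both $m_{k,\h G,i}(\mb x)$ and $m_{k,G_0,i}(\mb x)$ then lie within $O(L_\Sigma\sqrt{\bs\log K/\un})$ of $\sigma_{k,i}(\mb x/\sqrt{n_k})^2$ wherever both densities exceed $K^{-1}$. The remaining region can be charged to $h^2(f_{k,\h G},f_{k,G_0})$ and to tail terms of the kind already in $\eps_M^2(S,G_0)$. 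This transfers the first inequality to the $\t f$'s on the same event, with no $\log\tau$ loss. The price is an additive $L_\Sigma^2\tau\log K/\un$, which is of the stated order only when $\log K/\un\lesssim\eps_M^2(S,G_0)$.

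\textbf{Steps 1--2 follow the paper's template,} namely the $\eta$-net argument of \citet[Theorem 7]{soloff2025multivariate} with a moment-matching entropy bound and $\eta\asymp K^{-2}$. Two points need correction.

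First, the $\mu_{k,q}$ term does not come from the entropy bound. The sup-norm net controls $f_{k,G}$ only on $S^M$, so clients with $\mathfrak{d}_S(\tx_k)\ge M$ must be split off in the likelihood-ratio step. The paper does this with the envelope $v(\mb x)$, Markov's inequality at power $1/(2\log K)$, and Lemma \ref{lem-lip}. That $1/\log K$ loss is part of the argument, and it is the factor $M^d\gtrsim\log^{d/2}K$ in $\eps_M^2$ that compensates, giving $K\eps_M^2/\log K\gtrsim\un^{d/2}\log^dK$. The inequality $K\eps^2\ge\un^{d/2}\log^{d/2+1}K$ you cite is not enough once that loss is accounted for.

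Second, coarsening $\sigma_{k,i}$ on a grid is also a gap. It does not reach sup-norm accuracy $\eta\asymp K^{-2}$ with a $K$-free number of atoms. Because $\partial_\sigma\vrp^{(k)}$ is of order one, the variance mesh must be $\lesssim K^{-2}$. Because the $\sigma_{k,i}$ differ across clients, the refinement must be done client by client. The same is true of matching joint moments of $\b\tht\mapsto(\b\tht,\sigma_{k,1}(\b\tht),\dots,\sigma_{k,d}(\b\tht))$, which is what \eqref{eq:mmatch} does. Either way the atom count picks up a factor polynomial in $K$, and the net's log-cardinality is no longer $\lesssim K\eps_M^2$. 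The paper also imposes \eqref{eq:mmatch} for every $k$, yet Lemma \ref{lem-moment-match-ii} states a $K$-free count. So this step needs a real argument in either version.
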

The proof of Theorem \ref{thm-hllngr} is in section \ref{sec:prf-hell}. The first inequality controls the usual mixture fit $f_{\blt,\h G}$, while the second controls the variance-weighted component densities $\brcs{\t f_{\blt,\h G,i}}_{i\le d}$ that appear in the generalized Tweedie formula. The second bound is nontrivial only when $\un^{d/2}\log^dK>\log\tau$, which holds in the regime of interest here.
The concentration exponent scales with $\un^{d/2}\log^d K$, so larger local sample sizes and more clients improve concentration.
At the same time, the approximation error is governed by the metric entropy of the induced marginal class
\[\bb F:=\{f_{\blt,G}:G\in\c{P}(\R^d)\}.\]
As $\un$ increases, each kernel in the integral $f_{k,G}=\int\vrp^{(k)}(\cdot\;;\b\tht)\,{\r d}G(\b\tht)$ becomes more concentrated, so small perturbations of $G$ produce sharper, more localized changes in $f_{\blt,G}$, and resolving the class $\bb F$ to a fixed accuracy requires a finer cover.
Specifically, the covering number $N(a,S^M)$ of an $M$-enlargement of $S$ scales as $\un^{d/2}$, because the cover must use balls of radius $a\propto\sqrt{\un^{-1}\log K}$ instead of $a\propto\sqrt{\log K}$ as in the one-sample case of \cite{soloff2025multivariate,saha2020nonparametric}.

To prevent an unscaled set $S$ from overstating the covering complexity as local sample sizes increase, we introduce a client-specific local scaling that normalizes the concentration. Specifically, we define $S_k:=\frac{1}{\sqrt{n_k}}S$ and work with the refined rate function parameterized by $S_\blt:=\brcs{S_k}_{k\le K}$:
\begin{align}\label{eq-eps2}
    \eps_M^2(S_\blt, G_0)
    := \frac{M^d}{K}\max_{k\le K}\Vol(S_k^1)\un^{d/2}\log^{d/2+1}K + \log K\inf_{q\ge\frac{d+1}{2\log K}}\frac{1}{K}\sum_{k=1}^K\left(\frac{2\mu_{k,q}(S_k,G_0)}{M}\right)^q \nonumber \\
    =_d \frac{M^d}{K}\Vol(S^1)\log^{d/2+1}K + \log K\inf_{q\ge\frac{d+1}{2\log K}}\max_{k\le K}\left(\frac{2\mu_{k,q}(S_k,G_0)}{M}\right)^q
\end{align}
Using $S_k$ keeps the approximation term aligned with each client's effective resolution and balances the concentration gain against entropy growth.
\begin{corollary}\label{cor-hllngr}
    In Theorem \ref{thm-hllngr}, suppose we use the Hellinger error rate $\eps_M(S_\blt,G_0)$.
    Any approximate NPMLE $\h G$ such that
    \begin{align}\label{eq-hllngr-npmle2}
        \ell_K(G_0) - \ell_K(\h G) \le_{d,\us,\bs} \eps_M^2(S_\blt, G_0)
    \end{align}
    satisfies
    \begin{align*}
        \P\bxs{\bar{h}^2(f_{\blt,\h G}, f_{\blt,G_0}) \gtrsim_d\eps_M^2(S_\blt, G_0)}&\le 2\exp\lrp{-\log^dK} \\
        \P\bxs{\max_{i=1,...,d}\bar{h}^2(\t f_{\blt,\h G,i}, \t f_{\blt,G_0,i}) \gtrsim_d\eps_M^2(S_\blt, G_0)}&\le 2\exp\brcs{-(\log^dK-\log\tau)}.
    \end{align*}
    Furthermore, we have
    \[\E\bxs{\bar{h}^2(f_{\blt,\h G}, f_{\blt,G_0})}\le_{d,\us,\bs}\eps_M^2(S_\blt,G_0).\]
\end{corollary}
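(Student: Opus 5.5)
The plan is to obtain Corollary \ref{cor-hllngr} from Theorem \ref{thm-hllngr} by rescaling, and then to integrate the tail bound to control the expectation.

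\textbf{Rescaling.} Observe that $\mu_{k,q}(S,G_0)$ uses $\mathfrak d_S(\sqrt{n_k}\b\tht)$. Since $\mathfrak d_{S}(\sqrt{n_k}\b\tht)=\sqrt{n_k}\,\mathfrak d_{S_k}(\b\tht)$ for $S_k=S/\sqrt{n_k}$, the per-client geometry of the proof of Theorem \ref{thm-hllngr} can be rerun with $S$ replaced by $S_k$ in the $k$-th summand. The covering step for the class $\bb F$ then covers $S_k^M$ at the same radius $a\propto\sqrt{\un^{-1}\log K}$. The resulting entropy term is $\Vol(S_k^1)M^d a^{-d}\log K$, which is of order $\frac{M^d}{K}\max_k\Vol(S_k^1)\un^{d/2}\log^{d/2+1}K$. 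Because $\Vol(S_k^1)\asymp_d n_k^{-d/2}\Vol(S^1)$ (up to enlargement effects handled by $\bn/\un\to c_n$ from Assumption \ref{ass:reg}), the factor $\un^{d/2}$ cancels. This gives the second line of \eqref{eq-eps2}, with the moment term bounded by its maximum over $k$.

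The concentration step in the proof of Theorem \ref{thm-hllngr} is a chaining/Bernstein bound for the likelihood ratio. That bound only needs $t^2\eps^2$ to dominate the entropy at scale $\eps$. After rescaling, the effective exponent becomes $t^2 K\eps^2/\log K\gtrsim t^2\log^d K$, so the factor $\un^{d/2}$ in the exponent disappears. Taking $t$ to be a fixed constant $t_0$ (absorbed into $\gtrsim_d$) yields the two stated tail bounds. For the variance-weighted densities $\t f_{k,G,i}$ the argument is identical: the weights $\sigma_{k,i}^2\in[\us,\bs]$ multiply the envelope by at most $\tau$, which produces the $-\log\tau$ correction exactly as in the theorem.

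\textbf{Expectation.} I will write
\[
\E\bar h^2=\int_0^1\P(\bar h^2>u)\,{\rm d}u,
\]
using $\bar h^2\le 1$. Substituting $u=Ct^2\eps^2$ splits the integral at $t=1$:
\[
\E\bar h^2\le C\eps^2+\int_{1}^{\infty}2C\eps^2\cdot 2t\,\exp\lrp{-t^2\log^dK}\,{\rm d}t.
\]
This requires the general-$t$ form of the tail bound under the rescaled rate, that is, the first display with $\un^{d/2}$ replaced by $1$. The proof supplies this for all $t>0$. Since $\log^dK\ge1$ for $K\ge e$, the Gaussian tail integral is $O(1)$, so $\E\bar h^2\le_{d,\us,\bs}\eps_M^2(S_\blt,G_0)$. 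The hypothesis \eqref{eq-hllngr-npmle2} plays the same role as \eqref{eq-hllngr-npmle}: it lets the likelihood-gap slack be absorbed into $\eps^2$.

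\textbf{Main obstacle.} The hard step is verifying that the entropy bound survives the rescaling uniformly in $k$. The covering of $\bb F$ must be done jointly across clients with different $n_k$, which means using the $\max_k\Vol(S_k^1)$ envelope, a common grid at scale $\sqrt{\un^{-1}\log K}$, and $\bn/\un\le c_n$. I would first try to bound the joint sup-norm cover by the product of the moment-matching approximation on $S_k^M$ and the tail truncation giving the $\mu_{k,q}$ term. The goal is to check that the discrepancy between $n_k^{-d/2}$ and $\un^{-d/2}$ costs only a $c_n^{d/2}$ factor, which can be absorbed into $=_d$.
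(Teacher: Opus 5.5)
You follow the paper's route: rerun the $\eta$-net likelihood-ratio argument of Theorem \ref{thm-hllngr} with the client-scaled sets $S_k$, use $K\eps_M^2(S_\blt,G_0)/\log K\gtrsim_d\log^dK$ to get both tails at a fixed $t$ (the $\log\tau$ loss comes from the sandwich $\us f_{k,G}\le\t f_{k,G,i}\le\bs f_{k,G}$), and integrate the tail bound to get the expectation, as at the end of Section \ref{sec:prf-hell}. One correction: $\Vol(S_k^1)=n_k^{-d/2}\Vol(S^{\sqrt{n_k}})$ is not $\asymp n_k^{-d/2}\Vol(S^1)$, and $\bn/\un\to c_n$ does not repair this, so the $\un^{d/2}$ cancellation must instead come from covering the rescaled enlargement $S^M/\sqrt{n_k}=S_k^{M/\sqrt{n_k}}$, whose volume is exactly $n_k^{-d/2}\Vol(S^M)$; this affects only the identification of the two lines of \eqref{eq-eps2} (which the paper also asserts without proof), not the tail and expectation bounds themselves.
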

We henceforth use the rate $\eps_M^2(S_\blt, G_0)$ as the benchmark for the NPMLE's statistical accuracy. Theorem \ref{thm-hllngr} and Corollary \ref{cor-hllngr} show that any approximate NPMLE with a log-likelihood gap of order $\eps_M^2(S_\blt, G_0)$ attains the same average squared Hellinger accuracy as an exact maximizer. This gives a concrete target for optimization: any $\h G$ satisfying \eqref{eq-hllngr-npmle2} inherits the rate $\eps_M^2(S_\blt, G_0)$.

\begin{remark}[Discretization budget]\label{rem-disc}
    Like in \cite{soloff2025multivariate}, we derive the desired size of the hypercubes in $\c H$.
    The width of each hypercube $\dlt > 0$ can be chosen to satisfy the error bound required in \eqref{eq-hllngr-npmle2}, and it suffices to choose $\dlt$ such that
    \[
    \lrp{\frac{1}{2}+\frac{2D^2}{\us}}\dlt^2\le_{d,\us,\bs}\eps_M^2(S_\blt, G_0).
    \]
    The LHS is the discretization error from replacing a continuous support by a grid, while the RHS is the statistical error budget allowed by the Hellinger theory.
    Assuming $M\ge_{\bs}\sqrt{\log K}$ and $D^2 > \us$, we obtain
    \[\eps_M^2(S_\blt, G_0)\ge_{d,\us,\bs}\frac{\log^{d+1}K}{K},\]
    and since $\dlt$ must also satisfy $D^2\dlt^2\le\frac{3\us^2}{4d}$ (Proposition \ref{prop-approximation}), the condition
    \begin{align}\label{eq-disc-rate}
        D^2\dlt^2\le_{d,\us,\bs}\frac{\log^{d+1}K}{K}
    \end{align} 
    suffices.
    
    In words, once the grid is fine enough to satisfy \eqref{eq-disc-rate}, further refinement does not improve statistical accuracy at the target scale.
    According to \citet[Proposition 4(b)]{soloff2025multivariate}, the support of the NPMLE $\h G$ expands at rate $D \asymp \sqrt{\log K}$ with high probability. Hence, we require $\dlt \asymp \sqrt{K^{-1}\log^d K}$ and $|\c A|=m \asymp (K/\log^d K)^{d/2}$.
\end{remark}

\begin{corollary}\label{cor-hllngr-rsk} 
    Suppose $\h G\in {\c P}(\R^d)$ is any approximate NPMLE such that
    \begin{align}\label{eq-apprx-npmle}
        \ell_K(G_0) - \ell_K(\h G) \le_{d,\us,\bs}\frac{\log^{d+1}K}{K}.
    \end{align}
    If $G_0$ has compact support $S$, then 
        \[
        \E\,\bar{h}^2\lrp{f_{\blt,\h G}, f_{\blt,G_0}} \le_d 
        \Vol(S^1)\frac{\log^{d+1}K}{K}.
        \]
\end{corollary}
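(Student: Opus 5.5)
The plan is to deduce Corollary \ref{cor-hllngr-rsk} from the expectation bound in Corollary \ref{cor-hllngr}, with a specific choice of the compact set and the scale parameter $M$. The choice is $S_k=S$ for every $k$. Since $G_0$ is supported on $S$, the draw $\b\tht\sim G_0$ lies in $S$ almost surely, so $\mathfrak d_{S}(\b\tht)=0$. The second term in the rate function should therefore vanish. The subtlety is that $\mu_{k,q}(S_k,G_0)$ in \eqref{eq-eps2} is measured at the rescaled point $\sqrt{n_k}\b\tht$ relative to $S_k=S/\sqrt{n_k}$. I will therefore follow the scaling convention of \eqref{eq-eps2}, where the client-specific rescaling makes the distance term equal to $\E\,\mathfrak d_{S}(\b\tht)^q$ up to normalization. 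The choice $S_k=S/\sqrt{n_k}$ is made exactly so that this works, and the second line of \eqref{eq-eps2}, with $\Vol(S^1)$ free of $\un$, confirms it. With this, $\mu_{k,q}=0$ for every $q$, and the infimum term is $0$.

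The next step is to choose $M$. Once the bias term is zero, $M$ is only constrained by the standing requirement used in Remark \ref{rem-disc}, $M\ge_{\bs}\sqrt{\log K}$. I take $M\asymp\sqrt{\log K}$, which is the smallest choice allowed. The first term of \eqref{eq-eps2} then becomes
\[
\frac{M^d}{K}\Vol(S^1)\log^{d/2+1}K \asymp_d \Vol(S^1)\frac{\log^{d+1}K}{K}.
\]
So $\eps_M^2(S_\blt,G_0)\asymp_d \Vol(S^1)\log^{d+1}K/K$.

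Next I check the optimization hypothesis \eqref{eq-hllngr-npmle2}. The assumed gap \eqref{eq-apprx-npmle} is at most a constant times $\log^{d+1}K/K$. This is at most a constant times $\eps_M^2(S_\blt,G_0)$ once $\Vol(S^1)$ is bounded below by a constant depending only on $d$. That lower bound holds because $S^1$ contains a unit ball around any point of $S$, so $\Vol(S^1)\ge \Vol(\bb B_1)$. The gap condition is therefore satisfied with constants depending on $d,\us,\bs$. Corollary \ref{cor-hllngr} then gives $\E\,\bar h^2\le_{d,\us,\bs}\eps_M^2(S_\blt,G_0)\le_d \Vol(S^1)\log^{d+1}K/K$.

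The main obstacle is confirming that the scaling in $\mu_{k,q}(S_k,G_0)$ actually makes the distance term vanish, rather than leaving a residual from $\sqrt{n_k}\b\tht$ against an unscaled set. If the literal definition leaves such a residual, I will instead absorb $\sqrt{n_k}$ into the definition of $S$, taking $S_k$ to be the image of the support in the rescaled coordinates. Either convention gives zero bias, because the support condition is invariant under the common rescaling. A secondary point is that the constant in the stated bound depends only on $d$, while Corollary \ref{cor-hllngr} carries dependence on $\us,\bs$. I would treat these as fixed model constants under Assumption \ref{ass:reg} and absorb them, which matches the stated $\le_d$ up to this convention.
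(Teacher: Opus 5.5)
Your plan follows the paper's own route: Corollary~\ref{cor-hllngr} with client-scaled sets, compact support killing the tail term, and $M\asymp\sqrt{\log K}$. However, the step you flag as the main obstacle is a genuine gap, and your fallback does not close it.

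\textbf{The two terms cannot both be controlled.} Both terms of $\eps_M^2(S_\blt,G_0)$ are evaluated on the same sets $S_k$, and they pull in opposite directions.
\begin{itemize}
\item With $S_k=S/\sqrt{n_k}$ and the literal definition $\mu_{k,q}(S_k,G_0)=\E[\mathfrak d_{S_k}(\sqrt{n_k}\b\tht)^q]^{1/q}$, one has $\mathfrak d_{S_k}(\sqrt{n_k}\b\tht)=n_k^{-1/2}\mathfrak d_S(n_k\b\tht)$. This is of order $\sqrt{n_k}$ for typical $\b\tht\in S$, so the bias term does not vanish; with $M\asymp\sqrt{\log K}$ it is not even small.
\item Under the unscaled convention used inside the proof of Theorem~\ref{thm-denoising}, zero bias forces $S\supseteq\sqrt{n_k}\operatorname{supp}G_0$. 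Then $S$ is no longer the support of $G_0$.
\item If you choose the sets so that the bias vanishes (your fallback, $S_k\supseteq\sqrt{n_k}\operatorname{supp}G_0$), then $\Vol(S_k^1)\ge n_k^{d/2}\Vol(\operatorname{supp}G_0)$. The first term of \eqref{eq-eps2} is then at least $\frac{M^d}{K}\bn^{d/2}\un^{d/2}\Vol(\operatorname{supp}G_0)\log^{d/2+1}K$.
\end{itemize}
The second line of \eqref{eq-eps2}, which you cite as confirmation, is not an identity. Since $(S/\sqrt{n_k})^1=n_k^{-1/2}S^{\sqrt{n_k}}$, we get $\un^{d/2}\Vol(S_k^1)=(\un/n_k)^{d/2}\Vol(S^{\sqrt{n_k}})$, which for a fixed compact $S$ is of order $\un^{d/2}$, not $\Vol(S^1)$. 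So ``either convention gives zero bias'' holds only for the convention that destroys the rate. The paper's one-line justification rests on the same rescaling, so it does not supply the missing step either.

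\textbf{The statement fails as written.} This cannot be repaired by bookkeeping: with $S=\operatorname{supp}G_0$ and a constant free of $n$, the bound is false once $\un\gg K^2$. Take $d=1$, $\Sigma_k\equiv1$, $n_k=n\ge128\pi K^2$, $G_0$ uniform on $[0,1]$ (so $\Vol(S^1)=3$), and $\h G=K^{-1}\sum_{j\le K}\dlt_{\bx_j}$.
\begin{itemize}
\item Pointwise $f_{k,\h G}(\tx_k)\ge(2\pi)^{-1/2}K^{-1}$ and $f_{k,G_0}\le n^{-1/2}$. Hence $\ell_K(G_0)-\ell_K(\h G)\le\log(\sqrt{2\pi}K/\sqrt n)<0$, and \eqref{eq-apprx-npmle} holds.
\item Writing $\phi_1$ for the standard normal density, $\int\sqrt{f_{k,\h G}f_{k,G_0}}\le n^{-1/4}\sum_{j\le K}K^{-1/2}\int\sqrt{\phi_1}=(8\pi K^2/n)^{1/4}\le1/2$.
\item Hence $\bar h^2(f_{\blt,\h G},f_{\blt,G_0})\ge1/2$ on every realization, while the claimed bound is $O(\log^2K/K)$.
\end{itemize}
This regime satisfies Assumption~\ref{ass:reg} and the lower bound on $\un$ in Theorem~\ref{thm-denoising}.

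Run in the rescaled coordinates, where the kernels have unit width, the calculation of \citet[Corollary 2.2]{saha2020nonparametric} can at best give the bound with $\Vol(S^1)$ replaced by the volume of the enlarged rescaled support. That is of order $\bn^{d/2}\Vol(S)$ for full-dimensional $S$. The missing $n$-dependent factor is genuine, not an artifact of which convention you pick.
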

The above corollary's claim follows from the calculations in \citet[Corollary 2.2]{saha2020nonparametric}, noting that the client-scaled sets $S_k$ ensure the entropy term remains near-parametric rate of $\cO(\log^{d+1}K/K)$ for the case when $G_0$ has compact support. Under the weaker moment condition, $K$ in the denominator will converge in a slower rate of $K^c$ for some $c\in(0,1)$ \citep[equation 2.10]{saha2020nonparametric}.

\subsection{Oracle denoising inequality}
Building upon the Hellinger convergence guarantees, we now establish the statistical risk of the resulting personalized estimates. Specifically, we derive an oracle inequality that controls the excess risk of the VANEB estimator relative to the oracle posterior mean $\oraet_k$.
\begin{theorem}\label{thm-denoising} 
    Let $\h G$ denote any approximate NPMLE satisfying \eqref{eq-apprx-npmle} and
    \begin{align}\label{eq-apprx-npmle-2}
        \ell_K\left(K^{-1} \dlt_{\tx_k} + (1-K^{-1})\h G\right)\le\ell_K(\h G)
    \end{align}
    for all $k\le K$. Fix some $M \ge \sqrt{8\bs\log (eK^2)}$ and a nonempty, compact set $S\subset\R^d$. Define $\eps_M^2(S_\blt, G_0)$ as in \eqref{eq-eps2}.
    If $\un\gtrsim \frac{K}{M^{d-2}\log^{d/2+1}K}$, then
    \begin{align}
        \frac{1}{K}\sum_{k=1}^K\E\|{\bx_k}^v - \oraet_k\|^2
        \le_{d,\us,\bs}\frac{(\log K)^{3\vee(d/2-1)}}{\un}\eps_M^2(S_\blt,G_0).
    \end{align}
    In particular, consider the following special cases for $G_0$:
    \begin{enumerate}
    \item (Discrete support) If $G_0 = \sum_{j=1}^{k^*} w^*_j\dlt_{\mb a^*_j}$, then 
    \[
        \frac{1}{K}\sum_{k=1}^K\E\,\|{\bx_k}^v - \oraet_k\|^2
        \le_{d,\us,\bs}\frac{k^*}{K\un}(\log K)^{d+(d/2\vee4)}.
    \]
    \item (Compact support) If $G_0$ has compact support $S$, then 
    \[
    \frac{1}{K}\sum_{k=1}^K\E\,\|{\bx_k}^v - \oraet_k\|^2
    \le_{d,\us,\bs}\frac{\Vol(S^1)}{K\un}(\log K)^{d+(d/2\vee4)}.
    \]
    \item (Gaussian Mixture Model) For $\mb w_j^*\in\brcs{w_1^*,...,w_{k^*}^*;\sum_{j=1}^{k^*}w_j^*=1}$, suppose the prior is of $G_0 = \sum_{j=1}^{k^*} w^*_j\,\c N(\mb a^*_j, \Gamma^*_j)$. Let $r>0$ be a sequence such that $r=\cO(\sqrt{\un})$ and $S_r:=\bb R_r(\b0_d)$. Then,
    \[
        \frac{1}{K}\sum_{k=1}^K\E\,\|{\bx_k}^v - \oraet_k\|^2
        \le_{d,\us,\bs}\frac{\Vol(S_r)}{K\un}(\log K)^{d+(d/2\vee4)}.
    \]
    \end{enumerate}
\end{theorem}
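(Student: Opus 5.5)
The proof adapts the denoising argument of \cite{saha2020nonparametric} and \cite{soloff2025multivariate}, with the score $\nabla f$ replaced by the variance-weighted score $\mb q_{k,G}=(\partial_i\t f_{k,G,i})_{i\le d}$. By the generalized Tweedie formula,
\[
\sqrt{n_k}({\bx_k}^v-\oraet_k)=\frac{\mb q_{k,\h G}(\tx_k)}{f_{k,\h G}(\tx_k)}-\frac{\mb q_{k,G_0}(\tx_k)}{f_{k,G_0}(\tx_k)},
\]
so the factor $1/\un$ in the bound comes for free. What remains is to show that the average squared difference of the two score ratios is at most $(\log K)^{3\vee(d/2-1)}\eps_M^2(S_\blt,G_0)$.

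The first step is regularization. Following \cite{saha2020nonparametric}, we replace $f$ by $f\vee\rho_K$ with $\rho_K\asymp K^{-2}$ in the denominators. For any prior $G$, the posterior-mean form $\mb q_{k,G}/f_{k,G}=\E_G[\sqrt{n_k}\b\tht-\tx_k\mid\tx_k]$ together with the Gaussian tails of Assumption~\ref{ass:reg} (eigenvalues in $[\us,\bs]$) gives
\[
\nrm{\mb q_{k,G}(\mb x)}{}/f_{k,G}(\mb x)\le_{\bs,\us}\sqrt{\log(1/f_{k,G}(\mb x))}.
\]
This is the heteroskedastic analogue of Jiang--Zhang's Lemma. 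We then bound $\E\|\cdot\|^2$ by three terms:
\begin{enumerate}[label=(\roman*)]
\item the oracle regularization error on $\{f_{k,G_0}<\rho_K\}$, which is $O(\log K/K)$ by a standard volume/tail computation;
\item the analogous error for $\h G$, controlled using condition \eqref{eq-apprx-npmle-2}, which forces $f_{k,\h G}(\tx_k)\gtrsim 1/K$ at the observed points, exactly as in Soloff et al.;
\item the main regularized difference.
\end{enumerate}
For (iii) we write
\[
\frac{\mb q_{\h G}}{f_{\h G}\vee\rho}-\frac{\mb q_{G_0}}{f_{G_0}\vee\rho}
\]
as a sum of a numerator difference over the denominator and a denominator-difference term. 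Each is controlled via $(\sqrt f-\sqrt g)^2$-type bounds. This is where both Hellinger bounds of Theorem~\ref{thm-hllngr} and Corollary~\ref{cor-hllngr} are used: one for $f_{\blt,\cdot}$ and one for the weighted densities $\t f_{\blt,\cdot,i}$.

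The second step converts Hellinger closeness into closeness of the derivatives $\partial_i\t f_{k,G,i}$. Both $\t f_{k,G,i}$ and $f_{k,G}$ are Gaussian mixtures, with kernel widths of order $\sqrt{\bs}$ and weights $\sigma_{k,i}^2\in[\us,\bs]$. We therefore use the Fourier/smoothing argument of \citet[Thm 3]{jiang2009general} and \citet{saha2020nonparametric}. The difference $\t f_{k,\h G,i}-\t f_{k,G_0,i}$ is a mixture-difference with Gaussian kernel, so its derivative in $L^2$ is bounded by a polylog factor times the $L^2$, hence Hellinger, distance, after truncating frequencies at $\asymp\sqrt{\log K}$. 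The $\theta$-dependence of the kernel covariance breaks exact convolution structure. We handle it by freezing the covariance locally: on balls of radius $r_K$ from Lemma~\ref{lem-vaneb-convergence}, the Lipschitz bound $L_\Sigma$ makes $\Sigma_k(\b\tht)$ constant up to $O(L_\Sigma r_K)$, and the error is absorbed into log factors. This is where the factor $(\log K)^{3\vee(d/2-1)}$ arises. The condition $\un\gtrsim K/(M^{d-2}\log^{d/2+1}K)$ keeps the entropy term dominant over this freezing error.

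The third step integrates the pointwise bound against the true marginal. We split $\R^d$ into $\{\mathfrak d_{S}(\tx_k)\le M\}$ and its complement. On the first region the bound is $\log^{c}K\cdot \bar h^2$. On the complement we use the score bound from the first step and the $\mu_{k,q}$ moment term, which is exactly the second term of $\eps_M^2$. Averaging over $k$ and applying Corollary~\ref{cor-hllngr} in expectation gives the main inequality. The special cases then follow by plugging in choices of $S$ and $M$:
\begin{enumerate}
\item For discrete $G_0$, take $S=\{\mb a^*_j\}$ and $M\asymp\sqrt{\log K}$, so that $\Vol(S^1)\lesssim k^*$ and the moment term vanishes.
\item For compact support, take $S=\mathrm{supp}\,G_0$, where the moment term is again zero and Corollary~\ref{cor-hllngr-rsk} applies.
\item For the Gaussian mixture, take $S=S_r$; the Gaussian tails make the $q$-moment term $e^{-cr^2}$, negligible for $r\asymp\sqrt{\log K}$, within the allowed range $r=O(\sqrt{\un})$.
\end{enumerate}

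The main obstacle is the second step: transferring Hellinger error to derivative error for the variance-weighted densities, where the kernel is not a fixed-covariance convolution. The first approach to try is to dominate $\vrp^{(k)}(\cdot;\b\tht)$ above and below by fixed-covariance Gaussians with covariances $\us,\bs$, losing only $\tau^{d/2}$ constants. Combined with Cauchy--Schwarz, $|\partial_i\t f|^2\le \t f\cdot\int\sigma^2\vrp\,|\partial\log\vrp|^2\,dG$, this would reduce the problem to the homoskedastic bounds.
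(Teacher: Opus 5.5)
\textbf{The main gap: $\h G$ is data dependent.} Your Step 3 integrates the error against the true marginal $f_{k,G_0}$ and then applies Corollary~\ref{cor-hllngr} in expectation. That is legitimate only for a \emph{fixed} prior. The NPMLE is a function of all of $\tx_1,\dots,\tx_K$, including the point $\tx_k$ at which $\mb q_{k,\h G}/(f_{k,\h G}\vee\rho)$ is evaluated. So $\E\lrnrm{\frac{\mb q_{k,\h G}(\tx_k)}{f_{k,\h G}(\tx_k)\vee\rho}-\frac{\mb q_{k,G_0}(\tx_k)}{f_{k,G_0}(\tx_k)\vee\rho}}{}^2$ is not the integral of a fixed function against $f_{k,G_0}$. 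A bound on the integrated quantity $\bar h^2(f_{\blt,\h G},f_{\blt,G_0})$ therefore says nothing directly about the error at the observed points. Handling this dependence is the core of the proof, and nothing in your proposal does it.

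The paper, following \citet{saha2020nonparametric} and \citet{soloff2025multivariate}, obtains uniform control over a neighborhood of priors. On the event $A_\dlt$ that $\h G$ lies in the Hellinger ball $\c H_\dlt$ (for $f$ and for every weighted $\t f_{\cdot,i}$), it takes an $\eta^*$-net $G^{(1)},\dots,G^{(N)}$ of $\c H_\dlt$ in the sup-metric $m^S$ over the enlargements $S_k^M$. It then splits the regret into four terms:
\begin{itemize}
\item $Q_1$: the error off $A_\dlt$, controlled by the a priori bound of Lemma~\ref{p1};
\item $Q_2$: the net-approximation error, plus the clients with $\tx_k\notin S_k^M$, counted via Lemma~\ref{lem-lip};
\item $Q_3$: concentration of the Frobenius error for each \emph{fixed} $G^{(j)}$ plus a union bound, costing $\log(eN)\log K/(K\un)$;
\item $Q_4$: the expected error for a fixed $G^{(j)}\in\c H_\dlt$, which is the only place the Hellinger-to-score transfer of Theorem~\ref{fcc} is applied.
\end{itemize}
The entropy bound $\log N\lesssim\Vol(S^1)M^d\log^{d/2+1}K$ from Lemma~\ref{lem-entropy} (moment matching for the heteroskedastic kernel, using Lemma~\ref{lem:mvt}) is what brings the first term of $\eps_M^2(S_\blt,G_0)$ into the denoising bound. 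Without a net and a union bound, your argument does not close.

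\textbf{Secondary issues.} The covariance-freezing device in your Step 2 is unnecessary, and its justification is unclear. The balls $\bb B_{r_K}(\bx_k)$ behind Lemma~\ref{lem-vaneb-convergence} localize NPMLE atoms, not the support of $G_0$. Sandwiching $\vrp^{(k)}$ between fixed-covariance Gaussians, or the Cauchy--Schwarz bound you mention, controls the size of $f$ and $\mb q$, not derivatives of \emph{differences} in terms of Hellinger distance.

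The paper uses diagonality instead. In Lemma~\ref{kfo} it Fourier transforms in the single coordinate $x_i$, where each kernel contributes $\exp\{iu\sqrt{n_k}\tht_i-u^2\sigma_{k,i}(\b\tht)^2/2\}$ times a $u$-free factor. The high-frequency tail is then bounded by $e^{-u^2\us/2}$ uniformly in $\b\tht$, with no freezing.

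Consequently, your accounting is off in three places:
\begin{itemize}
\item The $\log^3K$ factor comes from the recursion in Theorem~\ref{fcc} ($j_0\Upsilon\lesssim\Upsilon^3$ with $\Upsilon\asymp\sqrt{\log K}$), not from a freezing error.
\item The condition $\un\gtrsim K/(M^{d-2}\log^{d/2+1}K)$ is used to make the net-radius term $(\eta^*)^2\lesssim M^2/\un$ in $Q_2$ no larger than $\eps_M^2(S_\blt,G_0)$, not to dominate a freezing error.
\item The oracle regularization term (i) is not $O(\log K/K)$ in general. Lemmas~\ref{lem:cvx}--\ref{lem:dco} give a covering-number factor, so it is of order $\un^{-1}\{\Vol(S^1)M^d\log^dK/K+\text{moment term}\}$.
\end{itemize}
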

The proof of Theorem \ref{thm-denoising} is in section \ref{sec:prf-denoising}. Theorem \ref{thm-denoising} bounds the excess risk of the VANEB estimator relative to the oracle Bayes estimator, averaged over clients. Condition \eqref{eq-apprx-npmle-2} is a local maximality property: the log-likelihood cannot be improved by mixing in the point mass $\dlt_{\tx_k}$ at any single client's estimate. It holds automatically for any exact global maximizer of $\ell_K$.

Compared to the Hellinger convergence rates for the NPMLE, the denoising bound carries an additional factor of $\un^{-1}(\log K)^{3\vee(d/2-1)}$.
The factor $\un^{-1}$ is the local parametric scale of each client estimate, whereas the remaining $K$-dependence reflects how quickly the shared prior is learned. In all three special cases (discrete support, compact support, and GMM), the bound decreases in both $\un$ and $K$, so larger local samples and more clients both improve denoising.

When $G_0$ is supported on discrete atoms, the risk bound scales with the number of atoms $k^*$, reflecting the complexity of the underlying parameter distribution. For compactly supported priors, the risk scales with the volume of the support, indicating that more spread-out distributions are harder to learn.

When $G_0$ has sub-Gaussian tails as in the GMM case of Theorem \ref{thm-denoising}, a radius of $R \asymp \sqrt{\log K}$ reduces the tail mass below $1/K$ (provided $\un\gg\log K$). The compact support assumption can then be dropped at the cost of a polylogarithmic factor, and the excess risk remains $\t\cO\lrp{(\un K)^{-1}}$.

\subsection{Risk decomposition and benefits for personalized federated learning}
Consider collections of local estimators $\h{\b\tht}_k\in\R^d$ and parameters ${\b\tht}_k\in\R^d$ as in \eqref{eq:like}, i.e., 
$\h\Tht=\{\h{\b\tht}_k\}_{k=1}^K,\quad\Tht=\{\b\tht_k\}_{k=1}^K$, 
and define the (averaged) Bayes risk of the estimators by
\[
{\c R}(\h\Tht):=\frac{1}{K}\sum_{k=1}^K\E\|\h{\b\tht}_k-\b\tht_k\|^2.
\]
Let $\Tht^o:=\{\oraet_k\}_{k=1}^K$ be the collection of oracle Bayes rules.
Since $\tx_k=\sqrt{n_k}\bx_k$ is a deterministic one-to-one rescaling and the clients are independent under the true prior $G_0$, we have
\[
\E_{G_0}(\b\tht_k\,|\,\tx_k)=\E_{G_0}(\b\tht_k\,|\,\bx_k)=\oraet_k.
\]
Therefore $\c R$ admits the orthogonal decomposition,
\begin{align*}
    {\c R}(\h\Tht)
    &=\frac{1}{K}\sum_{k=1}^K\E\|\h{\b\tht}_k-\oraet_k+\oraet_k-\b\tht_k\|^2 \\
    &=\frac{1}{K}\sum_{k=1}^K\E\|\h{\b\tht}_k-\oraet_k\|^2
    +\frac{1}{K}\sum_{k=1}^K\E\|\oraet_k-\b\tht_k\|^2 \\
    &=:{\c R}_E(\h\Tht)+{\c R}(\Tht^o).
\end{align*}
Once $G_0$ is fixed, ${\c R}(\Tht^o)$ is determined entirely by within-client posterior uncertainty. The next theorem specializes the multivariate van Trees inequality, as in \citet{gassiat2024vantrees}, to our diagonal heteroskedastic Gaussian working model.
\begin{theorem}[van Trees lower bound for the oracle baseline]\label{thm:oracle-vantrees}
Let $\lambda$ denote the Lebesgue measure on $\R^d$. Suppose that $G_0$ admits a $C^1$ density $g_0:=\frac{{\r d} G_0}{{\r d} \lambda}$ such that $g_0(\b\tht)\to0$ as $\|\b\tht\|\to\infty$ and
\[\c J_0:=\E\bxs{\nabla\log g_0(\b\tht)\brcs{\nabla\log g_0(\b\tht)}^\top}\]
is finite. Then, under Assumption \ref{ass:reg}, $\Tht^o$ satisfies
\[
\frac{d}{\nrm{\c J_0}{}+\bn/\us+dL_\Sigma^2/2}
\le {\c R}(\Tht^o)
\le \frac{d\bs}{\un}.
\]
\end{theorem}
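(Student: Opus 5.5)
The upper bound and the lower bound are proved separately. For the upper bound we use that the posterior mean minimizes Bayes risk. For each client $k$, compare $\oraet_k$ with the unshrunk estimator $\bx_k=\tx_k/\sqrt{n_k}$. Since $\oraet_k=\E(\b\tht_k\,|\,\tx_k)$ minimizes $\E\|\h{\b\tht}-\b\tht_k\|^2$ over all measurable functions of $\tx_k$,
\[
\E\|\oraet_k-\b\tht_k\|^2\le \E\|\bx_k-\b\tht_k\|^2=\E\bxs{\E\bigl(\|\bx_k-\b\tht_k\|^2\,\big|\,\b\tht_k\bigr)}=\frac{1}{n_k}\E\tr\Sigma_k(\b\tht_k).
\]
Assumption \ref{ass:reg} gives $\tr\Sigma_k(\b\tht)\le d\bs$ and $n_k\ge\un$. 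Averaging over $k$ yields ${\c R}(\Tht^o)\le d\bs/\un$.

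For the lower bound we apply the multivariate van Trees inequality in trace form \citep{gassiat2024vantrees}, client by client. For any estimator $\h{\b\tht}$ of $\b\tht_k$ based on $\tx_k$, with prior density $g_0$ vanishing at infinity,
\[
\E\|\h{\b\tht}-\b\tht_k\|^2\ \ge\ \frac{d^2}{\tr\bigl(\E_{G_0}[\c I_k(\b\tht)]\bigr)+\tr(\c J_0)}\ \ge\ \frac{d}{\lambda_{\max}(\E\c I_k)+\nrm{\c J_0}{}}.
\]
The second step holds because $\tr A\le d\nrm{A}{}$ for positive semidefinite $A$. Alternatively, we use the matrix form $\E[(\h{\b\tht}-\b\tht)(\h{\b\tht}-\b\tht)^\top]\succeq(\E\c I_k+\c J_0)^{-1}$ and take traces, which gives $\tr(A^{-1})\ge d/\nrm{A}{}$. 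The remaining task is to bound the Fisher information $\c I_k(\b\tht)$ of $\tx_k\sim\c N(\sqrt{n_k}\b\tht,\Sigma_k(\b\tht))$ with respect to $\b\tht$.

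For this Gaussian family with mean $\mu(\b\tht)=\sqrt{n_k}\b\tht$ and diagonal covariance, the standard formula gives
\[
\c I_k(\b\tht)=n_k\Sigma_k(\b\tht)^{-1}+\frac12\sum_{i=1}^d\nabla\log\sigma_{k,i}(\b\tht)^2\,\nabla\log\sigma_{k,i}(\b\tht)^{2\,\top}.
\]
We verify it by differentiating the score $\mb s_k$ displayed in Section \ref{sec:vaneb}. The cross terms vanish because odd Gaussian moments are zero, and $\Var(Z^2-1)=2$. The first term has operator norm at most $n_k/\us\le\bn/\us$. The second has norm at most $\frac12\sum_i\|\nabla\log\sigma_{k,i}^2\|^2\le dL_\Sigma^2/2$ by Assumption \ref{ass:reg}. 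Therefore $\nrm{\E\c I_k}{}\le\bn/\us+dL_\Sigma^2/2$. The van Trees bound then applies to each $\oraet_k$, and averaging over $k$ gives the claimed lower bound.

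The step needing most care is checking the regularity conditions of van Trees. These are: differentiability of $\b\tht\mapsto\vrp^{(k)}(\tx_k;\b\tht)$ under the integral, which follows from the $C^1$ variances and the ellipticity bounds; the boundary condition $g_0\to0$, which is assumed; and finiteness of $\c J_0$, which is assumed. We must also keep track of the constant in the variance-gradient term, so that the factor $1/2$ from $\Var(Z^2)/4$ is recorded correctly.
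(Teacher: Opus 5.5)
Your proposal is correct and follows essentially the same route as the paper. The upper bound compares the posterior mean with the unshrunk $\bx_k$. The lower bound combines the multivariate van Trees inequality with the Fisher information $n_k\Sigma_k(\b\tht)^{-1}+\frac12\sum_{i}\nabla\log\sigma_{k,i}^2(\nabla\log\sigma_{k,i}^2)^\top$ and the operator-norm bounds $\bn/\us$ and $dL_\Sigma^2/2$. The only cosmetic difference is that you use the scalar trace form of van Trees (or take traces of the matrix form), while the paper states the matrix form and then takes traces; both give the same bound.
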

This theorem shows that the oracle term remains on the local $1/n$ scale and does not by itself produce any federation gain, so it remains to compare the excess risks of $\h\Tht$ and $\h\Tht^v$.
For this, we consider the special case when $G_0$ is a Gaussian mixture, as in case 3 of Theorem \ref{thm-denoising}. Using $\us\mb I_d\preceq\Sigma_k(\b\tht_k)\preceq\bs\mb I_d$, we also have
\[
\frac{d\us}{\bn}\le {\c R}(\h\Tht)\le \frac{d\bs}{\un},
\]
so
\[
{\c R}_E(\h\Tht)\le \frac{d\bs}{\un}-\frac{d}{\nrm{\c J_0}{}+\bn/\us+d L_\Sigma^2/2}.
\]
Thus, under Assumption \ref{ass:reg} with $\bn\asymp \un$, this upper bound on the excess risk of the local M-estimators that are asymptotically normal is of order $1/\un$. Since the local estimators use no cross-client information, their excess risk does not decrease as $K$ grows.

By contrast, Theorem \ref{thm-denoising} and \eqref{eq-eps2} give the following excess risk bound for VANEB:
\[
{\c R}_E(\h\Tht^v)
\le_{d,\us,\bs}\Vol(S_r)\frac{M^d}{K\un}\log^{d/2+(d/2\vee4)}K
\]
which decays with $K$ through the prior-learning term. Both risk decompositions share the same oracle baseline ${\c R}(\Tht^o)$, so the two methods differ only in the excess term: for the local M-estimators it can remain of order $1/\un$ regardless of $K$, whereas for VANEB it is $\t\cO((K\un)^{-1})$, decreasing in both the local sample sizes and the number of clients. The gain from federation enters entirely through learning the shared prior.

\section{Numerical Simulations}\label{sec:simulations}
We evaluate VANEB in three heterogeneous federated learning scenarios in $\R^3$. The first uses a parameter-dependent diagonal covariance matrix that closely aligns with our assumption, and the other two are generalized linear model settings with a full Fisher information-based covariance matrix that naturally depends on the parameter. In the two GLM settings (logistic and Poisson), we additionally introduce client-specific covariate covariance matrices $\Sigma_{x,k}$ so both local likelihood curvature and covariate design vary across clients. The goal is to understand when variance-aware shrinkage yields practical gains over fixed-variance EB and standard FedAvg.

All three scenarios share a common prior $G_0$ in $\R^3$ designed to challenge parametric methods with complex, non-Gaussian geometry. The support variable is sampled on a truncated domain $t\in[-\pi/2,\pi/2)$ and mapped to a full phase $u=2(t+\pi/2)\in[0,2\pi)$, so each component keeps its full geometric shape. The five components are a \emph{trefoil knot} $\gamma_1$ centered at $(-2,0,0)$, a \emph{helix} $\gamma_2$ centered at $(2,0,0)$, a \emph{tilted ellipse} $\gamma_3$ centered at $(0,2.5,0)$, a \emph{figure-eight} $\gamma_4$ (lemniscate of Gerono) centered at $(0,-2.5,1)$, and a \emph{Viviani curve} $\gamma_5$ centered at $(0,0,-2.5)$ (Figure \ref{fig:prior3d}). Client parameters are sampled from a weighted mixture over curves with weights
\[(w_1,w_2,w_3,w_4,w_5)=(0.35,0.35,0.1,0.1,0.1),\]
and uniform sampling in $t$ on each selected curve. 
This five-curve prior occupies a three-dimensional support with multiple connected components, self-intersections on the trefoil, and varying curvature, features that a Gaussian mixture prior approximates poorly.
\begin{figure}[h]
    \centering
    \includegraphics[width=0.65\textwidth]{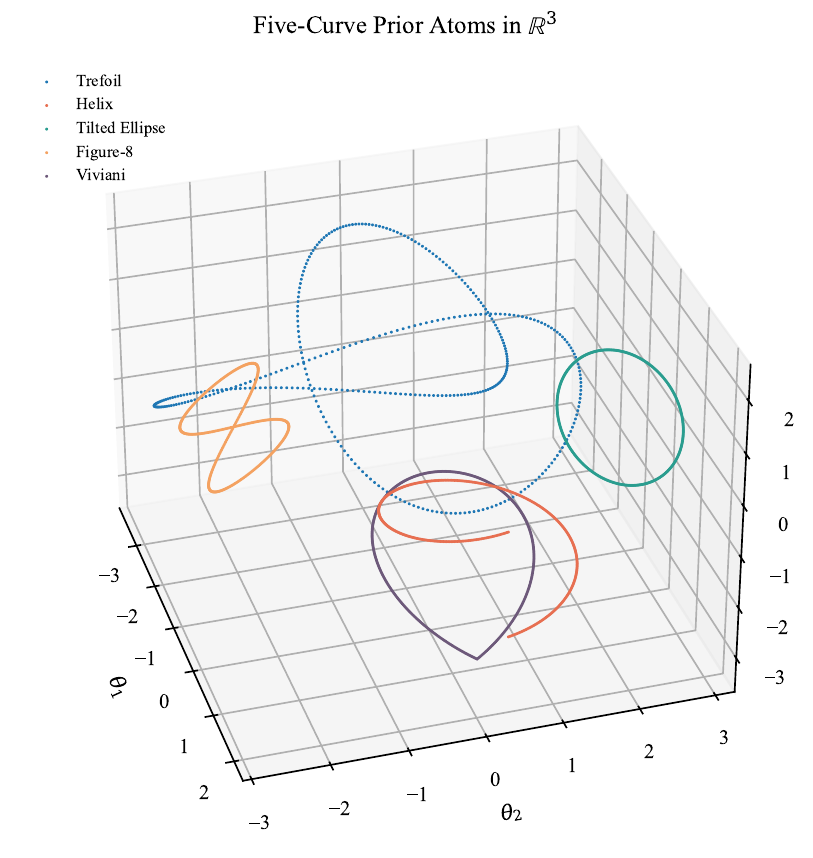}
    \caption{Support of $G_0$}
    \label{fig:prior3d}
\end{figure}

\subsection{Baselines and metrics.}
We evaluate the denoising risk $\operatorname{RMSE}(\h{\Theta}^v) = \left({K^{-1}\sum_{k=1}^K \|{\bx_k}^v - \b\tht_k\|^2}\right)^{1/2}$ in each scenario with (i) increasing $\un \in \{5,10,20,40\}$ at fixed $K=3200$, and (ii) increasing $K\in\{50,200,800,3200\}$ at fixed $\un=40$. Client sample sizes are drawn as $n_k\sim\operatorname{Unif}(\un, 2\un)$. All results are averaged over 200 Monte Carlo replicates.

We compare six estimators. The \textbf{Oracle Bayes} benchmark uses fixed atoms on the true five-curve manifold to form posterior means under the correct heteroskedastic likelihood:
\[\oraet_k=\int\frac{\b\tht\vrp^{(k)}(\tx_k;\b\tht)}{f_\GoSk(\tx_k)}{\r d}G_0(\b\tht)=\sum_{j}w_j^*\frac{\mb a_j^*\vrp^{(k)}(\tx_k;\b a_j^*)}{f_\GoSk(\tx_k)}\]
where $\{\mb a_j^*\}$ are atoms along the true prior curves with uniform weights $w_j^*$.
The \textbf{AdaMix} estimator \citep{ozkara2023statistical} iteratively fits a Gaussian mixture prior on the server and performs client-side MAP shrinkage (Algorithm \ref{alg:adamix} in Appendix).
The modified \textbf{NPEB} estimator uses each client's local sample-variance estimate as a fixed, parameter-independent precision throughout all EM iterations, providing a comparison where covariances do not adapt to atom locations (Note the original NPEB estimator in \citep{soloff2025multivariate} requires known covariances).
The \textbf{FedAvg} estimator computes a global average of client MLEs over a single communication round, representing a classical federated averaging approach that treats MLEs as fixed point estimates without inferring an underlying prior.
The \textbf{Local} baseline represents zero-communication learning: each client uses its own local MLE without any information pooling from other clients, serving as a baseline estimator that federated approaches are expected to outperform.
The \textbf{VANEB} estimator fits the NPMLE $\h{G}$ with parameter-dependent heteroskedastic covariances and returns posterior means under the fitted mixture (Section~\ref{sec:algorithm}).

\subsection{The scenarios}
\paragraph{Scenario (i): Sample Means with Quadratic Variance}
Each client transmits a sample mean for $\bx_k$ with quadratic variance function $\Sigma(\b\tht) = \diag(\tht_1^2,...,\tht_d^2)$, clipped coordinate-wise to $[s_{\min},s_{\max}] = [0.01,100]$ for numerical stability. VANEB uses the known variance function of the parameter, while NPEB uses local sample-variance estimates reported by each client. The quadratic variance creates a strong heteroskedastic signal, and this scenario evaluates VANEB's ability to leverage this heteroskedastic structure for improved estimation.

\paragraph{Scenario (ii): Multiclass Logistic Regression}
We consider the following 3-class logistic regression model. Let $\b\tht\in\bb{R}^3$ control all class boundaries, and $\mb B_c \in \bb{R}^{3 \times 3}$ ($c=1,2,3$) are fixed sparse diagonal projection matrices. Specifically: $$\mb B_1 = \text{diag}(1, -1, 0), \quad \mb B_2 = \text{diag}(0, 1, -1), \quad \mb B_3 = \text{diag}(-1, 0, 1).$$
Each client $k$ generates $n_k$ observations with features
\[
\mb z_{k,1},...,\mb z_{k,n_k} \overset{i.i.d.}{\sim} \c{N}(\mb{0}, \Sigma_{x,k}),
\]
where $\Sigma_{x,k}=\mb Q_k\diag(\lambda_{k,1},\lambda_{k,2},\lambda_{k,3})\mb Q_k^\top$, $\mb Q_k$ is an orthogonal random matrix, and $\lambda_{k,j}\sim\operatorname{Unif}(0.5,2.0)$. Class labels are then generated according to the softmax probabilities:
$$p_c(\mb z, \b\tht) := \Pr(Y_k=c\,|\,\mb z) = \frac{e^{\langle\mb B_c \mb z, \b\tht\rangle}}{\sum_{x=1}^3 e^{\langle\mb B_x \mb z, \b\tht\rangle}}.$$

Let $\b l_{c,k}^{(i)}:=\mb B_c\mb z_{k,i}$.
The expected Fisher information and its plug-in estimator at each client's MLE $\bx$ takes the form
\begin{align*}
    {\c I}_k(\b\tht) &= \bb{E} \left[ \sum_{c=1}^C p_c(\mb z, \b\tht) \b l_{c,k}^{(1)}\b l_{c,k}^{(1)}\,^\top - \left( \sum_{c=1}^C p_c(\mb z, \b\tht) \b l_{c,k}^{(1)} \right) \left( \sum_{c'=1}^C p_{c'}(\mb z, \b\tht) \b l_{c',k}^{(1)}\,^\top \right)\right] \\
    \h{\c I}_k(\bx) &= \frac{1}{n_k}\sum_{i=1}^{n_k} \left[ \sum_{c=1}^C p_c(\mb z_{k,i}, \bx)\b l_{c,k}^{(i)}\b l_{c,k}^{(i)}\,^\top - \sum_{c=1}^C p_c(\mb z_{k,i}, \bx) \b l_{c,k}^{(i)} \left( \sum_{c'=1}^C p_{c'}(\mb z_{k,i}, \bx) {\b l_{c',k}^{(i)}}^\top \right) \right].
\end{align*}
Then the VANEB method updates the precision $\h{\c I}_k(\bx)$ according to the M-step in Algorithm \ref{alg:vaneb}, while our adaptation of \cite{soloff2025multivariate}'s NPEB uses the client-reported precisions as fixed throughout. Note that the variance field in this simulation experiment is more complex than in the quadratic case. In particular, this setting lies outside the diagonal assumption in the theoretical results: the Fisher covariance is not diagonal. We include it to test whether the variance-aware mechanism remains useful in a richer covariance geometry.

\paragraph{Scenario (iii): Poisson Regression}\label{sec:sim_poisson}

In this scenario, each client fits Poisson regression $Y_k\sim\text{Poisson}(e^{\langle\mb z,\b\tht_k\rangle})$ with client-specific features generated as $\mb z\sim\c N(\b 0, (0.1)^2\Sigma_{x,k})$, where $\Sigma_{x,k}$ is generated as in the logistic scenario.
Each client obtains MLE $\bx_k$. 
The population Fisher and total empirical Fisher are
\begin{align*}
    {\c I}_k(\b\tht_k) &= \exp\left(\frac{0.1^2}{2}\b\langle\b\tht_k,\Sigma_{x,k}\b\tht_k\rangle\right)0.1^2\Sigma_{x,k}\left[\mb I_3 + 0.1^2\b\tht_k\b\tht_k^\top{\Sigma_{x,k}}\right], \\
    \h{\c I}_k(\bx_k)&=\frac{1}{n_k}\sum_{i=1}^{n_k}\exp(\langle\mb z_{k,i},\bx_k\rangle)\,\mb z_{k,i}\mb z_{k,i}^\top.
\end{align*}
This Fisher-derived covariance changes \emph{exponentially} with $\|\b\tht_k\|$, creating an extreme heteroskedastic regime: atoms at the periphery of the prior have high precision, and atoms near the origin are much noisier. As in the logistic setting, this is a full-covariance empirical test that lies outside the regime covered by the main proofs.

\begin{figure}[h]
\centering
\begin{subfigure}{0.48\textwidth}
    \includegraphics[width=\textwidth]{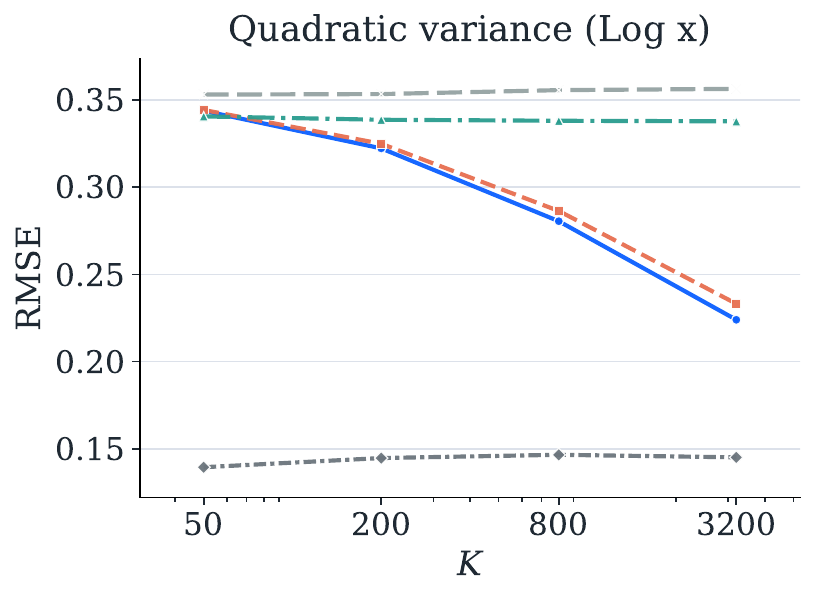}
    \caption{increasing $K$ (fixed $\un=40$).}
    \label{fig:quad_k}
\end{subfigure}
\hfill
\begin{subfigure}{0.48\textwidth}
    \includegraphics[width=\textwidth]{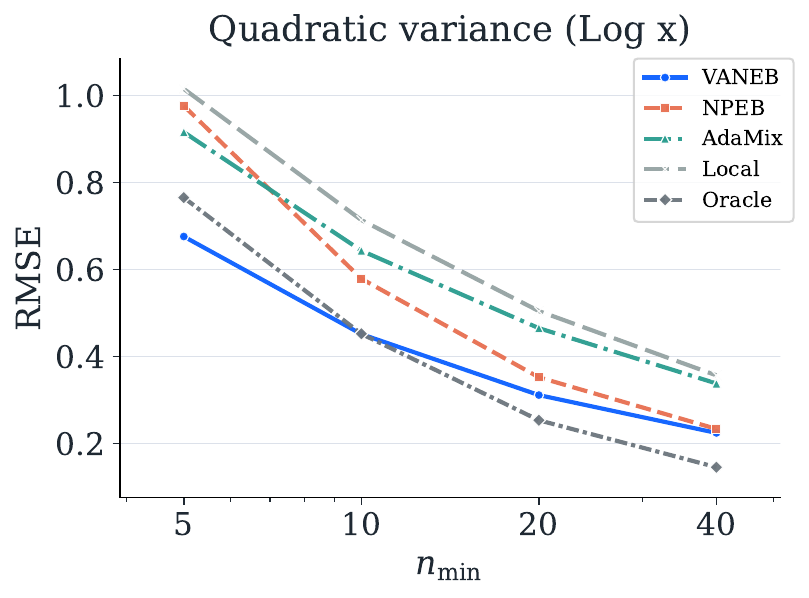}
    \caption{increasing $\un$ (fixed $K=3200$).}
    \label{fig:quad_nmin}
\end{subfigure}
\caption{Quadratic variance scenario: RMSE comparison ($d=3$, five-curve prior with support parameter $-\pi/2\le t<\pi/2$) over 200 replicates. In both figures, FedAvg is omitted for readability due to the large RMSE scale.}
\label{fig:simulation_quadratic}
\end{figure}

\subsection{Results.}

\paragraph{Quadratic variance scenario (Figure \ref{fig:simulation_quadratic}).}
In Figure \ref{fig:simulation_quadratic}(a), we note that VANEB improves steadily with an increasing number of clients and consistently outperforms fixed-variance EB once $K$ is moderate.
Figure \ref{fig:simulation_quadratic}(b) shows that VANEB strongly outperforms NPEB in the small sample regime. In both simulations, VANEB and NPEB generally outperform AdaMix and local-only estimation. The FedAvg method, which averages local estimates, has a high RMSE due to client dissimilarity and is omitted from the figures due to its large scale. These results support the claim that dynamic atom-covariance coupling is most valuable when local summaries are noisy.

\begin{figure}[h]
\centering
\begin{subfigure}{0.48\textwidth}
    \includegraphics[width=\textwidth]{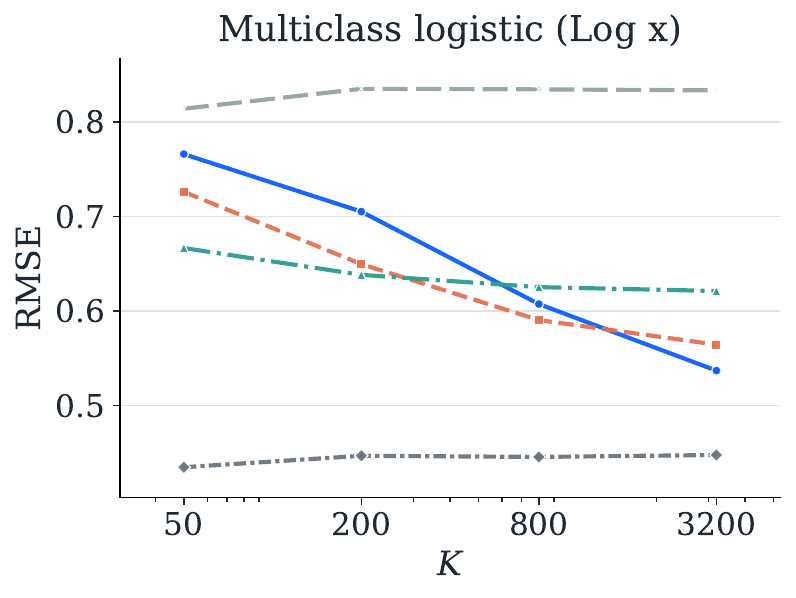}
    \caption{$K$-sweep (fixed $\un=40$).}
    \label{fig:logistic_k}
\end{subfigure}
\hfill
\begin{subfigure}{0.48\textwidth}
    \includegraphics[width=\textwidth]{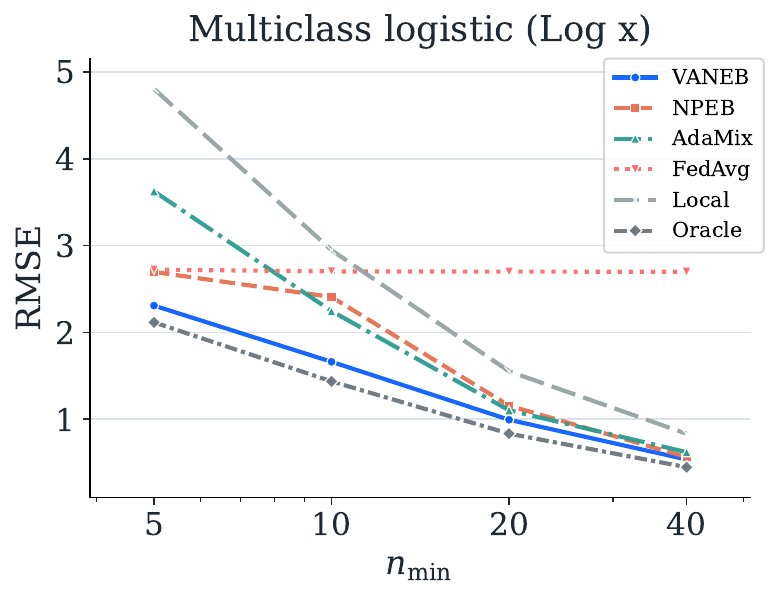}
    \caption{$\un$-sweep (fixed $K=3200$).}
    \label{fig:logistic_nmin}
\end{subfigure}
\caption{Multiclass logistic scenario: RMSE comparison. FedAvg omitted from part (a) for readability due to large RMSE scale.}
\label{fig:simulation_logistic}
\end{figure}
\paragraph{Multiclass logistic scenario (Figure \ref{fig:simulation_logistic}).}
The logistic regression environment presents a more complex, parameter-coupled Fisher covariance and exhibits a clear trade-off between sample size and client pool size. In the $\un$-sweep at $K=3200$, VANEB substantially outperforms NPEB, AdaMix, and FedAvg in the hardest regimes with small client sample sizes. In the $K$-sweep at fixed $\un=40$, however, VANEB is not the best for small and moderate client counts ($K=50,200,800$), where NPEB and AdaMix have slightly lower errors. The crossover appears at a larger number of clients, and at $K=3200$, VANEB has the best performance among the compared methods.

For intuition behind the slow $K$-gain, consider binary logistic regression with a single covariate $z=1$:
${\c I}(\tht)=p(\tht)\bigl(1-p(\tht)\bigr)$ with $p(\tht)=e^{\tht}/(1+e^{\tht})$.
Then $\nabla{\c I}(\tht)={\c I}(\tht)\bigl(1-2p(\tht)\bigr)$, so the Fisher gradient vanishes at the decision boundary $\tht=0$ and, exponentially fast, as $|\tht|\to\infty$.
Thus ${\c I}(\tht)$ (and with it the local variance-shift signal that VANEB exploits when re-evaluating $\Sigma$ at the atoms) is appreciable only in a thin transition region around $\tht=0$ and is quickly negligible under separation.
When this signal is small, the error in estimating $G_0$ can dominate the benefit of learning the variance field, so a larger client pool $K$ is needed before variance-aware updates improve upon fixed-covariance NPEB.

\begin{figure}[h]
\centering
\begin{subfigure}{0.48\textwidth}
    \includegraphics[width=\textwidth]{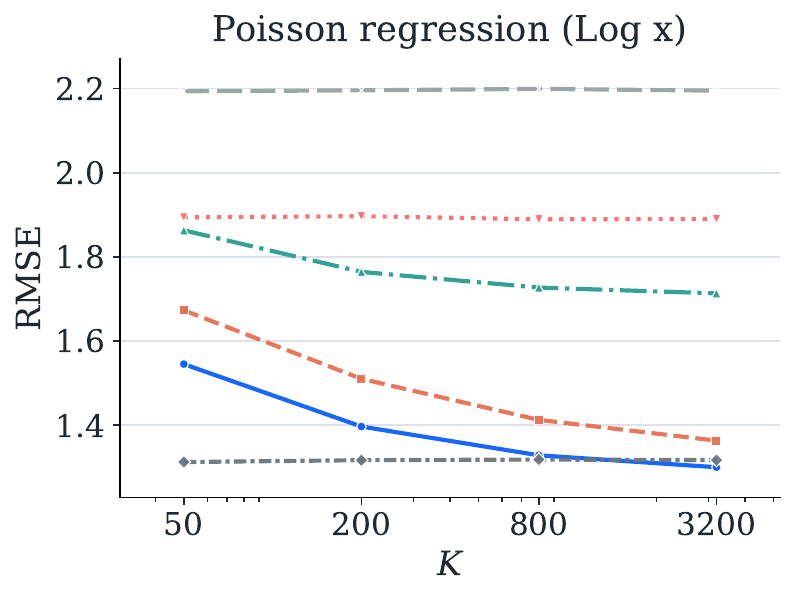}
    \caption{$K$-sweep (fixed $\un=40$).}
    \label{fig:poisson_k}
\end{subfigure}
\hfill
\begin{subfigure}{0.48\textwidth}
    \includegraphics[width=\textwidth]{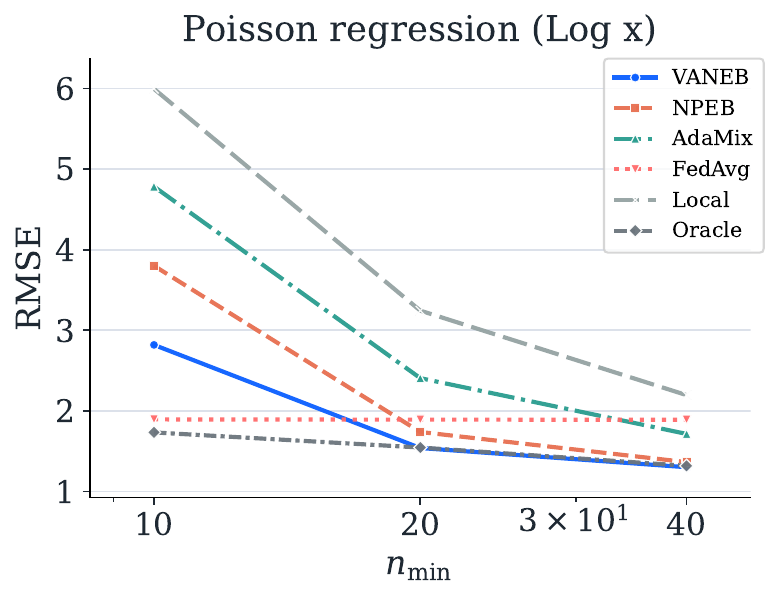}
    \caption{$\un$-sweep (fixed $K=3200$).}
    \label{fig:poisson_nmin}
\end{subfigure}
\caption{Poisson regression scenario: RMSE comparison.}
\label{fig:simulation_poisson}
\end{figure}
\paragraph{Poisson regression scenario (Figure \ref{fig:simulation_poisson}).}
The Poisson regression task is the most strongly heteroskedastic setting. At $\un=5$, the local MLEs are numerically unstable and the RMSE of all local-estimator-based methods is several orders of magnitude larger than at $\un\ge10$, so this point is omitted from the plots. From $\un=10$ onward, VANEB is consistently the strongest non-oracle method (e.g., $2.819$ vs NPEB $3.8$ at $\un=10$, and $1.3$ vs $1.363$ at $\un=40$). In the $K$-sweep at fixed $\un=40$, VANEB also maintains a strict advantage over NPEB for every $K$ ($1.545$ vs $1.673$ at $K=50$, $1.300$ vs $1.363$ at $K=3200$). 
Compared with the logistic case, the Poisson Fisher field is easier for VANEB to exploit at small $K$ because ${\c I}_k(\b\tht)$ changes essentially monotonically with the signal magnitude through the factor $\exp\big(0.1^2\langle\b\tht,\Sigma_{x,k}\b\tht\rangle/2\big)$. The Poisson variance field thus varies more strongly than the logistic one, but its dependence on $\b\tht$ is simpler to infer.

Across scenarios, VANEB is strongest under $\Sigma_k$-$\b\tht$ coupling and either local sample sizes are small or the client pool is sufficiently large. The results also show that variance-adaptive gains are amplified when enough clients are available to stabilize the learned prior. Overall, the simulation studies support VANEB as a robust nonparametric personalization method while highlighting finite-sample regimes in which fixed-covariance baselines can remain competitive.

In contrast, both local estimates and aggregation across clients, such as FedAvg, fail significantly in many cases. FedAvg is omitted from several plots because its high RMSE would affect readability. Because our 5-curve prior is highly multimodal and geometrically spread out, enforcing a single global point estimate incurs massive squared errors across heterogeneous clients.

\begingroup
\SetAlFnt{\small}
\SetAlCapFnt{\small}
\SetAlCapNameFnt{\small}
\SetAlgoNoEnd
\DontPrintSemicolon
\begin{algorithm}[!h]
\caption{VANEB-Head and VANEB-FT for neural-network personalization}
\label{alg:vaneb_nn_modes}

\KwIn{Clients $[K]$; rounds $T$; participation $q$; atoms $m$; pseudo-EM iterations $S$;
EB frequency $b$; mode $M\in\{\mathrm{Head},\mathrm{FT}\}$; covariance
rule $C\in\{\mathrm{Direct},\mathrm{Coordinate}\}$}

\SetKwFunction{EB}{VANEB}
\SetKwProg{Fn}{Procedure}{:}{}

\Fn{\EB{$\mathcal A,\{\h{\b\phi}_k,\h{\Sigma}_k,n_k\}_{k\in\mathcal A}$}}{
Construct $\h{\Sigma}_k$ by \eqref{eq:nn-head-covariance}. For Direct, set
$\Sigma_k(\mb a)=\h{\Sigma}_k$; for Coordinate, fit the coordinatewise
variance functions and evaluate them at $\mb a$\;
$m\leftarrow\min(m,|\mathcal A|)$; initialize
$\{\mb a_j\}_{j=1}^{m}$ from the reported heads and set $\pi_j=1/m$\;

\For{$s=0,\ldots,S-1$}{
  \ForEach{$k\in\mathcal A,\ j\in[m]$}{
    $r_{kj}\leftarrow
    \dfrac{\pi_j\varphi_d\!\left(
      \h{\b\phi}_k;\mb a_j,\Sigma_k(\mb a_j)/n_k\right)}
    {\sum_{\ell=1}^{m}\pi_\ell\varphi_d\!\left(
      \h{\b\phi}_k;\mb a_\ell,\Sigma_k(\mb a_\ell)/n_k\right)}$\;
  }
  \ForEach{$j\in[m]$}{
    $\pi_j\leftarrow|\mathcal A|^{-1}\sum_{k\in\mathcal A}r_{kj}$,
    $\mb a_j\leftarrow
    \left[\sum_{k\in\mathcal A}r_{kj}n_k\Sigma_k(\mb a_j)^{-1}\right]^{-1}
    \sum_{k\in\mathcal A}r_{kj}n_k
    \Sigma_k(\mb a_j)^{-1}\h{\b\phi}_k$\;
    \If{$C=\mathrm{Coordinate}$}{
      Update $\{\Sigma_k(\mb a_j)\}_{k\in\mathcal A}$ at the new atom\;
    }
  }
}
Recompute $r_{kj}$ and return
$\tilde{\b \phi}_k=\sum_{j=1}^{m}r_{kj}\mb a_j$,
$k\in\mathcal A$\;
}

\eIf{$M=\mathrm{Head}$}{
Initialize $\b\psi^{(0)}$ and $\b\phi_k^{(0)}=\b\phi^{(0)}$\;

\For{$t=0,\ldots,T-1$}{
  Sample $\mathcal S_t\subset[K]$ with
  $|\mathcal S_t|=\max(1,\lfloor qK\rfloor)$ and broadcast
  $(\b\psi^{(t)},\b\phi_k^{(t)})$ to $k\in\mathcal S_t$\;

  Each selected client performs $E$ local epochs and returns
  $(\h{\b\psi}_k,\h{\b\phi}_k,\h{\Sigma}_k,n_k)$\;

  $\b\psi^{(t+1)}\leftarrow
  \dfrac{\sum_{k\in\mathcal S_t}n_k\h{\b\psi}_k}
        {\sum_{k\in\mathcal S_t}n_k}$\;

  \eIf{$t+1\equiv0\pmod b$}{
    $\{\b\phi_k^{(t+1)}\}_{k\in\mathcal S_t}
    \leftarrow
    \EB(\mathcal S_t,
    \{\h{\b\phi}_k,\h{\Sigma}_k,n_k\}_{k\in\mathcal S_t})$\;
  }{
    $\b\phi_k^{(t+1)}\leftarrow\h{\b\phi}_k$,
    $k\in\mathcal S_t$\;
  }
  Retain $\b\phi_k^{(t+1)}=\b\phi_k^{(t)}$ for
  $k\notin\mathcal S_t$\;
}
}{
Train all parameters for $T$ rounds by FedAvg, obtaining
$(\b\psi^{(T)},\b\phi^{(T)})$\;

Broadcast the final model to all clients and freeze $\b\psi^{(T)}$\;

Each client fine-tunes $\b\phi^{(T)}$ for $E_{\rm FT}$ epochs and returns
$(\h{\b\phi}_k,\h{\Sigma}_k,n_k)$\;

$\{\b\phi_k^{v}\}_{k=1}^{K}\leftarrow
\EB([K],\{\h{\b\phi}_k,\h{\Sigma}_k,n_k\}_{k=1}^{K})$\;
}

\KwOut{Shared backbone and personalized client heads}
\end{algorithm}
\endgroup

  \section{Neural network adaptations and applications}\label{sec:nn}
In this section, we extend our Variance-Aware Nonparametric Empirical Bayes
(VANEB) framework to federated training of deep neural networks. Training personalized AI models for each
user, while borrowing information from other users, has emerged as a major use case for federated learning.
Although the asymptotic normality assumption central to our theory does not
strictly hold for highly non-convex neural networks, we heuristically treat
local weight estimates as approximately Gaussian and investigate whether
VANEB remains effective in this setting.

Updating the entire neural-network weight vector through NPEB can be noisy,
computationally expensive, and communication-inefficient. We therefore use a
common body--head decomposition: a shared backbone $\b\psi$ learns the
representation, while the final layer $\b\phi_k$ is personalized for client
$k$ \citep{bengio2012deep,tripuraneni2020theory,collins2021exploiting}. The
backbone is updated using FedAvg \citep{mcmahan2017communication}, and VANEB is
applied only to the relatively lower-dimensional client heads.

We consider two versions that use the same variance-aware EB operation but at
different stages. \emph{VANEB-Head} applies the EB update intermittently during
federated training (every five communication rounds in our experiments), while
the backbone is updated in every round. \emph{VANEB-FT} first trains a common
model by FedAvg, then fine-tunes each client's head with the backbone fixed,
and finally applies one EB update across all locally adapted heads. Thus,
VANEB-Head integrates personalization into federated training, whereas
VANEB-FT uses EB to denoise and share information among the final locally
fine-tuned heads. Algorithm~\ref{alg:vaneb_nn_modes} presents both versions in
a common form.

For either version, client $k$ reports a diagonal covariance estimate based on an approximation to the observed Fisher information matrix as, 
\begin{equation}\label{eq:nn-head-covariance}
\h{\Sigma}_k=\diag\bxs{\lrp{\frac{1}{n_k}\sum_{i=1}^{n_k}
\mb g_{k,i}\mb g_{k,i}^{\top}+\lambda\mb I}^{-1}},
\end{equation}
where $\mb g_{k,i}$ is the gradient of the head loss for observation $i$.
Note that, for communication efficiency and to remain closer to our theoretical setup, we retain only the diagonal elements from the full covariance matrix. As in
\cite{jhunjhunwala2024fedfisher}, this provides a Fisher information based approximation
to local uncertainty. We consider two ways of supplying the covariance to
pseudo-EM. The \emph{direct} construction sets
$\Sigma_k(\mb a_j)=\h{\Sigma}_k$ at every candidate atom, whereas the
\emph{coordinatewise} construction estimates, separately for each head
coordinate, a low-order variance function and evaluates it at $\mb a_j$.
Crossing these two constructions with the two training stages gives the four
reported variants: VANEB-Direct FT, VANEB-Coordinate FT, VANEB-Direct Head,
and VANEB-Coordinate Head. We do note that there are multiple consistent estimators of the Fisher information matrix, and in our results are not sensitive to the particular estimator used.

\begin{table}[!h]
  \centering
  \caption{Final personalized test accuracy (\%) on MNIST.
  The total training sample size is fixed at 50,000 and divided among the
  indicated number of clients. Bold denotes the highest result in each column
  and evaluation protocol.}
  \label{tab:mnist-personalized-nine-column}
  \begingroup
  \scriptsize
  \setlength{\tabcolsep}{2.3pt}
  \renewcommand{\arraystretch}{0.82}
  \resizebox{\textwidth}{!}{%
  \begin{tabular}{l*{9}{r}}
    \toprule
    & \multicolumn{3}{c}{$\alpha=0.01$}
    & \multicolumn{3}{c}{$\alpha=0.1$}
    & \multicolumn{3}{c}{$\alpha=0.5$} \\
    \cmidrule(lr){2-4}\cmidrule(lr){5-7}\cmidrule(lr){8-10}
    Method & 100 & 200 & 500 & 100 & 200 & 500 & 100 & 200 & 500 \\
    \midrule
    \multicolumn{10}{l}{\textit{Panel A: Balanced test distribution}} \\
    \addlinespace[1pt]
    FedAvg & 80.51 & 79.86 & 70.74 & 89.22 & 83.27 & 75.18 & 90.46 & 87.16 & 74.36 \\
    FedAvg+FT & 49.49 & 39.59 & 33.61 & 71.05 & 57.68 & 46.41 & 84.57 & 75.30 & 59.99 \\
    FedBABU & 49.69 & 34.12 & 22.25 & 65.23 & 50.98 & 33.87 & 80.55 & 69.73 & 49.46 \\
    FedPer & 26.09 & 30.02 & 28.20 & 59.04 & 54.67 & 47.34 & 81.36 & 73.50 & 61.10 \\
    FedRep & 12.29 & 11.41 & 10.60 & 36.23 & 25.83 & 14.68 & 66.14 & 51.80 & 23.71 \\
    LG-FedAvg & 59.77 & 50.13 & 46.42 & 70.67 & 57.29 & 48.12 & 83.19 & 70.83 & 56.70 \\
    Per-FedAvg & 11.15 & 10.66 & 10.50 & 17.90 & 14.55 & 12.42 & 27.46 & 17.93 & 12.39 \\
    SCAFFOLD & 73.61 & 75.42 & 68.30 & 82.51 & 78.06 & 69.56 & 85.61 & 81.32 & 67.38 \\
    Ditto & 13.40 & 11.42 & 10.31 & 24.20 & 16.25 & 11.40 & 44.16 & 22.98 & 12.81 \\
    Local Only & 11.52 & 11.19 & 10.26 & 20.28 & 15.18 & 11.14 & 31.58 & 18.40 & 11.91 \\
    VANEB-FT (direct) & 69.30 & 73.81 & \textbf{75.84} & 86.94 & 82.65 & \textbf{77.25} & 92.10 & 87.16 & \textbf{77.11} \\
    VANEB-FT (coordinate) & \textbf{83.60} & \textbf{82.10} & 74.77 & \textbf{90.91} & \textbf{84.95} & 76.50 & \textbf{92.34} & \textbf{87.55} & 76.18 \\
    VANEB-Head (direct) & 31.34 & 44.52 & 46.68 & 59.08 & 61.24 & 60.40 & 81.88 & 77.92 & 66.82 \\
    VANEB-Head (coordinate) & 37.35 & 47.45 & 44.26 & 67.13 & 64.94 & 58.26 & 84.12 & 78.57 & 66.08 \\
    \midrule
    \multicolumn{10}{l}{\textit{Panel B: Client-matched test distribution}} \\
    \addlinespace[1pt]
    FedAvg & 80.77 & 80.16 & 71.57 & 89.32 & 84.19 & 74.25 & 90.45 & 87.06 & 75.03 \\
    FedAvg+FT & 96.82 & 92.99 & 82.06 & 95.81 & \textbf{91.85} & \textbf{79.06} & \textbf{93.83} & \textbf{89.47} & 75.67 \\
    FedBABU & 96.26 & 92.15 & 79.05 & 95.04 & 90.37 & 74.93 & 93.10 & 87.29 & 70.18 \\
    FedPer & 95.16 & 91.40 & 80.81 & 94.25 & 90.33 & 78.53 & 92.95 & 87.94 & 75.45 \\
    FedRep & 93.71 & 88.00 & 74.03 & 91.41 & 84.11 & 61.66 & 88.44 & 80.31 & 51.57 \\
    LG-FedAvg & \textbf{97.37} & \textbf{93.97} & \textbf{84.55} & \textbf{96.12} & 91.09 & 79.06 & 93.74 & 86.94 & 73.08 \\
    Per-FedAvg & 93.02 & 86.23 & 73.77 & 81.96 & 72.67 & 58.03 & 67.91 & 51.83 & 36.31 \\
    SCAFFOLD & 73.52 & 75.86 & 67.79 & 82.87 & 78.49 & 68.77 & 85.80 & 81.18 & 68.04 \\
    Ditto & 94.04 & 87.18 & 73.30 & 87.05 & 74.95 & 55.17 & 78.63 & 57.82 & 35.57 \\
    Local Only & 93.58 & 87.88 & 73.38 & 85.80 & 73.86 & 55.35 & 72.22 & 52.62 & 35.15 \\
    VANEB-FT (direct) & 93.99 & 90.25 & 79.08 & 94.07 & 88.77 & 76.94 & 93.31 & 87.96 & \textbf{77.61} \\
    VANEB-FT (coordinate) & 83.63 & 82.61 & 74.53 & 91.00 & 85.30 & 75.59 & 92.86 & 87.75 & 76.49 \\
    VANEB-Head (direct) & 95.11 & 91.04 & 80.81 & 94.30 & 90.24 & 78.79 & 93.24 & 87.93 & 75.64 \\
   VANEB-Head (coordinate) & 91.44 & 87.56 & 79.53 & 92.78 & 88.60 & 77.86 & 92.93 & 87.25 & 75.12 \\
    \bottomrule
  \end{tabular}%
  }
  \endgroup
\end{table}

\begin{table*}[!h]
  \centering
  \caption{Personalized CIFAR-10 test accuracy (\%) with total sample size fixed at 50,000.}
  \label{tab:cifar10-publication9}
  \begingroup
  \scriptsize
  \setlength{\tabcolsep}{2.2pt}
  \renewcommand{\arraystretch}{0.86}
  \resizebox{\textwidth}{!}{%
  \begin{tabular}{lrrrrrrrrr}
    \toprule
    & \multicolumn{3}{c}{$\alpha=0.01$}
    & \multicolumn{3}{c}{$\alpha=0.1$}
    & \multicolumn{3}{c}{$\alpha=0.5$} \\
    \cmidrule(lr){2-4}\cmidrule(lr){5-7}\cmidrule(lr){8-10}
    Method & 100 & 200 & 500 & 100 & 200 & 500 & 100 & 200 & 500 \\
    \midrule
    \multicolumn{10}{l}{\textit{Panel A: Balanced test distribution}} \\
    \addlinespace[1pt]
    FedAvg & 30.14 & 34.64 & 32.86 & 47.33 & 40.95 & 37.11 & 50.69 & 47.49 & 37.52 \\
    FedAvg+FT & 13.27 & 14.25 & 15.08 & 24.34 & 22.29 & 20.39 & 39.00 & 33.08 & 27.95 \\
    FedBABU & 14.94 & 14.96 & 15.02 & 24.34 & 22.00 & 19.49 & 34.96 & 30.06 & 25.09 \\
    FedPer & 13.63 & 13.66 & 14.61 & 23.19 & 19.90 & 18.81 & 35.68 & 30.32 & 25.00 \\
    FedRep & 12.36 & 12.26 & 13.61 & 21.46 & 20.20 & 18.87 & 35.38 & 30.33 & 24.74 \\
    LG-FedAvg & 18.27 & 18.18 & 18.08 & 28.59 & 25.95 & 23.00 & 42.56 & 35.52 & 30.69 \\
    Per-FedAvg & 10.92 & 10.50 & 10.29 & 14.09 & 12.95 & 11.82 & 17.30 & 15.99 & 13.30 \\
    SCAFFOLD & 17.78 & 19.71 & 14.35 & 26.92 & 21.61 & 14.61 & 29.36 & 24.77 & 14.42 \\
    Ditto & 16.34 & 14.55 & 12.55 & 25.16 & 20.79 & 16.00 & 36.77 & 28.95 & 20.08 \\
    Local Only & 11.15 & 10.60 & 10.26 & 14.99 & 13.19 & 11.16 & 18.62 & 15.82 & 12.12 \\
   VANEB-FT (coordinate) & \textbf{39.40} & \textbf{38.16} & \textbf{34.63} & \textbf{48.36} & \textbf{43.86} & 37.93 & \textbf{55.10} & 47.73 & 38.52 \\
    VANEB-FT (direct) & 20.89 & 21.82 & 34.20 & 36.69 & 38.39 & \textbf{38.45} & 48.80 & \textbf{48.29} & \textbf{38.85} \\
    VANEB-Head (coordinate) & 18.67 & 19.29 & 20.24 & 29.02 & 25.85 & 23.98 & 40.17 & 35.06 & 28.76 \\
    VANEB-Head (direct) & 14.16 & 14.94 & 18.32 & 23.07 & 21.90 & 24.18 & 35.73 & 33.01 & 29.18 \\
    \midrule
    \multicolumn{10}{l}{\textit{Panel B: Client-matched test distribution}} \\
    \addlinespace[1pt]
    FedAvg & 29.93 & 34.73 & 33.24 & 47.46 & 40.80 & 36.79 & 50.54 & 47.32 & 37.40 \\
    FedAvg+FT & 93.11 & 88.00 & 75.34 & 81.69 & \textbf{76.54} & \textbf{64.11} & 69.60 & 61.98 & 49.79 \\
    FedBABU & 92.34 & 87.31 & 74.75 & 78.28 & 72.80 & 62.12 & 64.18 & 57.13 & 45.92 \\
    FedPer & 92.73 & 87.49 & 74.22 & 79.74 & 74.66 & 61.85 & 66.74 & 58.49 & 46.88 \\
    FedRep & 92.93 & 87.82 & 74.29 & 81.29 & 75.74 & 61.27 & 68.36 & 60.53 & 44.52 \\
    LG-FedAvg & \textbf{93.40} & \textbf{88.09} & \textbf{75.41} & \textbf{81.82} & 76.51 & 64.07 & \textbf{69.66} & \textbf{62.35} & \textbf{50.09} \\
    Per-FedAvg & 91.77 & 86.04 & 73.35 & 71.34 & 67.51 & 55.73 & 49.49 & 45.81 & 35.99 \\
    SCAFFOLD & 17.56 & 19.61 & 14.42 & 26.43 & 21.69 & 14.56 & 28.99 & 24.64 & 14.51 \\
    Ditto & 92.00 & 86.88 & 73.78 & 78.71 & 72.09 & 58.97 & 64.47 & 56.88 & 41.73 \\
    Local Only & 92.01 & 86.14 & 73.38 & 73.61 & 67.98 & 55.28 & 51.28 & 46.17 & 34.63 \\
    VANEB-FT (coordinate) & 45.12 & 38.01 & 34.36 & 54.32 & 43.91 & 37.44 & 55.00 & 48.05 & 38.97 \\
    VANEB-FT (direct) & 85.68 & 79.51 & 42.96 & 70.56 & 62.57 & 38.07 & 62.60 & 48.52 & 39.14 \\
    VANEB-Head (coordinate) & 80.68 & 73.86 & 64.29 & 74.56 & 66.78 & 55.25 & 63.42 & 55.48 & 44.09 \\
    VANEB-Head (direct) & 92.27 & 85.76 & 71.04 & 80.08 & 73.42 & 56.11 & 67.03 & 57.93 & 44.29 \\
    \bottomrule
  \end{tabular}%
  }
  \endgroup
\end{table*}

\subsection{CNN on MNIST and CIFAR-10 vision datasets}

We compare personalized CNN performance on MNIST and CIFAR-10. The MNIST
network has two convolutional and two fully connected layers, with
\texttt{fc2} as the personalized head. The CIFAR-10 network has three
convolutional blocks followed by three fully connected layers, with
\texttt{fc3} as the personalized head. In both cases the remaining layers form
the shared backbone. We compare two variants of VANEB-Head and VANEB-FT with FedAvg, FedAvg followed by head fine-tuning, FedRep \citep{collins2021exploiting}, FedBABU
\citep{oh2021fedbabu}, FedPer \citep{arivazhagan2019federated}, LG-FedAvg
\citep{liang2020think}, Per-FedAvg \citep{fallah2020personalized}, SCAFFOLD
\citep{karimireddy2020scaffold}, Ditto \citep{li2021ditto}, and local-only
training.

We generate joint label and covariate heterogeneity. Client training-label
proportions are drawn from a Dirichlet distribution, with smaller $\alpha$
giving stronger label skew. Clients are also assigned to transformation
clusters: rotations, translations, blur, and contrast changes are used for
MNIST, while color jitter, blur, and affine perturbations are used for
CIFAR-10. Holdout images are disjoint from training images and use
deterministic transforms. We report two personalized evaluation protocols.
The \emph{balanced} protocol represents every class equally in each client's
test distribution and measures whether personalization preserves broad
predictive coverage. The \emph{client-matched} protocol follows that client's
training-label proportions and measures specialization to its local
distribution.

All experiments use 10\% client participation. For MNIST data, we use two local epochs and 50
communication rounds, while for CIFAR-10 data we use 100 communication rounds and one local epoch. The LG-Fedavg additionally used 100-round LG stage. For each value $\alpha\in\{0.01,0.1,0.5\}$, the total
training sample size is fixed at 50,000 and divided among 100, 200, or 500
clients. The VANEB procedures use 10 atoms. The local-only results are for local training with effective compute match. Bold denotes the best result within
each column and test protocol.

\begin{table*}[!h]
  \centering
  \caption{Final personalized test accuracy (\%) on MNIST under joint
  Dirichlet label skew ($\alpha=0.1$) and clustered covariate shift.}
  \label{tab:mnist-transform-ablation-main}
  \begingroup
  \tiny
  \setlength{\tabcolsep}{2.0pt}
  \renewcommand{\arraystretch}{0.82}
  \resizebox{\textwidth}{!}{%
  \begin{tabular}{l*{12}{r}}
    \toprule
    & \multicolumn{4}{c}{No covariate shift ($s=0$)}
    & \multicolumn{4}{c}{Default shift ($s=1$)}
    & \multicolumn{4}{c}{Strong shift ($s=2$)} \\
    \cmidrule(lr){2-5}\cmidrule(lr){6-9}\cmidrule(lr){10-13}
    Method & 50 & 100 & 200 & 500 & 50 & 100 & 200 & 500 & 50 & 100 & 200 & 500 \\
    \midrule
    \multicolumn{13}{l}{\textit{Panel A: Balanced test distribution}} \\
    \addlinespace[1pt]
    FedAvg & 92.24 & 92.82 & 89.62 & 83.18 & 86.97 & 89.22 & 83.27 & 75.18 & 74.67 & 79.03 & 72.02 & 60.89 \\
    FedAvg+FT & 86.80 & 80.42 & 70.45 & 59.82 & 77.53 & 71.05 & 57.68 & 46.41 & 62.08 & 54.32 & 43.02 & 30.23 \\
    FedBABU & 78.00 & 74.58 & 64.40 & 46.02 & 69.49 & 65.31 & 50.58 & 33.51 & 52.85 & 49.59 & 37.77 & 21.51 \\
    FedPer & 74.27 & 71.34 & 66.32 & 59.93 & 64.12 & 58.92 & 54.52 & 47.44 & 50.69 & 44.35 & 40.62 & 29.58 \\
    FedRep & 54.77 & 44.92 & 30.96 & 16.39 & 47.47 & 36.24 & 25.89 & 14.62 & 37.93 & 27.59 & 20.56 & 12.83 \\
    LG-FedAvg & 86.48 & 78.80 & 67.91 & 59.71 & 78.83 & 70.23 & 57.36 & 48.25 & 67.76 & 57.57 & 46.76 & 34.21 \\
    Per-FedAvg & 25.13 & 19.55 & 15.35 & 13.30 & 23.53 & 17.88 & 14.49 & 12.40 & 21.19 & 15.97 & 13.53 & 11.60 \\
    SCAFFOLD & 91.39 & 89.58 & 85.65 & 79.24 & 84.84 & 82.60 & 78.20 & 69.68 & 74.64 & 70.09 & 64.53 & 54.76 \\
    Ditto & 58.18 & 27.21 & 17.34 & 11.73 & 50.49 & 24.37 & 16.33 & 11.42 & 41.93 & 21.11 & 14.88 & 11.20 \\
    Local Only & 27.84 & 21.64 & 15.80 & 11.36 & 26.04 & 20.18 & 15.20 & 11.21 & 23.56 & 18.09 & 14.30 & 11.05 \\
    VANEB-Direct FT & 93.66 & 93.56 & 90.34 & \textbf{85.24} & 87.45 & 86.94 & 82.65 & \textbf{77.25} & 72.62 & 73.40 & 68.33 & \textbf{63.28} \\
    VANEB-Coordinate FT & \textbf{95.73} & \textbf{94.05} & \textbf{90.84} & 84.79 & \textbf{91.95} & \textbf{90.91} & \textbf{84.95} & 76.50 & \textbf{77.79} & \textbf{81.18} & \textbf{73.72} & 62.22 \\
    VANEB-Direct Head & 75.27 & 71.63 & 73.25 & 70.65 & 64.05 & 59.08 & 61.24 & 60.40 & 52.22 & 44.37 & 46.42 & 44.33 \\
    VANEB-Coordinate Head & 81.35 & 77.62 & 74.78 & 69.03 & 70.61 & 67.13 & 64.94 & 58.26 & 56.56 & 53.21 & 49.80 & 41.85 \\
    \midrule
    \multicolumn{13}{l}{\textit{Panel B: Client-matched test distribution}} \\
    \addlinespace[1pt]
    FedAvg & 92.08 & 92.76 & 90.00 & 82.80 & 86.91 & 89.32 & 84.19 & 74.25 & 76.78 & 79.77 & 72.85 & 60.33 \\
    FedAvg+FT & 98.23 & \textbf{97.31} & \textbf{94.41} & \textbf{85.48} & 97.24 & 95.81 & \textbf{91.85} & \textbf{79.06} & 94.68 & 92.80 & 86.79 & 70.07 \\
    FedBABU & 97.60 & 96.89 & 93.11 & 80.97 & 96.51 & 95.04 & 90.37 & 74.93 & 93.22 & 91.53 & 85.60 & 66.42 \\
    FedPer & 97.80 & 96.27 & 93.42 & 84.17 & 96.76 & 94.25 & 90.33 & 78.53 & 93.75 & 90.28 & 84.53 & 69.39 \\
    FedRep & 96.92 & 93.79 & 87.08 & 64.74 & 95.37 & 91.41 & 84.11 & 61.66 & 92.39 & 86.87 & 78.63 & 58.28 \\
    LG-FedAvg & \textbf{98.60} & 97.24 & 93.25 & 84.34 & \textbf{97.78} & \textbf{96.12} & 91.09 & \textbf{79.06} & \textbf{96.80} & \textbf{94.21} & \textbf{87.43} & \textbf{71.42} \\
    Per-FedAvg & 91.56 & 85.64 & 74.13 & 59.87 & 89.42 & 81.96 & 72.67 & 58.03 & 85.05 & 77.80 & 70.10 & 56.38 \\
    SCAFFOLD & 91.10 & 89.75 & 85.83 & 78.86 & 84.61 & 82.87 & 78.49 & 68.77 & 75.83 & 70.51 & 64.74 & 53.81 \\
    Ditto & 96.20 & 89.07 & 77.29 & 55.96 & 94.13 & 87.05 & 74.95 & 55.17 & 91.70 & 83.11 & 72.27 & 54.68 \\
    Local Only & 93.68 & 88.02 & 75.37 & 55.98 & 92.05 & 85.80 & 73.86 & 55.35 & 88.20 & 80.83 & 71.64 & 54.87 \\
    VANEB-Direct FT & 97.38 & 95.46 & 92.36 & 84.87 & 95.99 & 94.07 & 88.77 & 76.94 & 91.38 & 89.36 & 82.76 & 64.73 \\
    VANEB-Coordinate FT & 95.53 & 93.79 & 91.20 & 84.47 & 93.85 & 91.00 & 85.30 & 75.59 & 83.53 & 82.41 & 74.92 & 62.00 \\
    VANEB-Direct Head & 97.94 & 96.22 & 93.49 & 85.11 & 96.78 & 94.30 & 90.24 & 78.79 & 93.80 & 89.98 & 83.88 & 69.01 \\
    VANEB-Coordinate Head & 97.39 & 95.54 & 92.57 & 83.89 & 95.43 & 92.78 & 88.60 & 77.86 & 91.65 & 88.21 & 79.36 & 67.65 \\
    \bottomrule
  \end{tabular}%
  }
  \endgroup
\end{table*}

Tables~\ref{tab:mnist-personalized-nine-column} and
\ref{tab:cifar10-publication9} show a clear distinction between the
two evaluation protocols. Under client-matched evaluation, methods based on
independent head adaptation can perform very well because the test labels
follow the same skewed proportions observed during local training. Balanced
evaluation is more demanding: every personalized model must predict the full
label space, including labels that are rare at that client. This also explains
why local-only training can appear strong under severe matched label skew but
degrades sharply on balanced tests.

The VANEB-FT variants give the strongest and most consistent balanced
performance. Across MNIST and CIFAR-10, either Direct or Coordinate FT is best
in every balanced column. More broadly, VANEB-Coordinate FT outperforms FedAvg
in every reported comparison: all nine settings under both balanced and
client-matched evaluation on both datasets (36 of 36 comparisons). However, VANEB-FT Coordinate is not always the overall best-performing method, especially in client-matched tests, since FedAvg can have degraded performance there. On client-matched tests,
FedAvg with fine-tuning, LG-FedAvg, and related local-adaptation methods are
often stronger because their test labels reproduce the client's local label
mixture. Coordinate FT is especially effective with fewer clients, whereas
Direct FT is more stable as the client population grows. VANEB-Head provides a
complementary in-training procedure without a separate terminal fine-tuning
stage and generally improves substantially over local-only training.

\begin{figure*}[!h]
    \centering
    \begin{subfigure}[t]{0.485\textwidth}
        \centering
        \includegraphics[width=\linewidth]{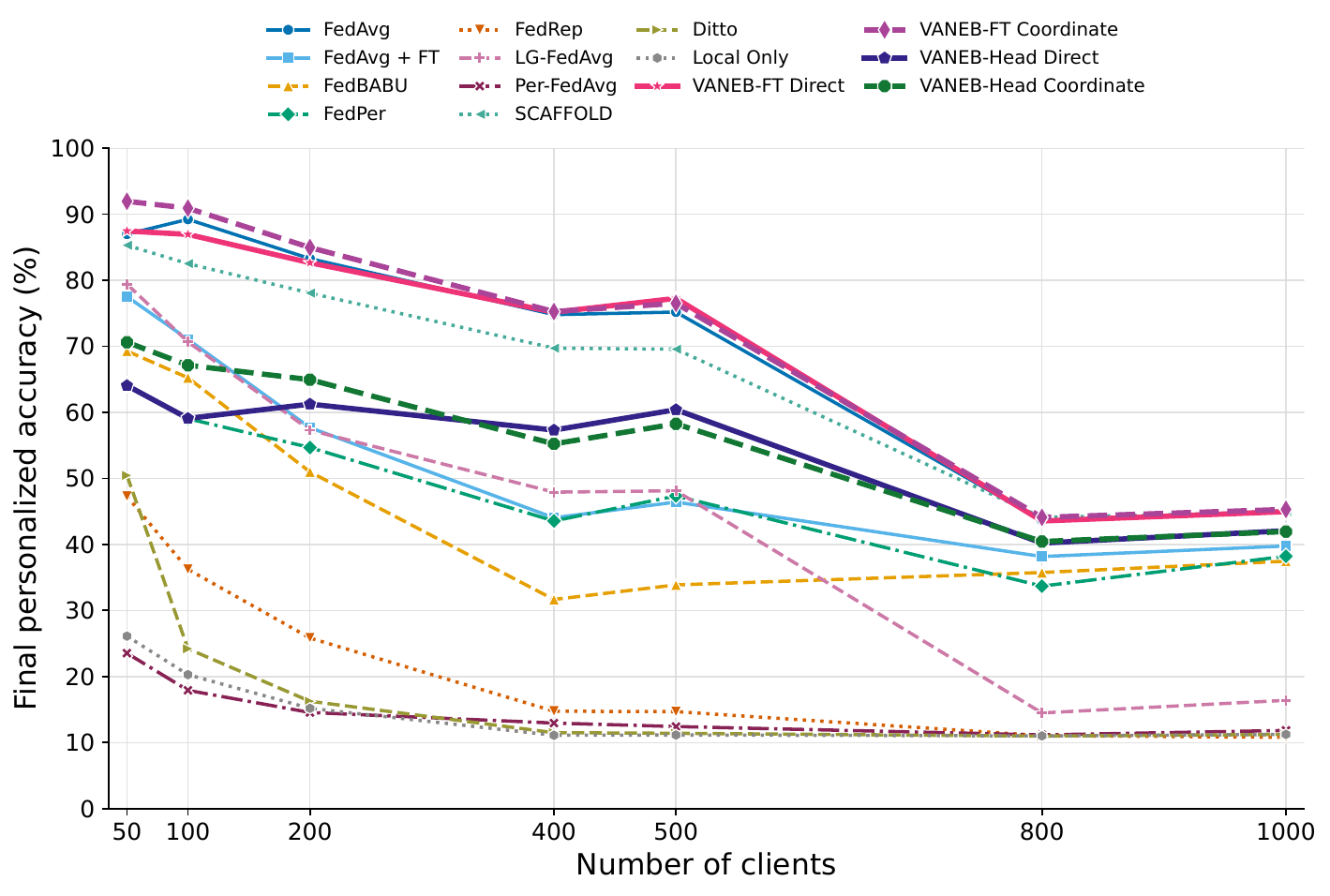}
        \caption{$\alpha=0.1$: balanced test distribution.}
        \label{fig:mnist-client-scaling-a01-balanced}
    \end{subfigure}
    \hfill
    \begin{subfigure}[t]{0.485\textwidth}
        \centering
        \includegraphics[width=\linewidth]{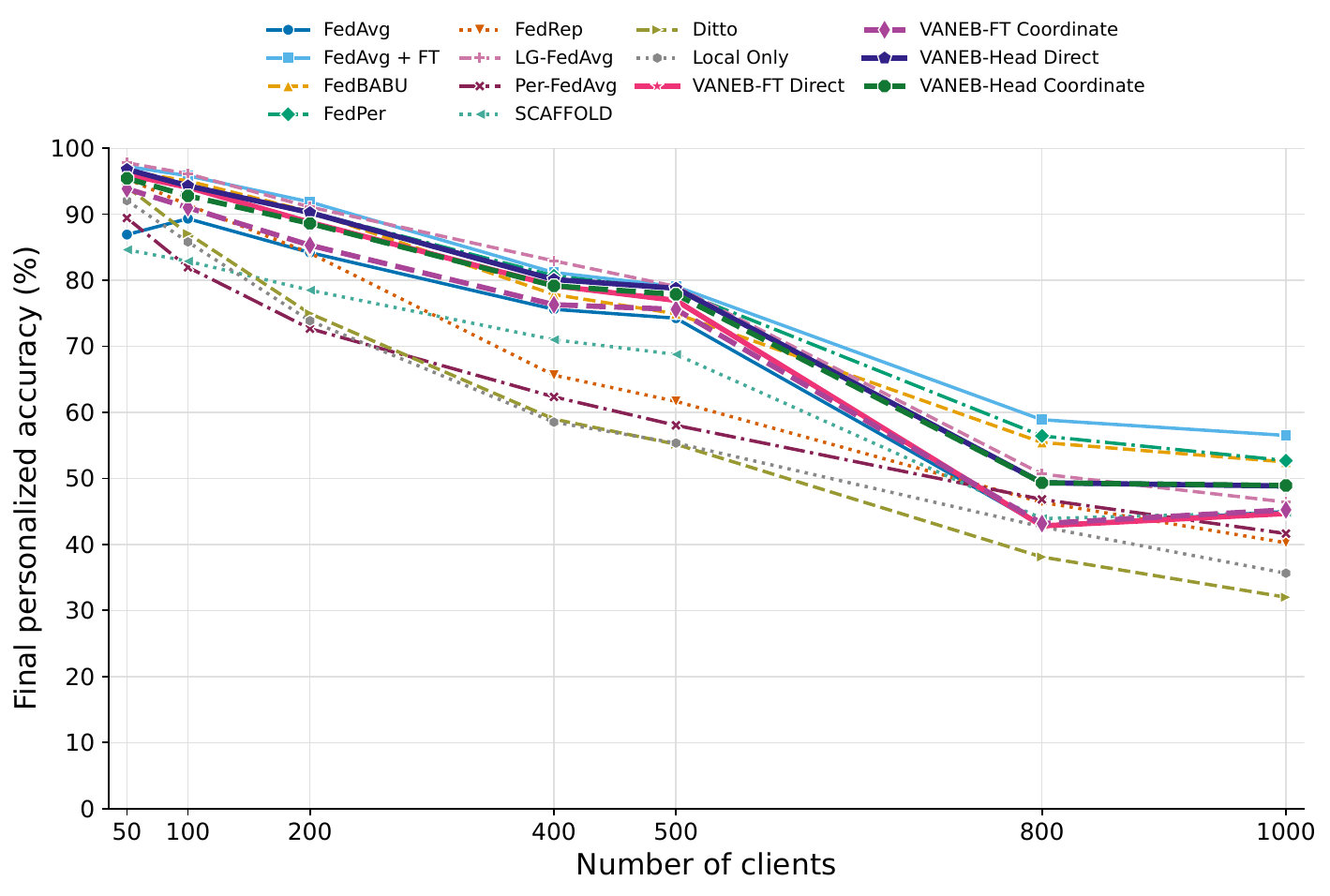}
        \caption{$\alpha=0.1$: client-matched test distribution.}
        \label{fig:mnist-client-scaling-a01-matched}
    \end{subfigure}

    \vspace{0.5em}

    \begin{subfigure}[t]{0.485\textwidth}
        \centering
        \includegraphics[width=\linewidth]{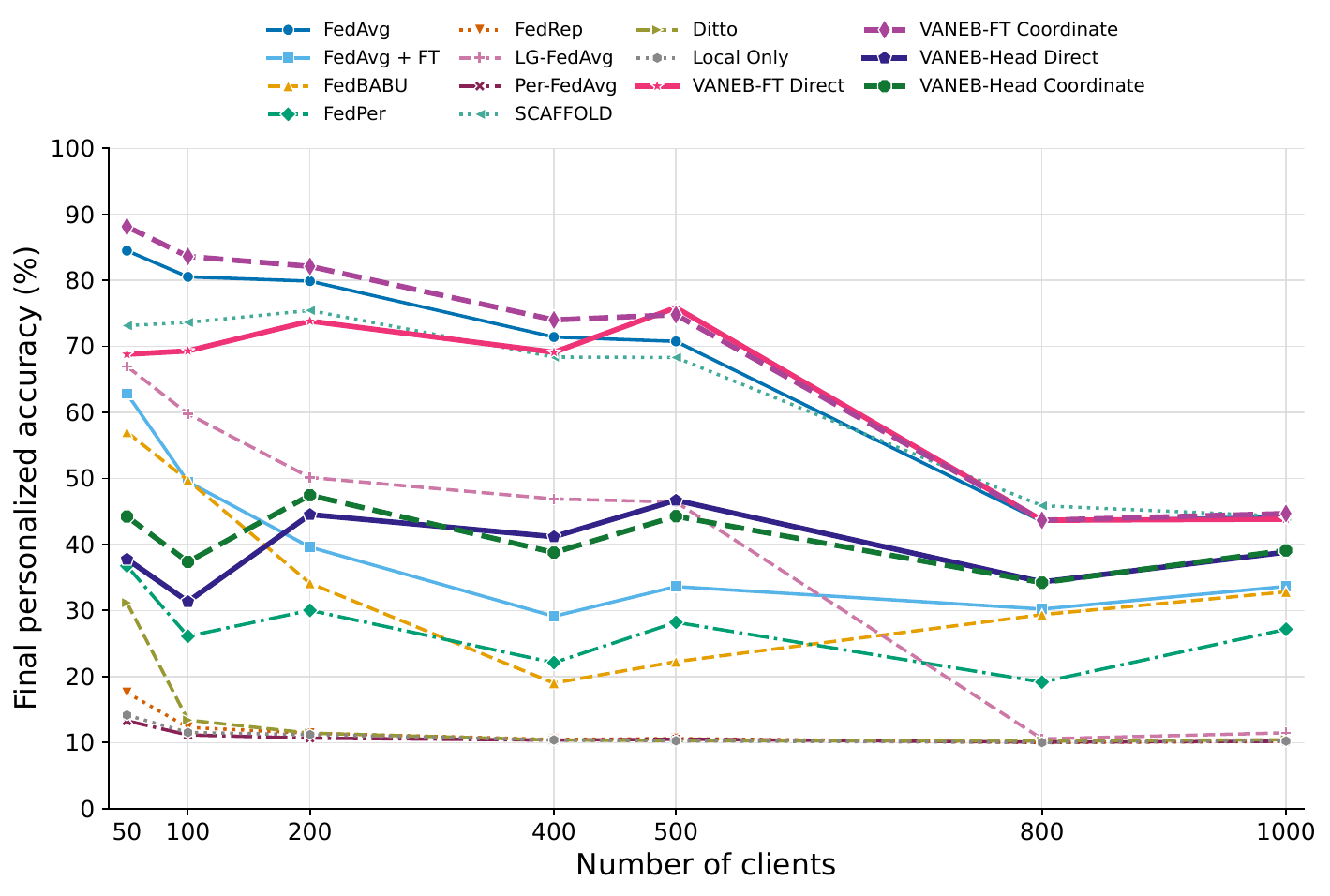}
        \caption{$\alpha=0.01$: balanced test distribution.}
        \label{fig:mnist-client-scaling-a001-balanced}
    \end{subfigure}
    \hfill
    \begin{subfigure}[t]{0.485\textwidth}
        \centering
        \includegraphics[width=\linewidth]{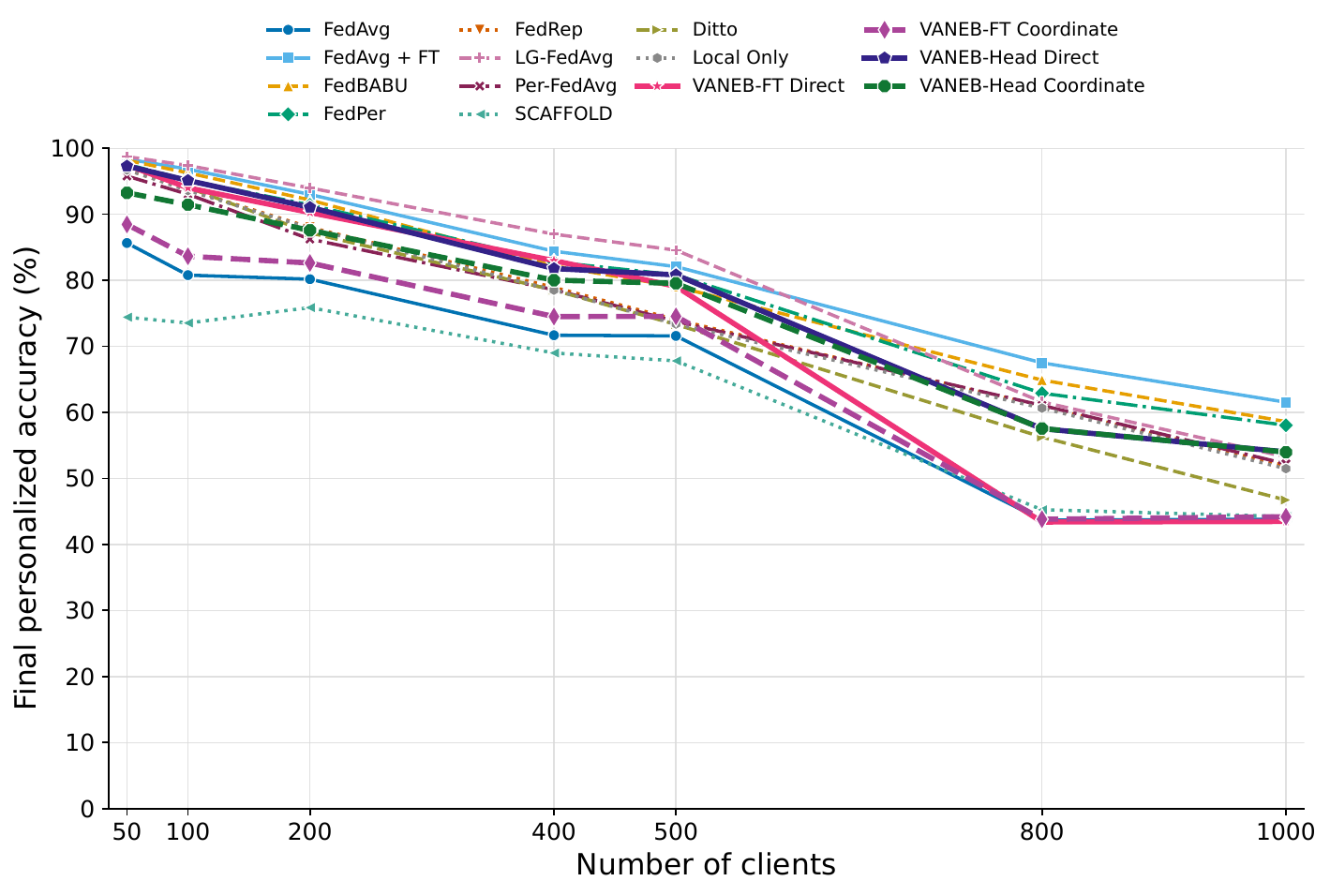}
        \caption{$\alpha=0.01$: client-matched test distribution.}
        \label{fig:mnist-client-scaling-a001-matched}
    \end{subfigure}

    \caption{Final personalized accuracy on MNIST as the number of clients
    increases while the total federated sample size remains fixed.}
    \label{fig:mnist-fixed-total-client-scaling}
\end{figure*}

To separate label heterogeneity from clustered feature heterogeneity,
Table~\ref{tab:mnist-transform-ablation-main} varies the strength $s$ of the
client-cluster transformations at $\alpha=0.1$. Increasing $s$ lowers accuracy
for every method and makes the decline sharper when the fixed sample is split
among more clients. The VANEB-FT variants remain strongest throughout the
balanced protocol: Coordinate FT is best in nine of the 12 settings, while
Direct FT is best for 500 clients at each transformation strength. This
complementarity agrees with the main tables and motivates reporting both
covariance constructions.

Figure~\ref{fig:mnist-fixed-total-client-scaling} examines the effect of
increasing the number of clients while holding the total amount of data fixed.
As the client population grows from 50 to 1,000, the number of observations
available to each client decreases, making both local estimation and
personalization progressively more difficult. This deterioration becomes
particularly pronounced beyond 500 clients.

Under balanced evaluation, the VANEB-FT variants are strongest or nearly
strongest across the client scales for both levels of label skew. Coordinate
FT is generally strongest when each client has more data, while Direct FT is
more stable after the fixed sample is divided among many clients. These
results indicate that pooling locally adapted heads through the estimated
population distribution remains effective as the client data become
increasingly fragmented.

The client-matched panels show a different pattern. Because their test-label
proportions follow local training distributions, methods based on aggressive
local head adaptation, particularly FedAvg with fine-tuning and LG-FedAvg,
often perform best. The contrast between the balanced and matched panels
illustrates the distinction between specialization to a client's observed
label mixture and retaining predictive coverage across the complete label
space.

Figure~\ref{fig:mnist-prior-comparison} visualizes the estimated priors (the first two principal components) for
the two VANEB-FT covariance constructions at 50 and 500 clients. The gray
points are the locally fine-tuned heads, the red stars are the final prior
atoms, and the blue points are the posterior-mean heads; the other panels show
the estimated atom masses and the concentration of client responsibilities.
With 50 clients, Direct retains ten separated atoms and visibly distinct
posterior heads, whereas Coordinate concentrates most mass on two atoms and
produces stronger shrinkage. With 500 clients, Coordinate becomes essentially
single-atom, while Direct retains diffuse prior mass across the ten atoms.
Nevertheless, Direct responsibilities are then sufficiently diffuse that its
posterior means also become very similar. Thus the direct construction
preserves a richer estimated prior, while the coordinatewise construction
induces more aggressive pooling as the federation becomes increasingly
fragmented.

\begin{figure*}[!t]
  \centering
  \begin{subfigure}[t]{0.88\textwidth}
    \centering
    \includegraphics[width=\linewidth]{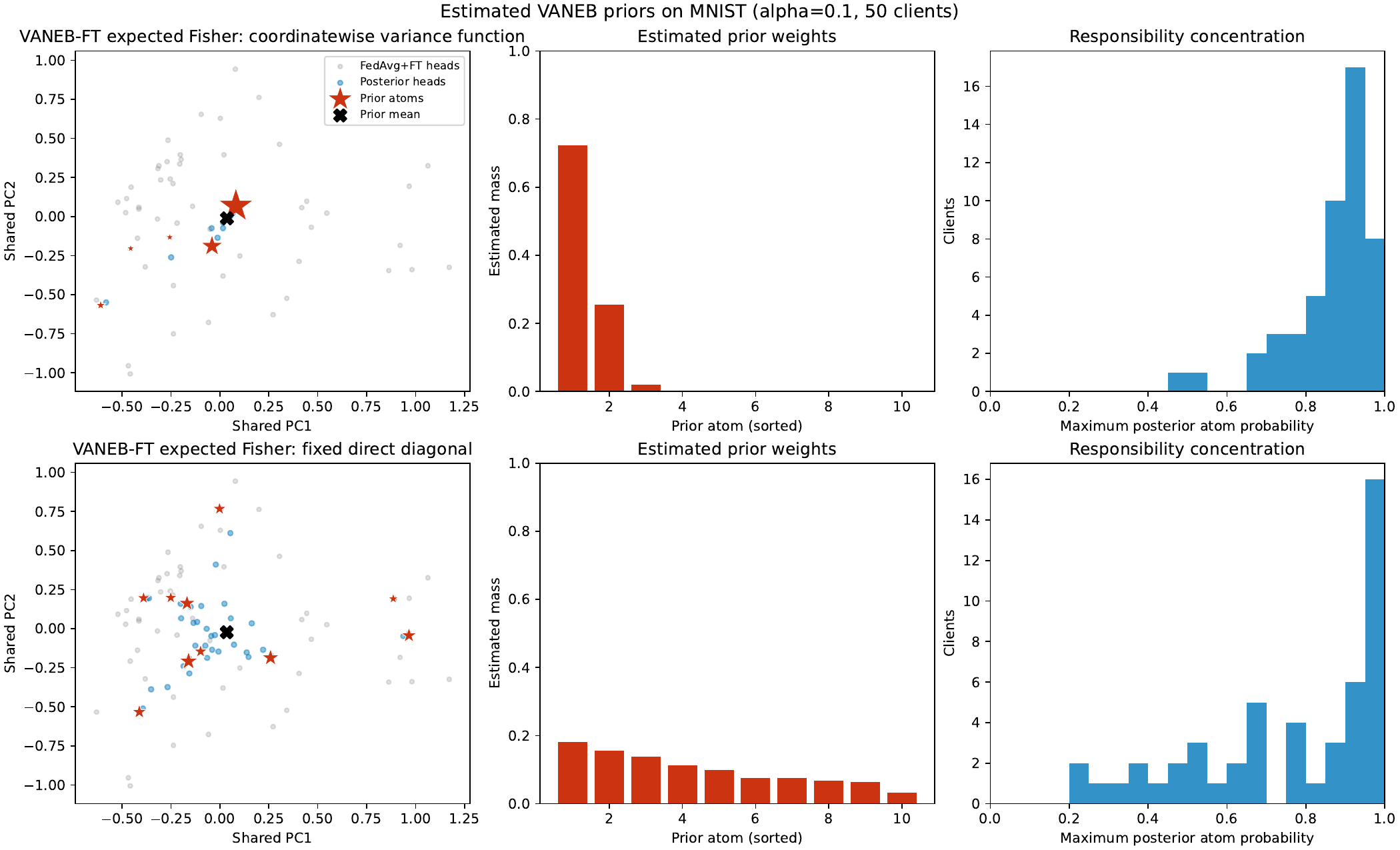}
    \caption{50 clients (1,000 training observations per client).}
    \label{fig:mnist-prior-comparison-50}
  \end{subfigure}

  \vspace{0.5em}

  \begin{subfigure}[t]{0.88\textwidth}
    \centering
    \includegraphics[width=\linewidth]{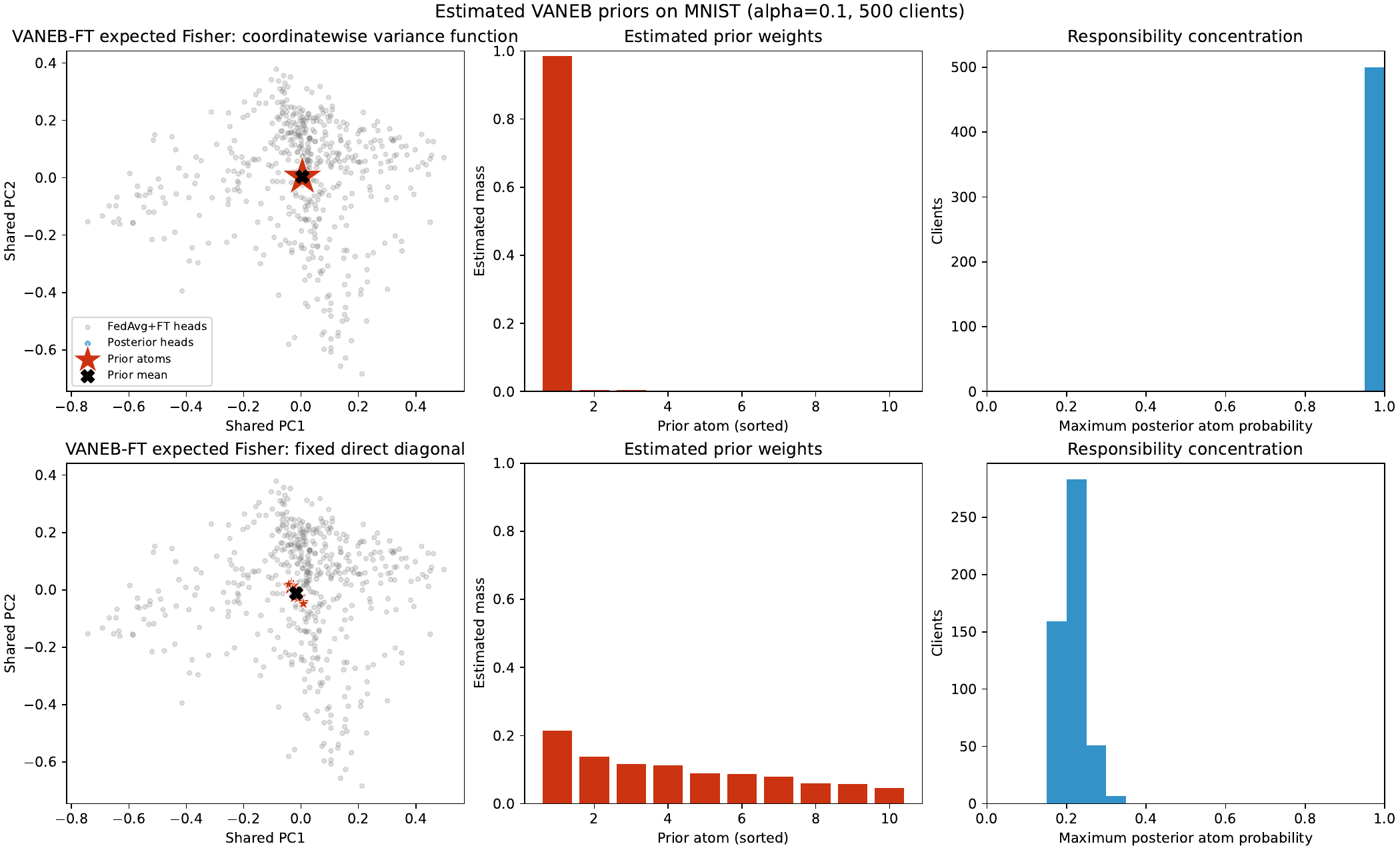}
    \caption{500 clients (100 training observations per client).}
    \label{fig:mnist-prior-comparison-500}
  \end{subfigure}
  \caption{Estimated VANEB-FT priors on MNIST with $\alpha=0.1$ and a fixed
  total training sample size of 50,000. The top and bottom rows within each
  subfigure show the coordinatewise and direct covariance constructions,
  respectively.}
  \label{fig:mnist-prior-comparison}
\end{figure*}

Together, the two versions show how the same EB principle can be used either
during federated training or after local adaptation. VANEB-Head continually
shares information among uncertain client heads, while VANEB-FT applies a
final population-level denoising step to independently fine-tuned heads. The
balanced results indicate that this shrinkage prevents local adaptation from
discarding information learned from the broader federation, allowing VANEB to
personalize client models without sacrificing global predictive coverage.

\section{Discussion}
In this paper, we introduced the Variance-Aware Nonparametric Empirical Bayes (VANEB) framework for personalized federated learning. By deriving a generalized Tweedie formula that accommodates parameter-dependent covariance structures, we remove the known-covariance restriction of existing NPEB methods. Our analysis provides non-asymptotic guarantees for the NPMLE in the average squared Hellinger distance and an oracle denoising inequality, covering M-estimators and generalized linear models (GLMs) under a diagonal covariance regime.

Beyond our rigorous theoretical guarantees, we empirically demonstrated the practical utility of VANEB through heuristic extensions to Deep Neural Networks (DNNs). Although the core assumption of asymptotic normality does not apply to the highly non-convex loss landscapes of these modern architectures, our experimental results reveal that treating local parameters as approximately Gaussian allows VANEB to serve as a powerful, adaptive shrinkage regularizer, outperforming standard baselines like FedAvg.

Several directions remain open. First, our guarantees are restricted to the diagonal covariance approximation; extending them to full, non-diagonal Fisher information matrices is a natural next step. Second, bridging the gap between the theory and deep learning applications requires relaxing the asymptotic normality condition and addressing high dimensionality. A non-asymptotic analysis based on empirical process theory, controlling the concentration of the local empirical risks directly, could provide guarantees for VANEB in non-convex, finite-sample settings.

\bibliographystyle{plainnat}
\bibliography{references,references2}

\newpage
\appendix
\begin{center}
    \Huge{\textbf{Appendices}}
\end{center}

\section{Proof of Main Results}

\subsection{Proof overview and comparison with prior work}
Our proofs build on the general NPMLE analysis framework developed by \citet{saha2020nonparametric} and \citet{soloff2025multivariate} for the fixed-covariance (homoskedastic) setting. However, the extension to parameter-dependent covariances $\Sigma_k(\b\tht)$ introduces several structural difficulties that require new arguments.

The classical analysis works directly with $f_{k,G}$. In our setting, the oracle Bayes rule involves variance-weighted densities $\t f_{k,G,i}$ that have no counterpart in prior work. We introduce a sandwiching argument (Section \ref{sec:prf-hell}) to transfer Hellinger bounds between $f_{k,G}$ and $\t f_{k,G,i}$ via the condition number $\tau=\bs/\us$.

The NPMLE support is no longer a tractable ridgeline manifold when $\Sigma_k$ depends on $\b\tht$. We establish convergence of the pseudo-EM iterates under Lipschitz covariances (Lemma \ref{lem-vaneb-convergence}) and quantify the approximation gap to the true heteroskedastic critical points via the implicit function theorem.

The metric entropy bounds require controlling the variation of $\vrp^{(k)}(\bx;\b\tht)$ as a function of $\b\tht$ when the covariance also moves. Lemma \ref{lem:mvt} provides a new mean-value-type bound involving the Jacobian of the variance function, which replaces the standard Lipschitz bounds available in the fixed-covariance case.

The covering numbers of the marginal class $\bb F$ grow polynomially in $\un^{d/2}$ due to the sharpening of kernels with increasing local sample size---a phenomenon absent when each client contributes a single observation as in \citet{saha2020nonparametric,soloff2025multivariate}.

\subsection{Notation}
We write $\partial_i^jf:= \frac{\partial^j}{\partial x_i^j}f(\mb x)$ to denote a $j$-th partial derivative of a vector function $f:\R^d\to\R$.
Let $\bb{B}_r(\bx) = \{\mb y\in \R^d\,;\,\|\bx-\mb y\|\le r\}$ denote a closed ball in $\R^d$. For an integer $m>0$, we use a generic notation $[m] = \{1, \dots, m\}$. We adopt the following notation from \cite{soloff2025multivariate}:
\begin{enumerate}
    \item Consider a pseudo-metric space $(M, m)$. For some $\eps > 0$, let $N(\eps, M, m)$ denote the $\eps$-covering number:
    \[N:=\argmin_{N}\left\{M\subset\bigcup_{i=1}^N\{y\,;\,m(y, x_i)\le \eps\};x_1,\dots,x_N\in M\right\}.\] 
    We use a shorthand $N(\eps, M)$ instead when $M\subseteq\R^d$.
    \item For notations $f_{\blt,G} = (f_\GSk)_{k=1}^K$ and $\mb q_{\blt,G}=(\mb q_\GSk)_{k=1}^K$, define the spaces of marginals $f_{\blt,G}$ and $\mb q_{\blt,G}$:
    \[\bb{F} = \left\{f_{\blt,G}\,;\,G\in {\c P}(\R^d)\right\},~\bb F_q = \left\{{\mb q}_{\blt,G}\,;\,G\in {\c P}(\R^d)\right\}.\]
    \item For $S\subset\R^d$ and $M > 0$, $S^M$ denotes an $M$-enlargement of $S$:
    \[S^M = \{\mb x\in \R^d\,;\,\mathfrak{d}_S(\mb x) \le M\}\]
    for $\mathfrak{d}_S(\mb x)=\inf_{s\in S}\|\mb x - s\|$.
    \item Define the semi-norms
    \begin{align*}
        \|f_{\blt,G} - f_{\blt,H}\|_{\infty, S^M}
        &:= \max_{1\le k\le K} \sup_{\bx\in S^M}\vrt{f_\GSk(\bx) - f_\HSk(\bx)} \\
        \|{\mb q}_{\blt,G} - {\mb q}_{\blt,H}\|_{\infty, S^M}
        &:= \max_{1\le k\le K}\sup_{\bx\in S^M}\nrm{\mb q_\GSk(\bx) - \mb q_\HSk(\bx)}{}
    \end{align*}
\end{enumerate}
where $\nrm{\cdot}{}$ denotes a vector $\ell_2$ norm.

\subsection{Proof of Theorem \ref{thm-hllngr}}\label{sec:prf-hell}
The overall concentration strategy follows the $\eta$-net approach of \citet[Theorem 7]{soloff2025multivariate}; the key new ingredient is the transfer between $f_{k,G}$ and the variance-weighted densities $\t f_{k,G,i}$ via the condition number $\tau$.

By the conditions on $\Sigma_k$, we have
\[\us f_{k,G}\le\t f_{k,G,i}\le \bs f_{k,G},\quad i\le d,k\le K\]
hence
\[\frac{\us f_{k,G}}{\bs f_{k,\h G}}\le\frac{\t f_{k,G,i}}{\t f_{k,\h G,i}}\le \frac{\bs f_{k,G}}{\us f_{k,\h G}},\quad i\le d,k\le K\]
for any measures $\h G$ and $G$. This leads to
\[\log\frac{\bs}{\us}+\ell_K(\h G)-\ell_K(G_0)\ge\frac{1}{K}\sum_{k=1}^K\log\frac{\t f_{k,\h G,i}}{\t f_{k,G_0,i}}\ge\log\frac{\us}{\bs}+\ell_K(\h G)-\ell_K(G_0),\quad i\le d.\]
Therefore, we only need to prove the bound for $\bar h(f_{\blt,\h G},f_{\blt,G_0})$ as
\[\brcs{\prod_{k=1}^K \frac{\t f_{k,G,i}(\tx_k)}{\t f_{k,G_0,i}(\tx_k)}\ge\exp\lrp{-K(t-\log\tau)}}\subseteq\brcs{\prod_{k=1}^K \frac{f_{k,\h G}(\tx_k)}{f_\GoSk(\tx_k)}\ge\exp(-Kt)}\]
for any $t\in\R$ and $i=1,...,d$.

Given the approximation \eqref{eq-hllngr-npmle} in the likelihood, we acknowledge that for $t>0$,
\begin{align*}
    \P\bigg(&\bar{h}(f_{\blt,\h G}, f_{\blt,G_0}) \ge C_{d,\us,\bs} t\eps_M\bigg) \\
    &= \P\bxs{\bar{h}(f_{\blt,\h G}, f_{\blt,G_0}) \ge C_{d,\us,\bs}t\eps_M, \prod_{k=1}^K \frac{f_{k,\h G}(\tx_k)}{f_\GoSk(\tx_k)}\ge \exp\left(-C_{d,\us,\bs}K\eps_M^2\right)} \\
    &\le\P\bxs{\prod_{k=1}^K \frac{f_{k,\h G}(\tx_k)}{f_\GoSk(\tx_k)}\ge \exp\left(-C_{d,\us,\bs}K\eps_M^2\right)}.
\end{align*}
Suppose for some $\t\gamma_K>0$, we have
\[
\prod_{k=1}^K \frac{f_{k,\h G}(\tx_k)}{f_\GoSk(\tx_k)}\ge \exp\left((\beta-\alpha)K\t\gamma_K^2\right) \mathrm{~for~some~} 0<\beta < \alpha < 1
\]
and fix $t>0$.
Let $\{f_{\blt,H_j}\}_{j=1}^N\subset \bb{F}$ be an $\eta$-net of $\bb{F}$ under the semi-norm $\|\cdot\|_{\infty, S^M}$ for covering number $N:=N(\eta,\bb F,\nrm{\cdot}{\infty,S^M})$.
Let $J\subseteq[N]$ collect the indices for which a distribution $H_{0j}$ on $\R^d$ satisfying the following exists: 
\[
\|f_{\blt,H_{0j}} - f_{\blt,H_j}\|_{\infty, S^M}\le \eta 
\quad\&\quad
\bar{h}(f_{\blt,H_{0j}}, f_{\blt,G_0})\ge t\t\gamma_K.
\]
As proclaimed by an $\eta$-net, there exsits an index $j^*\in [N]$ such that \[\|f_{\blt,H_{j^*}} - f_{\blt,\h G}\|_{\infty, S^M}\le \eta.\] 
On $\{\bar{h}(f_{\blt,\h G}, f_{\blt,G_0}) \ge  t\t\gamma_K\}$, it follows that $j^*\in J$ by the definition of $J$ hence
\[\|f_{\blt,H_{0j^*}} - f_{\blt,\h G}\|_{\infty, S^M}\le 2\eta.\]
This gives
\[
f_{k,\h G}(\tx_k)
\le \begin{cases}
f_{k,H_{0j^*}}(\tx_k) + 2\eta, & \text{if } \tx_k\in S^M \\(2\pi\us)^{-d/2}, & \text{otherwise}.
\end{cases}
\]
Defining $v(\mb x) := \eta 1_{\mb x\in S^M} + \eta\left(\frac{M}{\mathfrak{d}_S( \mb x)}\right)^{d+1}1_{\mb x\not\in S^M}$, we have 
\begin{align*}
    \exp&\brcs{(\beta-\alpha)Kt^2\t\gamma_K^2} \\
    &\le \max_{j\in J}\left[\prod_{k=1}^K \frac{f_{k, H_{0j}}(\tx_k) + 2v(\tx_k)}{f_\GoSk(\tx_k)}\right] \cdot \left[\prod_{k :\tx_k\not\in S^M}\frac{(2\pi\us)^{-d/2}}{2v(\tx_k)}\right]
\end{align*}
on the event $\{\bar{h}(f_{\blt,\h G}, f_{\blt,G_0}) \ge  t\t\gamma_K\}$. Hence
\begin{align*}
    \P\bigg(\bar{h}(f_{\blt,\h G}, f_{\blt,G_0}) \ge  t\t\gamma_K\bigg) 
    \le \P\bxs{\max_{j\in J}\prod_{k=1}^K \frac{f_{k, H_{0j}}(\tx_k) + 2v(\tx_k)}{f_\GoSk(\tx_k)}\ge \exp(-\alpha Kt^2\t\gamma_K^2)}\\
    +\P\bxs{\prod_{k :\tx_k\not\in S^M}\frac{(2\pi\us)^{-d/2}}{2v(\tx_k)}\ge \exp(\beta Kt^2\t\gamma_K^2)}.
\end{align*}
By Markov's inequality, the second term is bounded by
\begin{align*}
    &\exp\left(-\frac{\beta Kt^2\t\gamma_K^2}{2\log K}\right) \E\left\{\prod_{k\,;\,\tx_k\not\in S^M}\frac{(2\pi\us)^{-d/2}}{2v(\tx_k)}\right\}^{\frac{1}{2\log K}} \\
    &= \exp\left(-\frac{\beta Kt^2\t\gamma_K^2}{2\log K}\right) \E\bxs{\prod_{k=1}^K\brcs{\frac{(2\pi\us)^{-\frac{d/2}{d+1}}\mathfrak{d}_S(\tx_k)}{(2\eta)^{1/(d+1)}M}}^{1_{\mathfrak{d}_S(\tx_k)\ge M)}}}^{\frac{d+1}{2\log K}}.
\end{align*}
Using similar arguments as in \citet[Proof of Theorem 7]{soloff2025multivariate}, we obtain the following tail probability:
\begin{align*}
    \P\bxs{\bar{h}(f_{\blt,\h G}, f_{\blt,G_0}) \ge  t\t\gamma_K}
    &\le N(\eta,\bb F,\nrm{\cdot}{\infty,S^M})\exp\brcs{\left(\alpha-1\right)\frac{Kt^2\t\gamma_K^2}{2}+KC_d\sqrt{\eta\Vol(S^M)}} \\
    &\quad+\exp\left(-\frac{\beta Kt^2\t\gamma_K^2}{2\log K}+\kappa^\lambda\us^{-\frac{d\lambda/2}{d+1}} \sum_{k=1}^K\E\left[\mathfrak{d}_S(\tx_k)^{\lambda} 1_{\mathfrak{d}_S(\tx_k)\ge M}\right]\right)
\end{align*}
for $\kappa := \lrp{\us^{\frac{d/2}{d+1}}\eta^{\frac{1}{d+1}} M}^{-1}$ and $\lambda := \frac{d+1}{2\log K}$.
Lemma \ref{lem-lip} gives us
\begin{align*}
    \E\left[\mathfrak{d}_S( \tx_k)^{\lambda} 1_{\mathfrak{d}_S(\tx_k)\ge M}\right]\le C_dM^{d+\lambda-2}{\bs}^{1-d/2}e^{-M^2/(8\bs)} + M^\lambda \left(\frac{2\mu_{k,q}(S,G_0)}{M}\right)^q
\end{align*}
whenever $\frac{d+1}{2(1\wedge q)}\le\log K$. Taking $M\ge\sqrt{8\bs\log K}$, we have $e^{-M^2/(8\bs)}\le 1/K$ so
\begin{align*}
    \kappa^\lambda\sum_{k=1}^K\E&\left[\mathfrak{d}_S( \tx_k)^{\lambda} 1_{\mathfrak{d}_S(\tx_k)\ge M}\right]\le (\kappa M)^\lambda\brcs{C_dM^{d-2}\bs^{1-d/2} + \sum_{k=1}^K\lrp{\frac{2\mu_{k,q}(S,G_0)}{M}}^q}.
\end{align*}
Noting $(\kappa M)^\lambda = \left(\us^{d/2}\eta\right)^{-1/(2\log K)}$, choose $\eta = \frac{1}{K^2\us^{d/2}}$ so
\begin{align*}
    \left(\us^{d/2}\eta\right)^{-1/(2\log K)}=\lrp{\frac{4}{K^2}}^{-\frac{1}{2\log K}}\le e
\end{align*}
Also, Lemma \ref{lem-entropy} applies as
\begin{align*}
    \log N(\eta,\bb F,\nrm{\cdot}{\infty,S^M})\le_dN(a,S^{M+a})\log^{d+1}c_{d,\us,\bs,\bs'}K
\end{align*}
for $a=\sqrt{\frac{2\bs}{\un}\log\frac{1}{\eta}}=\sqrt{\frac{4\bs}{\un}\log\us^{d/2}K}$.
Since $a\in\lrp{\sqrt{\frac{2\bs}{\un}\log\us^{d/2}K},\sqrt{\frac{6\bs}{\un}\log\us^{d/2}K}}=:(\underline{a},\bar a)$, we have
\begin{align}
    N(a,(S^M)^a)\le N(\underline{a},S^{M+\bar a})\le_d\Vol((S^{M+\bar a})^{\underline{a}/2})/\underline{a}^d\nonumber \\
    \le_{d,\us,\bs}\Vol(S^{2M})\lrp{\frac{\un}{\log K}}^{d/2}
    \label{eq:arg1}
\end{align}
as $\bar a\ll M$ in large enough $\un$.
So, aggregating the rates,
\begin{align*}
    \log N\le_d\Vol(S^{2M})\un^{d/2}\log^{d/2+1}c_{d,\us}K.
\end{align*}
Therefore, we have the tail probability bounded as
\begin{align*}
    \P&\bigg(\bar{h}(f_{\blt,\h G}, f_{\blt,G_0}) \ge  t\t\gamma_K\bigg) \\
    &\le \exp\brcs{-\left(1-\alpha\right)\frac{Kt^2\t\gamma_K^2}{2} + C_d\Vol(S^{2M})\un^{d/2}\log^{d/2+1}c_{d,\us}K +C_{d,\us}\sqrt{\Vol(S^M)}} \\
    &\qquad+\exp\bxs{-\frac{\beta Kt^2\t\gamma_K^2}{2\log K} + e\brcs{C_{d,\bs}'M^{d-2} + \inf_{q\ge \frac{d+1}{2\log K}}\sum_{k=1}^K\lrp{\frac{2\mu_{k,q}(S,G_0)}{M}}^q}}
\end{align*}
for any $t > 1$.
Now, the choice of $\eps_M^2 = \eps_M^2(S, G_0)$ in \eqref{eq-eps} satisfies
\begin{align*}
    \max&\left\{\Vol(S^{2M})\un^{d/2}\log^{d/2+1}c_{d,\us}K, \sqrt{\Vol(S^M)},M^{d-2},K\log K\inf_{q\ge \frac{d+1}{2\log K}}\sum_{k=1}^K\left(\frac{2\mu_{k,q}(S,G_0)}{M}\right)^q\right\}  \\
    &\le_{d,\us} K\eps_M^2(S, G_0).
\end{align*}
If we then take $\t\gamma_K  = \sqrt{\frac{C_{d,\us}}{(1-\alpha)\wedge\beta}}\eps_M(M, S, G_0)$, then by the choice of $\eps_M$ above, we have $\eps_M^2\ge\Vol(S^1)\frac{M^d\un^{d/2}\log^{d/2+1}K}{K}\ge_d\frac{\un^{d/2}\log^{d+1}K}{K}$ hence
\[\P\bigg(\bar{h}^2(f_{\blt,\h G}, f_{\blt,G_0})\ge_{d,\us,\bs}t^2\t\gamma_K^2\bigg)\le2\exp\left(-\frac{t^2K\eps_M^2}{\log K} \right)\le2\exp\lrp{-t^2\un^{d/2}\log^dK}.\]
Integrating the upper bound gives the bound for
$\E\bxs{\frac{\bar{h}^2(f_{\blt,\h G}, f_{\blt,G_0})}{\t\gamma_K^2}}$.

\subsection{Oracle denoising inequality: proof of Theorem \ref{thm-denoising}}\label{sec:prf-denoising}
For simplicity, we use the notation $\Tht=[\b\tht_1,...,\b\tht_K]^\top\in\R^{K\times d}$.
The regret $\frac{1}{K}\sum_{k=1}^K\E\|\bx_k^v - \oraet_k\|^2$ hence equals $\frac{1}{K}\E\|\h\Tht^v - \Oraet\|_F^2$.

\subsubsection{Regularizing the Bayes rule}
We first verify that the likelihood is lower bounded per observation.
By \eqref{eq-apprx-npmle-2}, for any fixed $j\in [K]$,
\begin{align*}
    \prod_{k=1}^Kf_{k,\h G}(\tx_k) 
    &\ge \prod_{k=1}^K f_{k,K^{-1} \dlt_{\tx_k} + (1-K^{-1})\h G}(\tx_k) \\
    &\ge K^{-1}|2\pi\Sigma_j|^{-1/2}(1-K^{-1})^{K-1}\prod_{k: k\ne j} f_{k,\h G}(\tx_k).
\end{align*}
Cancelling terms for $k\in [K]\setminus \{j\}$, we conclude
\begin{align}\label{eq-npmle-obs-bound}
    f_{j,\h G}(\tx_j)
    \ge \frac{\inf_{\mb u}|2\pi\Sigma_j(\mb u)|^{-1/2}}{eK}=\frac{(2\pi\bs)^{-d/2}}{eK}=:\rho_0.
\end{align}
Given this, \cite{saha2020nonparametric,soloff2025multivariate} adopt the regularized empirical Bayes and oracle Bayes rules 
\begin{align}
    \bx^v_{k,\rho} 
    &= \frac{\tx_k}{\sqrt{n_k}}+\frac{\mb q_{k,\h G}(\tx_k)}{\sqrt{n_k}f_{k,\h G}(\tx_k) \vee\rho} \label{eq:mtweedie2} \\
    \oraet_{k,\rho}
    &= \frac{\tx_k}{\sqrt{n_k}}+\frac{\mb q_\GoSk(\tx_k)}{\sqrt{n_k}f_{k,G_0}(\tx_k) \vee\rho}.\nonumber
\end{align}
The inequality \eqref{eq-npmle-obs-bound} means that $\h\Tht^v_\rho = \h\Tht^v$ whenever $\rho \le \rho_0$ hence
\begin{align}\label{eq:reg-decomp}
    \|\h\Tht^v - \Oraet\|_F
    = \|\h\Tht^v_\rho - \Oraet\|_F 
    \le \|\h\Tht^v_\rho - \Oraet_\rho\|_F  + \|\Oraet_\rho - \Oraet\|_F.
\end{align}
The last two terms are analyzed separately in \cite{soloff2025multivariate}, and we take similar approaches.

\subsubsection{Control of $\h\Tht_\rho^v-\Tht^v$}
The argument of \citet[Proof of Theorem 9]{soloff2025multivariate} does not apply here because our marginal densities are scaled mixtures over kernels that depend on both $\b\tht$ and $\Sigma_k(\b\tht)$. We therefore develop a modified analysis that routes through the variance-weighted functional $\t\Dlt_k$ (Lemmas \ref{lem:cvx} and \ref{lem:dco}), extending \citet[Lemmas 4.3 and F.6]{saha2020nonparametric} to our setting. Define
\[\t\Dlt_k(G, \rho) := \int\left(1 - \frac{f_\GSk}{f_\GSk\vee\rho}\right)^2\frac{\lrnrm{\mb q_\GoSk}{}^2}{{f_\GSk}}.\]
Then, for any compact $S\subseteq\R^d$, letting $S_{\un}:=\frac{1}{\sqrt{\un}}S$:
\begin{align*}
    \E\nrm{\Oraet_\rho-\Oraet}{F}^2=\sum_{k=1}^K\frac{1}{n_k}\E\lrnrm{\frac{\mb q_\GoSk(\tx_k)}{f_{k,G_0}(\tx_k) \vee\rho}-\frac{\mb q_\GoSk(\tx_k)}{f_{k,G_0}(\tx_k)}}{}^2
    \le\sum_{k=1}^K\frac{1}{n_k}\t\Dlt_k\lrp{G_0,\rho}\\
    \le_{d,\us,\bs} \frac{K}{\un}\brcs{N
    \left(\frac{4}{\sqrt{\un}L\lrp{\rho}}, S_{\un} \right)\rho L(\rho)^d+G_0(S_{\un}^c)}
\end{align*}
by Lemmas \ref{lem:cvx} and \ref{lem:dco} for $L(\rho) := \sqrt{\us\log\frac{1}{(2\pi\bs)^d\rho^2}}$ whenever $\rho\le (2\pi\bs)^{-d/2}e^{-8/\us}$.
Choosing
\begin{align}\label{eq:rho}
    \rho=\frac{(2\pi\bs)^{-d/2}}{e^{1+8/\us}K}
\end{align}
and $S=S^M$, we have $L(\rho)=\sqrt{2\us\log(Ke^{1+8/\us})}$ hence
\begin{align*}
    \frac{\un}{K}\E\nrm{\Oraet_\rho-\Oraet}{F}^2\le_{d,\us,\bs}N\left(\frac{1}{\sqrt{2\us\,\un\log K}}, S_{\un}^M \right)\frac{\log^{d/2}K}{K}+G_0((S_{\un}^M)^c) \\
    \le_d\Vol(S_{\un}^{2M})\lrp{2\us\,\un\log K}^{d/2}\frac{\log^{d/2}K}{K}+G_0((S_{\un}^M)^c) \\
    \le_{d,\us}\Vol\lrp{S^1}M^d\frac{\log^dK}{K} + \inf_{q\ge\frac{d+1}{2\log K}}\lrp{\frac{2\mu_q}{M}}^q \\
    \text{for }\mu_q:=\E_{G_0}[\frak{d}_{S_{\un}}(\b\tht)^q]^{1/q}
\end{align*}
by applying \citet[Lemma F.6]{saha2020nonparametric}. The second term is relevant with the second term in \eqref{eq-eps2}.
Therefore,
\[\frac{1}{K}\E\nrm{\Oraet_\rho-\Oraet}{F}^2\le_{d,\us,\bs}\Vol\lrp{S^1}M^d\frac{\log^dK}{\un K} + \frac{1}{\un}\inf_{q\ge\frac{d+1}{2\log K}}\lrp{\frac{2\mu_q}{M}}^q\lesssim\frac{\eps_M^2(S_\blt,G_0)}{\un}.\]
\subsubsection{Control of $\h\Tht^v_\rho - \Oraet_\rho$}
Now, we recall that
\[\lrnrm{\h\Tht^v_\rho - \Oraet_\rho}{F}^2=\sum_{k\le K}\frac{1}{n_k}\lrnrm{\frac{\mb q_{k,\h G}(\tx_k)}{f_{k,\h G}(\tx_k)\vee\rho}-\frac{\mb q_\GSk(\tx_k)}{f_\GSk(\tx_k)\vee\rho}}{}^2.\]
For $\dlt > 0$ define an event
$A_\dlt = \brcs{\max_{i\le d}\bar h^2(\t f_{\blt,\h G,i},\t f_{\blt,G_0,i})\vee\bar{h}^2\lrp{f_{\blt,\h G}, f_{\blt,G_0}}\le\dlt}$. Given a compact set $S\subset \R^d$, define a metric
\[
m^S(G,H):=\frac{1}{\sqrt{\un}}\max_{k\in [K]}\sup_{\mb z\in\R^d:\mathfrak{d}_{S_k}(\mb z)\le M}\left\|\frac{\mb q_\GSk(\mb z)}{f_\GSk(\mb z) \vee\rho} - \frac{\mb q_{k,H}(\mb z)}{f_\HSk(\mb z) \vee\rho}\right\|
\]
for $S_k:=\frac{1}{\sqrt{n_k}}S$. Let $G_1,\dots,G^{(N)}$ denote a minimal $\eta^*$-covering of
\begin{align}\label{eq:hellball}
    {\c H}_\dlt:=\brcs{G:\max_{i\le d}\bar{h}^2(\t f_{\blt,G,i}, \t f_{\blt,G_0,i})\vee\bar{h}^2(f_{\blt,G}, f_{\blt,G_0})\le\dlt}
\end{align}
under $m^S$ for $N:=N(\eta^*,{\c H}_\dlt,m^S)$.
For $j\in [N]$ define a $K\times d$ matrix $\h\Tht^{v,{(j)}}_\rho$ analogously as $\Tht$ whose $i^{th}$ row is $\frac{\tx_k}{\sqrt{n_k}}+\frac{\mb q_{k,G^{(j)}}(\tx_k)}{\sqrt{n_k}f_{k,G^{(j)}}(\tx_k) \vee\rho}$. We bound the regret as $\|\h\Tht^v_\rho - \Oraet_\rho\|_F \le \sum_{t=1}^4Q_t$, where
\begin{equation}
    \begin{aligned}
    Q_1 
    &:= \|\h\Tht^v_\rho - \Oraet_\rho\|_F 1_{A_\dlt^c} \\
    Q_2
    &:= \left(\|\h\Tht^v_\rho - \Oraet_\rho\|_F - \max_{j\in [N]}\|\h\Tht^{v,(j)}_\rho - \Oraet_\rho\|_F\right)_+ 1_{A_\dlt} \\
    Q_3
    &:= \max_{j\in [N]}\left(\|\h\Tht^{v,(j)}_\rho - \Oraet_\rho\|_F - \E\|\h\Tht^{v,(j)}_\rho - \Oraet_\rho\|_F\right)_+ \\
    Q_4
    &:= \max_{j\in [N]}\E\|\h\Tht^{v,(j)}_\rho - \Oraet_\rho\|_F.
    \end{aligned}
\end{equation}
\paragraph{Bounding \texorpdfstring{$\E(Q_1^2)$}{}.}
By the shrinkage Tweedie's formula in \eqref{eq:mtweedie2},
\begin{align*}
    \nrm{\bx^v_{k,\rho} - \oraet_{k,\rho}}{}^2
    &=\frac{1}{n_k}\left\|\frac{\mb q_{k,\h G}(\tx_k)}{f_{k,\h G}(\tx_k) \vee\rho}-\frac{\mb q_\GoSk(\tx_k)}{f_{k,G_0}(\tx_k) \vee\rho}\right\|^2 \\
    &\le\frac{1}{n_k}\lrnrm{\frac{\mb q_{k,\h G}(\tx_k)}{f_{k,\h G}(\tx_k) \vee \rho}}{}^2+\frac{1}{n_k}\lrnrm{\frac{\mb q_\GoSk(\tx_k)}{f_{k,G_0}(\tx_k) \vee\rho}}{}^2.
\end{align*}
Lemma \ref{p1} provides
\begin{align*}
    \E(Q_1^2) \le_{\bs} \frac{K}{\un}\log\frac{1}{(2\pi\us)^d\rho^2}\times\Pr(A_\dlt^c)
    \le\frac{K\log(K^2e^{1+8/\us})}{\un}\Pr(A_\dlt^c).
\end{align*}
By Corollary \ref{cor-hllngr-rsk}, there is a constant $C_d > 0$ such that $\dlt = C_d\eps_M^2(S_\blt,G_0)$ satisfies $\Pr(A_\dlt^c)\le2\exp\lrp{-\log^dK}$. Hence
\[\frac{1}{K}\E(Q_1^2)\le_{\bs}\frac{2\log(Ke^{1/2+4/\us})}{\un\exp(\log^dK)}.\]

\paragraph{Bounding \texorpdfstring{$\E\lrp{Q_2^2}$}{}.}
Observe that
\begin{align*}
    Q_2^2
    &\le 1_{A_\dlt}\min_{j\in [N]}\|\h\Tht^v_\rho - \h\Tht_\rho^{v, (j)}\|_F^2 \\
    &\qquad= 1_{A_\dlt} \min_{j\in [N]}\sum_{k=1}^K\frac{1}{n_k}\left\|\frac{\mb q_{k,\h G}(\tx_k)}{f_{k,\h G}(\tx_k) \vee \rho} - \frac{\mb q_{k,G^{(j)}}(\tx_k)}{f_{k,G^{(j)}}(\tx_k) \vee\rho}\right\|^2.
\end{align*}
On $A_\dlt$, we take $j$ such that $m^S(\h G, G^{(j)})\le \eta^*$. For each $k$, we have either $\tx_k\in S_k^M$ or $\tx_k\not\in S_k^M$. When $\tx_k\in S_k^M$, we can bound the above by the supremum over all $\tx_k\in S_k^M$. Otherwise, bound the regularized rules as before.
Then,
\begin{align*}
    Q_2^2
    &\le_d 1_{A_\dlt}\bxs{K(\eta^*)^2 + \#\{k: \tx_k\not\in S_k^M\}\frac{\log(K^2e^{1+8/\us})}{\un}}
\end{align*}
hence
\begin{align*}
    \frac{\E(Q_2^2)}{K}\le_d (\eta^*)^2 + \frac{\log(K^2e^{1+8/\us})}{\un K}\sum_{k=1}^K\Pr\left(\mathfrak{d}_{S_k}(\tx_k)\ge M\right).
\end{align*}
By Lemma \ref{lem-lip}, taking $\lambda\downarrow 0$,
\begin{align*}
    \frac{\E(Q_2^2)}{K}
    &\le_{d,\us} (\eta^*)^2+\frac{\log(K^2e^{1+8/\us})}{\un K}\brcs{M^{d-2}e^{-M^2/(8\bs)}+\sum_{k=1}^K\inf_{q\ge\frac{d+1}{2\log K}}\lrp{\frac{2\mu_{k,q}(S_k,G_0)}{M}}^q} \\
    &\le (\eta^*)^2+\frac{\log(K^2e^{1+8/\us})}{\un K}\brcs{\frac{M^{d-2}}{K}+\sum_{k=1}^K\inf_{q\ge\frac{d+1}{2\log K}}\lrp{\frac{2\mu_{k,q}(S_k,G_0)}{M}}^q}
\end{align*}
given that $M\ge\sqrt{8\bs\log K}$.
\paragraph{Bounding \texorpdfstring{$\E Q_3^2$}{}.}\label{sec:bounding-zeta3}
Fix $j\in [N]$. Then,
\begin{align*}
    \|\bx_{k,\rho}^{v,(j)} - \oraet_{k,\rho}\|^2
    &\le\frac{1}{n_k}\left\|\frac{\mb q_{k,G^{(j)}}(\tx_k)}{f_{k,G^{(j)}}(\tx_k) \vee\rho} - \frac{\mb q_{k,G_0}(\tx_k)}{f_{k,G_0}(\tx_k) \vee\rho}\right\|^2 \\
    &\le\frac{2}{n_k}\left(\left\|\frac{\mb q_{k,G^{(j)}}(\tx_k)}{f_{k,G^{(j)}}(\tx_k) \vee\rho}\right\|^2 + \left\|\frac{\mb q_{k,G_0}(\tx_k)}{f_{k,G_0}(\tx_k) \vee\rho}\right\|^2\right).
\end{align*}
By Lemma \ref{p1},
\[
\|\bx_{k,\rho}^{v,(j)} - \oraet_{k,\rho}\|^2
\le -\frac{4\bs}{n_k}\log(2\pi\us)^{d}\rho^2\le_{d,\us,\bs}\frac{\log(Ke^{1+8/\us})}{\un}=:r^2.
\]
Let $V_{k,j} = \frac{\lrnrm{\bx_{k,\rho}^{v,(j)} - \oraet_{k,\rho}}{}}{r}$, so $\|\h\Tht_\rho^{v, (j)}- \Oraet_\rho\|_F^2 = r^2\|\b V_j\|^2$ for $\b V_j:=(V_{1,j}, \ldots, V_{K,j})$. Since $V_{k,j}$ are independent random variables in $[0,1]$, it has an exponentially decreasing tail probability with
\[\Pr\left(\|\b V_j\| \ge \E \|\b V_j\| + t\right)\le e^{-t^2/2}\]
for some $c>0$ independent of $t\in\R$. Thus
\[
\Pr\left(Q_3 \ge s\right)\le N\exp\left(-C_{d,\us,\bs}\frac{s^2}{r^2}\right),\quad s>0.
\]
Therefore,
\begin{align}\label{eq-zeta3}
\frac{1}{K}\E(Q_3^2) \le_d\frac{r^2}{K}\log(eN)=\log(eN)\frac{\log(Ke^{1+8/\us})}{K\un}.
\end{align}
\paragraph{Bounding \texorpdfstring{$\E Q_4^2$}{}.}
Again
\begin{align*}
    \E\|\h\Tht_\rho^{v,(j)} - \Oraet_\rho\|_F
    &\le\sqrt{\E\|\h\Tht_\rho^{v,(j)} - \Oraet_{\rho}\|_F^2} \\
    &\qquad\le\sqrt{\frac{1}{\un}\sum_{k=1}^K \E_{\tx_k\sim f_{k,G_0}}\left\|\frac{\mb q_{k,G^{(j)}}(\tx_k)}{f_{k,G^{(j)}}(\tx_k) \vee\rho} - \frac{\mb q_{k,G_0}(\tx_k)}{f_{k,G_0}(\tx_k) \vee\rho}\right\|^2}.
\end{align*}
Then, Theorem \ref{fcc} gives us
\begin{align*}
    &\frac{1}{K}\left(\E\|\h\Tht_\rho^{v,(j)} - \Oraet_\rho\|_F\right)^2\le_{d,\us,\bs}\frac{1}{\un}\bxs{d\max\brcs{\log^3(Ke^{1+8/\us}),\log\frac{1}{\dlt}}}\dlt.
\end{align*}
Recall our choice
\[\dlt = \eps_M^2(S_\blt,G_0)\ge_{d,\us,\bs}\frac{\Vol(S^1)M^d\log^{d/2+1}K}{K}\ge\frac{1}{K}\]
hence we obtain
\[\frac{\E(Q_4^2)}{K}\le_{d,\us,\bs}\frac{\log^3K}{\un}\eps_M^2(S_\blt,G_0).\]
\paragraph{Metric Entropy for $m^S$}
For any $G,\t G\in{\c P}(\R^d)$, note that
\begin{align}
    \sqrt{\un}m^S(G, \t G)&=\max_{k\in [K]}\sup_{\mb x\,;\,\mathfrak{d}_{S_k}(\mb x)\le M}\lrnrm{\frac{\mb q_\GSk(\mb x)}{f_\GSk(\mb x) \vee\rho} - \frac{\mb q_{k,\t G}(\mb x)}{f_{k,\t G}(\mb x) \vee\rho}}{} \nonumber\\
    &\le \max_{k\in [K]}\sup_{\mb x\,;\,\mathfrak{d}_{S_k}(\mb x)\le M}\frac{\lrnrm{\mb q_\GSk(\mb x)}{}}{f_\GSk(\mb x) \vee\rho}\cdot\frac{\vrt{f_{k,\t G}(\mb x)\vee\rho-f_\GSk(\mb x)\vee\rho }}{f_{k,\t G}(\mb x) \vee\rho} \nonumber\\
    &\qquad+\max_{k\in [K]}\sup_{\mb x\,;\,\mathfrak{d}_{S_k}(\mb x)\le M}\frac{\lrnrm{\mb q_\GSk(\mb x)-\mb q_{k,\t G}(\mb x)}{}}{f_{k,\t G}(\mb x) \vee\rho} \nonumber\\
    &\le\sqrt{-\bs\log(2\pi\us)^d\rho^2}\max_{k\in [K]}\sup_{\mb x\,;\,\mathfrak{d}_{S_k}(\mb x)\le M}\frac{\vrt{f_{k,\t G}(\mb x) - f_\GSk(\mb x)}}{\rho} \nonumber\\
    &\qquad+\max_{k\in [K]}\sup_{\mb x\,;\,\mathfrak{d}_{S_k}(\mb x)\le M}\frac{\lrnrm{\mb q_\GSk(\mb x)-\mb q_{k,\t G}(\mb x)}{}}{\rho}\label{eq-cvn1}.
\end{align}
Therefore,
\begin{align*}
    m^S(G,\t G)\le A\max_{k\in [K]}\nrm{f_{k,G}-f_{k,\t G}}{\infty,S_k^M}+B\max_{k\in [K]}\nrm{{\mb q}_{k,G}-\mb q_{k,\t G}}{\infty,S_k^M}
\end{align*}
for $A:=\frac{1}{\rho}\sqrt{-\frac{\bs}{\un}\log(2\pi\us)^d\rho^2}$ and $B:=\frac{1}{\rho\sqrt{\un}}$.
Letting $\eta^*:=\eta(A+B)$, we have
\begin{align*}
    \log N(\eta^*, {\c P}(\R^d), m^S) \le \max_{k\in [K]}\log N(\eta/2, {\bb F}, \|\cdot\|_{\infty, S_k^M}) + \max_{k\in [K]}\log N(\eta/2, \bb F_q, \|\cdot\|_{\infty, S_k^M}) \\
    \le_d\max_{k\in[K]}\log N(a,S_k^{M+a})\log^{d+1}\frac{c_{d,\us,\bs,\bs'}}{\eta}
\end{align*}
for $a=\sqrt{\frac{2\bs}{\un}\log\frac{1}{\eta}}$ and small $\eta>0$ by Lemma \ref{lem-entropy}. Let $\eta=\rho$ so that
\begin{align}\label{eq:finalN}
    \log N(\eta^*, {\c P}(\R^d), m^S)\le_{d,\us,\bs}\Vol(S^1)M^d\log^{d/2+1}K
\end{align}
for $$\eta^*\le_{d,\us,\bs}\frac{M}{\sqrt{\un}}.$$

\paragraph{Final oracle denoising bound}
To obtain the oracle denoising bound, we aggregate the rates from the error decomposition. The overall risk decomposes as
\[\frac{1}{K}\E\|\h\Tht^v - \Oraet\|_F^2 \le \frac{1}{K}\E\|\Oraet_\rho - \Oraet\|_F^2 + \frac{1}{K}\sum_{t=1}^4\E(Q_t^2).\]
Combining the individual bounds:
\begin{itemize}
\item $\frac{1}{K}\E\|\Oraet_\rho - \Oraet\|_F^2 \lesssim \frac{\eps_M^2(S_\blt,G_0)}{\un}\log^{d/2-1}K$ from the worst-case density floor.
\item $\frac{1}{K}\E(Q_1^2) \le_{\bs} \log(Ke^{1+8/\us})\un^{-1}\exp(-\log^dK)$ is exponentially suppressed.
\item $\frac{1}{K}\E(Q_2^2) \le_d \frac{M^2}{\un} + \frac{\log(K^2e^{1+8/\us})}{K\un}\brcs{\frac{M^{d-2}}{K}+\sum_{k=1}^K\inf_{q\ge\frac{d+1}{2\log K}}\lrp{\frac{2\mu_{k,q}(S_k,G_0)}{M}}^q}\lesssim\eps_M^2(S_\blt,G_0)$ provided that $\un\gtrsim KM^{2-d}\log^{-d/2-1}K$ for \eqref{eq-eps2}.
\item $\frac{1}{K}\E(Q_3^2) \le_d \frac{\Vol(S^1)M^d\log^{d/2+2}K}{K\un}$ from the metric entropy of the Hellinger ball.
\item $\frac{1}{K}\E(Q_4^2) \le_{d,\us,\bs} \frac{\log^3(Ke^{1+8/\us})}{\un}\eps_M^2(S_\blt,G_0)$ captures the cost of restricting to the Hellinger-ball neighborhood.
\end{itemize}
Summing these contributions:
\begin{align*}
    \frac{1}{K}\E\|&\h\Tht^v - \Oraet\|_F^2
    \le_{d,\us,\bs} \frac{(\log K)^{3\vee(d/2-1)}}{\un}\eps_M^2(S_\blt,G_0).
\end{align*}

\paragraph{When $G_0$ is a Gaussian Mixture.} For the case when $G_0$ is a Gaussian Mixture, we only need to bound the second term in \eqref{eq-eps2}. Let $R_k:=R/\sqrt{n_k}$ and $S:=\bb B_R(\b0_d)$. By the linearity of expectation and the definition of $G_0$, we have:
$$\E_{G_0}[{\frak d}_{S_k}(\b\tht)^q] = \sum_{j=1}^{k^*} w_j \E_{\b\tht \sim N(\b a_j^*, \Gamma_j)}(\|\b\tht\| - R_k)_+^q.$$
For each component $j$, let $\b\tht = \b a_j^* + \b\epsilon_j$, where $\b\epsilon_j \sim {\c N}(\b0_d, \Gamma_j^*)$. Define $\bar\gamma^2 := \max_{j\le k^*}\nrm{\Gamma^*_j}{}$ and $A^* := \max_{j\le k^*}\nrm{\mb a^*_j}{}$. By the triangle inequality, we have $\|\b\tht\|\le A^* + \|\b\epsilon_j\|.$
Thus, the distance to $S_k$ is bounded by:$${\frak d}_{S_k}(\b\tht) \le (A^* + \|\b\epsilon_j\| - R_k)_+.$$
For a conservative upper bound, we observe that for any $R_k \ge 0$:$${\frak d}_{S_k}(\b\tht)^q \le (A^* + \|\b\epsilon_j\|)^q.$$
$\|\b\epsilon_j\|-\E\|\b\epsilon_j\|$ is sub-Gaussian with parameter $\bar{\gamma}$ and $\E\|\b\epsilon_j\| \le \sqrt{\tr(\Gamma_j)} \le \sqrt{d\bar{\gamma}}$.
Using the property of sub-Gaussian moments, for $q \ge 1$:
$$\E(\|\b\epsilon_j\|^q)^{1/q} \le C\sqrt{\bar\gamma(d+q)}$$
for some universal constant $C > 0$. Therefore, by Minkowski's inequality:
$$\E[{\frak d}_{S_k}(\b\tht)^q]^{1/q} \le A^* + C\sqrt{\bar\gamma(d+q)}.$$
Substituting this back into the mixture sum:
$$\E_{G_0}[{\frak d}_{S_k}(\b\tht)^q] \le \sum_{j=1}^{k^*} w_j \left( A^* + C\sqrt{\bar\gamma(d+q)} \right)^q = \left( A^* + C\sqrt{\bar\gamma(d+q)} \right)^q$$
Now, consider the ratio with $M^q$. Let $M$ be chosen such that $M \gtrsim \sqrt{\log K}$ as before. Therefore, the term of interest becomes negligible as $K$ is assumed to diverge.
This completes the proof of Theorem \ref{thm-denoising}.
\begin{proof}[Proof of Theorem \ref{thm:oracle-vantrees}]
    Since the posterior mean is the Bayes rule under squared loss,
    \begin{align*}
        {\c R}(\Tht^o,\Tht)
        &=\frac{1}{K}\sum_{k=1}^K\E\,\tr\Bigl\{\Var_{G_0}(\b\tht_k|\tx_k)\Bigr\}
         =\frac{1}{K}\sum_{k=1}^K\E\,\tr\Bigl\{\Var_{G_0}(\b\tht_k|\bx_k)\Bigr\} \\
        &\le \frac{1}{K}\sum_{k=1}^K\E\|\bx_k-\b\tht_k\|^2
         \le \frac{d\bs}{\un}.
    \end{align*}
    Under Assumption \ref{ass:reg}, the prior density $ g_0$ is well behaved and the Gaussian family $\{\vrp^{(k)}(\cdot;\b\tht):\b\tht\in\R^d\}$ is dominated with $C^1$ dependence on $\b\tht$. Therefore the multivariate van Trees inequality of \citet{gassiat2024vantrees} applies with prior $G_0$, $\psi(\b\tht)=\b\tht$, and $S=\tx_k$. Since $\int \nabla\psi\,{\r d}G_0=\mb I_d$, its Schur-complement form yields
    \[
    \E\bxs{(\tx_k-\sqrt{n_k}\b\tht_k)(\tx_k-\sqrt{n_k}\b\tht_k)^\top}
    \succeq
    \left(\c J_0+\E_{G_0}\bxs{\c I_k^{\mathrm{obs}}(\b\tht_k)}\right)^{-1}.
    \]
    Taking traces gives the scalar lower bound.
    
    It remains to bound $\c I_k^{\mathrm{obs}}$. Under the diagonal covariance model,
    \[
    \Sigma_k(\b\tht)=\diag\bigl(\sigma_{k,1}(\b\tht)^2,\ldots,\sigma_{k,d}(\b\tht)^2\bigr),
    \]
    so for $\mb x\in\R^d$,
    \[
    \log \vrp^{(k)}(\mb x;\b\tht)\propto-\frac12\sum_{\ell=1}^d\left\{\log \sigma_{k,\ell}(\b\tht)^2+\frac{(x_\ell-\sqrt{n_k}\tht_\ell)^2}{\sigma_{k,\ell}(\b\tht)^2}\right\},
    \]
    where $C_k(\b\tht)$ does not depend on $\mb x$. Hence, for each coordinate $i\le d$,
    \[
    \partial_i\log \vrp^{(k)}(\mb x;\b\tht)
    =\frac{(x_i-\sqrt{n_k}\tht_i)}{\sigma_{k,i}(\b\tht)^2}
    +\frac12\sum_{\ell=1}^d\partial_i\log \sigma_{k,\ell}(\b\tht)^2\left\{\frac{(x_\ell-\sqrt{n_k}\tht_\ell)^2}{\sigma_{k,\ell}(\b\tht)^2}-1\right\}.
    \]
    If $Z_{k,\ell}:=(x_\ell-\sqrt{n_k}\tht_\ell)/\sigma_{k,\ell}(\b\tht)$, then the random variables $Z_{k,1},\ldots,Z_{k,d}$ are independent standard Gaussians. Using $\E Z_{k,\ell}=0$, $\E(Z_{k,\ell}^2-1)=0$, and $\E(Z_{k,\ell}^2-1)^2=2$, the odd cross-terms vanish and we obtain
    \[
    \c I_k^{\mathrm{obs}}(\b\tht)
    =n_k\Sigma_k(\b\tht)^{-1}+\c B_k(\b\tht),
    \qquad
    \c B_k(\b\tht)=\frac12\sum_{\ell=1}^d \mb g_{k,\ell}(\b\tht)\mb g_{k,\ell}(\b\tht)^\top,
    \]
    where $\mb g_{k,\ell}(\b\tht):=\nabla\log \sigma_{k,\ell}(\b\tht)^2$. Therefore, for any $\mb u\in\R^d$,
    \[
    \mb u^\top\c B_k(\b\tht)\mb u
    =\frac12\sum_{\ell=1}^d(\mb u^\top\mb g_{k,\ell}(\b\tht))^2
    \le \frac12\|\mb u\|^2\sum_{\ell=1}^d\|\mb g_{k,\ell}(\b\tht)\|^2
    \le \frac{dL_\Sigma^2}{2}\|\mb u\|^2,
    \]
    and Assumption~\ref{ass:reg} gives $\Sigma_k(\b\tht)^{-1}\preceq \us^{-1}\mb I_d$. Hence
    \[
    \c I_k^{\mathrm{obs}}(\b\tht)
    \preceq
    \left(\frac{n_k}{\us}+\frac{dL_\Sigma^2}{2}\right)\mb I_d.
    \]
    Also, by symmetry
    \[
    \c J_0+\E_{G_0}\bxs{\c I_k^{\mathrm{obs}}(\b\tht_k)}
    \preceq
    \left(\nrm{\c J_0}{}+\frac{n_k}{\us}+\frac{dL_\Sigma^2}{2}\right)\mb I_d,
    \]
    so for the oracle posterior mean $\oraet_k=\E(\b\tht_k\mid\tx_k)=\E(\b\tht_k\mid\bx_k)$,
    \[
    \E\|\oraet_k-\b\tht_k\|^2
    \ge
    \frac{d}{\nrm{\c J_0}{}+\bn/\us+dL_\Sigma^2/2}.
    \]
    Averaging over $k$ yields the lower bound on ${\c R}(\Tht^o,\Tht)$, while the upper bound was proved in the total-risk decomposition above.
\end{proof}

\section{Technical Results}
\begin{proof}[Proof of Lemma \ref{lem-characterization}]
    First, we have the following \citep[Lemma 1]{soloff2025multivariate}:
    \[\frac{1}{K}\sum_{k=1}^K\frac{f_{k,G}(\tx_k)}{f_{k,\h G}(\tx_k)}\le1~~\forall\,G\in{\c P}(\R^d)\]
    and it suffices to check $D(\h G,\b t)\le0$. 
    For the first claim, define $${\c C}:=\brcs{f_{1,G}(\tx_1),...,f_{k,G}(\tx_K);G\in{\c P}(\R^d)}\cup\brcs{0}$$
    and observe that
    \[{\c C}=\operatorname{conv}({\c L})~~\text{where}~~{\c L}:=\brcs{\vrp^{(1)}(\tx_1;\b\tht),...,\vrp^{(K)}(\tx_K;\b\tht);\b\tht\in\R^d}.\]
    $\c L$ is compact and closed by the continuity and decaying tails of $\vrp^{(k)}$.
    Hence $\c C$ is compact and convex, and $\sum_{k=1}^K\log f_{k,\h G}(\tx_k)$ is strictly concave and coordinate-wise monotone over $\c C$. Thus, NPMLE is a unique boundary point $\partial\c C$. By Carathéodory's Theorem, such a point admits a convex combination $\sum_{j=1}^{\h k}\h w_j\vrp^{(k)}(\tx_k;\mb a_j)$ for some $\h k\le K$. The second claim follows directly from \cite{soloff2025multivariate}.
\end{proof}
\begin{proof}[Proof of Lemma \ref{lem-ridgeline}]
    With frozen covariances $\mb S_k$, the conclusion directly follows from \citet[Theorem~1]{ray2005topography}.
\end{proof}

\subsection{Convergence of the pseudo-EM under Lipschitz covariances}
\begin{lemma}
    Let
    \[
        r_K:=\sqrt{\frac{2\bs}{\un}\log\lrp{e\tau^{d/2}K}}.
    \]
    Under Assumption \ref{ass:reg}, every NPMLE $\h G$ solving \eqref{eq-NPMLE} satisfies
    \[
        \operatorname{supp}(\h G)\subseteq\t{\c M}:=\bigcup_{k=1}^K\bar{\bb B}_{r_K}(\bx_k).
    \]
    In particular, $\t{\c M}$ is compact with $\operatorname{diam}(\t{\c M})\le R+2r_K$, where $R:=\max_{k,\ell}\nrm{\bx_k-\bx_\ell}{}$.
\end{lemma}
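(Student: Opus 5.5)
The plan is to use the optimality characterization in Lemma \ref{lem-characterization}: every atom $\b\tht$ of an NPMLE $\h G$ satisfies $D(\h G,\b\tht)=0$, i.e.
\[
\frac{1}{K}\sum_{k=1}^K\frac{\vrp^{(k)}(\tx_k;\b\tht)}{f_{k,\h G}(\tx_k)}=1 .
\]
It therefore suffices to show that $D(\h G,\b\tht)<0$ whenever $\b\tht\notin\t{\c M}$. For such $\b\tht$ we have $\nrm{\bx_k-\b\tht}{}>r_K$ for every $k$. Hence, by the ellipticity in Assumption \ref{ass:reg},
\[
\vrp^{(k)}(\tx_k;\b\tht)\le(2\pi\us)^{-d/2}\exp\brcs{-\frac{n_k\nrm{\bx_k-\b\tht}{}^2}{2\bs}}<(2\pi\us)^{-d/2}\exp\brcs{-\frac{\un r_K^2}{2\bs}}.
\]
With the stated choice of $r_K$, the right-hand side equals $(2\pi\us)^{-d/2}\,(e\tau^{d/2}K)^{-1}=(2\pi\bs)^{-d/2}/(eK)$, using $\tau^{d/2}\us^{d/2}=\bs^{d/2}$.

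We then need a matching lower bound on the marginal likelihoods of the form
\[
f_{k,\h G}(\tx_k)\ge\frac{(2\pi\bs)^{-d/2}}{eK}=\rho_0 .
\]
This is exactly \eqref{eq-npmle-obs-bound}. I would rederive it here for an exact NPMLE, since global maximality implies \eqref{eq-apprx-npmle-2}. Compare $\ell_K(\h G)$ with $\ell_K(K^{-1}\dlt_{\tx_j/\sqrt{n_j}}+(1-K^{-1})\h G)$. Note that the point mass should sit at $\bx_j$ in parameter scale, so that $\vrp^{(j)}(\tx_j;\bx_j)=|2\pi\Sigma_j(\bx_j)|^{-1/2}\ge(2\pi\bs)^{-d/2}$. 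Cancelling the factors $k\neq j$ and using $(1-K^{-1})^{K-1}\ge e^{-1}$ gives $f_{j,\h G}(\tx_j)\ge\rho_0$.

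Combining the two bounds, for $\b\tht\notin\t{\c M}$,
\[
\frac{1}{K}\sum_k\frac{\vrp^{(k)}(\tx_k;\b\tht)}{f_{k,\h G}(\tx_k)}<\frac{1}{K}\sum_k\frac{\rho_0}{\rho_0}=1,
\]
so $D(\h G,\b\tht)<0$ and $\b\tht\notin\operatorname{supp}(\h G)$. The support of $\h G$ is the closure of its atoms, and $\t{\c M}$ is a finite union of closed balls, hence closed. Therefore $\operatorname{supp}(\h G)\subseteq\t{\c M}$. Compactness is immediate, and the diameter bound follows from the triangle inequality: for $\mb u\in\bar{\bb B}_{r_K}(\bx_k)$ and $\mb v\in\bar{\bb B}_{r_K}(\bx_\ell)$, $\nrm{\mb u-\mb v}{}\le r_K+R+r_K$.

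The main obstacle is making the constants line up exactly. The lower bound on $f_{k,\h G}$ involves $\bs$, while the upper bound on the kernel involves $\us$ in the normalizing determinant and $\bs$ in the exponent; the factor $\tau^{d/2}$ inside the logarithm of $r_K$ is precisely what absorbs this mismatch, so I would verify that arithmetic carefully. The strict inequality comes from $\nrm{\bx_k-\b\tht}{}>r_K$. A secondary subtlety is that Lemma \ref{lem-characterization} gives support on $\{D=0\}$ rather than $\{D\ge 0\}$; since we prove $D<0$ off $\t{\c M}$, either reading suffices.
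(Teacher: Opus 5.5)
Your proposal is correct and follows the paper's proof essentially step for step: Lemma \ref{lem-characterization} reduces the claim to showing $D(\h G,\b\tht)<0$ off $\t{\c M}$; the floor $f_{k,\h G}(\tx_k)\ge\rho_0$ comes from comparing $\h G$ with $K^{-1}\dlt_{\bx_k}+(1-K^{-1})\h G$; and the kernel upper bound, combined with $n_k\ge\un$ and the $\tau^{d/2}$ factor in $r_K$, makes every ratio $\vrp^{(k)}(\tx_k;\b\tht)/f_{k,\h G}(\tx_k)$ strictly less than one. One small imprecision: your phrase ``closure of its atoms'' presupposes that $\h G$ is discrete, which you do not need, since $\h G$ is carried by the closed set $\{D(\h G,\cdot)=0\}\subseteq\t{\c M}$ and the support inclusion follows directly from that.
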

\begin{proof}
    By Lemma \ref{lem-characterization}, $\operatorname{supp}(\h G)\subseteq\{\b\tht:D(\h G,\b\tht)=0\}$, so it suffices to show that $D(\h G,\b\tht)<0$ whenever $\min_{k\le K}\nrm{\b\tht-\bx_k}{}>r_K$.
    An exact NPMLE satisfies \eqref{eq-apprx-npmle-2}, since $K^{-1}\dlt_{\bx_k}+(1-K^{-1})\h G\in{\c P}(\R^d)$ does not improve on a global maximizer. So, \eqref{eq-npmle-obs-bound} gives
    \begin{align}\label{eq-rho0-loc}
        f_{k,\h G}(\tx_k)\ge\rho_0=\frac{(2\pi\bs)^{-d/2}}{eK},\quad k\le K.
    \end{align}
    Also, Assumption \ref{ass:reg} gives
    \[
        \vrp^{(k)}(\tx_k;\b\tht)\le(2\pi\us)^{-d/2}\exp\lrp{-\frac{\nrm{\sqrt{n_k}\b\tht-\tx_k}{}^2}{2\bs}}.
    \]
    Combining the two displays, bounding the average by the maximum, and using $n_k\ge\un$,
    \[
        D(\h G,\b\tht)+1
        \le\frac{(2\pi\us)^{-d/2}}{\rho_0}\max_{k\le K}\exp\lrp{-\frac{\nrm{\sqrt{n_k}\b\tht-\tx_k}{}^2}{2\bs}}
        \le eK\tau^{d/2}e^{-\frac{1}{2\bs}\min_{k\le K}\nrm{\sqrt{n_k}\b\tht-\tx_k}{}^2}.
    \]
    The right-hand side is strictly less than $1$ precisely when
    \[
        \min_{k\le K}\nrm{\sqrt{n_k}\b\tht-\tx_k}{}^2>2\bs\log\lrp{e\tau^{d/2}K},
    \]
    which proves the inclusion. Compactness of $\t{\c M}$ is clear, and the diameter bound follows from the triangle inequality.
\end{proof}

\begin{proof}[Proof of Lemma \ref{lem-vaneb-convergence}]\label{prf-lem-vaneb-convergence}
    Write $\mb P_k(\mb a):=n_k\Sigma_k(\mb a)^{-1}$, $\mb A(\mb a):=\sum_{k=1}^Kw_{kj}^{(t)}\mb P_k(\mb a)$, and $\mb b(\mb a):=\sum_{k=1}^Kw_{kj}^{(t)}\mb P_k(\mb a)\bx_k$, so $\bar{\b\psi}_j(\mb a)=\mb A(\mb a)^{-1}\mb b(\mb a)$.
    \paragraph{Second claim.}
    The resolvent identity and $\us\mb I_d\preceq\Sigma_k\preceq\bs\mb I_d$ give
    \begin{align}\label{eq-inv-pert}
        \nrm{\Sigma_k(\mb a_1)^{-1}-\Sigma_k(\mb a_2)^{-1}}{}\le\frac{L_\Sigma}{\us^2}\nrm{\mb a_1-\mb a_2}{}.
    \end{align}
    From $\mb A\bar{\b\psi}_j=\mb b$, differencing at two points gives
    \[\bar{\b\psi}_j(\mb a_1)-\bar{\b\psi}_j(\mb a_2)=\mb A(\mb a_1)^{-1}\sum_{k=1}^Kw_{kj}^{(t)}[\mb P_k(\mb a_1)-\mb P_k(\mb a_2)](\bx_k-\bar{\b\psi}_j(\mb a_2)).\]
    As we have $\mb A(\mb a_1)\succeq\sum_kw_{kj}^{(t)}n_k\bs^{-1}\mb I_d=(n_j^w/\bs)\mb I_d$, it follows that $\nrm{\mb A(\mb a_1)^{-1}}{}=\lambda_{\min}\bxs{\mb A(\mb a_1)}^{-1}\le\bs/n_j^w$. Also, $\nrm{\mb P_k(\mb a_1)-\mb P_k(\mb a_2)}{}\le n_kL_\Sigma\nrm{\mb a_1-\mb a_2}{}/\us^2$, so we have
    \begin{align*}
        \nrm{\bar{\b\psi}_j(\mb a_1)-\bar{\b\psi}_j(\mb a_2)}{}
        &\le\frac{\bs L_\Sigma}{\us^2}\left(\sum_{k=1}^K\frac{w_{kj}^{(t)}n_k}{n_j^w}\nrm{\bx_k-\bar{\b\psi}_j(\mb a_2)}{}\right)\nrm{\mb a_1-\mb a_2}{}.
    \end{align*}
    Write $\alpha_k:=w_{kj}^{(t)}n_k/n_j^w$, so that $\sum_k\alpha_k=1$. Because $\mb A(\mb a_2)^{-1}\sum_kw_{kj}^{(t)}\mb P_k(\mb a_2)=\mb I_d$, for every $\mb u\in\R^d$,
    \[
        \bar{\b\psi}_j(\mb a_2)-\mb u=\sum_{k=1}^K\mb A(\mb a_2)^{-1}w_{kj}^{(t)}\mb P_k(\mb a_2)(\bx_k-\mb u),
    \]
    and using $\nrm{\mb A(\mb a_2)^{-1}}{}\le\bs/n_j^w$ together with $\nrm{\mb P_k(\mb a_2)}{}\le n_k/\us$,
    \begin{align}\label{eq-matrix-avg}
        \nrm{\bar{\b\psi}_j(\mb a_2)-\mb u}{}\le\tau\sum_{k=1}^K\alpha_k\nrm{\bx_k-\mb u}{}
    \end{align}
    for the condition number $\tau=\bs/\us$.
    Combining \eqref{eq-matrix-avg} with the triangle inequality $\nrm{\bx_k-\bar{\b\psi}_j(\mb a_2)}{}\le\nrm{\bx_k-\mb u}{}+\nrm{\mb u-\bar{\b\psi}_j(\mb a_2)}{}$ and averaging over $k$ with weights $\alpha_k$,
    \[
        \sum_{k=1}^K\alpha_k\nrm{\bx_k-\bar{\b\psi}_j(\mb a_2)}{}
        \le\lrp{1+\tau}\sum_{k=1}^K\alpha_k\nrm{\bx_k-\mb u}{}.
    \]
    Taking the infimum over $\mb u\in\R^d$ gives
    \[
        \sum_{k=1}^K\frac{w_{kj}^{(t)}n_k}{n_j^w}\nrm{\bx_k-\bar{\b\psi}_j(\mb a_2)}{}\le\lrp{1+\tau}R_j^w,
    \]
    and therefore $\nrm{\bar{\b\psi}_j(\mb a_1)-\bar{\b\psi}_j(\mb a_2)}{}\le L_{\bar\psi,j}\nrm{\mb a_1-\mb a_2}{}$ for $L_{\bar\psi,j}=\tau(1+\tau)R_j^w L_\Sigma/\us$. If $L_{\bar\psi,j}<1$, Banach's fixed-point theorem yields uniqueness and linear convergence at rate $L_{\bar\psi,j}$.
    \paragraph{First claim.}
    For the first claim, note that $$\mb A(\mb a)^{-1}\sum_kw_{kj}^{(t)}\mb P_k(\mb a)=\mb I_d,$$ so $\bar{\b\psi}_j(\mb a)$ is a matrix-weighted average of $\{\bx_k\}$ with weight matrices summing to the identity. By \eqref{eq-matrix-avg}, applied at $\mb u=\bx_1$, every value of the map satisfies $\nrm{\bar{\b\psi}_j(\mb a)-\bx_1}{}\le\tau R$, uniformly in $\mb a$. Hence $\bar{\b\psi}_j$ maps the compact convex ball $\bb B_{\tau R}(\bx_1)$ into itself, and it is continuous. Brouwer fixed-point theorem yields a fixed point $\mb a_j^*$ such that $\bar{\b\psi}_j(\mb a_j^*)=\mb a_j^*$, which is the stationarity equation itself.

    Now we bound $R_j^w$ under \eqref{eq:like}. It is convenient to work with the second-order radius
    \[
        R_j^{w,2}:=\Bigl(\inf_{\mb u\in\R^d}\sum_{k=1}^K\alpha_k\nrm{\bx_k-\mb u}{}^2\Bigr)^{1/2}\ \ge\ R_j^w,
    \]
    from Cauchy--Schwarz. Write $T_{kj}:=\sum_\ell(\underline{\h\tht}_{k,\ell}-\sqrt{n_k}a_{j,\ell}^{(t)})^2/\sigma_{k,\ell}(\mb a_j^{(t)})^2$. 
    First we acknowledge that the responsibility in Algorithm \ref{alg:vaneb} is $r_{kj}=\frac{\h w_j\vrp^{(k)}(\tx_k;\mb a_j^{(t})}{\h L_k}$ and $\sum_jr_{kj}=1$. In the M-step, we are solving an equation $\sum_jr_{kj}\mb s_k(\mb a)=\mb0_d$ to update the atoms hence it is natural to choose $w_{kj}^{(t)}=r_{kj}/\sum_{k'}r_{k'j}$, so for $w_k=\h L_k^{-1}/\sum_i\h L_i^{-1}$,
    \[
    w_{kj}^{(t)}\propto w_k\vrp^{(k)}(\tx_k;\mb a_j^{(t)})
    =w_k(2\pi)^{-d/2}\Bigl(\prod_{\ell=1}^d\sigma_{k,\ell}(\mb a_j^{(t)})\Bigr)^{-1}\exp(-T_{kj}/2).
    \]
    Note that $\us^{d/2}\le\prod_\ell\sigma_{k,\ell}\le\bs^{d/2}$, while $w_k/w_{k'}=\h L_{k'}/\h L_k\le eK\tau^{d/2}$ because $\h L_{k'}\le(2\pi\us)^{-d/2}$ and $\h L_k\ge\rho_0=(2\pi\bs)^{-d/2}/(eK)$ by assumption. Hence, for any pair $k,k'$,
    \begin{align}\label{eq-weight-ratio}
        \frac{w_{kj}^{(t)}}{w_{k'j}^{(t)}}\le eK\tau^{d/2}\exp\brcs{-\tfrac12(T_{kj}-T_{k'j})}.
    \end{align}
    Let $k_*=k_*(j)\in\argmin_kT_{kj}$ and take $\mb u=\bx_{k_*}$ in the definition of $R_j^{w,2}$.
    With the threshold $\Lambda:=2\log\bigl(e\tau^{d/2}K^2\un(1+R)^2\bigr)$, split the clients into $\c N_j:=\{k:T_{kj}\le T_{k_*j}+\Lambda\}$ and its complement.
    
    For $k\notin\c N_j$, \eqref{eq-weight-ratio} applied with $k'=k_*$ gives $w_{kj}^{(t)}\le w_{k_*j}^{(t)}/\bigl(K\un(1+R)^2\bigr)$, so, using $\alpha_k\le(\bn/\un)w_{kj}^{(t)}$ and $\nrm{\bx_k-\bx_{k_*}}{}\le R$, the far clients contribute at most $(\bn/\un)R^2/\bigl(\un(1+R)^2\bigr)\le\bn/\un^2$ to $(R_j^{w,2})^2$.
    
    For $k\in{\c N}_j$, we have $\un\nrm{\bx_k-\mb a_j^{(t)}}{}^2/\bs\le T_{kj}\le T_{k_*j}+\Lambda$, so
    \[
        \nrm{\bx_k-\bx_{k_*}}{}\le\nrm{\bx_k-\mb a_j^{(t)}}{}+\nrm{\bx_{k_*}-\mb a_j^{(t)}}{}
        \le2\sqrt{\frac{\bs(T_{k_*j}+\Lambda)}{\un}} .
    \]
    Moreover, $T_{k_*j}\le\bn\,D_j^2/\us$ with $D_j:=\min_k\nrm{\bx_k-\mb a_j^{(t)}}{}$. Combining the two regimes,
    \begin{align}\label{eq-Rw2-bound}
        R_j^{w,2}\lesssim\sqrt{\frac{\bs}{\un}\brcs{\frac{\bn D_j^2}{\us}+2\log\bigl(e\tau^{d/2}K^2\un(1+R)^2\bigr)}}+\frac{\sqrt{\bn}}{\un}.
    \end{align}
    We assume that $\mb a_j^{(t)}\in\t{\c M}=\bigcup_k\bar{\bb B}_{r_K}(\bx_k)$, which implies $D_j\le r_K$. Since $\bn/\un=\cO(1)$, \eqref{eq-Rw2-bound} leads to
    \[  
        R_j^w={\c O}_P\bxs{\sqrt{\frac{\bs\tau}{\un}\log\bigl(e\tau^{d/2}K^2\un(1+R)^2\bigr)}}.
    \]
    Hence $L_{\bar\psi,j}=o_P(1)$ as $\un\to\infty$ whenever $\log K(1+R)=o_P(\un)$.
    \paragraph{Order of the random diameter $R$}
    Write $\b\eps_k:=\sqrt{n_k}(\bx_k-\b\tht_k)$ and $R_\tht:=\max_{k,\ell}\nrm{\b\tht_k-\b\tht_\ell}{}$. The triangle inequality gives
    \[
        R_\tht-2\max_kn_k^{-1/2}\nrm{\b\eps_k}{}\le R\le R_\tht+2\max_kn_k^{-1/2}\nrm{\b\eps_k}{}.
    \]
    Under our working model \eqref{eq:like}, the Gaussian concentration bound gives $\P(\nrm{\b\eps_k}{}>\sqrt{\bs}(\sqrt{d}+t))\le e^{-t^2/2}$ for $t>0$ and a union bound over $k\le K$ yields
    \begin{align}\label{eq-noise-max}
        \max_{k\le K}n_k^{-1/2}\nrm{\b\eps_k}{}
        =\cO_P\brcs{\sqrt{\frac{\bs(d+\log K)}{\un}}}.
    \end{align}
    It remains to bound $R_\tht$ under conditions on $G_0$. Note that we proceed under $K=o(e^\un)$.
    \begin{enumerate}[label=(\alph*)]
        \item If $R_\tht\le D_0$ a.s., then $R\le D_0+\cO_P(\sqrt{\bs(d+\log K)/\un})=\cO_P(1)$ and $\log(1+R)=\cO_P(1)$.
        \item If $G_0$ is $\sigma_0^2$-sub-Gaussian with mean $\b\mu$, then $\P(\nrm{\b\tht-\b\mu}{}>\sigma_0(\sqrt{d}+t))\le 2e^{-t^2/2}$. A union bound gives $\max_k\nrm{\b\tht_k-\b\mu}{}=\cO_P(\sigma_0\sqrt{d+\log K})$, so $R_\tht=\cO_P(\sigma_0\sqrt{d+\log K})$. Combined with \eqref{eq-noise-max},
        \[
            R=\cO_P\brcs{\log(d+\log K)}.
        \]
    \end{enumerate}
    So, $\log K(1+R)=o_P(\un)$ under considered cases and the condition $K=o(e^\un)$ hence $L_{\bar\psi,j}=o_P(1)$.
    \paragraph{Last claim.}
    Write $\b f_j(\mb a):=\sum_{k=1}^Kw_{kj}^{(t)}\mb s_k(\mb a)$ for the full frozen-weight score and write
    \[
        \mb r_k(\mb a):=\sum_{\ell=1}^d\brcs{\frac{(\underline{\h\tht}_{k,\ell}-\sqrt{n_k}a_\ell)^2}{\sigma_{k,\ell}(\mb a)^2}-1}\frac{\nabla\sigma_{k,\ell}(\mb a)}{\sigma_{k,\ell}(\mb a)},
        \qquad \b\rho_j(\mb a):=\sum_{k=1}^Kw_{kj}^{(t)}\mb r_k(\mb a).
    \]
    So,
    \begin{align}\label{eq-score-split}
        \b f_j(\mb a)=\mb A(\mb a)\bigl(\bar{\b\psi}_j(\mb a)-\mb a\bigr)+\b\rho_j(\mb a).
    \end{align}
    Because $\mb A(\mb a)$ is invertible, $\b f_j(\mb a)=\b0_d$ if and only if $\mb a$ is a fixed point of the perturbed map
    \[
        \b\Phi_j(\mb a):=\bar{\b\psi}_j(\mb a)+\mb A(\mb a)^{-1}\b\rho_j(\mb a).
    \]
    We therefore only need to show that $\b\Phi_j$, a perturbation of $\bar{\b\psi}_j$, has a fixed point near $\mb a_j^*$.

    Put $T_{kj}^2(\mb a):=\sum_\ell(\underline{\h\tht}_{k,\ell}-\sqrt{n_k}a_\ell)^2/\sigma_{k,\ell}(\mb a)^2$ and $T_j^2(\mb a):=\sum_kw_{kj}^{(t)}T_{kj}^2(\mb a)$, and abbreviate $T_j^2:=T_j^2(\mb a_j^*)$.
    Assumption \ref{ass:reg} bounds $\nrm{\nabla\log\sigma_{k,\ell}^2}{}\le L_\Sigma$, i.e.\ $\nrm{\nabla\sigma_{k,\ell}}{}/\sigma_{k,\ell}\le L_\Sigma/2$, so by the triangle inequality and $|u-1|\le u+1$,
    \begin{align}\label{eq-full-score-residual}
        \nrm{\mb r_k(\mb a)}{}\le\frac{L_\Sigma}{2}\brcs{T_{kj}^2(\mb a)+d},
        \quad
        \nrm{\mb A(\mb a)^{-1}\b\rho_j(\mb a)}{}\le\frac{\bs L_\Sigma}{2n_j^w}\brcs{T_j^2(\mb a)+d}.
    \end{align}
    Note that 
    \[
        T_j^2=\sum_kw_{kj}^{(t)}T_{kj}^2\le\frac{n_j^w}{\us}\sum_k\alpha_k\nrm{\bx_k-\mb a_j^*}{}^2.
    \]
    Let $\mb u^*$ attain the infimum defining $R_j^{w,2}=\inf_{\mb u}\sum_k\alpha_k\nrm{\bx_k-\mb u}{}^2$. Using $\mb a_j^*=\bar{\b\psi}_j(\mb a_j^*)$ and \eqref{eq-matrix-avg} at $\mb u=\mb u^*$ gives $\nrm{\mb a_j^*-\mb u^*}{}\le\tau\sum_k\alpha_k\nrm{\bx_k-\mb u^*}{}\le\tau R_j^{w,2}$ by applying Cauchy-Schwarz on $\brcs{\sqrt{\alpha_k}},\brcs{\sqrt{\alpha_k}(\bx_k-\mb u^*)}$. Therefore,
    \[
        \sum_k\alpha_k\nrm{\bx_k-\mb a_j^*}{}^2\le2(R_j^{w,2})^2+2\nrm{\mb u^*-\mb a_j^*}{}^2\le2(1+\tau^2)(R_j^{w,2})^2
    \]
    hence
    \begin{align}\label{eq-Tj2-bound}
        T_j^2=\cO_P\brcs{\frac{n_j^w\tau^2(1+\tau^2)}{\un}\log\bigl(e\tau^{d/2}K^2\un(1+R)^2\bigr)}.
    \end{align}
    Set
    \[
        \bar\eps_j:=\frac{\bs L_\Sigma}{2n_j^w}\bigl(2\tau T_j^2+d+2\bigr),\qquad
        \varrho_j:=\frac{\bar\eps_j}{1-L_{\bar\psi,j}},
    \]
    and write $L_{\log}:=\log\bigl(e\tau^{d/2}K^2\un(1+R)^2\bigr)$. From \eqref{eq-Tj2-bound} and $L_{\bar\psi,j}=o_P(1)$,
    \begin{align}\label{eq-varrho-rate}
        \varrho_j
        =\cO_P\!\lrp{\frac{\bs L_\Sigma}{\un}\Bigl(\tau^2(1+\tau^2)\,L_{\log}+d\Bigr)}
        =\cO_P\!\lrp{\frac{L_{\log}}{\un}}.
    \end{align}
    The condition needed below is $\varrho_j\sqrt{\bn/\us}\le1$. Since $\sqrt{\bn/\us}\asymp\sqrt{\un}$,
    \[
        \varrho_j\sqrt{\frac{\bn}{\us}}=\cO_P(L_{\rm log}/\sqrt{\un}).
    \]
    For $\mb a\in\bb B_{\varrho_j}(\mb a_j^*)$, we have
    \begin{align*}
        \Sigma_k(\mb a)^{-1/2}(\tx_k-\sqrt{n_k}\mb a)
        &=\Sigma_k(\mb a)^{-1/2}\Sigma_k(\mb a_j^*)^{1/2}\cdot\Sigma_k(\mb a_j^*)^{-1/2}(\tx_k-\sqrt{n_k}\mb a_j^*)\\
        &\qquad+\sqrt{n_k}\,\Sigma_k(\mb a)^{-1/2}(\mb a_j^*-\mb a)
    \end{align*}
    hence
    \begin{align*}
        T_{kj}(\mb a)
        &=\nrm{\Sigma_k(\mb a)^{-1/2}(\tx_k-\sqrt{n_k}\mb a)}{}\\
        &\le\nrm{\Sigma_k(\mb a)^{-1/2}\Sigma_k(\mb a_j^*)^{1/2}}{}\cdot T_{kj}(\mb a_j^*)
        +\sqrt{n_k}\,\nrm{\Sigma_k(\mb a)^{-1/2}}{}\,\nrm{\mb a_j^*-\mb a}{}.
    \end{align*}
    Further, on $\brcs{\varrho_j\sqrt{\bn/\us}\le1}$, we have
    \[
        T_{kj}(\mb a)\le\sqrt{\tau}\,T_{kj}(\mb a_j^*)+1,
    \]
    using $\nrm{\Sigma_k(\mb a)^{-1/2}\Sigma_k(\mb a_j^*)^{1/2}}{}\le\sqrt{\tau}$. Squaring and averaging over $w_{kj}^{(t)}$ gives $T_j^2(\mb a)\le2\tau T_j^2+2$, so \eqref{eq-full-score-residual} yields $\nrm{\mb A(\mb a)^{-1}\b\rho_j(\mb a)}{}\le\bar\eps_j$ throughout the ball. Using $\bar{\b\psi}_j(\mb a_j^*)=\mb a_j^*$ and the Lipschitz bound of Step~3,
    \[
        \nrm{\b\Phi_j(\mb a)-\mb a_j^*}{}
        \le\nrm{\bar{\b\psi}_j(\mb a)-\bar{\b\psi}_j(\mb a_j^*)}{}+\bar\eps_j
        \le L_{\bar\psi,j}\varrho_j+\bar\eps_j=\varrho_j.
    \]
    Thus $\b\Phi_j$ maps the compact convex ball $\bb B_{\varrho_j}(\mb a_j^*)$ into itself and is continuous, so Brouwer's fixed-point theorem yields a fixed point $\mb a_{j0}^*$ in that ball. In particular
    \begin{align}\label{eq-ift-bound}
        \nrm{\mb a_{j0}^*-\mb a_j^*}{}\le\varrho_j,
    \end{align}
    which is \eqref{eq-full-score-distance} by \eqref{eq-varrho-rate}. This completes the proof.
\end{proof}

\begin{proof}[Proof of Proposition \ref{prop-approximation}]
    For each $j\in[\h k]$, let $C_j\in \c H$ such that $\h{\mb a}_j\in C_j$. Fix $\mb u\in \frac{1}{\sqrt{n_k}}\cdot C_j$ and $k\in [K]$, and let 
    \begin{align*}
        \mb z_j(\mb u) := \Sigma_k(\mb u)^{-1/2}(\tx_k - \b{\h a}_j),\quad\mb y_j(\mb u) := \Sigma_k(\mb u)^{-1/2}(\sqrt{n_k}\mb u - \b{\h a}_j).
    \end{align*}
    Let $H_j$ be a positive distribution supported on the corners of $C_j$ such that $H_j(C_j) = \h w_j$ and 
    \begin{align*}
        \int_{C_j}\lrp{1+\langle\mb y_j(\mb u),\mb z_j(\mb u)\rangle}|2\pi\Sigma_k(\mb u)|^{-1/2}\exp\lrp{-\frac{\nrm{\mb z_j(\mb u)}{}^2}{2}}{\r d}H_j(\mb u)\\
        \brcs{=\h w_j\vrp^{(k)}(\tx;\h{\mb a}_j)}=\int_{C_j}\vrp^{(k)}(\tx_k;{\mb u}){\r d}\h G^j(\mb u)
    \end{align*}
    for all $k\le K$ where $\h G^j := \h{w}_j\dlt_{\h{\mb a}_j}$. 
    By definitions $\nrm{\mb y_j}{}\le\sqrt{\frac{d}{\us}}\dlt$ and $\nrm{\mb z_j}{}\le\sqrt{\frac{1}{\us}}D$ hence $\langle\mb z_j,\mb y_j\rangle^2\le1$ by construction. 
    Let $\phi_d(\cdot)$ denote the density of a $\c N(0,I_d)$ distribution. First, we note that
    \begin{align*}
        \vrp^{(k)}(\tx_k;\mb u)&=|\Sigma_k(\mb u)|^{-1/2}\phi_d(\mb z_j-\mb y_j) \\
        &= |\Sigma_k(\mb u)|^{-1/2}\phi_d(\mb z_j)\exp(\langle\mb z_j,\mb y_j\rangle-\nrm{\mb y_j}{}^2/2)
    \end{align*}
    and following \citet[A.27]{jiang2009general}:
    \begin{align*}
        (e^{-\nrm{\mb y_j}{}^2/2}-1)e^{\langle\mb z_j,\mb y_j\rangle}\le e^{-\nrm{\mb y_j}{}^2/2+\langle\mb z_j,\mb y_j\rangle}-(1+\langle\mb z_j,\mb y_j\rangle) \\
        \le\langle \mb z_j,\mb y_j\rangle^2\underbrace{e^{\langle\mb z_j,\mb y_j\rangle-\nrm{\mb y_j}{}^2/2}}_{=:g}, \\
        (\therefore)\quad g(1-e^{\nrm{\mb y_j}{}^2/2})\le g-(1+\langle\mb z_j,\mb z_j\rangle)\le\langle \mb z_j,\mb y_j\rangle^2g
    \end{align*}
    so we arrive at
    \begin{align*}
        \vrp^{(k)}(\tx_k;\mb u)(1-e^{\nrm{\mb y_j}{}^2/2})
        &\le\vrp^{(k)}(\tx_k;\mb u)-(1+\langle\mb z_j,\mb y_j\rangle)|\Sigma_k(\mb u)|^{-1/2}\phi_d(\mb z_j) \\
        &\le\langle \mb z_j,\mb y_j\rangle^2\vrp^{(k)}(\tx_k;\mb u)\le\vrp^{(k)}(\tx_k;\mb u).
    \end{align*}
    Hence, since $\int_{C_j}\vrp\,{\r d}\h G^j = \h w_j\vrp^{(k)}(\tx_k;\h{\mb a}_j)$ is a single-atom evaluation while $\int_{C_j}\vrp\,{\r d}H_j$ averages over corners, the sandwich above gives
    \begin{align*}
        &\h w_j\vrp^{(k)}(\tx_k;\h{\mb a}_j)- \int_{C_j} \vrp^{(k)}(\tx_k;\mb u){\r d}H_j(\mb u) \\ 
        &\le \frac{dD^2\dlt^2}{\us^2}
        \h w_j\vrp^{(k)}(\tx_k;\h{\mb a}_j)+\int_{C_j}\lrp{e^{\|\mb y_j\|^2/2}-1}\vrp^{(k)}(\tx_k;\mb u)H_j(\mb u) \\
        &\le \frac{dD^2\dlt^2}{\us^2}
        \int_{C_j} \vrp^{(k)}(\tx_k;\mb u){\r d}\h G^j(\mb u)
        +\left(e^{d\dlt^2/(2\us)}-1\right)\int_{C_j}\vrp^{(k)}(\tx_k;\mb u){\r d}H_j(\mb u).
    \end{align*}
    Let $H = \sum_{j=1}^{\h k}H_j$ and let $\h G = \sum_{j=1}^{\h k} \h{w}_j\dlt_{\h{\mb a}_j}$.
    Summing the above inequality over $j$, 
    \begin{align*}
        f_{k,\h G}(\tx_k) - f_\HSk(\tx_k) 
        \le \frac{dD^2\dlt^2}{\us^2}f_{k,\h G}(\tx_k)
        +\left(e^{\frac{d\dlt^2}{2\us}}-1\right)f_{k,H}(\tx_k),~i.e., \\
        f_{k,\h G}(\tx_k)\lrp{1-\frac{dD^2\dlt^2}{\us^2}} \le e^{d\dlt^2/(2\us)}f_\HSk(\tx_k).
    \end{align*}
    As $H$ is supported on $\c A$, by optimality: 
    \[
    \prod_{k=1}^K f_{k,\h G^{\c A}}(\tx_k) \ge \prod_{k=1}^K f_\HSk(\tx_k).
    \]
    Combining our findings,  
    \[
    \prod_{k=1}^K f_{k,\h G^{\c A}}(\tx_k) 
    \ge e^{-Kd\dlt^2/(2\us)}\prod_k\left(1-\frac{dD^2\dlt^2}{\us^2}\right)\prod_{k=1}^K f_{k,\h G}(\tx_k).
    \]
    An elementary inequality $1-x\ge e^{-2x}$ (for $x\le 3/4$) gives
    \[
    \prod_{k=1}^K f_{k,\h G^{\c A}}(\tx_k) 
    \ge \exp\left(-\frac{Kd\dlt^2}{2\us} -\frac{2KdD^2\dlt^2}{\us^2}\right)\prod_{k=1}^K f_{k,\h G}(\tx_k)
    \]
    under the given upper limit for $\dlt$.
\end{proof}
\begin{lemma}\label{p1}
    Fix a probability measure $G$ on $\R^d$. We have
    \begin{equation}\label{p1.eq}
        \frac{1}{\bs}\left( \frac{\nrm{\mb q_\GSk(\tx_k)}{}}{f_\GSk(\tx_k)} \right)^2 \le \tr\left(\mb I_d + \frac{\nabla\mb q_\GSk(\tx)}{f_\GSk(\tx_k)} \right) \le \log \frac{(2\pi\us)^{-d}}{f_\GSk(\tx_k)^2}.
    \end{equation}
    Also for every ${\bx}_k \in \R^d$ and a measure $G'$ on $\R^d$, we have
    \begin{align}
        \frac{\nrm{\mb q_{k,G}(\tx_k)}{}}{f_{k,G}(\tx_k)\vee\rho} \le\sqrt{\bs\log \frac{1}{(2\pi\us)^d\rho^2}}
        \label{p1.neq}
    \end{align}    
    for $\rho\le\frac{1}{\sqrt{e}(2\pi\us)^{d/2}}$.
\end{lemma}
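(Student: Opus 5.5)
The plan is to interpret the ratio $\mb q_\GSk/f_\GSk$ and the trace term as posterior moments. Then the left inequality in \eqref{p1.eq} is Cauchy--Schwarz, and the right one is Jensen's inequality applied to the reciprocal likelihood. Fix $k$ and $\mb x=\tx_k$, and write $\mb u(\b\tht):=\sqrt{n_k}\b\tht-\mb x$ and $T(\b\tht):=\nrm{\Sigma_k(\b\tht)^{-1/2}\mb u(\b\tht)}{}^2$. Let $\pi(\r d\b\tht):=\vrp^{(k)}(\mb x;\b\tht)G(\r d\b\tht)/f_\GSk(\mb x)$ be the posterior. By the expansion of the score given before the generalized Tweedie formula, $\mb q_\GSk(\mb x)/f_\GSk(\mb x)=\E_\pi[\mb u]$.

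\textbf{Step 1: the trace term.} Differentiate under the integral in $\mb x$. Since $\nabla_{\mb x}\vrp^{(k)}(\mb x;\b\tht)=\vrp^{(k)}(\mb x;\b\tht)\Sigma_k(\b\tht)^{-1}\mb u(\b\tht)$ (the covariance depends on $\b\tht$, not on $\mb x$), we get
\[
\nabla\mb q_\GSk(\mb x)=\int\vrp^{(k)}(\mb x;\b\tht)\bigl\{\mb u\mb u^\top\Sigma_k(\b\tht)^{-1}-\mb I_d\bigr\}G(\r d\b\tht).
\]
Hence $\tr\bigl(\mb I_d+\nabla\mb q_\GSk/f_\GSk\bigr)=\E_\pi[T]$. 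Gaussian tails and the uniform bounds on $\Sigma_k$ from Assumption \ref{ass:reg} justify exchanging derivative and integral.

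\textbf{Step 2: the two bounds in \eqref{p1.eq}.} For the lower bound, apply Jensen and then $\Sigma_k\preceq\bs\mb I_d$:
\[
\nrm{\E_\pi\mb u}{}^2\le\E_\pi\nrm{\mb u}{}^2\le\bs\,\E_\pi T .
\]
For the upper bound, use the identity $1/f_\GSk(\mb x)=\E_\pi[1/\vrp^{(k)}(\mb x;\b\tht)]$, which holds because $G$ is a probability measure. Jensen applied to $\log$ then gives
\[
\log\frac{1}{f_\GSk}\ge\E_\pi\Bigl[\tfrac12 T+\tfrac12\log|2\pi\Sigma_k(\b\tht)|\Bigr]\ge\tfrac12\E_\pi T+\tfrac d2\log(2\pi\us).
\]
Rearranging gives $\E_\pi T\le\log\{(2\pi\us)^{-d}/f_\GSk^2\}$. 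This step is the main point. The lower bound $|\Sigma_k(\b\tht)|\ge\us^d$ absorbs the $\b\tht$-dependent normalizer, which is the only place where heteroskedasticity enters.

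\textbf{Step 3: the regularized bound \eqref{p1.neq}.} Combining Step 2 gives $\nrm{\mb q_\GSk}{}/f_\GSk\le\sqrt{\bs\log(c/f_\GSk^2)}$ with $c:=(2\pi\us)^{-d}$. If $f_\GSk\ge\rho$, the right-hand side is at most $\sqrt{\bs\log(c/\rho^2)}$, since it is decreasing in $f$.

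If $f_\GSk<\rho$, set $t:=f_\GSk/\rho\in(0,1)$. Then
\[
\frac{\nrm{\mb q_\GSk}{}}{\rho}\le\sqrt{\bs}\,t\sqrt{L(t)},\qquad L(t):=\log\frac{c}{\rho^2t^2}.
\]
Differentiating, $\frac{\r d}{\r dt}\bigl(t\sqrt{L(t)}\bigr)=\sqrt{L}-1/\sqrt{L}$. This is nonnegative because $L(t)\ge L(1)=\log(c/\rho^2)\ge1$, which is exactly the hypothesis $\rho\le e^{-1/2}(2\pi\us)^{-d/2}$. So the expression is maximized at $t=1$, giving the same bound $\sqrt{\bs\log\{1/((2\pi\us)^d\rho^2)\}}$. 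The measure $G'$ in the statement plays no role.
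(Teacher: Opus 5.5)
Your proposal is correct and follows essentially the same route as the paper. The shared steps are: the posterior-moment representation of $\mb q_{k,G}/f_{k,G}$ and of the trace term; Jensen with $\Sigma_k\preceq\bs\mb I_d$ for the lower bound; Jensen plus $|\Sigma_k(\b\tht)|\ge\us^d$ for the upper bound; and monotonicity of $v\mapsto v\sqrt{\log(c/v^2)}$ for the regularized bound. The paper writes the upper-bound step as $\exp(\tfrac12\E_\pi T)\le\E_\pi e^{T/2}=\int|2\pi\Sigma_k|^{-1/2}\,{\r d}G/f_{k,G}$, which is the same inequality as your log-concavity step, and it only asserts the monotonicity that you verify by differentiation.
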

\begin{proof}
    If $\b\tht\sim G$ and $\tx_k| \b\tht \sim {\c N}\lrp{\sqrt{n_k}\b\tht, \Sigma_k(\b\tht)}$, then it follows that
    \begin{equation*}
        \frac{\mb q_\GSk(\tx_k)}{f_\GSk(\tx_k)} = \E \left(\sqrt{n_k}\b\tht - \tx_k | \tx_k \right)
    \end{equation*}
    and 
    \begin{equation}
        \frac{\nabla\mb q_\GSk(\tx_k)}{f_\GSk(\tx_k)} = -\mb I_d + \E\bxs{(\sqrt{n_k}\b\tht - \tx_k) (\sqrt{n_k}\b\tht -
        \tx_k)'\Sigma_k(\b\tht)^{-1} | \tx_k}.
    \end{equation}
    It follows that
    \begin{align*}
        \tr\lrp{\mb I_d + \frac{\nabla\mb q_\GSk(\tx_k)}{f_\GSk(\tx_k)}} =\E[\nrm{\Sigma_k(\b\tht)^{-1/2}(\sqrt{n_k}\b\tht-\tx_k)}{}^2|\tx_k]\ge\frac{\nrm{\mb q_\GSk(\tx_k)}{}^2}{\bs f_\GSk(\tx_k)^2}
    \end{align*}
    and
    \begin{align*}
        \exp\lrp{\frac{1}{2} \E[\nrm{\Sigma_k(\b\tht)^{-1/2}(\sqrt{n_k}\b\tht-\tx_k)}{}^2|\tx_k]}&\le \E\lrp{e^{\frac{1}{2}\nrm{\Sigma_k(\b\tht)^{-1/2}(\tx_k-\sqrt{n_k}\b\tht)}{}^2}| \tx_k}\\
        &\qquad=\int\frac{|2\pi\Sigma_k(\mb u)|^{-1/2}}{f_{k,G}(\tx_k)}{\r d}G(\mb u) \le \frac{(2 \pi\us)^{-d/2}}{f_\GSk(\tx_k)}
    \end{align*}
    so that we have 
    \begin{equation*}
        \tr\left(\mb I_d + \frac{\nabla\mb q_\GSk(\tx_k)}{f_\GSk(\tx_k)} \right) \le \log \frac{(2
        \pi\us)^{-d}}{f_\GSk(\tx_k)^2}. 
    \end{equation*}
    To prove \eqref{p1.neq}, note first from \eqref{p1.eq} that 
    \begin{align*}
        \frac{\nrm{\mb q_\GSk(\tx)}{}}{f_\GSk(\tx_k)\vee\rho} &\le \sqrt{\bs\log\frac{(2\pi\us)^{-d}}{f_\GSk(\tx_k)^2}}
        \frac{f_\GSk(\tx_k)}{f_\GSk(\tx_k)\vee\rho} \\ 
        &\qquad=\begin{cases}
            \sqrt{\bs\log\frac{(2\pi\us)^{-d}}{f_\GSk(\tx_k)^2}}\le\sqrt{\bs\log \frac{1}{(2\pi\us)^d\rho^2}}     & \text{if } f_\GSk(\tx_k) > \rho \\ 
            \sqrt{\bs\log \frac{(2\pi\us)^{-d}}{f_\GSk(\tx_k)^2} } \frac{f_\GSk(\tx_k)}{\rho} & \text{o/w.}
        \end{cases}
    \end{align*}
    The function $v \mapsto \sqrt{\log\frac{(2\pi\us)^{-d}}{v^2}}v$ is
    non-decreasing on $(0,\frac{1}{e(2\pi\us)^d}]$ hence when $f_\GSk(\tx_k)^2 \le\rho^2\le \frac{1}{e(2\pi\us)^d}$, the inequality 
    \begin{equation*}
        \sqrt{\bs\log \frac{(2\pi\us)^{-d}}{f_\GSk(\tx_k)^2} } \frac{f_\GSk(\tx_k)}{\rho} \le \sqrt{\bs\log \frac{1}{(2\pi\us)^d\rho^2}}
    \end{equation*}
    holds and this proves \eqref{p1.neq}. 
\end{proof}
The following lemma establishes a Lipschitz-type bound for the heteroskedastic Gaussian kernels that has no analogue in the fixed-covariance literature. Unlike the standard setting where the covariance is fixed, the gradient with respect to the parameter $\mb u$ now requires controlling the Jacobian of the variance function. This bound is the key new ingredient in our metric entropy estimates (Lemma \ref{lem-entropy}).
\begin{lemma}\label{lem:mvt}
    Let $\nabla_{\mb u}$ denote $\frac{\partial}{\partial\mb u}$ and $\mb J:=\begin{bmatrix}
        \frac{\partial \sigma_{k,1}(\mb u)}{\partial u_1} & ... & \frac{\partial \sigma_{k,1}(\mb u)}{\partial u_d} \\
        \vdots & \ddots & \vdots \\
        \frac{\partial \sigma_{k,d}(\mb u)}{\partial u_1} & ... & \frac{\partial \sigma_{k,d}(\mb u)}{\partial u_d}
    \end{bmatrix}$.
    For any points $\mb x,\mb a,\mb b\in\R^d$, we have
    \[\vrt{\vrp^{(k)}(\mb x;\mb a) - \vrp^{(k)}(\mb x;\mb b)}\le_d\nrm{\mb a-\mb b}{}\frac{\sqrt{\bn}+2d^{3/2}\bs'}{\us^{(d+1)/2}}\]
    for $\bs':=\nrm{\mb J}{}$.
\end{lemma}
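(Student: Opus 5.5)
Throughout, $\vrp^{(k)}(\mb x;\mb u)$ is viewed as a function of the parameter $\mb u$ alone, with $\mb x$ fixed. The plan is a mean-value argument along the segment $\mb u(s)=\mb b+s(\mb a-\mb b)$, $s\in[0,1]$:
\[
\vrt{\vrp^{(k)}(\mb x;\mb a)-\vrp^{(k)}(\mb x;\mb b)}\le\nrm{\mb a-\mb b}{}\sup_{\mb u}\nrm{\nabla_{\mb u}\vrp^{(k)}(\mb x;\mb u)}{}.
\]
It therefore suffices to bound the gradient uniformly in $\mb x$ and $\mb u$. The differentiability needed here comes from Assumption \ref{ass:reg}, which makes each $\sigma_{k,i}$ continuously differentiable.

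Using the diagonal structure, write $\vrp^{(k)}(\mb x;\mb u)=\prod_{i=1}^d(2\pi)^{-1/2}\sigma_{k,i}(\mb u)^{-1}\phi(z_i)$, where $z_i:=(x_i-\sqrt{n_k}u_i)/\sigma_{k,i}(\mb u)$ and $\phi$ is the standard normal density. Then $\nabla_{\mb u}\vrp^{(k)}=\vrp^{(k)}\,\mb s_k$, with the score $\mb s_k$ written earlier in the paper. It splits into two pieces.
\begin{itemize}
\item A location piece, $\sqrt{n_k}\,\Sigma_k(\mb u)^{-1}(\mb x-\sqrt{n_k}\mb u)$, whose $i$-th coordinate is $\sqrt{n_k}\,z_i/\sigma_{k,i}$.
\item A scale piece, $\sum_i(z_i^2-1)\nabla\sigma_{k,i}/\sigma_{k,i}$. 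It equals $\mb J^\top\mb v$ with $v_i=(z_i^2-1)/\sigma_{k,i}$.
\end{itemize}

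The key step is to absorb the polynomial factors in $z$ into the Gaussian factor, rather than bounding the score by itself (the score is unbounded in $\mb x$). The elementary bounds are $\sup_t|t|e^{-t^2/2}\le e^{-1/2}$ and $\sup_t|t^2-1|e^{-t^2/2}\le 2e^{-3/2}\vee1\le 1$. Together with $|2\pi\Sigma_k(\mb u)|^{-1/2}\le(2\pi\us)^{-d/2}$, these give the two gradient bounds.

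For the location piece,
\[
\vrp^{(k)}\nrm{\sqrt{n_k}\Sigma_k^{-1}(\mb x-\sqrt{n_k}\mb u)}{}\le(2\pi\us)^{-d/2}\sqrt{n_k}\,\us^{-1/2}\Bigl(\sum_i z_i^2e^{-\nrm{\mb z}{}^2}\Bigr)^{1/2}\le_d\frac{\sqrt{\bn}}{\us^{(d+1)/2}}.
\]
For the scale piece, use $\nrm{\mb J^\top\mb v}{}\le\nrm{\mb J}{}\nrm{\mb v}{}$ and $\nrm{\mb v}{}\le\sqrt d\,\max_i|z_i^2-1|/\sqrt{\us}$. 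Absorbing $|z_i^2-1|$ into $e^{-z_i^2/2}$ then gives a contribution $\le_d d^{3/2}\bs'/\us^{(d+1)/2}$. The stated constant $2d^{3/2}$ leaves room for a crude triangle-inequality count: a factor $d$ from summing the coordinates and $\sqrt d$ from passing between norms.

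The main obstacle is this uniform-in-$\mb x$ control. The absorption must be done coordinatewise before multiplying out, because the exponential is a product over $i$ and each $z_i$ must be paired with its own factor $e^{-z_i^2/2}$. The other factors are simply bounded by $1$.

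A smaller point is that $\bs'=\nrm{\mb J}{}$ must be interpreted as $\sup_{\mb u}\nrm{\mb J(\mb u)}{}$ for the mean-value step to hold along the whole segment. If only the log-derivative bound is available, one can instead use $\nrm{\nabla\sigma_{k,i}}{}\le L_\Sigma\sigma_{k,i}/2\le L_\Sigma\sqrt{\bs}/2$ to define $\bs'$. Summing the two pieces and multiplying by $\nrm{\mb a-\mb b}{}$ gives the claim.
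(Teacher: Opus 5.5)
Your proof is correct and follows essentially the same route as the paper. The paper also uses a mean-value step in $\mb u$ and then a uniform bound on $\nabla_{\mb u}\vrp^{(k)}(\mb x;\mb u)$, split into the $\sqrt{n_k}$ location term and the $\mb J$ variance-derivative term, with the polynomial factors in $\mb z$ absorbed into the Gaussian factor.

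Two small differences are in your favour. Grouping the scale terms as $(z_i^2-1)$ gives a $\sqrt d$ factor instead of $d^{3/2}$, and your remark that $\bs'$ must be read as $\sup_{\mb u}\nrm{\mb J(\mb u)}{}$ tightens a point the paper leaves loose. The extra $(2\pi)^{-1/2}$ in your product formula for $\vrp^{(k)}$ is a harmless slip under $\le_d$.
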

\begin{proof}
    Let $\phi_d$ denote the density of a $d$-dimensional standard normal distribution. We use mean value theorem: there exists some $\mb u$ which is some linear interpolation of $\mb z_{\mb a}:=\Sigma_k(\mb a)^{-1/2}\lrp{\mb x-\sqrt{n_k}\mb a}$, $\mb z_{\mb b}$ such that
    \begin{align*}
        &\frac{\phi_d(\mb z_{\mb a})}{|\Sigma_k(\mb a)|^{1/2}} - \frac{\phi_d(\mb z_{\mb b})}{|\Sigma_k(\mb b)|^{1/2}}=\nabla_{\mb u}\lrp{\frac{\phi_d(\mb z_{\mb u})}{|\Sigma_k(\mb u)|^{1/2}}}'\lrp{\mb a-\mb b} \\
        &\therefore~\vrt{\vrp^{(k)}(\mb x;\mb a) - \vrp^{(k)}(\mb x;\mb b)}\le\nrm{\mb a-\mb b}{}\sup_{\mb u\in\R^d}\nabla_{\mb u}\lrp{\frac{\phi_d(\mb z_{\mb u})}{|\Sigma_k(\mb u)|^{1/2}}}.
    \end{align*}
    Now, observe that
    \begin{align}\label{eq:mvt1}
        \nabla_{\mb u}\lrp{\frac{\phi_d(\mb z_{\mb u})}{|\Sigma_k(\mb u)|^{1/2}}}=\frac{1}{|\Sigma_k(\mb u)|^{\frac{1}{2}}}\brcs{\nabla_{\mb u}\phi_d(\mb z_{\mb u})-\frac{d}{2}\phi_d(\mb z_{\mb u})\nabla_{\mb u}\log|\Sigma_k(\mb u)|}
    \end{align}
    and using
    \[\nabla_{\mb u} \mb z_{\mb u} = - \sqrt{n_k} \Sigma_k(\mb u)^{-1/2} - \Sigma_k(\mb u)^{-1/2}\diag(\mb z_u)\begin{bmatrix} \nabla_{\mb u'} \sigma_{k,1}(\mb u) \\ \vdots \\ \nabla_{\mb u'} \sigma_{k,d}(\mb u) \end{bmatrix}_{d\times d},\]
    we have
    \begin{align*}
        \nrm{\nabla_{\mb u}\phi_d(\mb z_{\mb u})}{}=\frac{\phi_d(\mb z_{\mb u})}{2}\lrnrm{\nabla_{\mb u}\nrm{\mb z_{\mb u}}{}^2}{}
        \le\frac{\phi_d(\mb z_{\mb u})}{2}\frac{2\nrm{\mb z_{\mb u}}{}}{\us^{1/2}}\lrp{\sqrt{n_k}+\nrm{\mb J}{}\cdot\nrm{\mb z_{\mb u}}{}}.
    \end{align*}
    Also,
    \begin{align*}
        \nrm{\nabla_{\mb u}\log|\Sigma_k(\mb u)|}{}=2\lrnrm{\mb J'\Sigma(\mb u)^{-1/2}\b1_d}{}\le\frac{2\sqrt{d}}{\us^{1/2}}\nrm{\mb J}{}\\
    \end{align*}
    hence
    \begin{align*}
        \nrm{\eqref{eq:mvt1}}{}\le\frac{\phi_d(\mb z_{\mb u})}{\us^{\frac{d+1}{2}}}\brcs{\nrm{\mb z_{\mb u}}{}\lrp{\sqrt{n_k}+\bs'\nrm{\mb z_{\mb u}}{}}+d^{3/2}\bs'}
        \le_d\frac{1}{\us^{(d+1)/2}}\lrp{\frac{\sqrt{\bn}}{\sqrt{e}}+\frac{2\bs'}{e}+d^{3/2}\bs'}\\
        \le\frac{\sqrt{\bn}+2d^{3/2}\bs'}{\us^{(d+1)/2}}
    \end{align*}
    uniformly over $\mb u\in\R^d$.
\end{proof}
\begin{lemma}\label{lem:cvx}
    For a probability measure $G$ on $\R^d$ and $\rho > 0$, let 
    \begin{equation*}
        \t\Dlt_k(G, \rho) := \int \left(1 - \frac{f_\GSk}{f_\GSk\vee\rho}
        \right)^2\frac{\nrm{\mb q_\GSk}{}^2}{f_\GSk}.
    \end{equation*}
    Then,
    \begin{enumerate}
        \item $\t\Dlt_k(G, \rho) \le d\bs$ for all $k\le K$. 
        \item Suppose $G = \sum_{j=1}^m w_j H_j$ for some measures
        $H_1, \dots, H_m$ and weights $w_1, \dots, w_m$. Then 
        \begin{equation}\label{cvx.eq}
            \t\Dlt_k(G, \rho) \le \sum_{j=1}^m w_j \t\Dlt_k \left(H_j, \rho/w_j
            \right). 
        \end{equation}
    \end{enumerate}
\end{lemma}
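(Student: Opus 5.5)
Both parts follow the structure of \citet[Lemma 4.3]{saha2020nonparametric}. The only new ingredient is the generalized posterior representation of $\mb q_{k,G}/f_{k,G}$ from the proof of Lemma \ref{p1}. Throughout, the integral in $\t\Dlt_k$ is with respect to the argument $\mb x$ of the marginal densities (Lebesgue measure on $\R^d$).

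\emph{Part 1.} The weight $\bigl(1-f_{k,G}/(f_{k,G}\vee\rho)\bigr)^2$ lies in $[0,1]$, so
\[
\t\Dlt_k(G,\rho)\le\int\frac{\nrm{\mb q_{k,G}}{}^2}{f_{k,G}}\,{\r d}\mb x=\E\Bigl\|\frac{\mb q_{k,G}(\tx)}{f_{k,G}(\tx)}\Bigr\|^2,
\]
where $\tx\sim f_{k,G}$. By the identity in Lemma \ref{p1}, $\mb q_{k,G}(\tx)/f_{k,G}(\tx)=\E(\sqrt{n_k}\b\tht-\tx\mid\tx)$. Conditional Jensen then bounds the right side by
\[
\E\nrm{\sqrt{n_k}\b\tht-\tx}{}^2=\E\,\tr\Sigma_k(\b\tht)\le d\bs,
\]
using $\Sigma_k\preceq\bs\mb I_d$ from Assumption \ref{ass:reg}. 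This is exactly the first claim.

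\emph{Part 2.} First I use linearity in the mixing measure:
\[
f_{k,G}=\sum_j w_jf_{k,H_j},\qquad \mb q_{k,G}=\sum_j w_j\mb q_{k,H_j}.
\]
Both hold because $\mb q_{k,G}$ is an integral of $\Sigma_k(\b\tht)\nabla\vrp^{(k)}$ against $G$, and the $\b\tht$-dependent covariance does not affect linearity in $G$. I write the integrand as $\Psi(f_{k,G},\mb q_{k,G})$ with
\[
\Psi(f,\mb q):=\bigl(1-f/(f\vee\rho)\bigr)^2\nrm{\mb q}{}^2/f=\frac{\bigl((\rho-f)_+\bigr)^2}{\rho^2}\cdot\frac{\nrm{\mb q}{}^2}{f}.
\]
The key step is to show that $\Psi$ satisfies, pointwise,
\[
\Psi\Bigl(\sum_j w_jf_j,\sum_jw_j\mb q_j\Bigr)\le\sum_j w_j\Psi_{\rho/w_j}(f_j,\mb q_j),
\]
where $\Psi_{\rho'}$ is $\Psi$ with threshold $\rho'$. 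Integrating this over $\mb x$ gives \eqref{cvx.eq}.

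To prove the pointwise inequality, apply Cauchy--Schwarz in the form
\[
\frac{\nrm{\sum_j w_j\mb q_j}{}^2}{\sum_j w_jf_j}\le\sum_j w_j\frac{\nrm{\mb q_j}{}^2}{f_j}
\]
(joint convexity of $(f,\mb q)\mapsto\nrm{\mb q}{}^2/f$). Then it remains to handle the multiplicative factor $(1-f/\rho)_+^2$. If $f:=\sum_jw_jf_j\ge\rho$, the left side is $0$ and there is nothing to prove. If $f<\rho$, then for each $j$ we have $w_jf_j\le f<\rho$, so $f_j<\rho/w_j$. I then need
\[
(1-f/\rho)^2\le(1-w_jf_j/\rho)^2=\bigl(1-f_j/(\rho/w_j)\bigr)^2,
\]
which holds because $w_jf_j\le f$ and both quantities lie in $[0,1]$. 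Multiplying the Cauchy--Schwarz bound term by term by $(1-f/\rho)^2\le(1-w_jf_j/\rho)^2$ yields
\[
\Psi(f,\mb q)\le\sum_jw_j\Bigl(1-\frac{f_j}{f_j\vee(\rho/w_j)}\Bigr)^2\frac{\nrm{\mb q_j}{}^2}{f_j},
\]
which is the required pointwise inequality.

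The main point to check carefully is that the factor $(1-f/\rho)^2$ can be pushed inside the convex combination term by term, as above. I also need to handle the degenerate cases $f_j=0$ and $w_j=0$, adopting the conventions $0/0=0$ and dropping empty components. No heteroskedastic-specific difficulty arises beyond the linearity of $\mb q_{k,G}$ in $G$.
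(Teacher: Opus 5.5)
Your proof is correct and follows the paper's route. Part 1 is the same argument: the posterior-mean representation $\mb q_{k,G}/f_{k,G}=\E(\sqrt{n_k}\b\tht-\tx\mid\tx)$ followed by conditional Jensen. For Part 2 the paper only cites the mixture-decomposition lemma of Saha and Guntuboyina (their Lemma F.7), whose content is the pointwise argument you write out: joint convexity of $(f,\mb q)\mapsto\nrm{\mb q}{}^2/f$, combined with $w_jf_j\le f$ and the monotonicity of $x\mapsto(1-x/\rho)_+^2$. Your explicit version confirms the paper's claim that only linearity of $f_{k,G}$ and $\mb q_{k,G}$ in $G$ is needed, so the parameter-dependent covariance causes no difficulty.
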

\begin{proof}[Proof of Lemma \ref{lem:cvx}] 
    Let us suppress the client index $(k)$ for simplicity.
    Note that if $\b\tht \sim G$ and
    $\tx | \b\tht \sim N\lrp{\sqrt{n}\b\tht,\Sigma_k(\b\tht)}$, then
    \begin{equation*}
        \frac{\mb q_{k,G}(\tx)}{f_{k,G}(\tx)}  = \E\lrp{\sqrt{n}\b\tht - \tx | \tx}.
    \end{equation*}
    As a result 
    \begin{align*}
        \t\Dlt_k(G, \rho) \le \int_{\R^d}\nrm{\E\lrp{\sqrt{n}\b\tht - \tx | \tx}}{}^2f_{k,G}(\tx){\r d}\tx 
        =\E[\nrm{\E(\sqrt{n}\b\tht - \tx | \tx)}{}^2] \\
        \le \E[\nrm{\sqrt{n}\b\tht - \tx}{}^2] = \tr\E[\Sigma_k(\b\tht)].
    \end{align*}
    The convexity inequality \eqref{cvx.eq} is a direct consequence of the mixture decomposition in \citet[Lemma F.7]{saha2020nonparametric}, which applies unchanged to our setting since it only involves the pointwise relationship between $f_{k,G}$ and $\mb q_{k,G}$.
\end{proof}
\begin{lemma}\label{lem:dco}
    Fix a probability measure $G$ on $\R^d$ and let $0 < \rho \le(2\pi\bs)^{-d/2}e^{-8/\us}$.
    Let $L(\rho) := \sqrt{\us\log\frac{1}{(2\pi\bs)^d\rho^2}}$.
    Then, for every compact set $S \subseteq \R^d$, we have 
    \begin{align}
        \t\Dlt_k\lrp{G, \rho}\le_{d,\us,\bs} N\lrp{\frac{4}{\sqrt{n_k}L(\rho)},S}\rho L(\rho)^d + G(S^c).
        \label{dco.eq}
    \end{align}
\end{lemma}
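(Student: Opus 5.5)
The plan is to adapt the argument of \citet[Lemma F.6]{saha2020nonparametric} (see also \citet[Lemma 4.3]{saha2020nonparametric}) to the heteroskedastic kernel. The convexity inequality \eqref{cvx.eq} of Lemma \ref{lem:cvx} will carry most of the work. First, cover $S$ by $N:=N\lrp{\frac{4}{\sqrt{n_k}L(\rho)},S}$ balls $B_1,\dots,B_N$ of radius $r:=\frac{4}{\sqrt{n_k}L(\rho)}$, and make them disjoint by taking $C_j:=B_j\setminus\bigcup_{i<j}B_i$. Then decompose
\[
G=\sum_{j=1}^N w_j H_j + w_0H_0,
\]
where $w_j=G(C_j)$, $H_j=G(\cdot\cap C_j)/w_j$, $w_0=G(S^c)$ and $H_0=G(\cdot\cap S^c)/w_0$. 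By \eqref{cvx.eq},
\[
\t\Dlt_k(G,\rho)\le w_0\t\Dlt_k(H_0,\rho/w_0)+\sum_{j=1}^N w_j\t\Dlt_k(H_j,\rho/w_j).
\]
The first term is at most $d\bs\,G(S^c)$ by part 1 of Lemma \ref{lem:cvx}, which gives the $G(S^c)$ term.

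For each localized piece, the goal is to show $w\,\t\Dlt_k(H,\rho/w)\le_{d,\us,\bs}\rho L(\rho)^d$ whenever $H$ is supported in a ball of radius $r$ around some $\b\tht_0$. I will split $\R^d$ in the $\tx$-variable into a region near $\sqrt{n_k}\b\tht_0$ and its complement, with the split radius set at order $L(\rho)/\sqrt{\us}$, i.e.\ where the kernel drops below $\rho$.

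On the near region, the integrand factor $\nrm{\mb q_{k,H}}{}^2/f_{k,H}$ is controlled by Lemma \ref{p1}, applied via \eqref{p1.neq} with $\rho/w$ in place of $\rho$. The factor $(1-f/(f\vee\rho/w))^2$ is at most one and vanishes where $wf_{k,H}\ge\rho$. Where it does not vanish, we have $wf_{k,H}<\rho$, so
\[
w\,\nrm{\mb q}{}^2/f=w f\cdot(\nrm{\mb q}{}/f)^2\le\rho\cdot\bs\log\tfrac{1}{(2\pi\us)^d\rho^2}\asymp\rho L(\rho)^2.
\]
Integrating over a ball of volume $\asymp (L(\rho)/\sqrt{\us})^d$ then gives $\rho L(\rho)^{d+2}$ up to constants. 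To recover the stated exponent $d$, I will instead use the sharper bound $\nrm{\mb q}{}/f\lesssim\nrm{\E(\sqrt{n_k}\b\tht-\tx\mid\tx)}{}$, together with $\sqrt{n_k}\cdot r=4/L(\rho)$, which makes the posterior spread inside the ball $O(1)$ on the near region. The constants in $L(\rho)$ (the $\us$ factor) and the restriction $\rho\le(2\pi\bs)^{-d/2}e^{-8/\us}$ are chosen precisely so that this works.

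On the far region, $\nrm{\tx-\sqrt{n_k}\b\tht}{}\gtrsim L(\rho)$ for every $\b\tht$ in the support. Because $\Sigma_k(\b\tht)\preceq\bs\mb I_d$ uniformly, the Gaussian tail gives
\[
\int_{\text{far}}\nrm{\sqrt{n_k}\b\tht-\tx}{}^2\vrp^{(k)}(\tx;\b\tht)\,{\r d}\tx\lesssim L(\rho)^{d}e^{-L(\rho)^2/(2\bs)}\lesssim\rho L(\rho)^d/w,
\]
after again bounding $\nrm{\mb q}{}^2/f$ by the conditional second moment via Jensen. The only new point compared with the fixed-covariance case is that the kernel's covariance varies over the support of $H_j$. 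This is harmless, since Assumption \ref{ass:reg} only enters through the uniform bounds $\us,\bs$, and these are absorbed into the $\le_{d,\us,\bs}$ constants.

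Summing the localized bounds over the $N$ cells gives the $N\rho L(\rho)^d$ term. The main obstacle is the near-region bound: getting exponent $d$ rather than $d+2$ requires matching the ball radius $r$ to the kernel scale $L(\rho)/\sqrt{n_k}$, so the constants in the choice of $r$ have to be checked carefully. If that matching fails, I would fall back on the $\rho L(\rho)^{d+2}$ bound, which only costs polylogarithmic factors downstream.
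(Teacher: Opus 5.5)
Your outer structure matches the paper's proof, which simply invokes the localization argument of \citet{saha2020nonparametric}: cover $S$ by balls of radius $4/(\sqrt{n_k}L(\rho))$, apply \eqref{cvx.eq}, and bound the $S^c$ piece by $d\bs\,G(S^c)$. (Intersect your cells $C_j$ with $S$; otherwise the weights double count.) The gap is the per-cell estimate $w\,\t\Dlt_k(H,\rho/w)\lesssim_{d,\us,\bs}\rho L(\rho)^d$, which neither of your regional bounds delivers.

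\textbf{Near region.} The proposed sharpening does not exist. $\nrm{\mb q_{k,H}}{}/f_{k,H}=\nrm{\E(\sqrt{n_k}\b\tht-\tx\mid\tx)}{}$ is an identity. On the cell, $\sqrt{n_k}\b\tht$ stays within $a:=4/L(\rho)$ of $\sqrt{n_k}\b\tht_0$, so this quantity differs from $\nrm{\tx-\sqrt{n_k}\b\tht_0}{}$ by at most $a$. It is therefore at least $L-2a$ wherever $f_{k,H}<\rho$. A small cell controls the spread of the posterior, not its displacement, so the integrand really is of order $\rho L^2$ near the boundary of $\{f_{k,H}<\rho\}$. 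In the fixed-covariance argument the exponent $d$ comes from somewhere else. First, $f_{k,H}\ge\rho$ on the inner ball $\{\nrm{\tx-\sqrt{n_k}\b\tht_0}{}\le L-a\}$, so the integrand vanishes there. What remains is a Gaussian tail $\int_{t\ge L}t^{d+1}e^{-t^2/2}\,dt\asymp L^de^{-L^2/2}$: the shell where $f\approx\rho$ has width $\asymp 1/L$. The cell radius only keeps the factor $e^{aL}$ bounded.

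\textbf{Far region.} The step $L(\rho)^de^{-L(\rho)^2/(2\bs)}\lesssim\rho L(\rho)^d$ is false unless $\us=\bs$. By the definition of $L(\rho)$, $e^{-L(\rho)^2/(2\bs)}=((2\pi\bs)^{d/2}\rho)^{\us/\bs}$, which is polynomially larger than $\rho$. This is exactly where heteroskedasticity is \emph{not} absorbed into constants. Sandwiching each kernel between isotropic Gaussians of variance $\us$ and $\bs$ guarantees $f_{k,H}\ge\rho$ only out to radius $\approx L$. The $\bs$-tail drops below $\rho$ only near radius $\sqrt{\tau}L$. On that shell your bounds give $\rho L^{d+2}$, and Jensen plus the $\bs$-tail from radius $L$ outward gives only order $\rho^{1/\tau}L^d$. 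The paper's one-line proof asserts the same absorption, so this step must be supplied either way.

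\textbf{Missing idea.} Work with the level set of $f_{k,H}$ itself, for example via decay along rays.
\begin{itemize}
\item Write $\tx=\sqrt{n_k}\b\tht_0+t\mb u$ with $\nrm{\mb u}{}=1$. For any $\b\tht$ in the cell, $\partial_t\log\vrp^{(k)}(\tx;\b\tht)\le -t/\bs+a/\us\le -t/(2\bs)$ once $t\ge 2\tau a$. Integrating over $H$, the same holds for $f_{k,H}$.
\item Provided $L-a\ge\max\{2\tau a,1\}$, each ray then satisfies $\{f_{k,H}<\rho\}=(t_{\mb u},\infty)$ with $L-a\le t_{\mb u}\lesssim_{d,\us,\bs}L$, and $f_{k,H}\le\rho\,e^{-t_{\mb u}(t-t_{\mb u})/(2\bs)}$ on that interval.
\item Bound the integrand by $(t+a)^2f_{k,H}$ and integrate against $t^{d-1}\,dt$. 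This gives $\lesssim_{d,\us,\bs}\rho L^d$.
\item Apply this with threshold $\rho/w$ and multiply by $w$. When $L(\rho/w)$ is below a constant depending on $\tau$, one has $w\lesssim_{d,\us,\bs}\rho$, and the trivial bound $w\,d\bs$ suffices.
\end{itemize}
Your fallback $\rho L^{d+2}$ proves a strictly weaker inequality, not the lemma.
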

\begin{proof}[Proof of Lemma \ref{lem:dco}]
    The bound follows by the same localization argument as \citet[Lemma 4.3]{saha2020nonparametric}: one decomposes $\t\Dlt_k$ into contributions from $\{f_{k,G}\le\rho\}$ and its complement, controls the posterior mean on the former using the Gaussian tail, and covers the support of $G$ with balls of radius $4/\{\sqrt{n_k}L(\rho)\}$. The heteroskedastic covariance enters only through the uniform eigenvalue bounds $\us,\bs$, so the calculation carries over with the same constants.
\end{proof}

\begin{lemma}\label{kfo}
    For every pair of probability measures $G$ and $G_0$ on $\R^d$, $1 \leq i \leq d$, $j\ge1,$ and $k=1,...,K$, we have
    \begin{equation}\label{kfo.eq}
        \begin{split}
        \int_{\R^d} &\left\{\partial_i^j\lrp{\t f_{k,G,i}(\mb x) - \t f_{k,G_0,i}(\mb x)}\right\}^2 {\rm d}\mb x \\
        &\le\frac{\bs}{(2\pi\us)^{d/2}}\inf_{a\ge\sqrt{2j-1}}\brcs{\sqrt{2\pi\us}a^{2j-1}e^{-a^2\us}+a^{2j}h^2\lrp{\t f_{k,G,i},\t f_{k,G_0,i}}}.
        \end{split}
    \end{equation}
    Therefore, for a small enough Hellinger accuracy we have
    \begin{align*}
      \nrm{\mb q_{k,G}-\mb q_{k,G_0}}{\c L_2}^2\le_{d,\us,\bs}\sum_{i=1}^dh^2\lrp{\t f_{k,G,i},\t f_{k,G_0,i}}\vrt{\log \sum_{i=1}^dh^2\lrp{\t f_{k,G,i},\t f_{k,G_0,i}}}
    \end{align*}
    for $\nrm{\b f}{{\c L}_2}^2:=\int\nrm{\b f(\mb x)}{}^2{\r d}\mb x$ $\forall \b f:\R^d\to\R^d$.
\end{lemma}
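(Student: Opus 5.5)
The plan is to adapt the Fourier-analytic argument of \citet[Lemma F.6/Theorem E.1]{saha2020nonparametric}: a derivative of a Gaussian mixture is controlled by its Fourier transform, and the Fourier transform is tiny at high frequencies because of the Gaussian kernel. Fix $k$ and $i$, and write $u:=\sqrt{\t f_{k,G,i}}$ and $v:=\sqrt{\t f_{k,G_0,i}}$. These are square roots of nonnegative functions of total mass at most $\bs$, since $\int\t f_{k,G,i}=\int\sigma_{k,i}^2\,{\r d}G\le\bs$. Set $\Delta:=\t f_{k,G,i}-\t f_{k,G_0,i}=(u-v)(u+v)$.

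By Plancherel,
\[
\int(\partial_i^j\Delta)^2\,{\r d}\mb x=(2\pi)^{-d}\int\omega_i^{2j}|\h\Delta(\b\omega)|^2\,{\r d}\b\omega .
\]
I will split this frequency integral into the region $\{\nrm{\b\omega}{}\le a\}$ and its complement.

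\textbf{Low frequencies.} On $\{\nrm{\b\omega}{}\le a\}$ I bound $\omega_i^{2j}\le a^{2j}$. It then remains to show $\int|\h\Delta|^2\lesssim\bs(2\pi\us)^{-d/2}h^2(\t f_{k,G,i},\t f_{k,G_0,i})$. By Plancherel this is $\nrm{\Delta}{2}^2=\int(u-v)^2(u+v)^2$. I will bound it by $\sup(u+v)^2\cdot 2h^2$, using $\sup\t f\le\bs(2\pi\us)^{-d/2}$, which follows from the kernel bound $\vrp^{(k)}\le(2\pi\us)^{-d/2}$. Since Hellinger is defined with the factor $1/2$, the constants match the stated form.

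\textbf{High frequencies.} Each $\t f_{k,G,i}$ is the mixture $\int\sigma_{k,i}(\b\tht)^2\vrp^{(k)}(\cdot;\b\tht)\,{\r d}G(\b\tht)$. The Fourier transform of each Gaussian component has modulus
\[
\sigma_{k,i}^2\exp\bigl(-\tfrac12\b\omega^\top\Sigma_k(\b\tht)\b\omega\bigr)\le\bs\,e^{-\us\nrm{\b\omega}{}^2/2}.
\]
Hence $|\h\Delta|\le 2\bs e^{-\us\nrm{\b\omega}{}^2/2}$, and the tail integral $\int_{\nrm{\b\omega}{}>a}\nrm{\b\omega}{}^{2j}e^{-\us\nrm{\b\omega}{}^2}$ is a radial Gamma-type tail. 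For $a\ge\sqrt{2j-1}$ the integrand is decreasing past $a$, so the standard one-dimensional tail bound gives a term of the form $\sqrt{2\pi\us}\,a^{2j-1}e^{-a^2\us}$ after normalization. This high-frequency step is where the heteroskedastic structure matters. The observation that makes it work is that each mixture component is still a Gaussian with covariance $\succeq\us\mb I_d$, so the uniform decay holds regardless of the dependence of $\Sigma_k$ on $\b\tht$. The $\b\tht$-dependent weight $\sigma_{k,i}^2$ only rescales by at most $\bs$.

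\textbf{From the derivative bound to the $\c L_2$ bound for $\mb q$.} The $i$-th coordinate of $\mb q_{k,G}$ is $\partial_i\t f_{k,G,i}$, so I apply the first display with $j=1$ and sum over $i$:
\[
\nrm{\mb q_{k,G}-\mb q_{k,G_0}}{\c L_2}^2\le\frac{\bs}{(2\pi\us)^{d/2}}\Bigl\{d\sqrt{2\pi\us}\,a\,e^{-a^2\us}+a^2H\Bigr\},\qquad H:=\sum_{i=1}^d h^2(\t f_{k,G,i},\t f_{k,G_0,i}).
\]
Choosing $a^2=\us^{-1}\log(1/H)$, valid once $H$ is small enough that $a\ge1$, makes the first term $\lesssim H\sqrt{\log(1/H)}$ and the second $\lesssim H\log(1/H)$. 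This yields the claimed bound $\le_{d,\us,\bs}H\,|\log H|$.

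I expect the main obstacle to be the careful Gamma-tail estimate in $d$ dimensions. In particular, I need to get exactly the stated prefactor $\sqrt{2\pi\us}\,a^{2j-1}$ rather than an extra $a^{d-1}$ from the radial measure. I would handle this by bounding $\omega_i^{2j}$ coordinatewise and integrating the remaining $d-1$ coordinates as full Gaussians. Those coordinates contribute the $(2\pi\us)^{-d/2}$-type normalization, which leaves a one-dimensional tail in $\omega_i$, restricted to $|\omega_i|>a$ after also splitting the low-frequency region coordinatewise.
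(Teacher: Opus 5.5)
Your proposal is correct and follows essentially the same route as the paper. The coordinatewise split $|\omega_i|\le a$ versus $|\omega_i|>a$ that you settle on (after rightly noting that a radial split would cost an extra factor $a^{d-1}$) is equivalent to the paper's univariate Fourier transform in $x_i$ with $x_{-i}$ frozen, combined with the uniform decay $\bs e^{-\us u^2/2}$ of each heteroskedastic component, integration over the remaining coordinates, and the bound $\sup\t f_{k,G,i}\le\bs(2\pi\us)^{-d/2}$ to turn $\|\cdot\|_2^2$ into $h^2$. As in the paper, careful bookkeeping of the tail term actually yields an extra factor of order $\bs/\us$ and requires $\us a^2\ge 2j-1$ rather than $a\ge\sqrt{2j-1}$, which does not affect the $\le_{d,\us,\bs}$ consequence for $\mb q$ at small Hellinger accuracy.
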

The second conclusion can be derived for any non-negative, small enough $h(x)$ such that $1/h(x)\ge e^{2j-1}$ and $a=\max\lrp{\sqrt{2j-1},\sqrt{-\log h(x)}}$:
\[a^{2j-1}e^{-a^2}+a^{2j}h(x)\le 2a^{2j}h(x)\le2h(x)\log^j1/h(x).\]
\begin{proof}
    The Plancherel-based approach adapts \citet[Lemma F.2]{saha2020nonparametric} to the variance-weighted densities $\t f_{k,G,i}$. The key modification is that Fourier transforms now act on $\sigma_{k,i}(\b\tht)^2$-weighted kernels, which introduces an additional factor of $\bs$ in the tail bound; the algebraic structure of the iteration is otherwise unchanged.
    It suffices to prove the inequality for the $\t f_{i,\blt}$ functions, as the score components follow by differentiation under the integral sign.
    Fix $a \geq \sqrt{2j-1}$ and assume, without loss of generality,
    that $i = 1$. Let 
    \begin{equation*}
    \bar f_{k,G,i'}(u)  := \int e^{i u x_1} \t f_{k,G,i'}(\mb x) {\rm d}x_1  
    \end{equation*}
    denote the Fourier transform of $\t f_{k,G,i}$ treated as a univariate function. Plancherel's identity gives
    \begin{equation}\label{wi1}
        \begin{split}
        2 \pi&\int \left\{\partial_1^j \lrp{\t f_{k,G,i}(\mb x) - \t f_{k,G_0,i}(\mb x)} \right\}^2 {\rm d}x_1 =
        \int u^{2j} \left|\bar f_{k,G,i}(u)  - \bar f_{k,G_0,i}(u) \right|^2 {\rm d}u  \\
        &\leq a^{2j} \int \left|\bar f_{k,G,i}(u)  - \bar f_{k,G_0,i}(u) \right|^2 {\rm d}u + \int_{|u| > a} u^{2j} \left|\bar f_{k,G,i}(u)  - \bar f_{k,G_0,i}(u) \right|^2 {\rm d}u  \\
        &\qquad= 2\pi a^{2j} \int \left( \t f_{k,G,i}(\mb x) - \t f_{k,G_0,i}(\mb x) \right)^2 {\rm d}x_1 + \int_{|u| > a} u^{2j} \left|\bar f_{k,G,i}(u)  - \bar f_{k,G_0,i}(u) \right|^2 {\rm d}u 
        \end{split}
    \end{equation}
    for every $a > 0$. Also,
    \begin{align*}
        \bar f_{k,G,i}(u) &= \iint e^{i u x_1}\sigma_{k,i}(\b\tht)^2|2\pi\Sigma_k(\b\tht)|^{-1/2}\exp\brcs{-\sum_{j=1}^d\frac{(x_j-\sqrt{n_k}\tht_j)^2}{2\sigma_{k,j}(\b\tht)^2}} {\rm d}x_1{\rm d}G(\b\tht) \\
        &\le \int\mb xs{\frac{\bs\sqrt{2\pi\sigma_{k,1}(\b\tht)^2}}{|2\pi\Sigma_k(\b\tht)|^{1/2}}e^{-\sum_{j\ne1}\frac{(x_j-\sqrt{n_k}\tht_j)^2}{2\sigma_{k,j}(\b\tht)^2}}\int\frac{e^{i u x_1}}{\sqrt{2\pi\sigma_{k,1}(\b\tht)^2}}e^{-\frac{(x_1-\sqrt{n_k}\tht_1)^2}{2\sigma_{k,1}(\b\tht)^2}} {\rm d}x_1}{\rm d}G(\b\tht) \\
        &= \int\frac{\bs(2\pi)^{-(d-1)/2}}{\prod_{j\ne1}\sigma_{k,j}(\b\tht)}\exp\lrp{iu\sqrt{n_k}\tht_1-\frac{u^2\sigma_{k,1}(\b\tht)^2}{2}} \exp\brcs{-\sum_{j \neq 1}\frac{(x_j-\sqrt{n_k}\tht_j)^2}{2\sigma_{k,j}(\b\tht)^2}} {\rm d}G(\b\tht)
    \end{align*}
    so that 
    \begin{equation*}
        \left|\bar f_{k,G,i}(u) \right| \leq e^{-u^2\us/2}\int\frac{(2\pi)^{-(d-1)/2}}{\prod_{j\ne1}\sigma_{k,j}(\b\tht)}\exp\brcs{-\sum_{j \neq 1}\frac{(x_j-\sqrt{n_k}\tht_j)^2}{2\sigma_{k,j}(\b\tht)^2}} {\rm d}G(\b\tht). 
    \end{equation*}
    Similarly, we have
    \begin{align*}
        \bar f_{k,G}(u) &= \iint e^{i u x_1}|2\pi\Sigma_k(\b\tht)|^{-1/2}\exp\brcs{-\sum_{j=1}^d\frac{(x_j-\sqrt{n_k}\tht_j)^2}{2\sigma_{k,j}(\b\tht)^2}} {\rm d}x_1{\rm d}G(\b\tht) \\
        &\le \int\mb xs{\frac{\sqrt{2\pi\sigma_{k,1}(\b\tht)^2}}{|2\pi\Sigma_k(\b\tht)|^{1/2}}e^{-\sum_{j\ne1}\frac{(x_j-\sqrt{n_k}\tht_j)^2}{2\sigma_{k,j}(\b\tht)^2}}\int\frac{e^{i u x_1}}{\sqrt{2\pi\sigma_{k,1}(\b\tht)^2}}e^{-\frac{(x_1-\sqrt{n_k}\tht_1)^2}{2\sigma_{k,1}(\b\tht)^2}} {\rm d}x_1}{\rm d}G(\b\tht) \\
        &= \int\frac{(2\pi)^{-(d-1)/2}}{\prod_{j\ne1}\sigma_{k,j}(\b\tht)}\exp\lrp{iu\sqrt{n_k}\tht_1-\frac{u^2\sigma_{k,1}(\b\tht)^2}{2}} \exp\brcs{-\sum_{j \neq 1}\frac{(x_j-\sqrt{n_k}\tht_j)^2}{2\sigma_{k,j}(\b\tht)^2}} {\rm d}G(\b\tht).
    \end{align*}
    So, the second term in \eqref{wi1} can be bounded from above as 
    \begin{equation*}
        \begin{split}
        & \int_{|u| > a} u^{2j} \left|\bar f_{k,G,i}(u)  - \bar f_{k,G_0,i}(u) \right|^2 {\rm d}u \\ 
        &\le 2\int\frac{\bs(2\pi)^{-(d-1)}}{\prod_{j\ne1}\sigma_{k,j}(\b\tht)^2}\exp\brcs{-\sum_{j \neq 1}\frac{(x_j-\sqrt{n_k}\tht_j)^2}{\sigma_{k,j}(\b\tht)^2}} {\rm d}(G+G_0)(\b\tht) \int_{|u| > a} u^{2j} e^{-u^2\us} {\rm d}u.
        \end{split}
    \end{equation*}
    Noting that
    \begin{align*}
        \iint&\frac{2\bs(2\pi)^{-(d-1)}}{\prod_{j\ne1}\sigma_{k,j}(\b\tht)^2}\exp\brcs{-\sum_{j \neq 1}\frac{(x_j-\sqrt{n_k}\tht_j)^2}{\sigma_{k,j}(\b\tht)^2}} {\rm d}\mb x_{-1}{\rm d}(G+G_0)(\b\tht)\\
        &=\int\frac{2\bs(2\pi)^{(1-d)/2}}{\prod_{j\ne1}\sigma_{k,j}(\b\tht)}{\rm d}(G+G_0)(\b\tht)\le\frac{4\bs(2\pi)^{(1-d)/2}}{\us^{(d-1)/2}},
    \end{align*}
    we deduce that
    \begin{align*}
        \int \left\{\partial_1^j \lrp{\t f_{k,G,i}(\mb x) - \t f_{k,G_0,i}(\mb x)} \right\}^2 {\rm d}\mb x &\le 2\pi a^{2j} \int\lrp{\t f_{k,G,i} - \t f_{k,G_0,i}}^2 \\  
        &\qquad + 4\bs(2\pi/\us)^{(d-1)/2}\int_{|u| > a} u^{2j} e^{-u^2\us} {\rm d}u.
    \end{align*}
    Note that
    \begin{equation*}
        \int_{|u| > a} u^{2j} e^{-u^2\us} {\rm d}u \leq a^{2j-1} e^{-a^2\us}
    \end{equation*}
    for $a \geq \sqrt{2j-1}$ by integration by parts and deduction. Also, by $(a-b)^2=(\sqrt{a}-\sqrt{b})^2(\sqrt{a}+\sqrt{b})^2$,
    \begin{equation*}
        \begin{split}
        \int\lrp{\t f_{k,G,i} - \t f_{k,G_0,i}}^2 
        \le\frac{\bs}{(2\pi\us)^{d/2}}h^2\lrp{\t f_{k,G,i}, \t f_{k,G_0,i}}
        \end{split}
    \end{equation*}
    as $\t f_{k,G}$ is bounded above by $\bs\lrp{2\pi\us}^{-d/2}$ for any $G$.
\end{proof}

\begin{theorem}\label{fcc}
    For any measures $G$ and $G_0$ on $\R^d$ and $0 < \rho \le \sqrt{\frac{1}{(2\pi\bs)^de}}$, the quantity
    \begin{align}\label{eq:fcc1}
        \E_{\tx_k\sim f_\GoSk}\lrnrm{\frac{\mb q_\GoSk(\tx_k)}{f_\GoSk(\tx_k)\vee\rho}-\frac{\mb q_\GSk(\tx_k)}{f_\GSk(\tx_k)\vee\rho}}{}^2
    \end{align}
    is bounded above by
    \begin{align}\label{eq:fcc2}
        \sum_{i=1}^d\max\left\{\lrp{\log\frac{(2\pi\us)^{-d}}{\rho^2}}^{3},\vrt{\log h^2\lrp{\t f_{k,G,i},\t f_{k,G_0,i}}}\right\}h^2\lrp{\t f_{k,G,i}, \t f_{k,G_0,i}} \nonumber\\
        + h^2(f_{k,G},f_\GoSk)\log\frac{(2\pi\us)^{-d}}{\rho^2}
    \end{align}
    up to the multiplication by a constant that depends on $\us,\bs$ only.
\end{theorem}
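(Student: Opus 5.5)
I would follow the template of \citet[Theorem E.1]{saha2020nonparametric}, adapted to the heteroskedastic setting. The only place where the variance weighting matters is the generalized Tweedie numerator $\mb q_{k,G}$, whose $i$-th coordinate is $\partial_i\t f_{k,G,i}$.

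Abbreviate $f=f_{k,G}$, $f_0=f_{k,G_0}$, $\mb q=\mb q_{k,G}$, $\mb q_0=\mb q_{k,G_0}$. The first step is the algebraic split
\[
\frac{\mb q_0}{f_0\vee\rho}-\frac{\mb q}{f\vee\rho}
=\frac{\mb q_0-\mb q}{f_0\vee\rho}
+\frac{\mb q}{f\vee\rho}\cdot\frac{(f\vee\rho)-(f_0\vee\rho)}{f_0\vee\rho}.
\]
Squaring and integrating against $f_0$ gives two terms, $T_1$ and $T_2$.

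For $T_2$, Lemma \ref{p1} gives $\|\mb q\|/(f\vee\rho)\le\sqrt{\bs\log\{(2\pi\us)^{-d}/\rho^2\}}$. Hence
\[
T_2\lesssim \log\frac{(2\pi\us)^{-d}}{\rho^2}\int\frac{(f-f_0)^2}{f_0\vee\rho}\,f_0 .
\]
Since $|a\vee\rho-b\vee\rho|\le|a-b|$ and $f_0/(f_0\vee\rho)\le1$, it then suffices to bound $\int (f-f_0)^2/(f_0\vee\rho)$. Writing $(f-f_0)^2=(\sqrt f-\sqrt{f_0})^2(\sqrt f+\sqrt{f_0})^2$, I would split according to whether $f\le c f_0$ or not. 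On the second region the factor $f/(f_0\vee\rho)$ can be large, so I would instead use the pointwise bound $\int (\sqrt f-\sqrt{f_0})^2\,\max\{1,f/(f_0\vee\rho)\}$ together with the Saha--Guntuboyina trick of trading the excess for the Hellinger integral. This yields $T_2\lesssim h^2(f,f_0)\log\{(2\pi\us)^{-d}/\rho^2\}$, which is the last term in \eqref{eq:fcc2}.

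For $T_1$, I would bound $f_0/(f_0\vee\rho)^2\le 1/(f_0\vee\rho)$. Each coordinate is then
\[
\int \frac{\bigl(\partial_i\t f_{k,G,i}-\partial_i\t f_{k,G_0,i}\bigr)^2}{f_0\vee\rho}.
\]
Using the sandwich $\us f_0\le\t f_{k,G_0,i}\le\bs f_0$, the weight $1/(f_0\vee\rho)$ can be replaced by $1/(\t f_{k,G_0,i}\vee\rho)$ at the cost of constants in $\us,\bs$.

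Here the \emph{main obstacle} is that the weight is unbounded, so I cannot simply use an $\mathcal L_2$ bound. I would split $\R^d$ into the region $\{f_0\ge\rho\}$ and its complement.

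On $\{f_0<\rho\}$ the weight is $1/\rho$. I would not use Lemma \ref{kfo} directly here. Instead I would use Tweedie-type pointwise bounds: by the posterior-moment representation in Lemma \ref{p1}, $|\partial_i\t f_{k,G,i}|\lesssim f\sqrt{\log\{(2\pi\us)^{-d}/f^2\}}$, and similarly for $G_0$, with constants depending on $\us,\bs$. This reduces the piece to the $f$-terms, which are handled as in $T_2$.

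On $\{f_0\ge\rho\}$ the weight is at most $1/\rho$, but multiplying Lemma \ref{kfo} (with $j=1$) by $1/\rho$ loses too much. Following \citet{saha2020nonparametric}, I would instead apply the Plancherel argument of Lemma \ref{kfo} to the differences of square roots. More precisely, I would write
\[
\partial_i\t f_{k,G,i}-\partial_i\t f_{k,G_0,i}
\]
in terms of $\sqrt{\t f}$ and localize in frequency at level $a\asymp\sqrt{\log(1/\rho)}$, choosing the truncation radius so that the high-frequency tail $a\,e^{-a^2\us}$ is of order $\rho$. This is the step I expect to be most delicate. I expect it to produce the factor $\max\{\log^3\{(2\pi\us)^{-d}/\rho^2\},|\log h^2(\t f_{k,G,i},\t f_{k,G_0,i})|\}\,h^2(\t f_{k,G,i},\t f_{k,G_0,i})$: one power of the log comes from $a^2$, and the remaining powers come from the region-splitting and from Lemma \ref{kfo}'s second conclusion, $a^{2j}h^2\le 2h^2\log^j(1/h^2)$.

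Finally, I would sum over $i\le d$, add the bound for $T_2$, and absorb all constants depending on $\us,\bs$. That gives \eqref{eq:fcc2}.
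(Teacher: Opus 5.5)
Your proposal has a genuine gap, and it starts with the decomposition. Putting $f_0\vee\rho$ under both pieces produces two terms that are not individually controlled by Hellinger distance; they only cancel in the sum. Take, in your abbreviations, $G=\delta_{\mathbf 0}$ and $G_0$ a point mass at distance $\asymp\sqrt{\log(1/\rho)/n_k}$. There is then a set of volume $\gtrsim(\log(1/\rho))^{-1/2}$ on which $f_0\asymp\rho$, $f$ is of constant order, and $\|\mathbf q/(f\vee\rho)\|\gtrsim1$. On it your $T_2$ integrand is of order $f^2/\rho$, i.e.\ $1/\rho$ times the Hellinger integrand. By contrast, the original integrand is at most $4\bar s\log\{(2\pi\underline{s})^{-d}\rho^{-2}\}f_0$ by Lemma~\ref{p1}. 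The factor $\max\{1,f/(f_0\vee\rho)\}$ you write down is exactly this unbounded ratio, and no trick converts it into a logarithm.

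Your $T_1$ on $\{f_0<\rho\}$ fails for a related reason. Bounding $\partial_i\tilde f_{k,G,i}$ and $\partial_i\tilde f_{k,G_0,i}$ separately by pointwise Tweedie bounds discards the cancellation. It leaves a term of order $\log\{(2\pi\underline{s})^{-d}\rho^{-2}\}\int_{\{f_0<\rho\}}f_0$, which is positive even when $G=G_0$ (where \eqref{eq:fcc1} vanishes), so it cannot be dominated by \eqref{eq:fcc2}.

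The paper avoids both problems with the symmetric denominator $m_{\rho,k}=f_{k,G}\vee\rho+f_{k,G_0}\vee\rho$. With it, $(\sqrt f+\sqrt{f_0})^2f_0/m_{\rho,k}^2\le2$ and $\int(\tilde f_{k,G,i}-\tilde f_{k,G_0,i})^2/m_{\rho,k}\lesssim\bar s\,h^2(\tilde f_{k,G,i},\tilde f_{k,G_0,i})$. Your region split can be rescued, but only if applied to the undecomposed integrand. On $\{f>4f_0\}$ that integrand is at most $4\bar s\log(\cdot)f_0\le4\bar s\log(\cdot)(\sqrt f-\sqrt{f_0})^2$. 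On the complement $1/(f_0\vee\rho)\le5/m_{\rho,k}$, which returns you to the symmetric weight.

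The derivative term on $\{f_0\ge\rho\}$ also lacks a working mechanism. Plancherel controls only unweighted $L^2$ norms, so the weight $1/f_0\le1/\rho$ costs a factor $1/\rho$, as you note. Your remedy via square roots is not substantiated. First, $\sqrt{\tilde f_{k,G,i}}$ is not a Gaussian mixture, so the Fourier tail bound behind Lemma~\ref{kfo} does not apply to it. Second, the weight is still not handled. Hence your $\log^3$ accounting is not actually derived, and this is not what \citet{saha2020nonparametric} do.

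The missing idea is the weighted integration-by-parts recursion of \citet[Theorem E.1]{saha2020nonparametric}, which the paper adapts. Set $D_{i,j}^2:=\int\{\partial_i^j(\tilde f_{k,G,i}-\tilde f_{k,G_0,i})\}^2/m_{\rho,k}$.
\begin{itemize}
\item The Tweedie bound of Lemma~\ref{p1} gives $|\partial_i(1/m_{\rho,k})|\lesssim\Upsilon/m_{\rho,k}$ with $\Upsilon\asymp\sqrt{\log\{(2\pi\underline{s})^{-d}\rho^{-2}\}}$.
\item Integration by parts and Cauchy--Schwarz then give $D_{i,j}/D_{i,j-1}\le\Upsilon+D_{i,j+1}/D_{i,j}$.
\item Iterate up to $j_0\asymp\Upsilon^2$ and interpolate geometrically between $D_{i,0}$ (Hellinger-controlled as above) and $D_{i,j_0+1}$. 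Lemma~\ref{kfo} is then needed only at order $j_0+1$. There the crude bound $1/m_{\rho,k}\le1/\rho$ enters with exponent $1/\{2(j_0+1)\}$ and is therefore $O(1)$.
\item The result is $D_{i,1}\lesssim\max\{j_0\Upsilon,a\}\,h(\tilde f_{k,G,i},\tilde f_{k,G_0,i})$ with $a^2\asymp\max\{j_0,|\log h^2(\tilde f_{k,G,i},\tilde f_{k,G_0,i})|\}$.
\end{itemize}
Since $(j_0\Upsilon)^2\asymp\log^3\{(2\pi\underline{s})^{-d}\rho^{-2}\}$, this is exactly where the cube in \eqref{eq:fcc2} comes from.
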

\begin{proof}
    For functions $\mb u\,;\,\R^d \rightarrow \R^d$, we let
    \begin{equation*}
        \|\mb u \|_{0,k} := \left(\int \nrm{\mb u(\mb x)}{}^2 f_\GoSk(\mb x) {\r d}\mb x
        \right)^{1/2}.
    \end{equation*}
    Set
    \[m_k:=f_\GSk+ f_\GoSk,~~m_{\rho,k}:=f_\GSk \vee \rho + f_\GoSk \vee \rho\] 
    so that,
    \begin{align*}
        &\lrnrm{\frac{\mb q_\GoSk}{f_\GoSk\vee\rho}-\frac{\mb q_\GSk}{f_\GSk\vee\rho}}{0,k} \\
        &=\lrnrm{\frac{\mb q_\GoSk}{f_\GoSk\vee\rho}+\frac{-2\mb q_\GoSk+2\mb q_\GoSk-2\mb q_\GSk+2\mb q_\GSk}{m_{\rho,k}}-\frac{\mb q_\GSk}{f_\GSk\vee\rho}}{0,k} \\
        &\le 2 \max_{H \in \{G, G_0\}} \lrnrm{\frac{\mb q_{k,H} |f_\GSk \vee \rho
        - f_\GoSk \vee \rho|}{(f_{k,H} \vee \rho)m_{\rho,k}}}{0,k} 
        + 2 \lrnrm{\frac{\mb q_\GSk-\mb q_\GoSk}{m_{\rho,k}}}{0,k}
    \end{align*}
    which is the standard $T_1$/$T_2$ decomposition of \citet[Theorem E.1]{saha2020nonparametric}, applied here to the variance-weighted score $\mb q_{k,G}$.
    Write the RHS above by $T_1$ and $T_2$ respectively so that we have
    \[\eqref{eq:fcc1}\lesssim \E(T_1^2)+\E(T_2^2).\]
    We shall now bound $T_1$ and $T_2$ separately.
    \paragraph{\textbf{Bound on $T_1^2$.}}
    For $T_1$, we
    use inequality 
    \eqref{p1.neq} in Lemma \ref{p1} (note that $\rho \le \sqrt{\frac{1}{(2\pi\bs)^de}}$). That is,
    \begin{align*}
        \E(T_1^2) &= \max_{H \in \{G, G_0\}}\int \frac{\nrm{\mb q_{k,H}}{}^2(f_\GSk \vee \rho - f_\GoSk \vee \rho)^2}{\lrp{f_{k,H} \vee \rho}^2m_{\rho,k}^2} f_\GoSk \\
        &\le \bs\log\frac{1}{(2\pi\us)^d\rho^2} \int
        \frac{\left(f_\GSk - f_\GoSk \right)^2}{m_{\rho,k}^2} f_\GoSk \\
        &\qquad= \bs\log\frac{1}{(2\pi\us)^d\rho^2} \int \left(\sqrt{f_\GSk} - \sqrt{f_\GoSk} \right)^2  \frac{\left(\sqrt{f_\GSk} + \sqrt{f_\GoSk} \right)^2}{m_{\rho,k}^2} f_\GoSk \\
        &\le \bs\log\frac{1}{(2\pi\us)^d\rho^2} \int \left(\sqrt{f_\GSk} - \sqrt{f_\GoSk} \right)^2  \frac{2m_kf_\GoSk}{m_{\rho,k}^2}  \\
        &\le \bs\log\frac{1}{(2\pi\us)^d\rho^2} \int
        \left(\sqrt{f_\GSk} - \sqrt{f_\GoSk} \right)^2 = \bs\log\frac{1}{(2\pi\us)^d\rho^2} \cdot h^2\lrp{f_\GSk,f_\GoSk}.
    \end{align*}

    \paragraph{\textbf{Bound on $T_2^2$.}}
    Now, recalling that
    $\mb q_{k,G}(\cdot)=\begin{bmatrix}
    \partial_1\t f_{k,G}(\cdot)\\\vdots\\\partial_d\t f_{k,G}(\cdot)
    \end{bmatrix}$, start by writing 
    \begin{equation} \label{t31}
        \begin{split}
        \E(T_2^2) = 
        \int\frac{\nrm{\mb q_\GSk-\mb q_\GoSk}{}^2}{m_{\rho,k}^2}f_\GoSk
        \le&\int \frac{\nrm{{\mb q_\GSk-\mb q_\GoSk}{}}{}^2}{m_{\rho,k}} \\
        \quad&=\sum_{i=1}^d\int\frac{\brcs{\partial_i\lrp{\t f_{k,G,i}-\t f_{k,G_0,i}}}^2}{m_{\rho,k}} =: \sum_{i=1}^d D_{i,1}^2 
        \end{split}
    \end{equation}
    where, for $1 \leq i \leq d$ and $j \geq 0$, 
    \begin{align*}
        D_{i,j}^2 := \int \frac{\brcs{\partial_i^j\lrp{\t f_{k,G,i}-\t f_{k,G_0,i}}}^2}{m_{\rho,k}} \text{~~with~~}
        \partial_i^j := \frac{\partial^j}{\partial (\cdot)_i^j}.
    \end{align*}
    We bound $D_{i, 1}^2$ via a recursive inequality relating consecutive orders $D_{i,j-1}, D_{i,j}, D_{i,j+1}$ (cf.\ the iteration in \citet[Theorem E.1]{saha2020nonparametric}, applied here to $\t f_{k,G,i}$).
    First, we acknowledge that
    \begin{align}
        D_{i,0}^2 = \int \frac{\lrp{\t f_{k,G,i}-\t f_{k,G_0,i}}^2}{m_{\rho,k}}
        &\leq 2 \int \left(\sqrt{\t f_{k,G,i}} - \sqrt{\t f_{k,G_0,i}}
        \right)^2 \frac{\t f_{k,G,i} + \t f_{k,G_0,i}}{m_{\rho,k}}\nonumber\\
        &\quad\le 2 \int \left(\sqrt{\t f_{k,G,i}} - \sqrt{\t f_{k,G_0,i}}
        \right)^2\frac{\bs m_k}{m_{\rho,k}}\nonumber\\
        &\quad\leq 2\bs h^2\lrp{\t f_{k,G,i}, \t f_{k,G_0,i}}\label{dio}.
    \end{align}
    
    Integration by parts gives, for $j \geq 1$,  
    \begin{align}
        D_{i,j}^2 &= - \int \left[ \partial_i^{j-1} \left(\t f_{k,G,i} - \t f_{k,G_0,i}
        \right) \right]  \left[\partial_i^j \left(\t f_{k,G,i} - \t f_{k,G_0,i} \right)
        \right] \partial_i \left( \frac{1}{m_{\rho,k}} \right) \nonumber \\
        & - \int \frac{ \left[\partial_i^{j-1} \left(\t f_{k,G,i} - \t f_{k,G_0,i}
        \right) \right] \left[\partial_i^{j+1} \left(\t f_{k,G,i} - \t f_{k,G_0,i} \right)
        \right]}{m_{\rho,k}}. \label{ffr}
    \end{align}
    Almost surely, we have
    \begin{align*}
        \left|\partial_i \left(\frac{1}{m_{\rho,k}}
        \right) \right|\le\frac{|\partial_if_{k,G}|+|\partial_if_\GoSk|}{m_{\rho,k}^2}
        \le\frac{|\partial_if_{k,G}|/f_{k,G}\vee\rho+|\partial_if_\GoSk|/f_\GoSk\vee\rho}{m_{\rho,k}} \\
        \le\frac{\nrm{\nabla f_{k,G}}{}/f_{k,G}\vee\rho+\nrm{\nabla f_\GoSk}{}/f_\GoSk\vee\rho}{m_{\rho,k}} \le \frac{2}{m_{\rho, k}}\sqrt{\bs\log\frac{1}{(2\pi\us)^d\rho^2}}
    \end{align*}
    by applying \eqref{p1.neq} in Lemma \ref{p1}. 
    Therefore,
    \begin{align*}
        D_{i,j}^2 &\le 2\sqrt{\bs\log \frac{1}{(2\pi\us)^d\rho^2}} \int \frac{\left| \partial_i^{j-1} \left(\t f_{k,G,i} - \t f_{k,G_0,i}
        \right) \right|  \left|\partial_i^j \left(\t f_{k,G,i} - \t f_{k,G_0,i} \right)
        \right|}{m_{\rho,k}}  \\ 
        &\qquad+ \int \frac{ \left|\partial_i^{j-1} \left(\t f_{k,G,i} - \t f_{k,G_0,i}
        \right) \right| \left|\partial_i^{j+1} \left(\t f_{k,G,i} - \t f_{k,G_0,i} \right)
        \right|}{m_{\rho,k}}. 
    \end{align*}
    Cauchy-Schwarz inequality gives us
    \begin{align*}
        D_{i,j}^2 &\le 2\sqrt{\bs\log \frac{1}{(2\pi\us)^d\rho^2}} \sqrt{\int \frac{\brcs{ \partial_i^{j-1}\lrp{\t f_{k,G,i} - \t f_{k,G_0,i}}}^2}{m_{\rho,k}}} \sqrt{\int
        \frac{\brcs{ \partial_i^j\lrp{\t f_{k,G,i} - \t f_{k,G_0,i}}}^2}{m_{\rho,k}}} \\
        &+ \sqrt{\int \frac{\brcs{ \partial_i^{j-1}\lrp{\t f_{k,G,i} - \t f_{k,G_0,i}}}^2}{m_{\rho,k}}}\sqrt{\int \frac{\brcs{ \partial_i^{j+1}\lrp{\t f_{k,G,i} - \t f_{k,G_0,i}}}^2}{m_{\rho,k}}},
    \end{align*}
    i.e.,
    \begin{equation}\label{rr}
        \frac{D_{i,j}}{D_{i, j-1}} \leq \Upsilon +
        \frac{D_{i, j+1}}{D_{i,j}},\quad j\ge1
    \end{equation}
    where $\Upsilon:=2\sqrt{\bs\log \frac{1}{(2\pi\us)^d\rho^2}}$.
    Fix an integer $j_0 \ge 1$ and a real number $\beta > 0$. 
    We can use the same arguments as in \cite{saha2020nonparametric} to consider cases where $j\le j_0$ satisfies $D_{i,j+1}\le\beta D_{i,j}$ and $D_{i,j+1}>\beta D_{i,j}$. We record them succinctly as follows.
    \begin{enumerate}
        \item[Case 1.] Suppose an integer $1 \leq j \leq j_0$ satisfies $D_{i,j+1} \leq \beta D_{i,j}$. Then applying
        \eqref{rr} recursively for $1, \dots, j$ and 
        \eqref{dio}, we have
        \begin{equation}\label{bb1}
            \begin{split}
            D_{i,1} \le \sqrt{2\bs}\left(j_0 \Upsilon + \beta \right) h\lrp{\t f_{k,G,i}, \t f_{k,G_0,i}}.
            \end{split}
        \end{equation}
        \item[Case 2.] Now suppose that $D_{i,j+1} > \beta D_{i,j}$ for every integer $1 \le j \leq j_0$. We can deduce from \eqref{rr} and by recursive evaluation again:
        \begin{equation*}
            \frac{D_{i,1}}{D_{i,0}} \leq \left(1 +
            \frac{\Upsilon}{\beta} \right)^j \frac{D_{i,j+1}}{D_{i,j}},\quad j\le j_0.
        \end{equation*}
        Geometric averaging for $j = 0, 1, \dots, j_0$ then gives
        \begin{equation*}
            D_{i,1} \leq \left(1 +
            \frac{\Upsilon}{\beta} \right)^{j_0/2}{D_{i,
            j_0+1}}^{\frac{1}{j_0+1}} {D_{i,0}}^{\frac{j_0}{j_0 + 1}}.
        \end{equation*}
        From Lemma \ref{kfo}, we deduce that
        \begin{align}\label{dik}
            D_{i,j}^2&\le\frac{1}{\rho}\int\brcs{\partial_i^j\lrp{\t f_{k,G,i}-\t f_{k,G_0,i}}}^2\nonumber\\
            &\qquad\le_d
            \frac{\bs}{\rho(2\pi\us)^{d/2}}\brcs{\sqrt{2\pi\us}a^{2j-1}e^{-a^2\us}+a^{2j}h^2\lrp{\t f_{k,G,i},\t f_{k,G_0,i}}}
        \end{align}
        for $a\ge\sqrt{2j-1}$.
        Now using \eqref{dio} and the bound \eqref{dik} (with $j = j_0 + 1$),
        we obtain 
        \begin{equation}\label{bb2}
            \begin{split}
            D_{i,1}\le\left(1 +
            \frac{\Upsilon}{\beta} \right)^{\frac{j_0}{2}}&\bxs{\frac{\bs}{\rho(2\pi\us)^{d/2}}\brcs{\sqrt{2\pi\us}a^{2j_0+1}e^{-a^2\us}+a^{2(j_0+1)}h^2\lrp{\t f_{k,G,i},\t f_{k,G_0,i}}}}^{\frac{1/2}{j_0 + 1}} \\
            &\times\brcs{2\bs h^2\lrp{\t f_{k,G,i}, \t f_{k,G_0,i}}}^{\frac{j_0/2}{j_0 + 1}}
            \end{split}
        \end{equation}
        for every $a \geq \sqrt{2j_0 + 1}$.
    \end{enumerate}
    The bound for $D_{i,1}$ is the maximum of cases 1 and 2.
    $\beta$ will be chosen as $\beta = j_0 \Upsilon$ so that
    \[\eqref{bb1}\le3\sqrt{\bs}j_0\Upsilon h\lrp{\t f_{k,G,i}, \t f_{k,G_0,i}}\]
    and 
    \begin{align*}
        \eqref{bb2}\le\sqrt{e}\bxs{\frac{a^{2j_0+1}}{\rho/\bs}\lrp{\frac{\tau}{2\pi}}^{d/2}\brcs{ah^2\lrp{\t f_{k,G,i},\t f_{k,G_0,i}}+\sqrt{2\pi\us}e^{-a^2\us}}}^{\frac{1/2}{j_0 + 1}}
        \brcs{\bs h^2\lrp{\t f_{k,G,i}, \t f_{k,G_0,i}}}^{\frac{j_0/2}{j_0 + 1}}.
    \end{align*}
    Noting that
    \begin{equation*}
        \left( \frac{1}{(2\pi\us)^{d/2}\rho} \right)^{\frac{1/2}{j_0+1}} = \exp\brcs{\Upsilon^2/(2\bs)}^{\frac{1/2}{j_0+1}},
    \end{equation*}
    take $j_0:=\min\brcs{j\in\bb N\,;\,j+1\ge\Upsilon^2/(2\bs)}$.
    Then, the above term is bounded by $\sqrt{e}$.
    Finally, take $a$ as
    \begin{equation*}
        a := \max\brcs{\sqrt{2j_0 + 1}, \sqrt{\frac{-1}{\us}\log h^2\lrp{\t f_{k,G,i},\t f_{k,G_0,i}}},\sqrt{2\pi\us}}
    \end{equation*}
    so that $e^{-a^2\us} \leq h^2\lrp{\t f_{k,G,i},\t f_{k,G_0,i}}$ hence \eqref{bb2} can then be further bounded by
    \begin{align*}
        &ea^{\frac{j_0+1/2}{j_0+1}}\brcs{a\bs h^2\lrp{\t f_{k,G,i},\t f_{k,G_0,i}}+\bs\sqrt{2\pi\us}h^2\lrp{\t f_{k,G,i},\t f_{k,G_0,i}}}^{\frac{1/2}{j_0+1}}\brcs{\bs h^2\lrp{\t f_{k,G,i}, \t f_{k,G_0,i}}}^{\frac{j_0/2}{j_0 + 1}} \\ 
        &\le ea^{\frac{j_0+1/2}{j_0+1}}\lrp{a+\sqrt{2\pi\us}}^{\frac{1/2}{j_0+1}}{\sqrt{\bs}h\lrp{\t f_{k,G,i},\t f_{k,G_0,i}}}
        \le ea\sqrt{\bs}h\lrp{\t f_{k,G,i},\t f_{k,G_0,i}}.
    \end{align*}
    Because $a\ge1$ and $D_{i,1}$ is bounded by $\max$(\eqref{bb1},\eqref{bb2}), we get
    \begin{align*}
        D_{i,1} &\lesssim \sqrt{\bs}\max\lrp{j_0 \Upsilon,a}h\lrp{\t f_{k,G,i},\t f_{k,G_0,i}}.
    \end{align*}
    Our choice of $j_0$ also satisfies $j_0 \le \frac{\Upsilon^2}{2\bs}$ so $j_0\Upsilon\le\Upsilon^3/\bs=\frac{8}{\bs}\lrp{\bs\log\frac{1}{(2\pi\us)^d\rho^2}}^{3/2}$ gives 
    \begin{equation*}
      D_{i,1} \le_{\bs,\us} \max\left\{\lrp{\log\frac{1}{(2\pi\us)^d\rho^2}}^{\frac{3}{2}},\sqrt{\frac{1}{\us}\vrt{\log h^2\lrp{\t f_{k,G,i},\t f_{k,G_0,i}}}}\right\}h\lrp{\t f_{k,G,i}, \t f_{k,G_0,i}}.
    \end{equation*}
    Combining with
    \eqref{t31}, we deduce that 
    \begin{equation*}
        E(T_2^2) \le_{d,\bs,\us} \sum_{i=1}^d\max\left\{\log^3\frac{(2\pi\us)^{-d}}{\rho^2},\vrt{\log h^2\lrp{\t f_{k,G,i},\t f_{k,G_0,i}}}\right\}h^2\lrp{\t f_{k,G,i}, \t f_{k,G_0,i}}.
    \end{equation*}
    Combine the above finding with the bound for $T_1^2$ to finish the proof.
\end{proof}
The following lemma extends the tail moment bound of \citet[Lemma 15]{soloff2025multivariate} to our heteroskedastic kernels.
\begin{lemma}\label{lem-lip}
    For any $\lambda\in(0,1\wedge q]$ and $M>1$, 
    \[\E\left[\mathfrak{d}_S(\tx_k)^{\lambda} 1_{\mathfrak{d}_S(\tx_k)\ge M}\right] \le 2C_dM^{d+\lambda-2}\bs^{1-d/2}e^{-M^2/(8\bs)} + M^\lambda \left(\frac{2\mu_{k,q}(S,G_0)}{M}\right)^q\]
    where $\mu_{k,q}(S,G_0):= \E_{\b\tht\sim G_0}\left[\mathfrak{d}_S(\sqrt{n_k}\b\tht)^q\right]^{1/q}$.
\end{lemma}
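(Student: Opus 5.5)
The plan is to condition on $\b\tht_k$ and split the expectation according to whether the latent point $\sqrt{n_k}\b\tht_k$ is close to $S$ or far from it. Writing $\tx_k=\sqrt{n_k}\b\tht_k+\b\eps_k$ with $\b\eps_k\,|\,\b\tht_k\sim\c N(\b0_d,\Sigma_k(\b\tht_k))$, the triangle inequality gives
\[
\mathfrak{d}_S(\tx_k)\le\mathfrak{d}_S(\sqrt{n_k}\b\tht_k)+\nrm{\b\eps_k}{}.
\]
On the event $\{\mathfrak{d}_S(\tx_k)\ge M\}$, at least one of the two summands is at least $M/2$. I would therefore bound
\[
\mathfrak{d}_S(\tx_k)^\lambda 1_{\mathfrak{d}_S(\tx_k)\ge M}
\]
by the sum of two pieces. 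The first is supported on $\{\mathfrak{d}_S(\sqrt{n_k}\b\tht_k)\ge M/2\}$ and the second on $\{\nrm{\b\eps_k}{}\ge M/2\}$. Because $\lambda\le1$, subadditivity gives $(x+y)^\lambda\le x^\lambda+y^\lambda$, so each piece factors conveniently. This mirrors the proof of \citet[Lemma 15]{soloff2025multivariate}; the only change is that the noise covariance now depends on $\b\tht_k$.

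\emph{Prior-tail piece.} For $x\ge M/2$ I would use $x^\lambda\le x^q(M/2)^{\lambda-q}$, which is valid since $\lambda\le q$. Taking expectations under $G_0$ then gives
\[
\E\bigl[\mathfrak{d}_S(\sqrt{n_k}\b\tht)^\lambda 1\{\cdots\ge M/2\}\bigr]\le (M/2)^{\lambda}\bigl(2\mu_{k,q}(S,G_0)/M\bigr)^q\le M^\lambda\bigl(2\mu_{k,q}/M\bigr)^q.
\]
The remaining cross terms, such as $\nrm{\b\eps_k}{}^\lambda$ on the prior-far event, are handled the same way: bound $\nrm{\b\eps_k}{}^\lambda$ by $\bs^{\lambda/2}$ times a constant, and absorb $G_0(\mathfrak{d}_S\ge M/2)\le(2\mu_{k,q}/M)^q$ into the constant $2$.

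\emph{Noise-tail piece.} This is where Assumption \ref{ass:reg} is used. Conditional on $\b\tht_k$, the covariance satisfies $\Sigma_k(\b\tht_k)\preceq\bs\mb I_d$, so uniformly in $\b\tht_k$ the density of $\b\eps_k$ is dominated by $(\bs/\us)^{d/2}$ times the $\c N(\b0_d,\bs\mb I_d)$ density; alternatively one can use a direct polar-coordinate integral. The quantity to control is $\E[\nrm{\b\eps}{}^\lambda 1_{\nrm{\b\eps}{}\ge M/2}]$ together with the cross term $\mathfrak{d}_S(\sqrt{n_k}\b\tht)^\lambda$ on the noise-far event. Integrating in polar coordinates gives
\[
\int_{M/2}^\infty r^{d-1+\lambda}e^{-r^2/(2\bs)}\,dr\lesssim_d M^{d+\lambda-2}\,\bs\,e^{-M^2/(8\bs)}
\]
for $M>1$ large relative to $\sqrt{\bs}$. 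Including the normalising constant $\bs^{-d/2}$, this produces the stated $C_dM^{d+\lambda-2}\bs^{1-d/2}e^{-M^2/(8\bs)}$. The cross term on this event is bounded by Hölder's inequality, or more simply by conditioning: $\nrm{\b\eps}{}\ge M/2$ is the dominant requirement, and the prior factor contributes at most the prior-tail quantity already controlled. The constant $2$ absorbs these terms.

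The main obstacle is bookkeeping the $\b\tht$-dependence of the noise law. The key step is to verify that every noise bound holds uniformly in $\b\tht_k$, using only $\us\mb I_d\preceq\Sigma_k\preceq\bs\mb I_d$, so that conditioning and then integrating over $G_0$ is legitimate. A secondary issue is the exact exponent and constant in the Gaussian tail integral. I would simply track the incomplete-gamma bound $\Gamma(s,x)\le 2x^{s-1}e^{-x}$ for $x\ge 2(s-1)$ and let any leftover factor depending on $\us$ be absorbed into $C_d$, or into the $\le_{d,\us,\bs}$ notation as it is used downstream.
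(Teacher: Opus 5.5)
Your outline has the same skeleton as the paper's proof. You write $\tx_k=\sqrt{n_k}\b\tht_k+\b\eps_k$ with $\b\eps_k=\Sigma_k(\b\tht_k)^{1/2}\mb z_k$, set $a:=\mathfrak{d}_S(\sqrt{n_k}\b\tht_k)$ and $b:=\nrm{\b\eps_k}{}$, and use $\mathfrak{d}_S(\tx_k)\le a+b$. You then treat a prior-far part by Markov with the $q$-th moment and a noise-far part by a Gaussian tail. The problem is splitting via $(a+b)^\lambda\le a^\lambda+b^\lambda$, which creates cross terms. Your claim that the one on $\{a\ge M/2\}$ can be ``absorbed into the constant $2$'' is the step that fails. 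That $2$ multiplies the Gaussian term $C_dM^{d+\lambda-2}\bs^{1-d/2}e^{-M^2/(8\bs)}$, which does not involve $\mu_{k,q}$. Your bound for $\E[b^\lambda 1\{a\ge M/2\}]$ is $\sup_{\b\tht}\E[b^\lambda\mid\b\tht_k=\b\tht]\,(2\mu_{k,q}/M)^q$, so it must be charged to the prior term. That term's coefficient is exactly $M^\lambda$, leaving only slack $(1-2^{-\lambda})M^\lambda$ over your $(M/2)^\lambda$. The lemma is applied with $\lambda=(d+1)/(2\log K)\to0$ and $M$ of order $\sqrt{\log K}$. There the slack tends to $0$ while the cross-term prefactor tends to $1$, so you are off by roughly a factor of $2$. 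The loss is intrinsic to subadditivity: on $\{a\ge M/2>b\}$ it allows up to $2a^\lambda$ where $(2a)^\lambda$ suffices. Splitting $b$ further at $M/2$ only improves the coefficient to $2^{1-\lambda}M^\lambda$. The other cross term, $\E[a^\lambda 1\{b\ge M/2\}]$, involves the unrestricted moment $\E\,a^\lambda$. So it is not ``the prior-tail quantity already controlled'' without yet another split. Finally, dominating the law of $\b\eps_k$ by $(\bs/\us)^{d/2}$ times the $\c N(\b0_d,\bs\mb I_d)$ density introduces $\tau^{d/2}$, which cannot be put into $C_d$.

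The paper avoids all of this with one observation. Since $\mathfrak{d}_S(\tx_k)\le a+b\le 2\max\{a,b\}$ and $g(x):=x^\lambda 1\{x\ge M\}$ is nondecreasing, pointwise $g(\mathfrak{d}_S(\tx_k))\le\max\{g(2a),g(2b)\}\le g(2a)+g(2b)$, with no cross terms. Then $g(2a)\le M^{\lambda-q}(2a)^q$ gives exactly $M^\lambda(2\mu_{k,q}/M)^q$. For the noise, $g(2b)\le 2M^{\lambda-1}b\,1\{b\ge M/2\}$ together with the pointwise bound $b\le\sqrt{\bs}\nrm{\mb z_k}{}$ gives the Gaussian term. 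That last bound is uniform in $\b\tht_k$, so there is no $\b\tht$-dependence to track and $\us$ never enters. Your argument would still suffice for the downstream theorems, which use this lemma only up to $\le_{d,\us,\bs}$ constants. As written, though, it proves a weaker inequality than the one stated. (Your caveat that $M$ must be large relative to $\sqrt{\bs}$ for the tail integral is implicit in the paper's proof too.)
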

\begin{proof} 
    Note that $\tx_k=\sqrt{n_k}\b\tht_k+\Sigma_k(\b\tht_k)^{1/2}\mb z_k$ where $\mb z_k\sim{\c N}_d(0,\mb I_d)$. The proof adapts \citet[Lemma 15]{soloff2025multivariate}; the only modification is that $\Sigma_k(\b\tht_k)^{1/2}$ replaces the fixed scale matrix, which is absorbed by the uniform bound $\bs$.
    Since distance $\mathfrak{d}_S$ is $1$-Lipschitz,
    \begin{align*}
    \E&\left[\mathfrak{d}_S(\tx_k)^{\lambda} 1_{\mathfrak{d}_S(\tx_k)\ge M}\right]\\
    &\le \E\left[(2\|\Sigma_k(\b\tht_k)^{1/2}\mb z_k\|)^{\lambda} 1_{2\|\Sigma_k(\b\tht_k)^{1/2}\mb z_k\|\ge M}\right] + \E\left[(2\mathfrak{d}_S(\sqrt{n_k}\b\tht_k))^{\lambda} 1_{2\mathfrak{d}_S(\sqrt{n_k}\b\tht_k)\ge M}\right].
    \end{align*}
    The first term is bounded above by
    \begin{align*}
        2M^{\lambda-1}\E&\left[\lrnrm{\Sigma_k(\b\tht_k)^{1/2}\mb z_k}{} 1_{\|\Sigma_k(\b\tht_k)^{1/2}\mb z_k\|_\ge M/2}\right] \\
        &\le 2M^{\lambda-1}\sqrt{\bs}\E\bxs{\|\mb z_k\| 1_{\brcs{\|\mb z_k\| \ge \frac{M}{2\bs^{1/2}}}}}
        \le 2C_dM^{\lambda-1}\sqrt{\bs}\lrp{\frac{M}{\bs^{1/2}}}^{d-1}e^{-M^2/(8\bs)}.
    \end{align*}
    Since $\lambda < q$, applying H\"older yields
    \[
    \E\left[(2\mathfrak{d}_S(\sqrt{n_k}\b\tht_k))^{\lambda} 1_{2\mathfrak{d}_S(\sqrt{n_k}\b\tht_k)\ge M}\right]
    \le M^\lambda \left(\frac{2\mu_{k,q}(S,G_0)}{M}\right)^q 
    \]
    where $\mu_{k,q}(S,G_0)$ denotes the $q$-th moment of $\mathfrak{d}_S(\sqrt{n_k}\b\tht_k)$.
\end{proof}

\section{Control of Metric Entropy}
To bound the metric entropy of the induced marginal classes, we adopt the moment-matching techniques.
\begin{lemma} (Moment matching I) 
    Let $a>0$ and $G,H\in {\c P}(\R^d)$. Suppose $A\subset \R^d$ is such that \[\bb{B}_{a}(\bx_k)\subseteq A\subseteq\bb{B}_{ca}(\bx_k),\quad c\ge 1\] 
    for all $k\le K$ and
    \begin{align}\label{eq:mmatch}
        \int_A \frac{\prod_{l=1}^d\tht_l^{\alpha_l}}{\prod_{j=1}^d\sigma_{k,j}(\b\tht)^{2\beta_j+1}}\mathrm{d}(G-H)(\b\tht) = 0, \quad k=1,\dots,K
    \end{align}
    for all multi-indices $\alpha, \beta \in \bb N_{\ge0}^d$ such that $\nrm{\alpha}{1}\le2m+1, \nrm{\beta}{1} \le m$.
    Then,
    \begin{align*}
        \max_{1\le k\le K} \left|f_\GSk(\tx_k) - f_\HSk(\tx_k)\right|
        &\le \frac{2}{(2\pi\us)^{d/2}}\brcs{\left(\frac{ec^2\bn a^2}{2\us(m+1)}\right)^{m+1}  + e^{-\un a^2/(2\bs)}}
    \end{align*}
    and
    \[\max_{1\le k\le K}\lrnrm{\mb q_{k,G}(\tx_k)-\mb q_{k,H}(\tx_k)}{}
    \le \frac{2\sqrt{\bn}a}{(2\pi\us)^{d/2}} \brcs{c\left( \frac{ec^2\bn a^2}{2\us(m+1)} \right)^{m+1} + \sqrt{\frac{\bar{s}}{e\bn a^2}} + e^{-\un a^2/(2\bar{s})}}.\]
\end{lemma}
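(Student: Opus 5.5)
Fix $k$ and write $\mb x:=\tx_k$. The plan is to split the integral defining $f_{k,G}(\mb x)-f_{k,H}(\mb x)$ into the part over $A$ and the part over $A^c$, and to treat them differently. On $A$ we replace the exponential in the kernel by a truncated Taylor polynomial, which the moment conditions \eqref{eq:mmatch} annihilate. On $A^c$ we use the Gaussian tail, since the kernel is tiny there.

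Concretely, I read the ball condition as $\bb B_a(\bx_k)\subseteq A$ on the parameter scale, so that every $\b\tht\in A^c$ satisfies $\nrm{\sqrt{n_k}\b\tht-\mb x}{}\ge\sqrt{n_k}\,a$. By Assumption \ref{ass:reg} the kernel is then at most $(2\pi\us)^{-d/2}e^{-n_ka^2/(2\bs)}\le(2\pi\us)^{-d/2}e^{-\un a^2/(2\bs)}$. Integrating against $G+H$, which has total mass at most $2$, gives the second term of the first bound.

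On $A$, write the kernel as $(2\pi)^{-d/2}\prod_j\sigma_{k,j}(\b\tht)^{-1}e^{-Q/2}$, where $Q=\sum_j(x_j-\sqrt{n_k}\tht_j)^2\sigma_{k,j}(\b\tht)^{-2}$, and expand $e^{-Q/2}=\sum_{r\le m}(-Q/2)^r/r!+R_m$.
\begin{enumerate}
\item Expand $Q^r$ multinomially. This produces monomials $\prod_j(x_j-\sqrt{n_k}\tht_j)^{2\beta_j}\sigma_{k,j}^{-2\beta_j}$ with $\nrm{\beta}{1}=r\le m$.
\item Multiplied by the prefactor, each monomial becomes a polynomial in $\b\tht$ of degree $2r\le 2m+1$ times $\prod_j\sigma_{k,j}^{-(2\beta_j+1)}$. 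These are exactly the integrands killed by \eqref{eq:mmatch}, so the whole polynomial part integrates to zero against $G-H$ over $A$.
\item For the remainder, use $|R_m|\le (Q/2)^{m+1}/(m+1)!$. Since $A\subseteq\bb B_{ca}$, on $A$ we have $Q\le c^2 n_k a^2/\us\le c^2\bn a^2/\us$.
\item Stirling's bound $(m+1)!\ge((m+1)/e)^{m+1}$ then gives $|R_m|\le\bigl(ec^2\bn a^2/(2\us(m+1))\bigr)^{m+1}$. Multiplying by the prefactor bound $(2\pi\us)^{-d/2}$ and the mass $2$ yields the first inequality.
\end{enumerate}

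For $\mb q$, I use the representation $\mb q_{k,G}(\mb x)=\int\vrp^{(k)}(\mb x;\b\tht)(\sqrt{n_k}\b\tht-\mb x)\,G({\r d}\b\tht)$ and split over $A$ and $A^c$ in the same way.
\begin{itemize}
\item On $A$, the extra factor $\sqrt{n_k}\b\tht-\mb x$ raises the polynomial degree by one, to $2r+1\le 2m+1$. This is exactly why $\nrm{\alpha}{1}\le 2m+1$ is allowed, so the truncated part again vanishes.
\item The remainder on $A$ picks up a factor $\nrm{\sqrt{n_k}\b\tht-\mb x}{}\le c\sqrt{\bn}\,a$. This gives the first term of the second bound.
\item On $A^c$, I bound $\nrm{\mb z}{}e^{-\nrm{\mb z}{}^2/(2\bs)}$ over $\nrm{\mb z}{}\ge\sqrt{n_k}a$, where $\mb z=\sqrt{n_k}\b\tht-\mb x$. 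Splitting the radial function at its maximiser should produce the $\sqrt{\bs/(e\bn a^2)}$ term together with the $e^{-\un a^2/(2\bs)}$ term, all scaled by $\sqrt{\bn}a$.
\end{itemize}

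The main obstacle is the bookkeeping of scales. $A$ is defined around $\bx_k$ on the parameter scale while $\mb x=\tx_k$ lives on the $\sqrt{n_k}$ scale, so I must check that the powers of $\sqrt{n_k}$ and the sandwich $\us\le\sigma^2\le\bs$ line up with the stated constants. Matching the exact form of the $\sqrt{\bs/(e\bn a^2)}$ term may require a loose AM-GM step rather than an exact computation.
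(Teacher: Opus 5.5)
Your proposal is correct and follows the paper's proof essentially step for step. You split over $A$ and $A^c$, annihilate the degree-$m$ Taylor polynomial of $e^{-Q/2}$ (times the extra linear factor $\sqrt{n_k}\b\tht-\tx_k$ for $\mb q$) with the moment conditions, bound the Lagrange remainder using $Q\le c^2\bar{n}a^2/\underline{s}$ on $A$ and $(m+1)!\ge((m+1)/e)^{m+1}$, and control $A^c$ by the Gaussian tail; your multinomial check that the expanded integrands are exactly of the form $\b\tht^{\alpha}\prod_j\sigma_{k,j}^{-(2\beta_j+1)}$ is more explicit than the paper's. No AM-GM step is needed at the end, since splitting $u\mapsto ue^{-u^2/(2\bar{s})}$ at its maximiser $u=\sqrt{\bar{s}}$ gives $\sup_{u\ge\sqrt{n_k}a}ue^{-u^2/(2\bar{s})}\le\sqrt{\bar{s}/e}+\sqrt{\bar{n}}\,a\,e^{-\underline{n}a^2/(2\bar{s})}$, and $\sqrt{\bar{s}/e}=\sqrt{\bar{n}}\,a\sqrt{\bar{s}/(e\bar{n}a^2)}$ exactly.
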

\begin{proof} 
    For each $k\in [K]$, we have
    \begin{align*}
        f_\GSk(\tx_k) - f_\HSk(\tx_k) 
        = \int_{A\cup A^c}\vrp^{(k)}(\tx_k;\b\tht) {\r d}(G - H)(\b\tht).
    \end{align*}
    When $\b\tht\in A^c$, $\|\bx_k-\b\tht\|\ge a$ hence $\nrm{\tx_k-\sqrt{n_k}\b\tht_k}{}\ge\sqrt{n_k}a$, so 
    \[\int_{A^c}\vrp^{(k)}(\tx_k;\b\tht){\r d}(G-H)(\b\tht) \le \frac{2e^{-n_k a^2/(2\bs)}}{(2\pi \us)^{d/2}}.\]
    Next, write $\mb z_{k,\b\tht}:=\Sigma_k(\b\tht)^{-1/2}\lrp{\bx-\b\tht}$.
    The remainder term of the polynomial expansion of
    \[|2\pi\Sigma_k(\b\tht)|^{1/2}\vrp^{(k)}(\tx_k;\b\tht)=\exp\lrp{-n_k\|\mb z_{k,\b\tht}\|^2/2}\]
    with degree $m$, $P(\mb z_{k,\b\tht})=\sum_{j\le m}\frac{(-n_k\|\mb z_{k,\b\tht}\|^2)^j}{2^jj!}$, satisfies \citep[Lemma D.2]{saha2020nonparametric}:
    \[|R(\mb z_{k,\b\tht})|\le\frac{(n_k\|\mb z_{k,\b\tht}\|^2/2)^{m+1}}{(m+1)!}\le\brcs{\frac{en_k\|\mb z_{k,\b\tht}\|^2}{2(m+1)}}^{m+1}\le \brcs{\frac{en_k\|\bx-\b\tht\|^2}{2\us(m+1)}}^{m+1}.\]
    Now,
    \[\vrp^{(k)}(\tx_k;\b\tht)=|2\pi\Sigma_k(\b\tht)|^{-1/2}\brcs{P(\mb z_{k,\b\tht})+R(\mb z_{k,\b\tht})}\]
    and since $\int_A|\Sigma_k(\b\tht)|^{-1/2}P\lrp{\mb z_{k,\b\tht}} {\r d}(G-H)(\b\tht)=\int_A|\Sigma_k(\b\tht)|^{-1/2}\tht_iP\lrp{\mb z_{k,\b\tht}} {\r d}(G-H)(\b\tht)=0$ for $i=1,...,d$,
    \begin{align*}
        \left|\int_A\vrp^{(k)}(\tx_k;\b\tht) {\r d}(G-H)(\b\tht)\right|
        &\le \left|\int_A|2\pi\Sigma_k(\b\tht)|^{-1/2}R(\mb z_{k,\b\tht}) {\r d}(G-H)(\b\tht)\right| \\
        &\le \frac{2}{(2\pi\us)^{d/2}} \brcs{\frac{en_k c^2a^2}{2\us(m+1)}}^{m+1}.
    \end{align*}
    
    Next, recall that $\mb q_{k,G}(\tx_k)=\int(\sqrt{n_k}\b\tht-\tx_k)\vrp^{(k)}(\tx_k;\b\tht){\r d}G(\b\tht)$. 
    We have
    \[\int_{A^c}\nrm{\sqrt{n_k}\b\tht-\tx_k}{}\vrp^{(k)}(\tx_k;\b\tht)|{\r d}(G-H)(\b\tht)| \le \sup_{u\ge\sqrt{n_k}a}\frac{2ue^{-{u^2}/\lrp{2\bs}}}{(2\pi\us)^{d/2}}.\]
    Case-by-case analysis of the last term depends on whether the starting point $\sqrt{n_k}a$ lies before or after the critical point $\sqrt{\bs}$. In short, we have
    \[\sup_{u\ge\sqrt{n_k}a}\frac{ue^{-{u^2}/\lrp{2\bs}}}{(2\pi\us)^{d/2}}=\frac{\sqrt{\bs/e}}{(2\pi\us)^{d/2}}1_{\brcs{a\le\sqrt{\bs/n_k}}}+\frac{\sqrt{n_k}ae^{-n_k a^2/(2\bs)}}{(2\pi\us)^{d/2}}1_{\brcs{a>\sqrt{\bs/n_k}}}.\]
    Finally,
    \begin{align*}
        &\lrnrm{\int_A|2\pi\Sigma_k(\b\tht)|^{-1/2}(\tx_k-\sqrt{n_k}\b\tht)\vrp^{(k)}(\tx_k;\b\tht){\r d}(G-H)(\b\tht)}{}\\
        &\le\lrnrm{\int_A|2\pi\Sigma_k(\b\tht)|^{-1/2}(\tx_k-\sqrt{n_k}\b\tht)R(\mb z_{k,\b\tht}) {\r d}(G-H)(\b\tht)}{} \\
        &\le\frac{2\sqrt{n_k}ca}{(2\pi\us)^{d/2}}\brcs{\frac{en_k c^2a^2}{2\us(m+1)}}^{m+1}.
    \end{align*}
\end{proof}
\begin{lemma}\label{lem-moment-match-ii} 
    (Moment matching II) For any $G\in{\c P}(\R^d)$, there exists a discrete distribution $H$ supported on $S^{a}$ with at most
    \[
    l:= (\lfloor27\bn a^2/\us\rfloor+2)^d N(a, S^{a}) + 1,~~a>0
    \]
    atoms such that 
    \begin{align*}
        \|f_{\blt,G} - f_{\blt,H}\|_{\infty, S^a}
        \le \frac{4e^{-\un a^2/(2\bs)}}{(2\pi\us)^{d/2}}&,\quad
        \nrm{\mb q_{\blt,G}-\mb q_{\blt,H}}{\infty,S^a}\le\frac{6a}{(2\pi\us)^{d/2}}\brcs{\sqrt{\frac{\bs}{e}}+2\sqrt{\bn}e^{-\un a^2/(2\bs)}}.
    \end{align*}
\end{lemma}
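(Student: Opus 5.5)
We prove Lemma \ref{lem-moment-match-ii} with the standard partition-and-Carathéodory construction of \citet[Lemma D.3]{saha2020nonparametric} and \citet{soloff2025multivariate}, using Moment matching I as the local approximation engine.

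\emph{Construction of $H$.} Let $\{\mb c_1,\dots,\mb c_N\}$ be a minimal $a$-net of $S^a$, with $N=N(a,S^a)$. Form the Voronoi-type partition $E_1,\dots,E_N$ of $S^{2a}$, where $E_j$ consists of the points closest to $\mb c_j$ (ties broken arbitrarily). Each $E_j$ lies in $\bb B_{2a}(\mb c_j)$. Put the leftover cell $E_0:=\R^d\setminus\bigcup_jE_j$.

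On each $E_j$ with $G(E_j)>0$, consider the conditional law $G_j:=G(\cdot\cap E_j)/G(E_j)$ and the finite family of test functions
\[
\b\tht\mapsto\frac{\prod_l\tht_l^{\alpha_l}}{\prod_i\sigma_{k,i}(\b\tht)^{2\beta_i+1}},\qquad \nrm{\alpha}{1}\le 2m+1,\ \nrm{\beta}{1}\le m.
\]
Take $m\asymp\bn a^2/\us$, namely $m+1=\lfloor 27\bn a^2/\us\rfloor+1$. By Carathéodory's theorem there is a discrete $H_j$ on $E_j$ that matches all of these integrals. We then set $H:=\sum_jG(E_j)H_j+G(E_0)\dlt_{\mb c_1}$, which adds one extra atom and accounts for the ``$+1$'' in $l$.

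\emph{Counting atoms.} The atom count of each $H_j$ must be at most $(\lfloor27\bn a^2/\us\rfloor+2)^d$. Two factors complicate this:
\begin{itemize}
\item The test family involves monomials in $d$ coordinates together with $\sigma$-weights.
\item The $k$-dependence of the $\sigma_{k,i}$ multiplies the number of constraints.
\end{itemize}
I expect this counting to be the main obstacle. What I would try first is to exploit the diagonal structure. For the polynomial-in-$\b\tht$ part, only the expansion of $\exp(-n_k\|\mb z\|^2/2)=\prod_i\exp(-n_kz_i^2/2)$ matters, and it factorizes coordinatewise. It is therefore enough to match, in each coordinate, the one-dimensional moments of degree at most $2m+1$ after absorbing the $\sigma$ factors into $|\Sigma_k|^{-1/2}$. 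This gives the tensor count $(2m+2)^d$, which is of the order of the stated $(\lfloor27\bn a^2/\us\rfloor+2)^d$ once constants are adjusted. If the $k$-dependence cannot be absorbed, I would state the count with a harmless $K$-factor inside the logarithmic entropy bound.

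\emph{Approximation bounds.} Fix $\tx\in S^a$ (in the rescaled coordinates used by $\nrm{\cdot}{\infty,S^a}$) and the cell $j$ nearest to it. Take $A:=\bigcup\{E_i:E_i\cap\bb B_{a}(\tx)\neq\emptyset\}$. Then $\bb B_a(\tx)\subseteq A\subseteq\bb B_{3a}(\tx)$, so Moment matching I applies with $c=3$. Because the moments match on every cell contained in $A$, they also match on $A$ itself. The polynomial remainder becomes
\[
\left(\frac{9e\bn a^2}{2\us(m+1)}\right)^{m+1},
\]
and the choice $m+1\ge 27\bn a^2/\us$ makes the base at most $e/6<1/2$. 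Thus the remainder is at most $2^{-(m+1)}$, which is dominated by $e^{-\un a^2/(2\bs)}$ up to the stated constants, since $m+1\gtrsim\un a^2/\bs$ holds by the bounded condition number.

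Outside $A$, both $G$ and $H$ contribute at most $2e^{-\un a^2/(2\bs)}/(2\pi\us)^{d/2}$, including the mass moved from $E_0$ to $\mb c_1$. Adding the two contributions gives the factor $4$ in the density bound. For $\mb q$, the same split applies, now using the $\mb q$-part of Moment matching I with the $\sqrt{\bs/e}$ tail term. Since $c=3$ and the factor-2 split are both absorbed, this produces the constant $6a$ in the score bound.
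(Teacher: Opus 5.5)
Your route is the paper's route: build cells from an $a$-net, apply Carath\'eodory on each cell, then use Moment matching I with a fixed $c$ and $m\asymp\bn a^2/\us$. You also correctly identify the atom count as the crux. However, your reduction for that count fails, for three reasons.
First, the factorisation $\exp(-n_k\|\mb z\|^2/2)=\prod_i\exp(-n_kz_i^2/2)$ does not separate variables, because each $z_i$ is divided by $\sigma_{k,i}(\b\tht)$, a function of the whole vector $\b\tht$.
Second, after expanding, every term carries $\prod_j\sigma_{k,j}(\b\tht)^{-(2\beta_j+1)}$ inside $\int\cdot\,\mathrm{d}(G-H)(\b\tht)$. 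There is nothing to absorb these factors into: $|\Sigma_k(\b\tht)|^{-1/2}$ is itself $\b\tht$-dependent and sits under the same integral. So matching plain monomials $\int_{E_j}\b\tht^\alpha\,\mathrm{d}(G-H)=0$ implies nothing about \eqref{eq:mmatch}.
Third, Carath\'eodory applied to the family that Moment matching I actually uses only gives a per-cell bound of $K\binom{2m+1+d}{d}\binom{m+d}{d}+1$ atoms. That is of order $m^2$ rather than $2m+2$ already for $K=d=1$, and it grows linearly in $K$.
Your fallback is not harmless. In the proof of Lemma~\ref{lem-entropy}, $l$ multiplies the log-covering number, so a factor $K$ in $l$ gives $\log N\gtrsim K$ and wipes out the rate $\eps_M^2$. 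It also does not give the stated $l$.
The paper's proof does not close this gap either. It simply posits an $H$ satisfying \eqref{eq:mmatch} on every cell and keeps the count of the homoskedastic argument of \citet[Lemma D.3]{saha2020nonparametric}. So along this route the stated $l$ is not established.

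Independently of the count, your specific construction breaks constants that the paper's construction does deliver.

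(i) You partition $S^{2a}$ into Voronoi cells of an $a$-net of $S^a$, presumably so that $\bb B_a(\tx)\subseteq A$ for every $\tx\in S^a$; the paper's partition of $\mathring S^a$ only provides this for $\tx\in S$. But your cells have radius up to $2a$, so you only get $A\subseteq\bb B_{5a}(\tx)$. This is sharp: take $S$ to be two points at distance just under $4a$, and $\tx\in\partial S^a$ facing the other point. So $c=3$ is false, and with $c=5$ the base $\frac{25e\bn a^2}{2\us(m+1)}$ equals $25e/54>1$ at your $m$. The paper gets $c=3$ because its cells $(B_i\setminus\bigcup_{j<i}B_j)\cap\mathring S^a$ lie inside the radius-$a$ covering balls. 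That choice also keeps $H$ on $S^a$, as the statement requires, whereas your atoms live on $S^{2a}$.

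(ii) The leftover atom $G(E_0)\dlt_{\mb c_1}$ sits inside the matched cell $E_1$, so $H$ no longer matches $G$ on $E_1$. For evaluation points near $\mb c_1$, the matched part then carries an unmatched term $G(E_0)\vrp^{(k)}(\tx;\mb c_1)$, which is not small for general $G$. The extra mass must go to a point outside all matched cells and at distance at least $a$ from every evaluation point (in your setting, a point of $E_0$), so that it is charged to the Gaussian tail.

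(iii) With $m+1=\lfloor27\bn a^2/\us\rfloor+1$, the per-coordinate count is $2\lfloor27\bn a^2/\us\rfloor+2$, twice the stated one. The paper instead takes $m=\lfloor13.5\bn a^2/\us\rfloor$ and accepts the base $e/3$, using $(e/3)^{m+1}\le e^{-(m+1)/12}\le e^{-\bn a^2/(2\us)}\le e^{-\un a^2/(2\bs)}$.

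Finally, since you take $\tx$ in rescaled coordinates, the ball in $\b\tht$-space that Moment matching I needs is centred at $\tx/\sqrt{n_k}$, not at $\tx$. Only then does $\b\tht\notin A$ give $\|\tx-\sqrt{n_k}\b\tht\|\ge\sqrt{n_k}a$.
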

\begin{proof} 
    The covering argument follows \citet[Lemma D.3]{saha2020nonparametric}; the moment-matching condition \eqref{eq:mmatch} now involves the variance functions $\sigma_{k,j}(\b\tht)$, and the remainder bounds must absorb the heteroskedastic kernel via the uniform eigenvalue bounds.
    Let $\mathring{S}^{a} := \cup_{\bx \in S} \bb{B}^o_{a}(\bx)$ be the open $a$-neighborhood of $S$, where $\bb{B}^o_a(\bx)$ denotes an open ball of radius $a$ centered at $\bx$. Let $L := N(a, \mathring{S}^{a})$ be its covering number, which satisfies $L \le N(a, S^{a})$. Given a collection of closed balls $B_1, \dots, B_L$ of radius $a$ that cover $\mathring{S}^{a}$, we can construct a disjoint partition of $\mathring{S}^{a}$ by defining $E_1 := B_1 \cap \mathring{S}^{a}$ and $E_i := \left(B_i \setminus \cup_{j < i} B_j\right) \cap \mathring{S}^{a}$ for $i = 2, \dots, L$. By construction, $\cup_{i=1}^L E_i = \mathring{S}^{a}$.
    
    Suppose that a probability measure $H$ is chosen so that $G$ and $H$ satisfy \eqref{eq:mmatch} on each set $E_i$ for $i = 1, \dots, L$. So, after setting $c=3$ and $m=\lfloor13.5\bn a^2/\us\rfloor$, we obtain
    \begin{align*}
        \max_{1\le k\le K} \left|f_\GSk(\tx_k) - f_\HSk(\tx_k)\right|
        &\le \frac{2}{(2\pi\us)^{d/2}}\brcs{\left(\frac{9e\bn a^2}{2\us(m+1)}\right)^{m+1}  + e^{-\un a^2/(2\bs)}}
    \end{align*}
    and
    \[\max_{1\le k\le K}\lrnrm{\mb q_{k,G}(\tx_k)-\mb q_{k,H}(\tx_k)}{}
    \le \frac{6\sqrt{\bn}a}{(2\pi\us)^{d/2}} \brcs{\left( \frac{9e\bn a^2}{2\us(m+1)} \right)^{m+1} + \sqrt{\frac{\bar{s}}{e\bn a^2}} + e^{-\un a^2/(2\bar{s})}}.\]
    Since $m+1\ge13.5\bn a^2/\us$,
    \[\brcs{\frac{9e\bn a^2}{2\us(m+1)}}^{m+1}\le\lrp{\frac{e}{3}}^{m+1}\le e^{-(m+1)/12}\le\exp\lrp{-\frac{13.5\bn a^2}{12\us}}\le e^{-\bn a^2/(2\us)}.\]
    Then, the result for the first inequality follows.
    This also gives
    \begin{align*}
        \max_{1\le k\le K}\lrnrm{\mb q_{k,G}(\tx_k)-\mb q_{k,H}(\tx_k)}{}
        \le \frac{6\sqrt{\bn}a}{(2\pi\us)^{d/2}}\brcs{e^{-\bn a^2/(2\us)}+\sqrt{\frac{\bs}{e\bn a^2}}+e^{-\un a^2/(2\bs)}} \\
        \le \frac{6a}{(2\pi\us)^{d/2}}\brcs{\sqrt{\frac{\bs}{ea^2}}+2\sqrt{\bn}e^{-\un a^2/(2\bs)}}.
    \end{align*}
\end{proof}
Finally, we control the covering number of $S^a$ under a pseudo metric.
\begin{lemma}\label{lem-entropy} 
    Assume that $\bn/\un$ tends to some positive constant as $\un\to\infty$.
    There exists a positive constant $\un_0>0$ such that for any $\un>\un_0$ and every compact set $S\subset\R^d$, $M > 0$ and $\eta\le\us^{-d/2}\wedge e^{-1},$ we have
    \begin{align}
        \log N(\eta,\bb{F},\|\cdot\|_{\infty,S^a})\vee N(\eta,\bb{F}_q,\|\cdot\|_{\infty,S^a})
        \le_{d} N(a,S^a)\log^{d+1}\frac{c_{d,\us,\bs,\bs'}}{\eta}
    \end{align}
    for $a=\sqrt{\frac{2\bs}{\un}\log\frac{1}{\eta}}$.
\end{lemma}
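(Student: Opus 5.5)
The approach is the standard moment-matching entropy argument of \citet[Theorem D.4]{saha2020nonparametric} and \citet{soloff2025multivariate}, built on Lemma \ref{lem-moment-match-ii}. It has two stages: first reduce an arbitrary $G$ to a discrete measure with few atoms supported on $S^a$, and then discretize the atom locations and weights of that measure.

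\textbf{Step 1 (reduction to discrete priors).} Fix $G\in{\c P}(\R^d)$ and take $a=\sqrt{\frac{2\bs}{\un}\log\frac1\eta}$. This gives $e^{-\un a^2/(2\bs)}=\eta$, and Lemma \ref{lem-moment-match-ii} then produces a discrete $H$ supported on $S^a$ with at most
\[
l=(\lfloor 27\bn a^2/\us\rfloor+2)^dN(a,S^a)+1
\]
atoms such that $\|f_{\blt,G}-f_{\blt,H}\|_{\infty,S^a}\lesssim_{d,\us}\eta$. Because $\bn/\un$ converges, $\bn a^2\asymp\log(1/\eta)$, so $l\le_{d,\us,\bs}N(a,S^a)\log^{d}(1/\eta)$.

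For the class $\bb F_q$, the bound in Lemma \ref{lem-moment-match-ii} contains the non-vanishing term $a\sqrt{\bs/e}$. I would remove it with a more careful truncation: split at radius $\sqrt{n_k}a'$, where $a'$ is a constant multiple of $a$ large enough that $\sup_{u\ge\sqrt{n_k}a'}ue^{-u^2/(2\bs)}\lesssim\eta$. This costs only a factor $\sqrt{\log(1/\eta)}$, which can be absorbed into the $\eta$-scale by working with $\eta/\log(1/\eta)$ in place of $\eta$. This adjustment changes only the constant $c_{d,\us,\bs,\bs'}$ inside the logarithm.

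\textbf{Step 2 (discretizing atoms and weights).} For $H=\sum_{j\le l}w_j\dlt_{\mb a_j}$, first replace each atom $\mb a_j\in S^a$ by the nearest point of an $\eps$-grid of $S^a$. Lemma \ref{lem:mvt} bounds the resulting change in each $f_{k,H}$ by $\eps(\sqrt{\bn}+2d^{3/2}\bs')/\us^{(d+1)/2}$. The analogous perturbation bound for $\mb q_{k,H}$ follows by differentiating $(\sqrt{n_k}\b\tht-\mb x)\vrp^{(k)}$ in $\b\tht$; this adds a factor of order $\sqrt{\bn}$ times a polynomial in $\|\mb z\|$, which is absorbed by the Gaussian factor. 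Choosing $\eps=\eta/\mathrm{poly}(\bn)$, the log of the grid cardinality needed is only $\log(1/\eta)$, because $\bn\lesssim\log(1/\eta)/a^2$. Next, replace $\mb w$ by a point of an $\eta$-net of the simplex $\Dlt_{l-1}$ in $\ell_1$. Since $\vrp^{(k)}\le(2\pi\us)^{-d/2}$ and $\|\mb q\|$ is bounded similarly, this net controls the error, and its cardinality is at most $(3/\eta)^l$.

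\textbf{Step 3 (counting).} Summing over the $l$ atoms, the total log-cardinality is at most
\[
l\bigl(\log(3/\eta)+d\log(\text{grid size})\bigr)\le_d N(a,S^a)\log^{d}(1/\eta)\cdot\log(c/\eta),
\]
which gives the claimed bound $N(a,S^a)\log^{d+1}(c_{d,\us,\bs,\bs'}/\eta)$. The triangle inequality then yields a $C\eta$-net, and rescaling $\eta$ changes only the constant $c$. The condition $\un>\un_0$ ensures that $a\le1$, so that the grid and the moment-matching radius are compatible.

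\textbf{Main obstacle.} I expect the hardest part to be the $\bb F_q$ part of Step 1, namely the residual $\sqrt{\bs/e}$ term from the moment matching. My first attempt will be the enlarged truncation radius described above. If that fails, I will fall back on bounding $\mb q$-differences through Lemma \ref{kfo}-type derivative control of the $\t f_{k,\cdot,i}$. The place where the heteroskedastic structure really enters is the atom-grid step, through the $\bs'$-dependence in Lemma \ref{lem:mvt}.
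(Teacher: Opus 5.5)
Your proposal follows the paper's proof step for step: moment matching via Lemma~\ref{lem-moment-match-ii}, moving atoms onto a $\zeta$-net of $S^a$ with Lemma~\ref{lem:mvt}, an $\ell_1$-net of the simplex, and then counting. The one step you should make explicit is the count, where the paper uses $\binom{N(\zeta,S^a)}{l}\le\bigl(eN(\zeta,S^a)/l\bigr)^l$ with $N(\zeta,S^a)/l\le(1+2a/\zeta)^d$ (because $l\ge N(a,S^a)$); multiplying $l$ by the logarithm of a single global grid of $S^a$, as your Step~3 reads, leaves an extra factor $\log N(a,S^a)$.

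Your worry about the $\bb F_q$ residual is well founded: the paper's aggregated $\bb F_q$ bound still carries that non-vanishing term and nonetheless asserts the rate $\t\eta$. The repair is simpler than enlarging the radius. Since $\sqrt{n_k}\,a\ge\sqrt{2\bs\log(1/\eta)}\ge\sqrt{2\bs}$ when $\eta\le e^{-1}$, the map $u\mapsto ue^{-u^2/(2\bs)}$ is already decreasing past the truncation point. The tail term is therefore at most $\sqrt{2\bs\log(1/\eta)}\,\eta$, so only your $\sqrt{\log(1/\eta)}$ rescaling of $\eta$ is needed.
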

\begin{proof}
Let $G\in {\c P}(\R^d)$, and obtain a discrete distribution $H$ supported on $S^a$ with at most $l$ as in Lemma \ref{lem-moment-match-ii}.
\paragraph{First inequality.}
We write $\c{C}$ to denote a minimal $\zeta$-net of $S^a$, and let $H'$ be supported on the elements of $\c{C}$ that are closest to atoms of $H$. Writing $H= \sum_{j=1}^l w_j \dlt_{\mb a_j}$ and $H'= \sum_{j=1}^l w_j \dlt_{\mb b_j}$, we have
\begin{align*}
    \|f_{\blt,H} - f_{\blt,H'}\|_{\infty, S^a}
    &= \max_{k\in[K]}\sup_{\mb x\in S^a} |f_{k,H}(\mb x) - f_{k,H'}(\mb x)|  \\
    &\le \max_{k\in[K]}\sup_{\mb x\in S^a} \sum_{j}w_j\vrt{\vrp^{(k)}(\mb x;\mb a_j) - \vrp^{(k)}(\mb x;\mb b_j)}\\
    &\le_d \frac{\sqrt{\bn}+2d^{3/2}\bs'}{\us^{(d+1)/2}}\zeta.
\end{align*}
In the third line, we used Lemma \ref{lem:mvt} and $\nrm{\mb a_j-\mb b_j}{}\le\zeta$.
Let $\c{D}$ denote a minimal $\xi$-net of $\Dlt^{l-1}=\{\mb w \in \R_+^l\,;\,\sum_{j=1}^l w_j=1\}$ in the $\ell_1$ norm, and approximate the weights $w$ by their closest element $v\in \c{D}$. Writing $H'' = \sum_{j=1}^l v_j\dlt_{\mb b_j}$,
\begin{align*}
    \|f_{\blt,H'} - f_{\blt,H''}\|_{\infty, S^a}
    &= \max_{k\in[K]}\sup_{\mb x\in S^a} |f_{k,H'}(\mb x) - f_{k,H''}(\mb x)|  \\
    &\le \max_{k\in[K]}\sup_{\mb x\in S^a} \sum_{j}|w_j - v_j| |\vrp^{(k)}(\mb x;\mb b_j)| \le_d \us^{-d/2} \xi.
\end{align*}
Applying the triangle inequality to the past three displays,
\begin{align*}
    \|f_{\blt,G} - f_{\blt,H''}\|_{\infty, S^a}
    \le_{d,\us,\bs'} e^{-\un a^2/(2\bs)}+(\sqrt{\bn}+1)\zeta+\xi.
\end{align*}
Letting $\xi := \eta$, $\zeta := \frac{\xi}{\sqrt{\bn}+1}$, and selecting $a=\sqrt{\frac{2\bs}{\un}\log\frac{1}{\xi}}$ yields the convergence rate $\|f_{\blt,G} - f_{\blt,H''}\|_{\infty, S^a}\le C_{d,\us,\bs'}\eta$. To take $a$ as such, we let $\eta<\frac{1}{e}$ so that $\log(1/\xi)>1$.

The number of possible $H''$ is 
\[
|\c{C}|\cdot|\c{D}| = N(\xi, \Dlt^{l-1}) \binom{N(\zeta, S^a)}{l}
\le \bxs{\left(1+\frac{2}{\xi}\right)\frac{e N(\zeta, S^a)}{l}}^l
\]
by Stirling's lower bound and standard arguments.
From the previous Lemma,
\[l=\lrp{\left\lfloor\frac{27\bn a^2}{\us}\right\rfloor+2}^d N(a, S^a) + 1>\lrp{\frac{54\tau\bn}{\un}+1}^d N(a, S^a) + 1\ge_d\lrp{\frac{\tau\bn}{\un}}^dN(a, S^a)\]
with $a,\xi,\eta$ selected as above, so using the arguments in \cite{saha2020nonparametric},
\begin{align*}
    \frac{N(\zeta, S^a)}{l}
    \le \brcs{\frac{\un}{\tau\bn}\left(1+\frac{a}{\zeta}\right)}^d
    \le \lrp{\frac{\un}{\bn}+2\sqrt{\frac{\bs\un}{\bn\eta^3}}}^d\le\lrp{1+2\sqrt{\frac{\bs}{\eta^3}}}^d.
\end{align*}
So, as $1/\eta>e$,
\begin{align*}
    \log N(C_{d,\us,\bs'}\eta,\bb{F},&\|\cdot\|_{\infty,S})=\log\bxs{\left(1+\frac{2}{\xi}\right)\frac{e N(\zeta, S^a)}{l}}^l\le l\brcs{\log\frac{3}{C_{d,\us,\bs'}\eta}\frac{eN(\zeta, S^a)}{l}} \\
    &\le_dN(a,S^a)\brcs{\frac{\bn}{\un}\log(1/\eta)}^d\log\brcs{\frac{3}{C_{d,\us,\bs'}\eta}\lrp{1+2\sqrt{\frac{1}{C_{d,\us,\bs'}\eta^3}}}^d} \\
    &\le N(a,S^a)\lrp{\frac{\bn}{\un}}^d\log^d\frac{1}{\eta}\log\frac{c_{d,\us,\bs'}}{\eta^{1+3d/2}} \\
    &\le_dN(a,S^a)\log^{d+1}\lrp{\frac{c_{d,\us,\bs'}}{\eta}}.
\end{align*}

\paragraph{Second inequality.}
We use the same covering argument to prove the second inequality as well.
For $\c{C}$, a minimal $\zeta$-net of $S^a$ and $H'$, the approximate of $H$ with closest element from $\c{C}$, $H= \sum_{j=1}^l w_j \dlt_{\mb a_j}$ and $H'= \sum_{j=1}^l w_j \dlt_{\mb b_j}$, we have
\begin{align*}
    \|&\mb q_{\blt,H} - \mb q_{\blt,H'}\|_{\infty, S^a}
    \le\max_{k\in[K]}\sup_{\mb x\in S^a} \sum_{j}w_j\nrm{(\sqrt{n_k}\mb a_j-\mb x)\vrp^{(k)}(\mb x;\mb a_j) - (\sqrt{n_k}\mb b_j-\mb x)\vrp^{(k)}(\mb x;\mb b_j)}{}\\
    &\le\max_{k\in[K]}\sup_{\mb x\in S^a,\mb u_j\in I(\mb a_j,\mb b_j)} \sum_{j}w_j\nrm{\sqrt{n_k}(\mb a_j-\mb b_j)\vrp^{(k)}(\mb x;\mb a_j) + (\sqrt{n_k}\mb b_j-\mb x)\langle\mb b_j-\mb a_j,\nabla_{\mb u_j}\vrp^{(k)}(\mb x;\mb u_j)\rangle}{} \\
    &\le\zeta\max_{k\in[K]}\brcs{\frac{\sqrt{n_k}}{\us^{d/2}}+\sup_{\mb x\in S^a}\sup_{\mb u_j\in I(\mb a_j,\mb b_j)}(\nrm{\mb x-\sqrt{n_k}\mb u_j}{}+\sqrt{n_k}\nrm{\mb u_j-\mb b_j}{})\cdot\nrm{\nabla_{\mb u_j}\vrp^{(k)}(\mb x;\mb u_j)}{}} \\
    &\le\zeta\max_{k\in[K]}\brcs{\frac{\sqrt{n_k}}{\us^{d/2}}+\frac{\sqrt{n_k}\zeta(\sqrt{n_k}+2d^{1.5}\bs')}{\us^{(d+1)/2}}+\sup_{\mb x\in S^a}\sup_{\mb u_j\in I(\mb a_j,\mb b_j)}\nrm{\mb x-\sqrt{n_k}\mb u_j}{}\cdot\nrm{\nabla_{\mb u_j}\vrp^{(k)}(\mb x;\mb u_j)}{}}
\end{align*}
where $I(\mb a_j,\mb b_j)$ denotes a set of linear interpolations between $\mb a_j,\mb b_j$.
In the third line, we used \eqref{eq:mvt1} and Lemma \ref{lem:mvt}. Plus, for $\mb z_{\mb u_j}:=\Sigma_k(\mb u_j)^{-1/2}(\mb x-\sqrt{n_k}\mb u_j)$, we have
\begin{align*}
    \sup_{\mb x\in S^a}\sup_{\mb u_j\in I(\mb a_j,\mb b_j)}&\nrm{\Sigma_k(\mb u_j)^{1/2}\mb z_{\mb u_j}}{}\cdot\lrnrm{\nabla_{\mb u_j}\frac{\phi_d(\mb z_{\mb u_j})}{|\Sigma_k(\mb u_j)|^{1/2}}}{}\\
    &\le\frac{\bs^{1/2}}{\us^{(d+1)/2}}\sup_{u\ge0}u\brcs{u\lrp{\sqrt{\bn}+\bs'u}+d^{1.5}\bs'}\phi_1(u) \\
    &\le\frac{2\bs^{1/2}}{\sqrt{e}\us^{(d+1)/2}}\lrp{\sqrt{\bn}+\bs'd^{1.5}}.
\end{align*}
So, we obtain
\[\|\mb q_{\blt,H} - \mb q_{\blt,H'}\|_{\infty, S^a}\le\frac{\sqrt{\bn}\zeta}{\us^{d/2}}\brcs{1+\frac{(\zeta+2\bs^{1/2})(1+2d^{1.5}\bs'/\sqrt{\bn})}{\us^{1/2}}}.\]
Now, $\c{D}$ denotes a minimal $\xi$-net of $\Dlt^{l-1}=\{\mb w \in \R_+^l\,;\,\sum_{j=1}^l w_j=1\}$ in the $\ell_1$ norm, and approximate the weights $w$ by their closest element $v\in \c{D}$. Writing $H'' = \sum_{j=1}^l v_j\dlt_{\mb b_j}$,
\begin{align*}
    \|\mb q_{\blt,H'} - \mb q_{\blt,H''}\|_{\infty, S^a}
    &=\max_{k\in[K]}\sup_{\mb x\in S^a}\sum_{j}(w_j-v_j)\nrm{(\sqrt{n_k}\mb b_j-\mb x)\vrp^{(k)}(\mb x;\mb b_j)}{} \\
    &\le\sum_{j}|w_j - v_j|\frac{\bs^{1/2}}{\us^{d/2}}\sup_{u\ge0}u\phi_1(u)\le_d \bs^{1/2}\us^{-d/2} \xi.
\end{align*}
Finally, aggregating the inequalities for large enough $\bn$,
\begin{align*}
    \|\mb q_{\blt,G} - \mb q_{\blt,H''}\|_{\infty, S^a}
    \le_{d,\us,\bs,\bs'} \sqrt{\bn}\bxs{ae^{-\un a^2/(2\bs)}+\frac{1+\xi}{\sqrt{\bn}}+\zeta\lrp{\zeta+1}}.
\end{align*}
Now we set $\xi=\eta$, $\zeta=\eta/\sqrt{\bn}$, and $\eta<1/e$. Then, $a=\sqrt{\frac{2\bs}{\un}\log\frac{1}{\xi}}$ yields the convergence rate $\|\mb q_{\blt,G} - \mb q_{\blt,H''}\|_{\infty, S^a}\le C_{d,\us,\bs,\bs'}\sqrt{\bn}(a+1/\sqrt{\bn})\eta=:\t\eta$ (so $\t\eta\le_\bs\sqrt{\bn}a\eta$).

Next, we had
\[
|\c{C}|\cdot|\c{D}|\le \bxs{\left(1+\frac{2}{\xi}\right)\frac{e N(\zeta, S^a)}{l}}^l,\quad l\ge_d\lrp{\frac{\tau\bn}{\un}}^dN(a, S^a)
\]
and
\begin{align*}
    \frac{N(\zeta, S^a)}{l}
    \le \brcs{\frac{\un}{\tau\bn}\left(1+\frac{a}{\zeta}\right)}^d
    \le \lrp{1+\sqrt{\frac{2\un\bs}{\bn\xi^3}}}^d\underset{\un\to\infty}{\le}\lrp{\sqrt{\frac{C_{d,\bs}}{\eta^3}}}^d.
\end{align*}
So, as $1/\eta>e$,
\begin{align*}
    \log N(\t\eta,\bb{F}_q,&\|\cdot\|_{\infty,S})\le l\brcs{\log\frac{3}{\xi}\frac{eN(\zeta, S^a)}{l}} \\
    &\le_dN(a,S^a)\brcs{\log(1/\xi)}^d\log\lrp{\frac{C_{d,\bs}'}{\eta^{1+3d/2}}}
    \le_dN(a,S^a)\log^{d+1}\frac{C_{d,\bs}'}{\eta}
\end{align*}
hence
\[\log N(\eta,\bb{F}_q,\|\cdot\|_{\infty,S})\le_dN(a,S^a)\log^{d+1}\frac{\sqrt{\bn}aC_{d,\us,\bs,\bs'}'}{\eta}\le N(a,S^a)\log^{d+1}\frac{C_{d,\us,\bs,\bs'}'}{\eta}.\]
\end{proof}

\newpage

\section{Additional algorithms, figures, and tables}

\begin{algorithm}[t]
\DontPrintSemicolon
\KwIn{Local MLEs $\{\mb x_k\}_{k=1}^K$; client covariances $\{\h\Sigma_k\}_{k=1}^K$; component number $L$; outer iterations $T$; learning rate $\eta\in(0,1]$}
\KwOut{Personalized estimates $\{\h{\b\tht}_k\}_{k=1}^K$}
\BlankLine
Fit initial spherical GMM on $\{\h{\b\tht}_k^{(0)}\}$ with $L$ components\;
\For{$t = 0,\ldots,T-1$}{
    Compute responsibilities $r_{k\ell}^{(t)}=\Pr(\ell|\h{\b\tht}_k^{(t)})$ and component scales $\tau_\ell^{2,(t)}$ from the current GMM\;
    Extract diagonal observation precisions:
    $\mb d_k := \diag(\h\Sigma_k)^{-1}$\;
    \For{$k = 1,\ldots,K$}{
        $\alpha_k^{(t)}\gets\sum_{\ell=1}^{L}\frac{r_{k\ell}^{(t)}}{\tau_\ell^{2,(t)}},\qquad
        \mb b_k^{(t)}\gets\sum_{\ell=1}^{L}\frac{r_{k\ell}^{(t)}}{\tau_\ell^{2,(t)}}\,\h\mu_\ell^{(t)}$\;
        MAP solution ($\oslash$ denotes elementwise division):
        $\t{\b\tht}_k^{(t)}
        \gets
        \bigl(\mb d_k\odot \mb x_k + \mb b_k^{(t)}\bigr)\oslash\bigl(\mb d_k + \alpha_k^{(t)}\b1_d\bigr)$\;
        Damped update:
        $\h{\b\tht}_k^{(t+1)}\gets\h{\b\tht}_k^{(t)}+\eta\bigl(\t{\b\tht}_k^{(t)}-\h{\b\tht}_k^{(t)}\bigr)$\;
    }
    Refit spherical GMM on $\{\h{\b\tht}_k^{(t+1)}\}$\;
}
\Return $\{\h{\b\tht}_k^{(T)}\}_{k=1}^K$
\caption{AdaMix Baseline \citep{ozkara2023statistical}}
\label{alg:adamix}
\end{algorithm}

\begin{algorithm}[ht!]
\caption{VANEB-Full: Empirical Bayes Personalization with Full Network Weights}
\label{alg:vaneb_fullnn_alg2e}

\KwIn{Clients $k=1,\dots,K$; rounds $T$; initial model $\bx^{(0)}$; atoms $m$; local epochs $E$; damping $\lambda$}
\KwOut{Personalized models $\{\bx_k^{v,(t)}\}$}

\For{$t=0,\dots,T-1$}{
    Server broadcasts $\bx^{(t)}$\;

    \ForEach{client $k$ \textbf{in parallel}}{
        Train locally for $E$ epochs $\Rightarrow \bx_k$\;
        Compute $\h{\mb F}_k = \frac{1}{n_k}\sum_{i=1}^{n_k} \mb g_{ki} \mb g_{ki}^\top$ using local gradients $\mb g_{ki}$\;
        Set $\h{\Sigma}_k = \diag\big((\h{\mb F}_k + \lambda\mb I)^{-1}\big)$\;
        Send $(\bx_k,\h{\Sigma}_k,n_k)$ to server\;
    }

    Initialize atoms $\{\mb a_j\}_{j=1}^m$ and weights $\pi_j=1/m$\;
    Set $\Sigma_k(\mb a_j) \gets \h{\Sigma}_k$ for all $k,j$,  or learn function through polynomial regression\;

    \For{$s=0,\dots,S-1$}{
        \tcp{E-step}
        \ForEach{$k,j$}{
            $r_{kj} \propto \pi_j \, \vrp(\bx_k; \mb a_j, \Sigma_k(\mb a_j)/n_k)$\;
        }
        Normalize $\{r_{kj}\}_j$\;

        \tcp{M-step}
        \For{$j=1,\dots,m$}{
            $\pi_j \gets \frac{1}{K}\sum_{k=1}^K r_{kj}$\;
            $\mb a_j \gets \left(\sum_{k=1}^K r_{kj} n_k \Sigma_k(\mb a_j)^{-1}\right)^{-1}
            \sum_{k=1}^K r_{kj} n_k \Sigma_k(\mb a_j)^{-1}\bx_k$\;
        }
        Update $\Sigma_k(\mb a_j)$ for all $k,j$\;
    }
    \ForEach{client $k$}{
        $\bx_k^v \gets \sum_{j=1}^m r_{kj} \mb a_j$\;
    }
    $\bx^{(t+1)} \gets \sum_{k=1}^K \frac{n_k}{\sum_\ell n_\ell}\bx_k^v$\;
}
\Return $\{\bx_k^{v,(t)}\}$\;
\end{algorithm}

\section{Remark on AdaMix}
Algorithm \ref{alg:adamix} adapts the AdaMix idea of \citet{ozkara2023statistical} to the summary-statistics setting. There are several distinct aspects that we clarify here.
\begin{itemize}[leftmargin=*]
    \item The original AdaMix operates on local datasets and runs gradient steps on a full personalized objective that includes the local training loss. In Algorithm \ref{alg:adamix} clients have already transmitted low-dimensional MLEs $\{\bx_k\}$, so the MAP fusion replaces the gradient loop entirely.
    \item Because the observation model is Gaussian given (estimated) covariance, the MAP step admits the closed-form coordinate-wise solution.
    This is an instance of the standard Gaussian conjugate posterior \citep{efron2014two}.
    \item The original assumes a single scalar observation variance per client. Algorithm \ref{alg:adamix}'s MAP step uses the diagonal of the client-reported Fisher covariance estimate \citep{jhunjhunwala2024fedfisher}.
    \item The original fits a full Gaussian mixture with component-specific covariances.
    We use a spherical GMM (isotropic per-component variance), which is more regularized and avoids ill-conditioning when $K$ is moderate. Spherical mixtures have been advocated as a practical prior for federated shrinkage in \citet{kotelevskii2022fedpop}.
\end{itemize}
The damped update is a proximal relaxation that prevents the iterates from chasing the moving GMM fit; the step size $\eta$ plays the same stabilizing role as the learning rate in the gradient-based version.

\end{document}